\documentclass{article}

\usepackage{enumitem}
\usepackage{microtype}
\usepackage{graphicx}
\usepackage{subfigure}
\usepackage{booktabs} 
\usepackage{amsmath,amsfonts,amssymb,amsthm}
\usepackage{comment}
\usepackage[round]{natbib}

\usepackage{url}            
\usepackage{nicefrac}       
\usepackage[dvipsnames]{xcolor}
\usepackage[backref=page,colorlinks=true,linkcolor=RedOrange, citecolor=NavyBlue]{hyperref}
\usepackage{cleveref}
\usepackage[algo2e,ruled]{algorithm2e}
\usepackage{epsf}
\usepackage{graphics}
\usepackage{psfrag}

\usepackage{color}

\usepackage{mathtools}
\usepackage{amsfonts}
\usepackage{amsthm}
\usepackage{amsmath}
\usepackage{amssymb}







\newtheorem{theorem}{Theorem}

\newtheorem{lemma}{Lemma}
\newtheorem{corollary}{Corollary}
\newtheorem{definition}{Definition}
\newtheorem{remark}{Remark}

\newtheorem{assumption}{Assumption}




\long\def\comment#1{}

\newcommand{\bm}[1]{\boldsymbol{#1}}









%

















\usepackage{marvosym} 
\usepackage[english]{babel}
\usepackage[utf8]{inputenc}
\usepackage{appendix}
\usepackage{multirow}
\usepackage{pifont} 
\usepackage{chngcntr}
\usepackage{wrapfig}
\usepackage{enumitem}
\usepackage{natbib}
\usepackage{setspace}

\definecolor{aoenglish}{rgb}{0.0, 0.5, 0.0}
\definecolor{burgundy}{rgb}{0.5, 0.0, 0.13}

\usepackage{pifont}
\usepackage{mathtools}

\makeatletter
\long\def\@makecaption#1#2{
  \vskip 0.8ex
  \setbox\@tempboxa\hbox{\small {\bf #1:} #2}
  \parindent 1.5em  
  \dimen0=\hsize
  \advance\dimen0 by -3em
  \ifdim \wd\@tempboxa >\dimen0
  \hbox to \hsize{
    \parindent 0em
    \hfil 
    \parbox{\dimen0}{\def\baselinestretch{0.96}\small
      {\bf #1.} #2
    } 
    \hfil}
  \else \hbox to \hsize{\hfil \box\@tempboxa \hfil}
  \fi
}
\makeatother


\textheight 8.5truein
\topmargin 0.25in
\headheight 0in
\headsep 0in
\textwidth 6.8truein
\oddsidemargin  0in
\evensidemargin 0in
\title{\bf{\LARGE{Adaptive Personalized Federated Learning}}}

\author{Yuyang Deng \qquad Mohammad Mahdi Kamani\qquad Mehrdad Mahdavi \vspace*{.2em} \\ 
The Pennsylvania State University \vspace*{.2em} \\ 
\texttt{ \{yzd82,mqk5591,mzm616\}@psu.edu}
}

\date{}
\sloppy
\begin{document}
\maketitle

\begin{abstract}
Investigation of the degree of personalization in federated learning algorithms has shown that only maximizing the performance of the global model will confine the capacity of the local models to personalize. In this paper, we advocate an adaptive personalized federated learning (APFL) algorithm, where each client will train their local models while contributing to the global model.  We derive the generalization bound of mixture of local and global models, and find the optimal mixing parameter. We also  propose a  communication-efficient  optimization method to collaboratively learn the personalized models and analyze its convergence in both smooth strongly convex and nonconvex settings.  The extensive experiments demonstrate the effectiveness of our personalization schema, as well as the correctness of established generalization theories.
\end{abstract}

\section{Introduction}\label{sec:intro}
With an enormously growing amount of decentralized data continually generated on a vast number of devices like smartphones, \textit{federated learning}  offers training a high-quality shared global model with a central server while reducing the systemic privacy risks and communication costs~\citep{mcmahan2017communication}. Despite the classical approaches, where large-scale datasets are located on massive and expensive data centers for training~\citep{dean2012large,li2014scaling}, in federated learning, the data and training both reside on the local nodes. This will ensure the privacy of the local data, while enabling us to learn from massive, not available otherwise, data on those devices. Not to mention the enormous reduction in communication sizes due to local training and data.

In federated learning, the ultimate goal is to learn a global model that achieves uniformly good performance over almost all participating clients. Motivated by this goal, most of the existing methods pursue the following procedure to learn a global model: (i) a subset of clients participating in the training is chosen at each round and receive the current copy of the global model; (ii) each chosen client updates the local version of the global model using its own local data, (iii) the server aggregates over the obtained local models to update the global model, and this process continues until convergence~\citep{mcmahan2017communication,mohri2019agnostic,karimireddy2019scaffold,pillutla2019robust}. Most notably, federated averaging (FedAvg) by~\cite{mcmahan2017communication} uses averaging as its aggregation method over the local learned models on clients. 

Due to inherent diversity among local data shards and highly non-IID distribution of the data across clients, FedAvg is hugely sensitive to its hyperparameters, and as a result, does not benefit from a favorable convergence guarantee~\citep{haddadpour2019convergence,li2020feddane}. In~\cite{karimireddy2019scaffold}, authors argue that if these hyperparameters are not carefully tuned, it will result in the divergence of FedAvg, as local models may drift significantly from each other. Therefore,  in the presence of statistical data heterogeneity, the global model might not generalize well on the local data of each client individually~\citep{jiang2019improving}. This is even more crucial in fairness-critical systems such as medical diagnosis~\citep{li2019fedmd}, where poor performance on local clients could result in damaging consequences. This problem is exacerbated even further as the diversity among local data of different clients is growing. This is depicted in Figure~\ref{fig:intro}, where the generalization and training losses of the global models of the FedAvg~\citep{mcmahan2017communication} and SCAFFOLD~\citep{karimireddy2019scaffold} on local data diverge when the diversity among different clients' data increases. This observation illustrates that solely optimizing for the global model's accuracy leads to a poor generalization of local clients. To embrace statistical heterogeneity and mitigate the effect of negative transfer, it is necessary to integrate the personalization into learning instead of finding a single consensus predictor. This \textit{pluralistic approach} for federated optimization (FO) has recently resulted in significant research in personalized learning schemes~\citep{eichner2019semi,smith2017federated,dinh2020personalized,mansour2020three,fallah2020personalized,li2020lotteryfl}.

\begin{figure}
    \centering
    \includegraphics[width=0.75\textwidth]{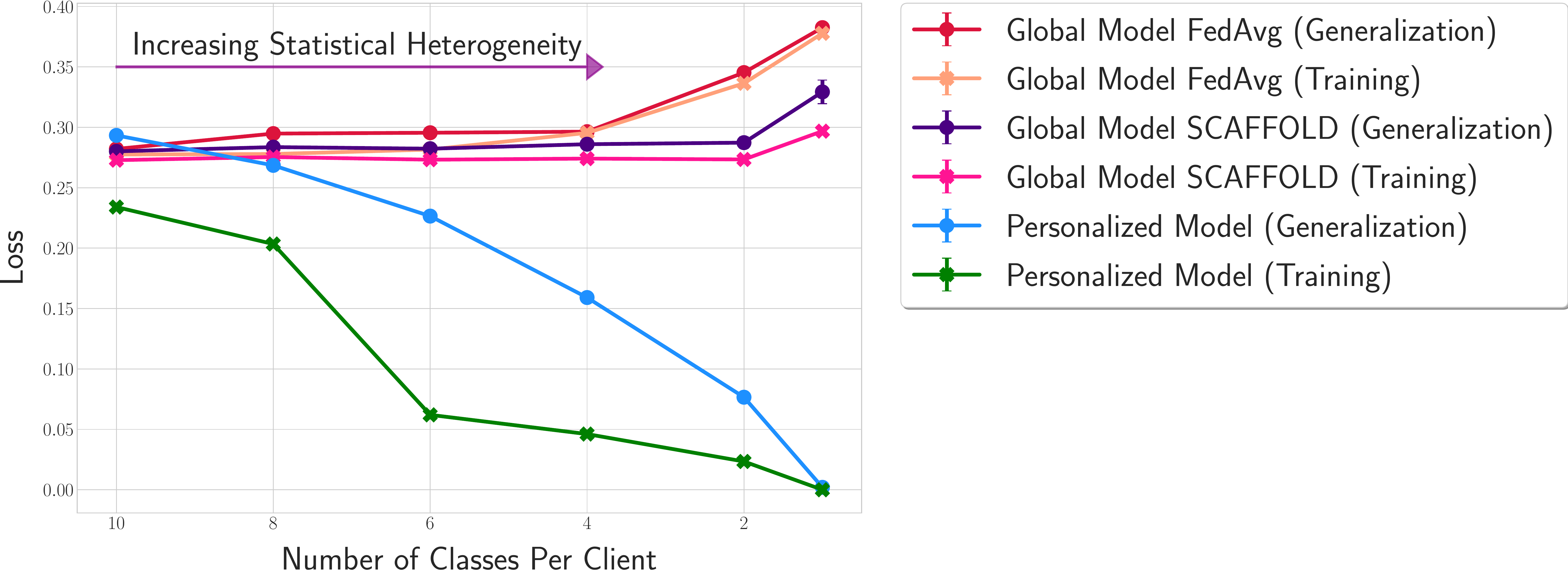}
    \caption{Comparing the generalization and training losses of our proposed personalized model with the global models of FedAvg~\citep{mcmahan2017communication} and SCAFFOLD~\citep{karimireddy2019scaffold}  by increasing the diversity among the data of clients on MNIST dataset with a logistic regression model. Increasing the diversity among local data can lead to a poor generalization performance of global models of FedAvg and SCAFFOLD on local data, while it is diminishing for the proposed personalized model. 
    }
    \label{fig:intro}
\end{figure}

In light of these observations, and to balance the trade-off between the benefit from collaboration with other users and the disadvantage from the statistical heterogeneity among different users' domains, in this paper, we propose an \textbf{adaptive personalized federated learning} (\texttt{APFL}) algorithm which aims to learn a personalized model for each user that is a mixture of optimal local and global models. We theoretically analyze the generalization ability of the personalized model on local distributions, with dependency on mixing parameter, the divergence between local and global distributions, as well as the number of local and global training data. To learn the personalized model, we propose a communication efficient optimization algorithm that adaptively learns the model by leveraging the relatedness between local and global models as learning proceeds. As it is shown in Figure~\ref{fig:intro}, by progressively increasing the diversity, the personalized model found by the proposed algorithm demonstrates a better generalization compared to the global models learned by FedAvg and SCAFFOLD.  We supplement our theoretical findings with extensive corroborating experimental results that demonstrate the superiority of the proposed personalization schema over the global and localized models of commonly used FO algorithms.

\paragraph{Organization.}The rest of the paper is organized as follows. In Section~\ref{sec:related_work}, we review and discuss related work. Section~\ref{sec:generalization} presents our personalized formulation and establish its  generalization guarantees. We formulate the communication-efficient optimization problem in Section~\ref{sec:optimization} and analyze its convergence rate in Section~\ref{sec:convergence}. In Section~\ref{sec:exp} we empirically verify proposed algorithm. Section~\ref{sec:discussion} discusses  some implications of our
results and poses some  questions for future study. We conclude in Section~\ref{sec:conclusion} and defer all the proofs to appendix. 
\section{Related Work}

\label{sec:related_work}
The number of research in federated learning is proliferating during the past few years. In federated learning, the main objective is to learn a global model that is good enough for yet to be seen data and has fast convergence to a local optimum. This indicates that there are several uncanny resemblances between federated learning and meta-learning approaches~\citep{finn2017model,nichol2018first}. However, despite this similarity, meta-learning approaches are mainly trying to learn multiple models, personalized for each new task, whereas in most federated learning approaches, the main focus is on the single global model. As discussed by~\citet{kairouz2019advances}, the gap between the performance of global and personalized models shows the crucial importance of personalization in federated learning. Several different approaches are trying to personalize the global model, primarily focusing on optimization error, while the main challenge with personalization is during the inference time. Some of these works on the personalization of models in a decentralized setting can be found in~\cite{vanhaesebrouck2017decentralized,almeida2018djam}, where in addition to the optimization error, they have network constraints or peer-to-peer communication limitation~\citep{bellet2017personalized,zantedeschi2019fully}. In general, as discussed by~\citet{kairouz2019advances}, there are three significant categories of personalization methods in federated learning, namely, local fine-tuning, multi-task learning, and contextualization.~\cite{yu2020salvaging} argue that the global model learned by federated learning, especially with having differential privacy and robust learning objectives, can hurt the performance of many clients. They indicate that those clients can obtain a better model by using only their own data. Hence, they empirically show that using these three approaches can boost the performance of those clients.
In addition to these three, there is also another category that fits the most to our proposed approach, which is mixing the global and local models.
\vspace{-0.25cm}\paragraph{Local fine-tuning:}
The dominant approach for personalization is local fine-tuning, where each client receives a global model and tune it using its own local data and several gradient descent steps. This approach is predominantly used in meta-learning methods such as MAML by~\cite{finn2017model} or domain adaptation and transfer learning~\citep{ben2010theory,mansour2009domain,pan2009survey}. \citet{jiang2019improving} discuss the similarity between federated learning and meta-learning approaches, notably the Reptile algorithm by~\cite{nichol2018first} and FedAvg, and combine them to personalize local models. They observed that federated learning with a single objective of performance of the global model could limit the capacity of the learned model for personalization. In~\cite{khodak2019adaptive}, authors using online convex optimization to introduce a meta-learning approach that can be used in federated learning for better personalization.~\citet{fallah2020personalized} borrow ideas from MAML to learn personalized models for each client with convergence guarantees. Similar to fine-tuning, they update the local models with several gradient steps, but they use second-order information to update the global model, like MAML. Another approach adopted for deep neural networks is introduced by~\cite{arivazhagan2019federated}, where they freeze the base layers and only change the last ``personalized'' layer for each client locally.
The main drawback of local fine-tuning is that it minimizes the optimization error, whereas the more important part is the generalization performance of the personalized model. In this setting, the personalized model is pruned to overfit.
\vspace{-0.25cm}\paragraph{Multi-task learning:}
Another view of the personalization problem is to see it as a multi-task learning problem similar to~\cite{smith2017federated}. In this setting, optimization on each client can be considered as a new task; hence, the approaches of multi-task learning can be applied. One other approach, discussed as an open problem in~\citet{kairouz2019advances}, is to cluster groups of clients based on some features such as region, as similar tasks, similar to one approach proposed by~\cite{mansour2020three}.
\vspace{-0.25cm}\paragraph{Contextualization:} An important application of personalization in federated learning is using the model under different contexts. For instance, in the next character recognition task in~\cite{hard2018federated}, based on the context of the use case, the results should be different. Hence, we need a personalized model on one client under different contexts. This requires access to more features about the context during the training. Evaluation of the personalized model in such a setting has been investigated by~\citet{wang2019federated}, which is in line with our approach in experimental results in Section~\ref{sec:exp}. \cite{liang2020think} propose to directly learn the feature representation locally, and train the discriminator globally, which reduces the effect of data heterogeneity and ensures the fair learning.

\vspace{-0.25cm}\paragraph{Personalization via model regularization:} Another significant trial for personalization is model regularization. There are several studies to introduce different personalization approaches for federated learning by regularize the difference between the global and local models.~\citet{hanzely2020federated} try to introduce a new formulation for federated learning where they add the regularization term on the distance of local and global models. In their effort, they use a mixing parameter, which controls the degree of optimization for both local models and the global model. The FedAvg~\citep{mcmahan2017communication} can be considered a special case of this approach. They show that the learned model is in the convex haul of both local and global models, and at each iteration, depend on the local models' optimization parameters, the global model is getting closer to the global model learned by FedAvg. Similarly, \cite{huang2020personalized,dinh2020personalized} also propose to use the regularization between local and global model, to realize the personalized learning. \cite{shen2020federated} propose a knowledge distillation way to achieve personalization, where they apply the regularization on the predictions between local model and global model. The extra benefit of their method is that they can solve the model heterogeneity issue in federated learning.

\vspace{-0.25cm}\paragraph{Personalization via model interpolation:} Parallel to our work, there are other studies to introduce different personalization approaches for federated learning by mixing the global and local models. 
The closest approach for personalization to our proposal is introduced by~\citet{mansour2020three}. In fact, they propose three different approaches for personalization with generalization guarantees, namely, client clustering, data interpolation, and model interpolation. Out of these three, the first two approaches need some meta-features from all clients that makes them not a feasible approach for federated learning, due to privacy concerns. The third schema, which is the most promising one in practice as well, has a close formulation to ours in the interpolation of the local and global models. However, in their theory, the generalization bound does not demonstrate the advantage of mixing models, but in our analysis, we will show how the model mixing can impact the generalization bound, by presenting its dependency on the mixture parameter, data diversity and optimal models on local and global distributions.

Beyond different techniques for personalization in federated learning, \citet{kairouz2019advances} ask an essential question of ``\textit{when is a global FL-trained model better?}'', or as we can ask, when is personalization better? The answer to these questions mostly depends on the distribution of data across clients. As we theoretically prove and empirically verify in this paper, when the data is distributed IID, we cannot benefit from personalization, and it is similar to the local SGD scenario~\citep{stich2018local,haddadpour2019local,haddadpour2019trading,woodworth2020local}. However, when the data is non-IID across clients, which is mostly the case in federated learning, personalization can help to balance between shared and local knowledge. Then, the question becomes, what degree of personalization is best for each client? While this was an open problem in~\cite{mohri2019agnostic} on how to appropriately mix the global and local model, we answer this question by adaptively tuning the degree of personalization for each client, as discussed in Section~\ref{sec:alpha_adap}, so it can perfectly become agnostic to the local data distributions. 
\section{Personalized Federated Learning}
\label{sec:generalization}
In this section, we propose a personalization approach for federated learning and analyze its statistical properties. Following the statistical learning theory, in a federated learning setting each client has access to its own data distribution $\mathcal{D}_i$ on domain $\Xi := \mathcal{X} \times \mathcal{Y}$, where $\mathcal{X} \in \mathbb{R}^d$ is the input domain and $\mathcal{Y}$ is the label domain. For any hypothesis $h \in \mathcal{H}$ the loss function is defined as $\ell:\mathcal{H}\times \Xi \rightarrow \mathbb{R}^+$. The true risk at local distribution  is denoted by $\mathcal{L}_{\mathcal{D}_i} (h) = \mathbb{E}_{(\bm{x}, y) \sim \mathcal{D}_i}\left[\ell\left(h(\bm{x}), y\right)\right]$. We use $\hat{\mathcal{L}}_{\mathcal{D}_i} (h)$ to denote the empirical risk of $h$ on distribution $\mathcal{D}_i$. We use $\bar{\mathcal{D}} = ({1}/{n})\sum_{i=1}^n \mathcal{D}_i$ to denote the average distribution over all clients. Intrinsically, as in federated learning, the global model is trained to minimize the empirical (i.e., ERM) loss with respect to distribution $\bar{\mathcal{D}}$, i.e.,  $\min_{h\in \mathcal{H}} \hat{\mathcal{L}}_{\bar{\mathcal{D}}} (h)$.

\subsection{Personalized model}\label{coupled learning}
In a standard federated learning scenario, where the goal is to learn a global model for all devices cooperatively, the learned global model obtained by minimizing the joint empirical distribution, either by proper weighting or in an agnostic manner, may not perfectly generalize on local users' data when the heterogeneity among local data shards is high (i.e., the global and local optimal models might drift significantly). However, by assuming that all users' data come from the (roughly) similar distribution, it is expected that the global model enjoys a better generalization accuracy on any user distribution over its domain than the user's own local model. Meanwhile, from the local user perspective, the key incentive to participate in ``federated''  learning is the desire to seek a reduction in the local generalization error with the help of other users' data. In this case, the ideal situation would be that the user can utilize the information from the global model to compensate for the small number its local training data while minimizing the harm induced by heterogeneity among each user's local data and the data shared by other devices. Obviously, when the local distribution is highly correlated with global distribution, the global model is preferable; otherwise, the global model might be ineffective to be employed as the local model.  This motivates us to mix the global model and local model with an adaptive weight as a joint prediction model, namely, the personalized model. 

In the adaptive personalized federated learning the goal is to find the optimal combination of the global model and the local model, in order to achieve a better client-specific model. In this setting, global server still tries to train the global model by minimizing the empirical risk on the aggregated domain $\bar{\mathcal{D}}$:
\begin{equation} 
\bar{h}^* = \arg \min_{h\in \mathcal{H}} \hat{\mathcal{L}}_{\bar{\mathcal{D}}} (h),\nonumber
\end{equation}
while each user trains a local model while incorporating part of the global model, with some mixing weight $\alpha_i$, i.e.,
\begin{equation} 
\hat{h}_{loc,i}^* = \arg \min_{h\in \mathcal{H}} \hat{\mathcal{L}}_{\mathcal{D}_i} ( \alpha_i h + (1-\alpha_i) \bar{h}^*),\nonumber
\end{equation}
Finally, the personalized model for $i$-th client is a convex combination of $\bar{h}^*$ and $\hat{h}_{loc,i}^*$:
\begin{equation}\label{eqn:personalized}
 h_{\alpha_i} = \alpha_i \hat{h}_{loc,i}^* + (1-\alpha_i) \bar{h}^* ,
\end{equation}
 It is worth mentioning that, $h_{\alpha_i}$ is not necessarily the minimizer of empirical risk $\hat{\mathcal{L}}_{\mathcal{D}_i}(\cdot)$, because we optimize $\hat{h}_{loc,i}^*$ with partially incorporating the global model. In fact, in most cases, as we will show in the convergence of the proposed algorithm, $h_{\alpha_i}$ will incur a residual risk if evaluated on the training set drawn from $\mathcal{D}_i$.

\subsection{Generalization guarantees}\label{sec:gen}
We now characterize the generalization of  the mixed model. We present the learning bounds for classification and regression tasks. For classification, we consider a binary classification task, with squared hinge loss $\ell(h(\bm{x}),y) = \left(\max\{0,1-yh(\bm{x})\} \right)^2$. In the regression task, we consider the MSE loss $\ell(h(\bm{x}),y) = (h(\bm{x})-y)^2$. Even though we present learning bounds under these two loss functions, our analysis can be generalized to any convex smooth loss. Before formally presenting the generalization bound, we  introduce the following quantity  to measure the empirical complexity of a hypothesis class $\mathcal{H}$ over a training set $\mathcal{S}$.
\begin{definition}
Let $S$ be a fixed set of samples and consider a hypothesis class $\mathcal{H}$. The worst case   disagreement between a pair of models measured by absolute loss is quantified by
\begin{align}
    \lambda_{\mathcal{H}} (S) = \sup_{h,h' \in \mathcal{H}} \frac{1}{|S|}\sum_{(\bm{x},y)\in S} |h(\bm{x})-h'(\bm{x})|.
\end{align}
\end{definition}
\begin{remark}
This quantity measures the complexity of a hypothesis class, by computing the maximum disagreement between two hypotheses on a sample training set.  A similar quantity is also employed in the related multiple source PAC learning or domain adaption studies~\cite{kifer2004detecting,mansour2009domain,ben2010theory,konstantinov2020sample,zhang2020localized}. We will show how this term impact our generalization bound.
\end{remark}
Equipped with this discrepancy notion, we now state the main result on the generalization of the proposed personalization schema. 
\begin{theorem}\label{thm:generalization}
Let $\mathcal{H}$ be a hypothesis class with finite VC dimension $d$. Assume loss function $\ell$ is Lipschitz continuous with constant $G$, and bounded in $[0, B]$. Then with probability at least $1-\delta$, there exists a constant $C$, such that the risk of the mixed model $h_{\alpha_i} = \alpha_i \hat{h}_{loc,i}^* + (1-\alpha_i) \bar{h}^*$ on the  $i$th local distribution $\mathcal{D}_i$ is bounded by:
{\begin{equation}
\label{eq: generalization bound-main}
\begin{aligned}
\mathcal{L}_{\mathcal{D}_i}(h_{\alpha_i})  &\leq  2(1-\alpha_i)^2\left(\hat{\mathcal{L}}_{ \bar{\mathcal{D}}}(\bar{h}^*)+B\|\bar{\mathcal{D}}-\mathcal{D}_i\|_1 +
C \sqrt{\frac{d+ \log (1/\delta)}{m}}\right) \nonumber \\ & \quad +2\alpha_i^2\left(\mathcal{L}_{\mathcal{D}_i}( h_{i}^* ) + 2C \sqrt{\frac{d+ \log (1/\delta)}{m_i}} + G\lambda_{\mathcal{H}}(\mathcal{S}_i) \right) , 
\end{aligned}
\end{equation}}
where $m_i, i=1, 2, \ldots, n$ is the number of training data at  $i$th user,  $m = m_1 + \ldots + m_n$ is the total number of all data, $\mathcal{S}_i$ to be the local training set drawn from $\mathcal{D}_i$, $\|\bar{\mathcal{D}}-\mathcal{D}_i\|_1 = \int_{\Xi} |\mathbb{P}_{(\bm{x}, y)\sim \bar{\mathcal{D}}}-\mathbb{P}_{(\bm{x}, y)\sim\mathcal{D}_i}|d\bm{x}dy$,  is the difference between distributions $\bar{\mathcal{D}}$ and $\mathcal{D}_i$, and $h_i^* = \arg \min_{h\in \mathcal{H}} \mathcal{L}_{\mathcal{D}_i} (h)$. 
\end{theorem}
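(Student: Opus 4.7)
The plan is to peel off the two components of $h_{\alpha_i}$ via a quadratic-type convexity bound, and then handle each piece independently using standard VC uniform convergence together with a distribution-shift estimate for the global part and a Lipschitz-with-$\lambda_{\mathcal{H}}$ transfer for the local part.

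The first step is the decomposition
\[
\mathcal{L}_{\mathcal{D}_i}(h_{\alpha_i}) \;\le\; 2\alpha_i^2\, \mathcal{L}_{\mathcal{D}_i}(\hat{h}_{loc,i}^*) \;+\; 2(1-\alpha_i)^2\, \mathcal{L}_{\mathcal{D}_i}(\bar{h}^*).
\]
The factor-of-two comes from the observation that, for both losses considered in the theorem, $\sqrt{\ell(\cdot,y)}$ is convex in the prediction: for squared hinge $\sqrt{\ell(h,y)} = \max(0,1-yh)$ is a maximum of two affine functions, and for MSE $\sqrt{\ell(h,y)} = |h-y|$ is a norm. Jensen on $\sqrt{\ell}$ plus $(a+b)^2 \le 2a^2 + 2b^2$ yields the pointwise inequality, which I then integrate against $\mathcal{D}_i$.

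For the global term $\mathcal{L}_{\mathcal{D}_i}(\bar h^*)$ I first pass to the mixture distribution by a boundedness-of-loss argument: since $\ell \in [0,B]$,
\[
\mathcal{L}_{\mathcal{D}_i}(\bar h^*) - \mathcal{L}_{\bar{\mathcal{D}}}(\bar h^*) \;=\; \int \ell(\bar h^*(\bm x),y)\,(d\mathbb{P}_{\mathcal{D}_i} - d\mathbb{P}_{\bar{\mathcal{D}}}) \;\le\; B\,\|\bar{\mathcal{D}}-\mathcal{D}_i\|_1.
\]
Then a VC uniform convergence bound on $\mathcal{H}$ over the $m$ samples drawn from $\bar{\mathcal{D}}$ gives $\mathcal{L}_{\bar{\mathcal{D}}}(\bar h^*) \le \hat{\mathcal{L}}_{\bar{\mathcal{D}}}(\bar h^*) + C\sqrt{(d+\log(1/\delta))/m}$, reproducing the first bracket of the theorem. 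For the local term I combine three ingredients: (i) uniform convergence on $\mathcal{D}_i$ at rate $\sqrt{(d+\log(1/\delta))/m_i}$ to pass from $\mathcal{L}_{\mathcal{D}_i}(\hat h_{loc,i}^*)$ to $\hat{\mathcal{L}}_{\mathcal{D}_i}(\hat h_{loc,i}^*)$; (ii) the Lipschitz property of $\ell$ together with the definition of $\lambda_{\mathcal{H}}$, which gives
\[
\hat{\mathcal{L}}_{\mathcal{D}_i}(\hat h_{loc,i}^*) - \hat{\mathcal{L}}_{\mathcal{D}_i}(h_i^*) \;\le\; \tfrac{G}{m_i}\!\!\sum_{(\bm x,y)\in S_i}\! |\hat h_{loc,i}^*(\bm x) - h_i^*(\bm x)| \;\le\; G\,\lambda_{\mathcal{H}}(\mathcal{S}_i);
\]
and (iii) another application of uniform convergence on $\mathcal{D}_i$ to bound $\hat{\mathcal{L}}_{\mathcal{D}_i}(h_i^*)$ by $\mathcal{L}_{\mathcal{D}_i}(h_i^*) + C\sqrt{(d+\log(1/\delta))/m_i}$. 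Chaining these yields the second bracket with the advertised $2C\sqrt{\cdot}$ (two uniform-convergence hops) plus $G\lambda_{\mathcal{H}}(\mathcal{S}_i)$. A final union bound over the two $1-\delta/2$ events (one for $\bar{\mathcal{D}}$, one for $\mathcal{D}_i$) delivers the stated probability guarantee.

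The main subtlety, and the part I expect to require the most care, is Step (ii) of the local-term analysis: $\hat h_{loc,i}^*$ is \emph{not} the direct empirical risk minimizer on $\mathcal{D}_i$, but rather the minimizer of the \emph{mixed} empirical risk $\hat{\mathcal{L}}_{\mathcal{D}_i}(\alpha_i h + (1-\alpha_i)\bar h^*)$, so one cannot directly use its optimality to dominate $\hat{\mathcal{L}}_{\mathcal{D}_i}(h_i^*)$. The escape route is that we only need to compare $\hat h_{loc,i}^*$ and $h_i^*$ as functions on the sample, and the worst-case disagreement is exactly captured by $\lambda_{\mathcal{H}}(\mathcal{S}_i)$, explaining why this complexity term appears only in the $\alpha_i^2$ (local) component and why it must be paired with the Lipschitz constant $G$. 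Everything else is bookkeeping: keeping track of the factors of $2$ from the Jensen-plus-squaring step and splitting the confidence $\delta$ across the two uniform convergence inequalities.
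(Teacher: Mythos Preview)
Your proposal is correct and mirrors the paper's proof essentially step for step: the same quadratic decomposition via convexity of $\sqrt{\ell}$ and $(a+b)^2\le 2a^2+2b^2$, the same distribution-shift lemma (bounded loss against $\|\bar{\mathcal{D}}-\mathcal{D}_i\|_1$) plus VC uniform convergence for the global term, and the same three-hop chain (uniform convergence, Lipschitz-plus-$\lambda_{\mathcal{H}}$ comparison, uniform convergence) for the local term. Your identification of the key subtlety---that $\hat h_{loc,i}^*$ is not the direct ERM, so one must fall back on the worst-case disagreement $\lambda_{\mathcal{H}}(\mathcal{S}_i)$---is exactly the point the paper makes, and your bookkeeping (two hops give the $2C$, union bound over the two samples) is cleaner than the paper's presentation.
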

\begin{proof}
The proof is provided in Appendix~\ref{app:proof_gen}.
\end{proof}

Theorem~\ref{thm:generalization} shows that  the generalization risk of $h_{\alpha_i}$ on $\mathcal{D}_i$ mainly depends on following key  quantities: i) $m$: the amount of global data drawn from $ \bar{\mathcal{D}}$, ii) divergence between distributions $\bar{\mathcal{D}}$ and $\mathcal{D}_i$ measured by absolute distance, and iii) $m_i$: the amount of local data drawn from $  \mathcal{D}_i$. Usually, the first quantity $m$, the amount of global data is fairly large compared to individual users, so global model usually has better generalization. The second quantity characterizes the data heterogeneity between the average distribution and $i$th local distribution. If this divergence is too high, then the global model may hurt the local generalization. For the third quantity, as amount of local data $m_i$ is often relatively small,  the generalization performance of local model can be very poor as well; hence, it should choose a small $\alpha_i$ to incorporate more proportion of the global model.  

\paragraph{Optimal mixing parameter.}We can also find  the optimal mixing parameter $\alpha_i^*$ that minimizes generalization bound in Theorem~\ref{thm:generalization}. Notice that the RHS of~(\ref{eq: generalization bound-main})  is quadratic in $\alpha_i$, so it admits a minimum value at 
{\begin{align}
    \alpha_i^* = \frac{ \left(\hat{\mathcal{L}}_{ \bar{\mathcal{D}}}(\bar{h}^*)+B\|\bar{\mathcal{D}}-\mathcal{D}_i\|_1 +
 C\sqrt{\frac{d+\log (1/\delta)}{m}}\right)}{  \left(\hat{\mathcal{L}}_{ \bar{\mathcal{D}}}(\bar{h}^*)+B\|\bar{\mathcal{D}}-\mathcal{D}_i\|_1 +
C\sqrt{\frac{d+\log (1/\delta)}{m}}\right)+  \left(\mathcal{L}_{\mathcal{D}_i}( h_{i}^* ) +  2C \sqrt{\frac{d+ \log (1/\delta)}{m_i}} + G\lambda_{\mathcal{H}}(\mathcal{S}_i) \right)}
\end{align}}
The optimal mixture parameter is strictly bounded in $[0,1]$, which matches our intuition.  If the divergence between the average distribution $\bar{\mathcal{D}}$ and $\mathcal{D}_i$ is large, then the value becomes close to 1, which implies if local distribution drifts too much from average distribution,  taking a majority of the global model is not an effective  choice, and it is preferable to take more local models. If $m_i$ is small, this value will be negligible, indicating that we need to mix more of the global model into the personalized model. Conversely, if $m_i$ is large, then this term will be again  roughly 1, which means taking the majority of local model will give the desired generalization performance.  

\begin{remark}
As $\mathcal{L}_{\mathcal{D}_i}\left( \hat{h}_{loc,i}^* \right)$ is the risk of the empirical risk minimizer on $\mathcal{D}_i$ after incorporating a model learned on a different domain (i.e., global distribution), one might argue that generalization techniques established in multi-domain learning theory~\citep{ben2010theory,mansour2009domain} can be utilized to serve our purpose. However, we note that the techniques developed in~\cite{ben2010theory,mansour2009domain} are only applicable to a settings where we aim at directly learning a model in some combination of source and target domain, while in our setting, we partially incorporate the model learned from source domain and then perform ERM on joint model over target domain.  Moreover, their results only apply to very simple loss functions, e.g., absolute loss or MSE loss, while we consider squared hinge loss in the classification case. Analogous to multiple domain theory, we derive the multi domain learning bound based on the divergence of source and target domains but measured in absolute distance, $\|\cdot\|_1$. As~\cite{mansour2009domain} points out,   divergence measured by absolute loss can be large, and as a result we leave the development of  a more general multiple domain learning theory that can deal with most popular loss functions like hinge loss, cross entropy loss and optimal transport, with tighter divergence measure on distributions as an open question.
\end{remark}

\begin{remark} 
We note that a very analogous  work to ours is~\cite{mansour2020three}, where a generalization bound is provided for mixing global and local models. However, their bound does not depend on $\alpha_i$, and hence we cannot see the advantage of personalizing schema.
\end{remark}

\section{Optimization Method}
\label{sec:optimization}
The proposed personalized model is rooted in adequately mixing the optimal global and slightly modified local empirical risk minimizers. Also, as it is revealed by generalization analysis, the per-device mixing parameter $\alpha_i$ is a key quantity for the generalization ability of the mixed model. In this section, we propose a communication efficient adaptive algorithm to learn the personalized local models and the global model. 

To do so, we let every hypothesis $h$ in the hypothesis space $\mathcal{H}$ to be parameterized by a vector $\bm{w} \in \mathbb{R}^d$ and denote the empirical risk at $i$th device by local objective function $f_i(\boldsymbol{{w}})$. Adaptive personalized federated learning can be formulated as a two-phase optimization problem: globally update the shared model, and locally update users' local models. Similar to FedAvg algorithm, the server will solve the following optimization problem: 
\begin{align}
\min_{\bm{w}\in \mathbb{R}^d} \left[F(\boldsymbol{{w}}) := \frac{1}{n} \sum_{i=1}^n \left\{f_i(\boldsymbol{{w}}) := \mathbb{E}_{{\xi}_i}\left[f_i\left( \bm{w},\xi_i\right)\right]\right\}\right],
\end{align}
where $f_i(.)$ is the local objective at $i$th client, $\xi_i$ is a minibatch of data in data shard at  $i$th client, and $n$ is the total number of clients.

Motivated by the trade-off between the global model and local model generalization errors in Theorem~\ref{thm:generalization}, we need to learn a personalized model as in~(\ref{eqn:personalized}) to optimize the local empirical risk. To this end, each client needs to solve the following optimization problem over its local data:
\begin{align}
\min_{\bm{v} \in \mathbb{R}^d}  f_i\left(\alpha_i\bm{v} +(1-\alpha_i)\bm{w}^*\right), 
\label{eq:local_obj}
\end{align}
where $\bm{w}^* = \arg \min_{\bm{w}} F(\bm{w})$ is the optimal global model. The balance between these two models is governed by a parameter $\alpha_i$, which is associated with the diversity of the local model and the global model. In general, when the local and global data distributions are well aligned, one would intuitively expect that the optimal choice for the mixing parameter would be small to gain more from the data of other devices. On the other side, when local and global distributions drift significantly, the mixing parameter needs to be closed to one to reduce the contribution from the data of other devices on the optimal local model. In what follows, we propose a local descent approach to optimize both objectives simultaneously.
\begin{algorithm2e}[t]
	\DontPrintSemicolon
    \caption{Local Descent \texttt{APFL}}
	\label{algorithm:LDAPFL}
	\textbf{input:} Mixture weights $\alpha_1,\cdots, \alpha_n$, Synchronization gap $\tau$, A set of randomly selected $K$ clients $U_0$, Local models $\bm{v}^{(0)}_{i}$ for $i\in [n]$ and initial local version of  global model $\bm{w}^{(0)}_{i}$ for $i\in [n]$.
	\\
	\For{$t=0,\cdots,T$}{
	\textbf{parallel} \For{$i \in U_t$}{
	    \eIf{t not divides $\tau$}{
	    ${\bm{w}}^{(t)}_i = {\bm{w}}^{(t-1)}_i - \eta_t \nabla f_i\left({\bm{w}}^{(t-1)}_i;\xi_i^t\right)$  \\
	    $\bm{v}^{(t)}_{i} = \bm{v}^{(t-1)}_i - \eta_t \nabla_{\bm{v}} f_i\left(\bar{\bm{v}}_i^{(t-1)}; \xi_i^t\right)$ \\
	    $ \bar{\bm{v}}_i^{(t)} = \alpha_i\bm{v}^{(t)}_{i} + (1-\alpha_i){\bm{w}}^{(t)}_i$\\
	    $U_t \xleftarrow{} U_{t-1}$
	    }{
	    { each selected client sends $\bm{w}_i^{(t)}$ to the server\\
	   
	    ${\bm{w}}^{(t)} = \frac{1}{|U_t|} \sum_{j\in U_t} {\bm{w}}^{(t)}_j $  \\ 
	    server uniformly samples a subset $U_t$ of $K$ clients.\\
	    server broadcast ${\bm{w}}^{(t)}$ to all chosen clients\\
	   }
	  }
	
	    }
	 \For{$i \notin U_t$}{
	    $\bm{v}^{(t)}_{i} = \bm{v}^{(t-1)}_{i}$
	}
  }
tin    \For{$i = 1,\ldots, n$}{ \textbf{output:} Personalized model: $\hat{\boldsymbol{v}}_i = \frac{1}{S_T} \sum_{t=1}^T p_t (\alpha_i\bm{v}_i^{(t)}+(1-\alpha_i)\frac{1}{K}\sum_{j\in U_t}\bm{w}_j^{(t)})$; \\ \qquad \qquad \ Global model: 
    $\hat{\bm{w}}  = \frac{1}{KS_T} \sum_{t=1}^T p_t   \sum_{j\in U_t} \boldsymbol{{w}}_j^{(t)}$.
    }
\end{algorithm2e} 
\subsection{Local Descent \texttt{APFL}}
In this subsection we propose our bilevel optimization algorithm, Local Descent \texttt{APFL}. At each communication round, server uniformly random selects $K$ clients as a set $U_t$. Each selected client will maintain three models at iteration $t$: local version of the global model $\boldsymbol{{w}}_i^{(t)}$, its own local model $\boldsymbol{v}_i^{(t)}$, and the mixed personalized model $\bar{\bm{v}}_i^{(t)} = \alpha_i \boldsymbol{v}_i^{(t)} + (1-\alpha_i) \boldsymbol{{w}}_i^{(t)}$. Then,  selected clients will perform the following updates locally on their own data for $\tau$ iterations:
\begin{align}
     {\bm{w}}^{(t)}_i &= \bm{w}^{(t-1)}_i - \eta_t \nabla f_i\left({\bm{w}}^{(t-1)}_i; \xi_i^t\right)  \\
	  \bm{v}^{(t)}_{i} &= \bm{v}^{(t-1)}_i - \eta_t \nabla_{\bm{v}} f_i\left(\bar{\bm{v}}_i^{(t-1)}; \xi_i^t\right),
\end{align}
where $\nabla f_i\left(.; \xi\right)$ denotes the stochastic gradient of $f(.)$ evaluated at mini-batch $\xi$. Then, using the updated version of the global model and the local model, we update the personalized model $\bar{\bm{v}}_i^{(t)}$ as well. The clients that are not selected in this round will keep their previous step local model $\bm{v}^{(t)}_{i} = \bm{v}^{(t-1)}_{i}$. After these $\tau$ local updates, selected clients will send their local version of the global model ${\bm{w}}^{(t)}_{i}$ to the server for aggregation by averaging:
\begin{align}
     {\bm{w}}^{(t)} = \frac{1}{|U_t|} \sum_{j\in U_t} {\bm{w}}^{(t)}_j.
\end{align}
Then the server will choose another set of $K$ clients for the next round of training and broadcast this new model to them. This process continues until convergence.

\subsection{Adaptive $\alpha$ update}\label{sec:alpha_adap}
Even though in Section~\ref{sec:gen}, we give the information theoretically optimal mixing parameter, in practice we usually do not know the distance between user's distribution and the average distribution. Thus, finding the optimal $\alpha$ is infeasible. However, we can infer it empirically during optimization. \begin{comment}{As it was discussed before and can be inferred from the empirical results in Section~\ref{sec:exp}, the optimum value for $\alpha$ depends on how the data are distributed in different clients, and how far is each local model from the global averaged model. When the data across different clients are highly non-IID, and the diversity among the models are high, we need to have a higher $\alpha$ value, so each client can mostly depend on its own data while benefiting from the knowledge shared by others. On the other hand, when the data are distributed uniformly random across clients, the local model of each client should be very close to the global model, and hence, the $\alpha$ value should be very close to zero. Motivating by this idea, we want to update $\alpha$ values during the training and for each client individually so they can find a better value based on the objective of the defined personalized model.}\end{comment} Based on the local objective defined in (\ref{eq:local_obj}), the empirical optimum value of $\alpha$ for each client can be found by solving $   \alpha_i^* = \arg\min_{\alpha_i\in [0,1]}  f_i\left(\alpha_i\bm{v}+(1-\alpha_i)\boldsymbol{{w}}\right)$, where we can use  an step of gradient descent to update it at every communication round:
\begin{equation}
  \alpha_i^* = \arg\min_{\alpha_i\in [0,1]}  f_i\left(\alpha_i\bm{v}+(1-\alpha_i)\boldsymbol{{w}}\right),
  \label{eq:alpha_obj}\vspace{-0.1cm}
\end{equation}
where we can use the gradient descent to optimize it at every communication round, using the following step:
\begin{align}\label{eq:alpha_gd}
    \alpha_i^{(t)} &= \alpha_i^{(t-1)} - \eta_t \nabla_{\alpha}f_i\left(\bar{\bm{v}}_i^{(t-1)};\xi_i^t\right)= \alpha_i^{(t-1)} - \eta_t \left\langle\bm{v}_i^{(t-1)} - \boldsymbol{{w}}_i^{(t-1)}, \nabla  f_i\left(\bar{\bm{v}}_i^{(t-1)};\xi_i^t\right)\right\rangle,
\end{align}
which interestingly shows that the mixing coefficient $\alpha$ is updated based on the correlation between the difference of the personalized and the local version global models, and the gradient at the in-device personalized model. It indicates that, when the global model is drifting from the personalized model, the value of $\alpha$  changes to adjust the balance between local data and shared knowledge among all devices captured by the global model. Obviously, when personalized and global models are very close to each other (IID data), $\alpha$ value does not change that much.

\section{Convergence Analysis}
\label{sec:convergence}

In this section we provide the convergence analysis of \texttt{APFL} with fixed $\alpha$ on strongly convex and nonconvex function respectively. In order to get a tight analysis, as well as putting  the optimization results in the context of generalization bounds discussed above,  we define the following parameterization-invariant quantities:
\begin{definition} [Gradient Diversity]  We define the following quantity to measure the diversity among local gradients with respect to the gradient of the $i$th client:
\begin{align}
    &\zeta_i = \sup_{\bm{w} \in \mathbb{R}^d}  \|\nabla F (\bm{w}) - \nabla f_i (\bm{w})\|_2^2. \nonumber
\end{align}
We also define the sum of gradient diversities of $n$ clients as: $\zeta = \sum_{i=1}^n\zeta_i. $
\label{gradient divergence}
\end{definition}
Definition~\ref{gradient divergence} is the classic notion characterizing the data heterogeneity across $n$ local functions, which is also employed in other local SGD analysis~\cite{woodworth2020minibatch}. As \cite{woodworth2020minibatch} points out, this quantity will be zero if and only if all local functions are identical.

In addition, for the analysis of strongly convex case, we further need the following quantity which also reflects the heterogeneity:
\begin{definition}[Local-Global Optimality Gap] We define the following quantity to measure the gap between optimal local model and optimal global model:
\begin{equation} \label{local-global gap}
    \Delta_i= \|\bm{v}_i^* - \bm{w}^*\|_2^2,
\end{equation}
where $\bm{v}_i^* = \arg \min_{\bm{v}} f_i(\bm{v})$ is the optimal local model at $i$th client, and $\bm{w}^* = \arg \min_{\bm{w}} F(\bm{w})$ is the global optimal. 
\end{definition}
We note that these two quantities only depend on the distributions of local data across clients and the geometry of loss functions.

We also need the following standard assumptions about the analytical properties of the loss functions to obtain convergence theory:
\begin{assumption}
[Smoothness]\label{assumption: smoothness}
There exists a $L > 0$ such that $\forall i \in [n]$,
\begin{equation}
\|\nabla f_i(\bm{x}) - \nabla f_i(\bm{y})\| \leq L \|\bm{x}-\bm{y}\|, \quad \forall \bm{x}, \bm{y} \in \mathbb{R}^d.
\end{equation}
\end{assumption}
\begin{assumption}[Bounded Variance]\label{assumption: bounded grad}
The variance of stochastic gradients computed at each local data shard is bounded, i.e.,  $\forall i \in [n]$:
\begin{equation}
    \mathbb{E}[\|\nabla f_i (\bm{x};\xi) - \nabla f_i(\bm{x})\|^2]  \leq \sigma^2. 
\end{equation}
\end{assumption}

\subsection{Strongly Convex Loss}
In this section, we will present the convergence analysis of local descent \texttt{APFL} on smooth strongly convex functions: 

\begin{assumption}[Strong Convexity]\label{assumption: strong convexity}
 There exists a $\mu > 0$ such that $\forall i \in [n]$,
\begin{equation}
f_i(\bm{x}) \geq f_i(\bm{y}) + \langle \nabla f_i(\bm{y}), \bm{y}-\bm{x} \rangle + \frac{\mu}{2} \|\bm{x}-\bm{y}\|^2, \quad \forall \bm{x}, \bm{y} \in \mathbb{R}^d.
\end{equation}
\end{assumption}

\paragraph{Technical challenges.}{The analysis of convergence rates in our setting is more involved compared to analysis of local SGD with periodic averaging by~\cite{stich2018local,woodworth2020minibatch}. The key difficulty arises from the fact that unlike local SGD where local  solutions are evolved by employing mini-batch SGD, in our setting we also partially incorporate the global model to compute  stochastic gradients over local data. In addition, our goal is to find the convergence rate of the mixed model, rather than merely the local model or global model. To better illustrate this, let us first clarify the notations of models that will be used in analysis. Let us consider the simple case for now where we set $K= n$ (all device participate averaging).  We define three virtual sequences: $\{\bm{w}^{(t)}\}_{t=1}^T$, $\{\bar{\bm{v}}^{(t)}\}_{t=1}^T$ and $\{\hat{\bm{v}}^{(t)}\}_{t=1}^T$ where $\bm{w}^{(t)} = \frac{1}{n} \sum_{j=1}^n \bm{w}_i^{(t)}$,$\bar{\bm{v}}_i^{(t)} = \alpha_i \bm{v}_i^{(t)} + (1-\alpha_i) \bm{w}^{(t)}_i$ $\hat{\bm{v}}_i^{(t)} = \alpha_i \bm{v}_i^{(t)} + (1-\alpha_i) \bm{w}^{(t)}$. Since the personalized model incorporates $1-\alpha_i$ percentage of global model, then the key challenge in the convergence analysis is to find out how much the global model benefits/hurts the local convergence. To this end, we analyze how much the dynamics of personalized model $\hat{\bm{v}}_i^{(t)}$ and global model $\bm{w}^{(t)}$ differ from each other at each iteration. To be more specific, we study the distance between gradients $\|\nabla f_i(\hat{\bm{v}}_i^{(t)}) - \nabla F(\bm{w}^{(t)})\|^2$. Surprisingly, we relate this distance to gradient diversity, personalized model convergence, global model convergence and local-global optimality gap:
 {\begin{equation}
    \mathbb{E}\left[\|\nabla f_i(\hat{\bm{v}}_i^{(t)}) - \nabla F(\bm{w}^{(t)})\|^2\right] \leq  6\zeta_i + 2L^2\mathbb{E}\left[\|\hat{\bm{v}}_i^{(t)} - \bm{v}^*\|^2\right] + 6L^2\mathbb{E}\left[\|\bm{w}^{(t)} - \bm{w}^*\|^2\right] +  6L^2\Delta_i.\nonumber
\end{equation}}
$\mathbb{E}\left[\|\hat{\bm{v}}_i^{(t)} - \bm{v}^*\|^2\right]$ and $  \mathbb{E}\left[\|\bm{w}^{(t)} - \bm{w}^*\|^2\right]$ will converge very fast under smooth strongly convex objective, and $\zeta_i$  and $\Delta_i$ will serve as residual error that indicates the heterogeneity among local functions.}

The following theorem establishes the convergence of global model. We note that we state the convergence rate in terms of key parameters and hide constants in $O(\cdot)$ notation for ease of discussion  and defer the detailed analysis to appendix.
\begin{theorem}[Global model convergence of Local Descent \texttt{APFL}]
\label{Thm: Global Convergence w sampling}
If each client's objective function satisfies Assumptions~\ref{assumption: smoothness}, \ref{assumption: bounded grad}, \ref{assumption: strong convexity}, using Algorithm \ref{algorithm:LDAPFL}, by choosing the learning rate $\eta_t = \frac{16}{\mu(t+a)}$, where $a = \max\{128\kappa,\tau\}$, $\kappa = \frac{L}{\mu}$, and using average scheme $\hat{\bm{w}}  = \frac{1}{KS_T} \sum_{t=1}^T p_t   \sum_{j\in U_t} \boldsymbol{{w}}_j^{(t)}$, where $p_t = (t+a)^2$, $S_T = \sum_{t=1}^T p_t$, and letting $F^*$ to denote  the minimum of the $F$, then the following convergence holds:
\begin{align}
      \mathbb{E}\left[F(\bm{\hat{w}})\right] 
    -   F^*  \leq  O\left(\frac{\mu\mathbb{E}\left[\|\bm{w}^{(1)} - \bm{w}^*\|^2\right]}{T^3}  \right) +O\left(\frac{\kappa^2 \tau\left(\sigma^2 + 2\tau \frac{\zeta}{K}  \right)}{\mu T^2}\right) +O\left(\frac{\kappa^2\tau\left(\sigma^2 + 2\tau \frac{\zeta}{K}  \right)\ln T}{\mu T^3}\right)  + O\left(\frac{\sigma^2}{KT}\right),
\end{align}
where $\tau$ is the number of local updates (i.e., synchronization gap) .
\end{theorem}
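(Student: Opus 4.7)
The update of $\bm{w}^{(t)}_i$ in Algorithm~\ref{algorithm:LDAPFL} is decoupled from the personalized sequences $\bm{v}^{(t)}_i$ and $\bar{\bm{v}}^{(t)}_i$, so the analysis reduces to a client-sampled local-SGD style bound on the virtual average $\bm{w}^{(t)} := \frac{1}{K}\sum_{j\in U_t}\bm{w}^{(t)}_j$. The plan is to work with this virtual sequence, establish a one-step descent inequality with respect to $\|\bm{w}^{(t)}-\bm{w}^*\|^2$, and then aggregate with the weights $p_t=(t+a)^2$ to convert a $\|\bm{w}^{(t)}-\bm{w}^*\|^2$ rate into a functional-value rate for the weighted average $\hat{\bm{w}}$.

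First I would write
\[
\bm{w}^{(t+1)} - \bm{w}^* = \bm{w}^{(t)} - \bm{w}^* - \eta_t \cdot \frac{1}{K}\sum_{j\in U_t}\nabla f_j(\bm{w}^{(t)}_j;\xi_j^t),
\]
expand $\|\bm{w}^{(t+1)}-\bm{w}^*\|^2$, and take expectations. Using uniform sampling of $U_t$, the conditional expectation of $\frac{1}{K}\sum_{j\in U_t}\nabla f_j(\bm{w}^{(t)}_j)$ equals $\nabla F$ evaluated on the full average, so the cross term can be handled by adding and subtracting $\nabla F(\bm{w}^{(t)})$ and exploiting $L$-smoothness and $\mu$-strong convexity of $F$ to get a $(1-\mu\eta_t/2)$ contraction modulo error terms. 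The stochastic gradient noise contributes an $\eta_t^2\sigma^2/K$ variance term from Assumption~\ref{assumption: bounded grad}, while the client-sampling variance contributes an $\eta_t^2\zeta/K$ term via Definition~\ref{gradient divergence}.

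The core technical step is to control the local drift $\mathbb{E}\|\bm{w}^{(t)}_j - \bm{w}^{(t)}\|^2$. Since averaging happens every $\tau$ steps, within a window each $\bm{w}^{(t)}_j$ differs from $\bm{w}^{(t)}$ only by the accumulation of at most $\tau$ local stochastic gradient steps; a standard induction on the window, using Assumption~\ref{assumption: bounded grad} and the diversity bound $\zeta_j$, yields $\mathbb{E}\|\bm{w}^{(t)}_j-\bm{w}^{(t)}\|^2 \lesssim \eta_t^2 \tau(\sigma^2 + \tau\zeta/K)$ (the $\eta_t$ can be treated as slowly varying under the chosen schedule because $\eta_t/\eta_{t+\tau} = 1 + O(\tau/(t+a))$ is $O(1)$ when $a\ge \tau$). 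Plugging this drift bound back into the one-step inequality gives the recursion
\[
\mathbb{E}\|\bm{w}^{(t+1)}-\bm{w}^*\|^2 \le (1-\tfrac{\mu \eta_t}{2})\,\mathbb{E}\|\bm{w}^{(t)}-\bm{w}^*\|^2 - c\,\eta_t\big(\mathbb{E}[F(\bm{w}^{(t)})]-F^*\big) + O\!\Big(\eta_t^2\tfrac{\sigma^2}{K} + \eta_t^3 \tau(\sigma^2 + \tau\zeta/K) L^2\Big).
\]

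Finally, with $\eta_t = \tfrac{16}{\mu(t+a)}$ and $a=\max\{128\kappa,\tau\}$, I would multiply the recursion by $p_t=(t+a)^2$ and telescope. The choice of $p_t$ is specifically tuned so that $p_t \eta_t$ and $p_{t-1}(1-\mu\eta_t/2)$ combine cleanly into a telescoping term, leaving on the right-hand side (i) a $\|\bm{w}^{(1)}-\bm{w}^*\|^2/S_T$ initialization term, (ii) a $\sum_t p_t \eta_t^2$ noise term yielding the $\sigma^2/(KT)$ piece, and (iii) a $\sum_t p_t \eta_t^3$ drift term yielding the $\kappa^2\tau(\sigma^2+\tau\zeta/K)/(\mu T^2)$ and $\ln T/T^3$ pieces (since $S_T=\Theta(T^3)$). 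By Jensen's inequality applied to the convex $F$ and the weights $p_t/S_T$, $\mathbb{E}[F(\hat{\bm{w}})]-F^*$ is bounded by the same quantity, which matches the claim.

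The main obstacle is the drift bound under partial participation: one must argue carefully that the virtual average $\bm{w}^{(t)}$ is well-defined across the sampling rounds and that the drift inside a window does not blow up due to the resampling of $U_t$ at synchronization steps. Uniform sampling is essential here to keep the drift estimator unbiased and to obtain the $\zeta/K$ scaling (rather than $\zeta$) for the sampling-induced variance; the rest is bookkeeping of the weighted telescoping sum.
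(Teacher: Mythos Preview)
Your proposal is correct and follows essentially the same route as the paper: define the virtual average $\bm{w}^{(t)}=\frac{1}{K}\sum_{j\in U_t}\bm{w}_j^{(t)}$, derive a one-step contraction $(1-\mu\eta_t/2)$ on $\|\bm{w}^{(t)}-\bm{w}^*\|^2$ via strong convexity, plug in the drift bound $\frac{1}{K}\sum_{j\in U_t}\mathbb{E}\|\bm{w}^{(t)}-\bm{w}_j^{(t)}\|^2\lesssim \eta_{t-1}^2\tau(\sigma^2+\tau\zeta/K)$, multiply by $p_t/\eta_t$, telescope, and finish with Jensen. Two small slips to clean up: after multiplying by $p_t/\eta_t$ the noise and drift sums are $\sum_t p_t\eta_t$ and $\sum_t p_t\eta_t^2$ (not $\eta_t^2$ and $\eta_t^3$), and the drift coefficient in the recursion carries an extra $L^2/\mu$ (from Young's inequality on the cross term), which is what produces the $\kappa^2/\mu$ prefactor in the final bound.
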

\begin{proof}
The proof is provided in Appendix~\ref{proof thm2}.
\end{proof}
\begin{remark}
It is noticeable that the obtained rate matches the convergence rate of the FedAvg, and if we choose $\tau = \sqrt{T/K}$, we recover the rate $O(1/KT)$, which is the convergence rate of well-known local SGD with periodic averaging~\citep{woodworth2020minibatch}.
\end{remark}
We now turn to stating the most important result, convergence of the personalized local model to the optimal local model.
\begin{theorem}[Personalized model convergence of Local Descent \texttt{APFL}]
\label{localpartial}
Assume  each client's objective function satisfies Assumptions~\ref{assumption: smoothness}, \ref{assumption: bounded grad}, \ref{assumption: strong convexity}, and let $\kappa = L/\mu$. Using Algorithm \ref{algorithm:LDAPFL}, by choosing the mixing weight $\alpha_i \geq \max\{ 1-\frac{1}{4\sqrt{6}\kappa}, 1-\frac{1}{4\sqrt{6}\kappa \sqrt{\mu}}\} $, learning rate: $\eta_t = \frac{16}{\mu(t+a)}$, where $a = \max\{128\kappa,\tau\}$, and using average scheme $\hat{\bm{v}}_i = \frac{1}{S_T} \sum_{t=1}^T p_t (\alpha_i\bm{v}_i^{(t)}+(1-\alpha_i)\frac{1}{K}\sum_{j\in U_t}\boldsymbol{{w}}_j^{(t)})$, where $p_t = (t+a)^2$, $S_T = \sum_{t=1}^T p_t$, and letting $f_i^*$ to denote  the local minimum of the $i$th client, then the following convergence rate holds for all clients $i \in [n]$:
\begin{align}
     \mathbb{E}[f_i(\boldsymbol{\hat{v}} _i)] - f_i^* &= O\left(\frac{\mu  }{bT^3}\right) +\alpha_i^2O\left(\frac{\sigma^2}{\mu b T}\right)+(1-\alpha_i)^2O\left(\frac{\zeta_i}{\mu b} + \frac{ \kappa L\Delta_i}{b}\right)\nonumber\\
      & + (1-\alpha_i)^2\left(O\left(\frac{\kappa L \ln T}{bT^3}\right) +O\left(\frac{\kappa^2\sigma^2}{\mu b KT}\right)+ O\left(\frac{\kappa^2\tau \left(\sigma^2 + \tau(\zeta_i + \frac{\zeta}{K})\right) }{\mu bT^2} + \frac{\kappa^4 \tau\left(\sigma^2 + 2\tau \frac{\zeta}{K}  \right) }{\mu bT^2}\right)  \right).\nonumber
\end{align}
where $b = \min\left\{\frac{K}{n},\frac{1}{2}\right\}$.
\end{theorem}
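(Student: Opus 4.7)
The plan is to reduce the risk bound to tracking the mean-square distance $\mathbb{E}\|\hat{\bm{v}}_i^{(t)}-\bm{v}_i^*\|^2$ (where $\hat{\bm{v}}_i^{(t)}=\alpha_i\bm{v}_i^{(t)}+(1-\alpha_i)\bm{w}^{(t)}$ uses the virtual full-average global iterate), and then solve a linear recursion for this quantity with the announced step-size $\eta_t=16/(\mu(t+a))$ and polynomial weights $p_t=(t+a)^2$. Because $\hat{\bm{v}}_i$ is a convex combination of per-round quantities $\alpha_i\bm{v}_i^{(t)}+(1-\alpha_i)\frac{1}{K}\sum_{j\in U_t}\bm{w}_j^{(t)}$, Jensen's inequality applied to $f_i$ together with $L$-smoothness at the minimizer (using $\nabla f_i(\bm{v}_i^*)=0$) gives $\mathbb{E}[f_i(\hat{\bm{v}}_i)]-f_i^*\le \frac{L}{2S_T}\sum_t p_t\,\mathbb{E}\|\alpha_i\bm{v}_i^{(t)}+(1-\alpha_i)\tfrac{1}{K}\sum_{j\in U_t}\bm{w}_j^{(t)}-\bm{v}_i^*\|^2$. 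The sampling variance from averaging $\bm{w}_j^{(t)}$ over a uniformly random subset $U_t$ (rather than over all $n$ clients) is $(1-\alpha_i)^2 \cdot O(\frac{1}{K}\mathbb{E}\|\bm{w}_j^{(t)}-\bm{w}^{(t)}\|^2)$; combining this with the virtual quantity produces the $b=\min\{K/n,1/2\}$ dependence.

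Next, I would derive a per-step recursion for $\mathbb{E}\|\hat{\bm{v}}_i^{(t)}-\bm{v}_i^*\|^2$. The chain rule $\nabla_{\bm{v}}f_i(\bar{\bm{v}}_i^{(t-1)};\xi)=\alpha_i\nabla f_i(\bar{\bm{v}}_i^{(t-1)};\xi)$ combined with the averaging step yields
\begin{equation*}
\hat{\bm{v}}_i^{(t)}=\hat{\bm{v}}_i^{(t-1)}-\eta_t\Bigl[\alpha_i^2\nabla f_i(\bar{\bm{v}}_i^{(t-1)};\xi_i^t)+(1-\alpha_i)\tfrac{1}{n}\sum_j\nabla f_j(\bm{w}_j^{(t-1)};\xi_j^t)\Bigr].
\end{equation*}
Squaring and taking expectation, the $-2\eta_t\alpha_i^2\langle\hat{\bm{v}}_i^{(t-1)}-\bm{v}_i^*,\nabla f_i(\hat{\bm{v}}_i^{(t-1)})\rangle$ piece is handled by strong convexity (yielding an $\alpha_i^2\mu\eta_t$ contraction), while the $(1-\alpha_i)$ piece is rewritten via $\pm\nabla F(\bm{w}^{(t-1)})$ and then routed through the central inequality already highlighted in the excerpt,
\begin{equation*}
\mathbb{E}\|\nabla f_i(\hat{\bm{v}}_i^{(t-1)})-\nabla F(\bm{w}^{(t-1)})\|^2\le 6\zeta_i+2L^2\mathbb{E}\|\hat{\bm{v}}_i^{(t-1)}-\bm{v}_i^*\|^2+6L^2\mathbb{E}\|\bm{w}^{(t-1)}-\bm{w}^*\|^2+6L^2\Delta_i,
\end{equation*}
which I would prove by inserting $\pm\nabla f_i(\bm{v}_i^*)$, $\pm\nabla f_i(\bm{w}^*)$, $\pm\nabla F(\bm{w}^*)$, using $\nabla f_i(\bm{v}_i^*)=\nabla F(\bm{w}^*)=0$, Assumption~\ref{assumption: smoothness}, and the definitions of $\zeta_i,\Delta_i$. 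The remaining drift quantities, namely $\mathbb{E}\|\bar{\bm{v}}_i^{(t-1)}-\hat{\bm{v}}_i^{(t-1)}\|^2=(1-\alpha_i)^2\mathbb{E}\|\bm{w}_i^{(t-1)}-\bm{w}^{(t-1)}\|^2$ and the consensus drift $\mathbb{E}\|\bm{w}_j^{(t-1)}-\bm{w}^{(t-1)}\|^2$, are bounded by the standard local-SGD argument: unrolling over the at most $\tau$ local steps since the last synchronization yields the $O(\eta_t^2\tau(\sigma^2+\tau\zeta))$ drift, which in turn supplies the $\kappa^2\tau^2\zeta/K$ and $\tau\sigma^2$ factors appearing in the final bound, while $\mathbb{E}\|\bm{w}^{(t-1)}-\bm{w}^*\|^2$ is simply fed in from Theorem~\ref{Thm: Global Convergence w sampling} and accounts for the $\kappa^4\tau(\sigma^2+\tau\zeta/K)/(\mu b T^2)$ residual.

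Putting these ingredients together produces a recursion of the form
\begin{equation*}
\mathbb{E}\|\hat{\bm{v}}_i^{(t)}-\bm{v}_i^*\|^2\le\bigl(1-c\,\mu\eta_t\bigr)\mathbb{E}\|\hat{\bm{v}}_i^{(t-1)}-\bm{v}_i^*\|^2+\alpha_i^2\eta_t^2\sigma^2+(1-\alpha_i)^2\eta_t\bigl(\zeta_i+L^2\Delta_i+L^2\mathbb{E}\|\bm{w}^{(t-1)}-\bm{w}^*\|^2\bigr)+\text{drift},
\end{equation*}
which I would resolve by multiplying both sides by $p_t=(t+a)^2$, telescoping, and invoking the standard $\eta_t=\Theta(1/(\mu(t+a)))$ weighted-sum lemma to read off the claimed rate. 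The hardest step is not any individual estimate but the absorption of the coupling injected by the gradient-comparison inequality: it inserts a $6L^2(1-\alpha_i)^2\mathbb{E}\|\hat{\bm{v}}_i^{(t-1)}-\bm{v}_i^*\|^2$ term back into the recursion, and this can be absorbed into the $\alpha_i^2\mu$ contraction only when $(1-\alpha_i)L$ is dominated by a constant multiple of $\mu$. This is precisely the origin of the theorem's hypothesis $\alpha_i\ge 1-\Theta(1/\kappa)$; tightening the constants while simultaneously ensuring that the $b=\min\{K/n,1/2\}$ factor appears in the sampling-variance terms but not in the bias terms $\zeta_i$ and $\Delta_i$ is the main bookkeeping bottleneck and dictates the specific constant $4\sqrt{6}$ in the statement.
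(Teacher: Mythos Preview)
Your overall architecture---track $\mathbb{E}\|\hat{\bm{v}}_i^{(t)}-\bm{v}_i^*\|^2$, invoke the gradient-comparison inequality to couple local and global dynamics, absorb the $L^2(1-\alpha_i)^2$ feedback term via the hypothesis $\alpha_i\ge 1-\Theta(1/\kappa)$, feed in the global-model rate from Theorem~\ref{Thm: Global Convergence w sampling}, and telescope with weights $p_t$---is exactly the paper's. Two ingredients, however, are misplaced and would either derail the argument or miss the stated rate.

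First, the factor $b=\min\{K/n,1/2\}$ does not come from sampling variance in the $\bm{w}$-average. In Algorithm~\ref{algorithm:LDAPFL} the local model $\bm{v}_i$ is updated only when $i\in U_t$; otherwise $\bm{v}_i^{(t)}=\bm{v}_i^{(t-1)}$. The correct one-step update for the virtual mixed iterate therefore carries an indicator,
\[
\hat{\bm{v}}_i^{(t+1)}=\hat{\bm{v}}_i^{(t)}-\eta_t\Bigl[\alpha_i^2\,\mathbb{I}\{i\in U_t\}\,\nabla f_i(\bar{\bm{v}}_i^{(t)};\xi_i^t)+(1-\alpha_i)\tfrac{1}{K}\sum_{j\in U_t}\nabla f_j(\bm{w}_j^{(t)};\xi_j^t)\Bigr],
\]
and after taking expectation over the sampling the coefficient multiplying $\nabla f_i$ in the inner-product term is $\tfrac{K}{n}\alpha_i^2+(1-\alpha_i)$, not $\alpha_i^2+(1-\alpha_i)$. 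The paper shows this quantity is at least $b$ (by case analysis on whether $n/(2K)\ge 1$), and that lower bound is the sole source of the $1/b$ prefactor on every term. Your displayed update drops the indicator and uses a full average $\tfrac{1}{n}\sum_j$, so it models the wrong algorithm; the ``sampling variance of $\bm{w}_j$'' route you sketch does not recover $b$.

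Second, reducing the function gap to distance via smoothness, $f_i(\hat{\bm{v}}_i)-f_i^*\le\tfrac{L}{2S_T}\sum_t p_t\|\cdot\|^2$, is lossy by a factor of $\kappa$, which the theorem tracks explicitly. The paper never bounds the weighted average of $\|\hat{\bm{v}}_i^{(t)}-\bm{v}_i^*\|^2$. Instead, in the cross term the strong-convexity inequality produces both the contraction \emph{and} a negative function-value piece $-\bigl(\tfrac{K}{n}\alpha_i^2+1-\alpha_i\bigr)\eta_t\bigl(f_i(\hat{\bm{v}}_i^{(t)})-f_i^*\bigr)$; this piece is moved to the left, multiplied by $p_t/\eta_t$, and telescoped to obtain $\sum_t p_t\bigl(f_i(\hat{\bm{v}}_i^{(t)})-f_i^*\bigr)$ directly. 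Convexity of $f_i$ (not smoothness) then passes the bound to the averaged output $\hat{\bm{v}}_i$. If you go through $\tfrac{L}{2}\|\cdot\|^2$ you will end up with, e.g., $(1-\alpha_i)^2\,\kappa\zeta_i/\mu$ rather than $(1-\alpha_i)^2\,\zeta_i/(\mu b)$, so the displayed bound would not follow.
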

\begin{proof}
The proof is provided in Appendix~\ref{proof thm3}.
\end{proof}
An immediate implication of above theorem is the following.
\begin{corollary}
In Theorem~\ref{localpartial}, if we choose $\tau = \sqrt{T/K}$, then we recover the convergence rate:
\begin{align}
     \mathbb{E}[f_i(\boldsymbol{\hat{v}} _i)] - f_i^* & \leq   \alpha_i^2O\left(\frac{\sigma^2 }{\mu T}\right)   + (1-\alpha_i)^2\left(O\left(\frac{\kappa^2 \sigma^2}{\mu KT}\right)+ O\left(\frac{\kappa^2(\zeta_i + \frac{\zeta}{K})}{\mu KT }\right)\right)  + (1-\alpha_i)^2O\left(\frac{\zeta_i + \frac{\zeta}{K}}{\mu} +  \kappa L\Delta_i\right). \nonumber
\end{align}
\end{corollary}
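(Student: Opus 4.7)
The plan is to substitute $\tau = \sqrt{T/K}$ directly into the bound of Theorem~\ref{localpartial} and identify which contributions dominate. This is purely an algebraic simplification: no new analytic machinery is needed beyond the theorem itself, so I would proceed term-by-term through the seven summands of that bound.

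First, the leading $O(\mu/(bT^3))$ and $O(\kappa L \ln T/(bT^3))$ both decay like $1/T^3$ (up to a log), and are immediately absorbed, since the corollary's stated bound already contains $O(1/T)$ and $O(1/(KT))$ contributions that dominate. Treating $b = \min\{K/n, 1/2\}$ as an order-one constant inside the $O(\cdot)$, the $\alpha_i^2 O(\sigma^2/(\mu bT))$ term gives the $\alpha_i^2 O(\sigma^2/(\mu T))$ summand of the corollary. The $(1-\alpha_i)^2 O(\zeta_i/(\mu b) + \kappa L \Delta_i/b)$ piece yields the non-vanishing residual $(1-\alpha_i)^2 O((\zeta_i+\zeta/K)/\mu + \kappa L \Delta_i)$, where the appearance of $\zeta/K$ in the final line is a legitimate upper bound since $\zeta_i \le \zeta_i + \zeta/K$. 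The $(1-\alpha_i)^2 O(\kappa^2 \sigma^2/(\mu bKT))$ term transcribes directly into the corresponding term of the corollary.

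The only step requiring genuine calculation is the $\tau$-dependent block
\[
(1-\alpha_i)^2\, O\!\Bigl(\tfrac{\kappa^2 \tau\bigl(\sigma^2 + \tau(\zeta_i + \zeta/K)\bigr)}{\mu b T^2}\Bigr)
\]
together with the analogous $\kappa^4$ piece. Substituting $\tau = \sqrt{T/K}$ turns $\tau/T^2$ into $1/(T^{3/2}\sqrt{K})$ and $\tau^2/T^2$ into $1/(KT)$. The $\tau^2$-contribution therefore becomes $(1-\alpha_i)^2 O(\kappa^2(\zeta_i+\zeta/K)/(\mu KT))$, which is precisely the new term appearing in the corollary, while the $\tau$-contribution is at most $O(\kappa^2\sigma^2/(\mu T^{3/2}\sqrt{K}))$, dominated by the already-kept $O(\kappa^2\sigma^2/(\mu KT))$ whenever $T \ge K$. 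The analogous $\kappa^4 \tau(\sigma^2 + 2\tau\zeta/K)/(\mu bT^2)$ pieces are handled identically; their $\tau^2$-part gives $O(\kappa^4 \zeta/(\mu K^2 T))$, which is subsumed into $O(\kappa^2(\zeta_i+\zeta/K)/(\mu KT))$ once the extra powers of $\kappa$ are absorbed by the $O(\cdot)$, and their $\tau$-part is again of strictly lower order.

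The ``hard'' part, if any, is just bookkeeping: making sure every sub-dominant piece is legitimately absorbed, and verifying that the preconditions of Theorem~\ref{localpartial} (in particular the choice $a = \max\{128\kappa, \tau\}$ and the lower bound on $\alpha_i$) remain compatible with $\tau = \sqrt{T/K}$. Since Theorem~\ref{localpartial} imposes no upper bound on $\tau$, this compatibility is immediate, and the corollary follows by collecting the surviving terms.
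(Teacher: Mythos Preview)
Your proposal is correct and matches the paper's approach: the corollary is stated as an ``immediate implication'' of Theorem~\ref{localpartial} with no separate proof, so the intended argument is exactly the term-by-term substitution of $\tau = \sqrt{T/K}$ that you carry out. Your bookkeeping of which pieces dominate (in particular the observation that the $\tau^2/T^2$ contribution becomes the new $1/(KT)$ term while the $\tau/T^2$ contribution is of strictly lower order, and that $b$ and the extra $\kappa$ factors are absorbed into the $O(\cdot)$) is precisely the content the paper leaves implicit.
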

A few remarks about the convergence of personalized local model are in place:
\begin{itemize} 
    \item If we set $\alpha_i = 1$, then we recover $O\left(\frac{1}{T}\right)$ convergence rate of single machine SGD. If we only focus on the terms with $(1-\alpha_i)^2$, which is contributed by the global model's convergence, and omit the residual error, we achieve the convergence rate of $O(1/KT)$ using only $\sqrt{KT}$ communication, which matches with the convergence rate of vanilla local SGD~\citep{stich2018local,woodworth2020minibatch} ($K$ is the number of all clients for local SGD).
    \item The residual error is related to the gradient diversity and local-global optimality gap, multiplied by a factor $1-\alpha_i$. It shows that taking any proportion of the global model will result in a sub-optimal ERM model. As we discussed in Section~\ref{coupled learning}, $h_{\alpha_i}$ will not be the empirical risk minimizer in most cases. 
\end{itemize}
We also remark that the analysis of convergence in Theorem~\ref{localpartial} relies on a constraint that $\alpha_i$ needs to be larger than some value in order to get a tight rate. In fact, this condition can be alleviated, but the residual error will not be as tight as stated in Theorem~\ref{localpartial}. To see this, we present the analysis of this relaxation in the following theorem.

\begin{theorem}[Personalized model convergence of Local Descent \texttt{APFL} without assumption on $\alpha_i$]
\label{without constraint}
If each client's objective function satisfies Assumptions~\ref{assumption: smoothness}, \ref{assumption: bounded grad}, \ref{assumption: strong convexity}, and its gradient is bounded by $G$, using Algorithm \ref{algorithm:LDAPFL}, learning rate: $\eta_t = \frac{8}{\mu(t+a)}$, where $a = \max\{64\kappa,\tau\}$, and using average scheme $\hat{\bm{v}}_i = \frac{1}{S_T} \sum_{t=1}^T p_t (\alpha_i\bm{v}_i^{(t)}+(1-\alpha_i)\frac{1}{K}\sum_{j\in U_t}\boldsymbol{{w}}_j^{(t)})$, where $p_t = (t+a)^2$, $S_T = \sum_{t=1}^T p_t$, and $f_i^*$ is the local minimum of the $i$th client, then the following convergence holds for all $i \in [n]$:
\begin{align}
\mathbb{E}[f_i(\boldsymbol{\hat{v}} _i)] - f_i^* &\leq  O\left(\frac{\mu  }{bT^3}\right) +\alpha_i^2O\left(\frac{\sigma^2}{\mu b T}\right)+(1-\alpha_i)^2O\left(\frac{G^2}{\mu b} \right)\nonumber\\
      & \quad + (1-\alpha_i)^2\left(O\left(\frac{\kappa L \ln T}{bT^3}\right) +O\left(\frac{\kappa^2\sigma^2}{\mu bKT}\right)+ O\left(\frac{\kappa^2 \tau^2(\zeta_i + \frac{\zeta}{K})+\kappa^2 \tau\sigma^2 }{\mu bT^2} + \frac{\kappa^4 \tau\left(\sigma^2 + 2\tau \frac{\zeta}{K}  \right) }{\mu bT^2}\right)  \right), \nonumber
\end{align}
where $b = \min\{\frac{K}{n}, \frac{1}{2}\}$.
\end{theorem}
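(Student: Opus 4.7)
The plan is to rerun the proof of Theorem~\ref{localpartial} under the weaker condition $\|\nabla f_i\|\le G$ so that the constraint on $\alpha_i$ can be dropped, at the cost of a cruder residual error. First I introduce the same virtual sequences $\bm{w}^{(t)} = \frac{1}{n}\sum_{j=1}^n \bm{w}_j^{(t)}$, $\bar{\bm{v}}_i^{(t)} = \alpha_i \bm{v}_i^{(t)} + (1-\alpha_i)\bm{w}_i^{(t)}$, and $\hat{\bm{v}}_i^{(t)} = \alpha_i \bm{v}_i^{(t)} + (1-\alpha_i)\bm{w}^{(t)}$, and I derive a one-step recursion on $\mathbb{E}\|\hat{\bm{v}}_i^{(t+1)} - \bm{v}_i^*\|^2$. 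The final step is the usual weighted-averaging conversion to $f_i$-suboptimality with weights $p_t = (t+a)^2$ and learning rate $\eta_t = 8/(\mu(t+a))$, together with $L$-smoothness.

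Expanding the recursion, I split the update direction of $\hat{\bm{v}}_i^{(t)}$ into the local contribution $\alpha_i \nabla f_i(\bar{\bm{v}}_i^{(t)};\xi_i^t)$ and the sampled-average global contribution $(1-\alpha_i)\frac{1}{K}\sum_{j\in U_t}\nabla f_j(\bm{w}_j^{(t)};\xi_j^t)$. The $\alpha_i$-part produces the strongly-convex contraction factor $(1-\eta_t\mu/2)$ after using smoothness to transfer the gradient from $\bar{\bm{v}}_i^{(t)}$ to $\hat{\bm{v}}_i^{(t)}$ and controlling the perturbation $\|\bar{\bm{v}}_i^{(t)} - \hat{\bm{v}}_i^{(t)}\| = (1-\alpha_i)\|\bm{w}_i^{(t)} - \bm{w}^{(t)}\|$ by the standard local-drift bound in terms of $\eta_t^2\tau^2\sigma^2$ and $\eta_t^2\tau^2(\zeta_i + \zeta/K)$; stochastic-gradient variance contributes the $\alpha_i^2\sigma^2$ and $(1-\alpha_i)^2\sigma^2/K$ quadratic-in-$\eta_t$ noise; client sampling is absorbed into the factor $b = \min\{K/n,1/2\}$ by the same argument as in Theorem~\ref{Thm: Global Convergence w sampling}.

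The key departure from Theorem~\ref{localpartial} is in the cross term $(1-\alpha_i)\langle \hat{\bm{v}}_i^{(t)}-\bm{v}_i^*,\nabla F(\bm{w}^{(t)})\rangle$. In the tighter proof it was opened via the inequality $\|\nabla f_i(\hat{\bm{v}}_i^{(t)}) - \nabla F(\bm{w}^{(t)})\|^2 \le 6\zeta_i + 2L^2\|\hat{\bm{v}}_i^{(t)}-\bm{v}_i^*\|^2 + 6L^2\|\bm{w}^{(t)}-\bm{w}^*\|^2 + 6L^2\Delta_i$, and the piece proportional to $(1-\alpha_i)^2 L^2 \|\hat{\bm{v}}_i^{(t)}-\bm{v}_i^*\|^2$ had to be re-absorbed into the contraction, which is precisely what forces $1-\alpha_i = O(1/\kappa)$. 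Here I instead apply Young's inequality and use the uniform bound $\|\nabla F(\bm{w}^{(t)})\|\le G$ (which follows from $\|\nabla f_i\|\le G$ via the triangle inequality), producing the residual term $(1-\alpha_i)^2 G^2/\mu$ without coupling back into $\|\hat{\bm{v}}_i^{(t)}-\bm{v}_i^*\|^2$; no constraint on $\alpha_i$ is needed.

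Plugging this back, reusing the convergence of $\mathbb{E}\|\bm{w}^{(t)}-\bm{w}^*\|^2$ from Theorem~\ref{Thm: Global Convergence w sampling} (entering through the $(1-\alpha_i)$ channel and producing the $\kappa^2\tau/T^2$ and $\kappa^4\tau/T^2$ terms), a standard telescoping with the weights $p_t$ yields the stated bound. The main obstacle I anticipate is the bookkeeping of the local drift $\|\bm{w}_i^{(t)}-\bm{w}^{(t)}\|$ and client-sampling noise in the two-speed dynamics of $\bm{v}_i^{(t)}$ and $\bm{w}_i^{(t)}$: these do not disappear under the $G$-bound simplification, and getting them to appear with the correct $\kappa$ and $\tau$ powers in the final rate requires careful propagation through the $1-\alpha_i$ factor as well as through the global-side recursion inherited from the proof of Theorem~\ref{Thm: Global Convergence w sampling}.
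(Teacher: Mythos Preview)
Your proposal is essentially correct and matches the paper's approach: starting from the one-step recursion already derived for Theorem~\ref{localpartial}, the paper replaces the tight bound on $\mathbb{E}\bigl\|\tfrac{1}{K}\sum_{j\in U_t}\nabla f_j(\bm{w}^{(t)}) - \nabla f_i(\hat{\bm{v}}_i^{(t)})\bigr\|^2$ (whose $L^2\|\hat{\bm{v}}_i^{(t)}-\bm{v}_i^*\|^2$ piece is exactly what forced the $\alpha_i$-constraint) by the crude bound $2G^2$, and then telescopes with the weights $p_t$. Two minor corrections: the local-gradient contribution in the $\hat{\bm{v}}_i$-update carries the coefficient $\alpha_i^2$ (chain rule through $\bar{\bm{v}}_i$, then the $\alpha_i$-weighting in $\hat{\bm{v}}_i$), not $\alpha_i$; and once the $G$-bound is in place the paper's proof does \emph{not} reinvoke the global convergence of $\|\bm{w}^{(t)}-\bm{w}^*\|^2$, so that step in your plan is unnecessary---the $\kappa^4$- and $\kappa L\ln T$-type terms in the statement are inherited from Theorem~\ref{localpartial} and are in any case dominated by the $(1-\alpha_i)^2 G^2/\mu$ residual.
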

\begin{proof}
The proof is provided in Appendix~\ref{app:cond_alpha}.
\end{proof}
\begin{remark}
Here we remove the assumption $\alpha_i \geq \max\{ 1-\frac{1}{4\sqrt{6}\kappa}, 1-\frac{1}{4\sqrt{6}\kappa \sqrt{\mu}}\}$. The key difference is that we can only show the residual error with dependency on $G$, instead of more accurate quantities $\zeta_i$ and $\Delta_i$. Apparently, when the diversity among data shards is small,  $\zeta_i$ and $\Delta_i$ terms become small which leads to a tighter convergence rate. Also notice that, to realize the bounded gradient assumption, we need to require the parameters come from a bounded domain $\mathcal{W}$. Thus, we need to do projection during parameter update, which is inexpensive.
\end{remark}

 \subsection{Nonconvex Loss}

In this section we provide the convergence results of \texttt{Local Descent APFL} on smooth nonconvex functions. Before that, let us first introduce a parameterized-invariant quantity:

\begin{definition}[Gradient Discrepancy] We define the following quantity as Gradient Discrepancy:
\begin{equation} \label{local-global gap}
  \Gamma = \sup_{\bm{x}_1,\bm{x}_2} \|\nabla F(\bm{x}_1) - \nabla F(\bm{x}_2)\|^2.
\end{equation}
\end{definition}
Notice that this quantity only depends on the geometry of loss function $F$, and the set where $\bm{x}_1$ and $\bm{x}_2$ are drawn from. If we assume the norm of gradient of F are bounded by $G$, the a trivial upper bound of this quantity will be $2G^2$.  As we will show in the later Theorem, this quantity together with gradient diversity will serve as residual error in the convergence rate. Recall that in Theorem~\ref{thm:generalization}, we have a discrepancy term $\lambda_{\mathcal{H}} (S) = \sup_{h,h' \in \mathcal{H}} \frac{1}{|S|}\sum_{(\bm{x},y)\in S} |h(\bm{x})-h'(\bm{x})|$. To some extent, gradient discrepancy $\Gamma$ is analogous to $\lambda_{\mathcal{H}} (S) $, since both of them reflect the intrinsic property of loss function and hypothesis class.

\begin{theorem}[Global model convergence of Local Descent \texttt{APFL}]
\label{Thm: nonconvex_global}
If each client's objective function satisfies Assumptions~\ref{assumption: smoothness}-\ref{assumption: bounded grad}, using Algorithm \ref{algorithm:LDAPFL}, by choosing $K=n$ and learning rate $\eta = \frac{1}{2\sqrt{5}L\sqrt{T}}$, then the following convergence holds:
\begin{align}
  \frac{1}{T}\sum_{t=1}^T  \mathbb{E} \left[\left \| \nabla F(\bm{w}^{(t)})\right\|^2\right]    & \leq O\left(\frac{L}{ \sqrt{T}} \right)+   O \left( \frac{\sigma^2 + \frac{\sigma^2}{n} +\frac{\zeta}{n}}{T} \right)+ O \left( \frac{\sigma^2  }{\sqrt{T}} \right). \nonumber
\end{align}
\end{theorem}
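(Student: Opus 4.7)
The plan is to follow the now standard recipe for analyzing local SGD / FedAvg in the smooth nonconvex setting, applied to the global update of Algorithm~\ref{algorithm:LDAPFL}. Since $K=n$, sampling collapses and the averaged iterate $\bm{w}^{(t)} := \tfrac{1}{n}\sum_{i=1}^n \bm{w}_i^{(t)}$ is well defined at every $t$, even between communication rounds; moreover at communication times it equals each $\bm{w}_i^{(t)}$. I would define the virtual sequence $\bm{w}^{(t)}$ and observe that its one-step update reads
\begin{equation*}
\bm{w}^{(t)} = \bm{w}^{(t-1)} - \eta \cdot \frac{1}{n}\sum_{i=1}^n \nabla f_i\bigl(\bm{w}_i^{(t-1)};\xi_i^t\bigr).
\end{equation*}
Crucially this is a descent relation for the \emph{global} $\bm{w}^{(t)}$, driven by stochastic gradients evaluated at the \emph{local} iterates $\bm{w}_i^{(t-1)}$; nothing in it involves $\bm{v}_i$ or $\alpha_i$, so the global-model convergence analysis decouples entirely from the personalization updates.

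Next I would apply the smoothness of $F$ (which inherits $L$-smoothness from each $f_i$) to obtain the standard descent lemma
\begin{equation*}
\mathbb{E}\bigl[F(\bm{w}^{(t)})\bigr] \le \mathbb{E}\bigl[F(\bm{w}^{(t-1)})\bigr] - \eta\,\mathbb{E}\bigl\langle \nabla F(\bm{w}^{(t-1)}),\, \tfrac{1}{n}\sum_i \nabla f_i(\bm{w}_i^{(t-1)})\bigr\rangle + \tfrac{L\eta^2}{2}\,\mathbb{E}\bigl\|\tfrac{1}{n}\sum_i \nabla f_i(\bm{w}_i^{(t-1)};\xi_i^t)\bigr\|^2.
\end{equation*}
After taking conditional expectation to split mean and variance in the last term (giving a $\sigma^2/n$ contribution from averaging $n$ independent unbiased mini-batch gradients and an additional $\sigma^2$ coming from the per-coordinate variance that is not averaged away in the cross term), and using $2\langle a,b\rangle = \|a\|^2+\|b\|^2-\|a-b\|^2$, the crossterm becomes $-\tfrac{\eta}{2}\|\nabla F(\bm{w}^{(t-1)})\|^2$ plus a client-drift penalty of the form $\tfrac{\eta L^2}{n}\sum_i \mathbb{E}\|\bm{w}_i^{(t-1)}-\bm{w}^{(t-1)}\|^2$.

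The technical heart is then the client drift (consensus error) lemma: between two consecutive synchronizations I bound $\tfrac{1}{n}\sum_i \mathbb{E}\|\bm{w}_i^{(t-1)}-\bm{w}^{(t-1)}\|^2$ by unrolling the local SGD recursion from the last sync point, using smoothness, bounded variance $\sigma^2$, and gradient diversity $\zeta$ (Definition~\ref{gradient divergence}) to control $\tfrac{1}{n}\sum_i \|\nabla f_i(\bm{w}^{(t-1)})-\nabla F(\bm{w}^{(t-1)})\|^2$. With $\eta = \tfrac{1}{2\sqrt{5}L\sqrt{T}}$ the small step size makes $\eta L \tau$ a small constant, so a standard Young/recursion argument yields a drift bound of order $\eta^2\tau(\sigma^2 + \tau \zeta/n)$ plus a term proportional to $\eta^2\tau^2\|\nabla F\|^2$ which can be absorbed into the $-\tfrac{\eta}{2}\|\nabla F\|^2$ term when $T$ is large enough. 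Telescoping the descent inequality from $t=1$ to $T$, dividing by $\eta T$, and plugging $\eta=1/(2\sqrt{5}L\sqrt{T})$ collapses the bound to $O(L/\sqrt{T}) + O((\sigma^2+\sigma^2/n+\zeta/n)/T) + O(\sigma^2/\sqrt{T})$, which matches the stated rate.

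The main obstacle is the client-drift lemma: being careful to separate the variance-of-average $\sigma^2/n$ from the non-averaged $\sigma^2$ piece, and making sure the $\tau$-step unrolling does not introduce a $\zeta$ term that survives to $1/\sqrt{T}$ order. Everything else (descent lemma, telescoping, final tuning of $\eta$) is mechanical once this lemma is established.
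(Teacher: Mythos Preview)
Your proposal is correct and follows essentially the same approach as the paper: smoothness descent lemma for the virtual average $\bm{w}^{(t)}$, the polarization identity to extract $-\tfrac{\eta}{2}\|\nabla F\|^2$ plus a client-drift penalty, a drift bound of order $\tau^2\eta^2(\sigma^2+\sigma^2/n+\zeta/n)$, then telescoping and plugging in $\eta$. The only minor difference is that the paper closes the drift lemma via a self-referential recursion in $\gamma_t:=\tfrac{1}{n}\sum_i\mathbb{E}\|\bm{w}_i^{(t)}-\bm{w}^{(t)}\|^2$ (using $10L^2\tau^2\eta^2\le\tfrac12$ to absorb the $\gamma_j$ term back into the left side) rather than carrying a $\|\nabla F\|^2$ term to be absorbed into the descent inequality; both routes are standard and yield the same final rate.
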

\begin{proof}
The proof is provided in Appendix~\ref{app: nonconvex}.
\end{proof}

The following Theorem establish the convergence rate of personalized model on nonconvex loss function:
\begin{theorem}[Personalized model convergence of Local Descent \texttt{APFL}]
\label{Thm: nonconvex}
If each client's objective function satisfies Assumptions~\ref{assumption: smoothness}-\ref{assumption: bounded grad}, using Algorithm \ref{algorithm:LDAPFL}, by choosing $K=n$ and learning rate $\eta = \frac{1}{2\sqrt{5}L\sqrt{T}}$, then the following convergence holds:
\begin{align}
  &\frac{1}{T} \sum_{t=1}^T \mathbb{E} \left[\left \| \nabla f_i (\hat{\bm{v}}^{(t)}_i) \right\|^2\right] \nonumber \\
      & \leq  O \left( \frac{ L }{\sqrt{T}}  \right)  +  \alpha_i^4 O\left(\frac{\sigma^2}{\sqrt{T}}\right)  + (1-\alpha_i)^2O \left( \frac{\sigma^2}{n\sqrt{T}}\right)+  (1-\alpha_i^2)^2 \left (  \zeta_i + \Gamma \right) \nonumber\\
    & \quad +   \alpha_i^4 (1-\alpha_i)^2  O \left( \frac{  \tau^4\left( \sigma^2 + \frac{\sigma^2}{n}+ \frac{\zeta}{n}  \right) }{T^2}  + \frac{  \tau^2\left( \sigma^2 + \frac{\sigma^2}{n} + \zeta_i \right) }{T} \right) +  (1-\alpha_i)^2     O\left(\frac{\tau^2 (\sigma^2 + \frac{\sigma^2}{n} +\frac{\zeta}{n})}{T}  \right)   \nonumber.
\end{align}
where $\tau$ is the number of local updates (i.e., synchronization gap).
\end{theorem}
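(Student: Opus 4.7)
The approach is a descent-lemma argument for the virtual personalized iterate $\hat{\bm{v}}_i^{(t)} = \alpha_i \bm{v}_i^{(t)} + (1-\alpha_i)\bm{w}^{(t)}$, where $\bm{w}^{(t)} = \tfrac{1}{n}\sum_j \bm{w}_j^{(t)}$ (since $K=n$, every client participates in each round). The bookkeeping is more delicate than vanilla nonconvex local SGD because the update direction mixes a single local stochastic gradient evaluated at $\bar{\bm{v}}_i^{(t)}$ with the average of $n$ independent stochastic global gradients evaluated at the different client copies $\bm{w}_j^{(t)}$. Concretely, I apply $L$-smoothness of $f_i$ at $\hat{\bm{v}}_i^{(t+1)}$ and $\hat{\bm{v}}_i^{(t)}$, substitute
\begin{equation*}
\hat{\bm{v}}_i^{(t+1)} - \hat{\bm{v}}_i^{(t)} = -\eta\bigl[\alpha_i\nabla f_i(\bar{\bm{v}}_i^{(t)};\xi_i^t) + (1-\alpha_i)\tfrac{1}{n}\textstyle\sum_{j=1}^n \nabla f_j(\bm{w}_j^{(t)};\xi_j^t)\bigr],
\end{equation*}
take conditional expectation, and use the identity $-\langle a,b\rangle = \tfrac12\|a-b\|^2 - \tfrac12\|a\|^2 - \tfrac12\|b\|^2$ to extract $-\tfrac{\eta}{2}\|\nabla f_i(\hat{\bm{v}}_i^{(t)})\|^2$ on the right-hand side. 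What remains is a bias term $\tfrac{\eta}{2}\bigl\|\nabla f_i(\hat{\bm{v}}_i^{(t)}) - \alpha_i\nabla f_i(\bar{\bm{v}}_i^{(t)}) - (1-\alpha_i)\tfrac{1}{n}\sum_j\nabla f_j(\bm{w}_j^{(t)})\bigr\|^2$ that measures the mismatch between the chosen search direction and the true gradient, plus the stochastic variance of the search direction, which decomposes as $\alpha_i^2\sigma^2 + (1-\alpha_i)^2\sigma^2/n$ (the $1/n$ factor on the global branch reflecting averaging of $n$ independent samples).

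Next, I would decompose the bias via Cauchy--Schwarz and $L$-smoothness into three contributions: \textbf{(a)} client drift, $\alpha_i L\|\hat{\bm{v}}_i^{(t)} - \bar{\bm{v}}_i^{(t)}\| = \alpha_i L(1-\alpha_i)\|\bm{w}_i^{(t)} - \bm{w}^{(t)}\|$; \textbf{(b)} the cross term $(1-\alpha_i)\|\nabla f_i(\hat{\bm{v}}_i^{(t)}) - \nabla F(\bm{w}^{(t)})\|$, which I further split by inserting $\nabla f_i(\bm{w}^{(t)})$ into a piece bounded by $L\alpha_i\|\bm{v}_i^{(t)} - \bm{w}^{(t)}\|$ plus $\sqrt{\zeta_i}$ via the gradient-diversity definition; and \textbf{(c)} $(1-\alpha_i)L \cdot \tfrac{1}{n}\sum_j \|\bm{w}_j^{(t)} - \bm{w}^{(t)}\|$, again client drift. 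Two auxiliary drift lemmas are then required. The standard local-SGD bound, obtained by unrolling the $\bm{w}$-updates over the at most $\tau$ steps since the last synchronization and invoking Assumption~\ref{assumption: bounded grad}, gives $\tfrac{1}{n}\sum_i\mathbb{E}\|\bm{w}_i^{(t)} - \bm{w}^{(t)}\|^2 \leq O(\tau^2\eta^2(\sigma^2 + \sigma^2/n + \zeta/n))$. The personalized-global drift $\mathbb{E}\|\bm{v}_i^{(t)} - \bm{w}^{(t)}\|^2$, by contrast, has no averaging mechanism to contract it; unrolling its recursion reveals that the deterministic part is driven by $\nabla f_i(\bar{\bm{v}}_i) - \nabla F(\bm{w})$, which I would bound uniformly by $\sqrt{\zeta_i} + \sqrt{\Gamma}$ together with $L$-Lipschitz perturbations that feed back into the recursion.

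The last step is to sum the descent inequality over $t = 1,\dots,T$, divide by $\eta T$, and substitute $\eta = 1/(2\sqrt{5}L\sqrt{T})$. The telescoped initial-vs-final function-value gap produces the leading $O(L/\sqrt{T})$ term; the stochastic-gradient variance yields $\alpha_i^4 O(\sigma^2/\sqrt{T}) + (1-\alpha_i)^2 O(\sigma^2/(n\sqrt{T}))$ after passing through the $\eta^2 L$ prefactor; the local-SGD drift contributes the $(1-\alpha_i)^2 O(\tau^2(\sigma^2 + \sigma^2/n + \zeta/n)/T)$ term together with the higher-order $\alpha_i^4(1-\alpha_i)^2 O(\tau^4(\cdot)/T^2 + \tau^2(\cdot)/T)$ contributions coming from substituting the drift bound into the bias; and the non-contracting personalized-global drift gives the residual $(1-\alpha_i^2)^2(\zeta_i + \Gamma)$. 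The main obstacle is precisely this last piece: without strong convexity there is no contraction that would let me push $\|\bm{v}_i^{(t)} - \bm{w}^{(t)}\|^2$ toward the optimality gap $\Delta_i$ as was done in Theorem~\ref{localpartial}, so the cleanest uniform control uses the gradient discrepancy $\Gamma$, and this is what forces the non-vanishing residual error in the final rate.
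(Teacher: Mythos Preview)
Your overall descent-lemma strategy is right, but there are two concrete issues, one minor and one that breaks the argument.

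\textbf{Wrong update coefficient.} The increment of $\hat{\bm{v}}_i^{(t)}$ carries a factor $\alpha_i^2$ on the local branch, not $\alpha_i$: the algorithm updates $\bm{v}_i$ with $\nabla_{\bm{v}} f_i(\bar{\bm{v}}_i;\xi)=\alpha_i\nabla f_i(\bar{\bm{v}}_i;\xi)$ (chain rule through $\bar{\bm{v}}_i=\alpha_i\bm{v}_i+(1-\alpha_i)\bm{w}_i$), and then $\hat{\bm{v}}_i=\alpha_i\bm{v}_i+(1-\alpha_i)\bm{w}$ contributes another $\alpha_i$. So
\[
\hat{\bm{v}}_i^{(t+1)}-\hat{\bm{v}}_i^{(t)}=-\eta\Bigl[\alpha_i^2\nabla f_i(\bar{\bm{v}}_i^{(t)};\xi_i^t)+(1-\alpha_i)\tfrac{1}{n}\sum_j\nabla f_j(\bm{w}_j^{(t)};\xi_j^t)\Bigr],
\]
which is why the variance term carries $\alpha_i^4$ (your text states $\alpha_i^2\sigma^2$ for the variance but then claims $\alpha_i^4$ in the final bound---that inconsistency is a symptom of this slip). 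More importantly, because $\alpha_i^2+(1-\alpha_i)\neq 1$, the bias does not split cleanly into your pieces (a)--(c).

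\textbf{The iterate-drift route does not close.} Your plan in step (b) is to insert $\nabla f_i(\bm{w}^{(t)})$ and control the remaining piece via $L\alpha_i\|\bm{v}_i^{(t)}-\bm{w}^{(t)}\|$, then bound $\|\bm{v}_i^{(t)}-\bm{w}^{(t)}\|$ by unrolling its recursion. But without strong convexity this recursion has no contraction: each step adds $O(\eta(\sqrt{\zeta_i}+\sqrt{\Gamma}))$, so after $t$ steps $\|\bm{v}_i^{(t)}-\bm{w}^{(t)}\|=O(t\eta)$, and $\tfrac{1}{T}\sum_t\|\bm{v}_i^{(t)}-\bm{w}^{(t)}\|^2=O(T^2\eta^2)=O(T/L^2)$. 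Multiplied by $L^2$ this contributes a term growing like $T$, not a bounded residual. The paper avoids iterate distances entirely in this step. It writes the bias (with the correct $\alpha_i^2$) as
\[
\alpha_i^2\bigl(\nabla f_i(\hat{\bm{v}}_i)-\nabla f_i(\bar{\bm{v}}_i)\bigr)+(1-\alpha_i^2)\bigl(\nabla f_i(\hat{\bm{v}}_i)-\nabla F(\hat{\bm{v}}_i)\bigr)+(1-\alpha_i^2)\bigl(\nabla F(\hat{\bm{v}}_i)-\nabla F(\bm{w})\bigr)+(\alpha_i-\alpha_i^2)\nabla F(\bm{w})+\text{drift},
\]
bounds the third piece directly by $\Gamma$ (this is precisely why $\Gamma$ is defined as a \emph{gradient}-level discrepancy), and handles the leftover $(\alpha_i-\alpha_i^2)^2\|\nabla F(\bm{w}^{(t)})\|^2$ by invoking the global model's nonconvex convergence (Theorem~\ref{Thm: nonconvex_global}), which you never call on. That coupling to the global rate is the ingredient your outline is missing.
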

\begin{proof}
The proof is provided in Appendix~\ref{app: nonconvex}.
\end{proof}
An immediate implication of above theorem is the following.
\begin{corollary}
In Theorem~\ref{Thm: nonconvex}, if we choose $\tau = n^{-1/4} T^{1/4}$, then we recover the convergence rate:
\begin{align}
  &\frac{1}{T} \sum_{t=1}^T \mathbb{E} \left[\left \| \nabla f_i (\hat{\bm{v}}^{(t)}_i) \right\|^2\right] \nonumber \\
      & \leq  O \left( \frac{ L }{\sqrt{T}}  \right)  +  \alpha_i^4 O\left(\frac{\sigma^2}{\sqrt{T}}\right) \nonumber\\
      & \quad +    (1-\alpha_i)^2 \left ( \frac{L}{ \sqrt{T}}  + \frac{ \sigma^2}{ n\sqrt{T}} \right)  +   (1-\alpha_i)^2  O \left( \frac{ (\sigma^2 + \frac{\sigma^2}{n} +\frac{\zeta}{n})}{\sqrt{nT}}  +  \frac{ \alpha_i^4 \left( \sigma^2 + \frac{\sigma^2}{n} + \zeta_i \right) }{\sqrt{nT}} \right)  +  (1-\alpha_i^2)^2 \left (  \zeta_i + \Gamma \right) \nonumber.
\end{align}
\end{corollary}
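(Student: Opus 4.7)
The plan is to substitute the prescribed synchronization gap $\tau = n^{-1/4} T^{1/4}$ directly into the bound established in Theorem~\ref{Thm: nonconvex} and simplify term by term. The only way $\tau$ enters that bound is through the two monomials $\tau^{2}/T$ and $\tau^{4}/T^{2}$, so I will begin by computing these under the chosen schedule: $\tau^{2}/T = n^{-1/2} T^{-1/2} = 1/\sqrt{nT}$ and $\tau^{4}/T^{2} = n^{-1} T^{-1} = 1/(nT)$.

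Next, I will propagate these two identities into each of the three $\tau$-dependent summands. The contribution $\alpha_i^{4}(1-\alpha_i)^{2}\, O\bigl(\tau^{4}(\sigma^{2}+\sigma^{2}/n+\zeta/n)/T^{2}\bigr)$ becomes $\alpha_i^{4}(1-\alpha_i)^{2}\, O\bigl((\sigma^{2}+\sigma^{2}/n+\zeta/n)/(nT)\bigr)$, which is dominated by a $1/\sqrt{nT}$ term of the same flavor and can be absorbed. The contribution $\alpha_i^{4}(1-\alpha_i)^{2}\, O\bigl(\tau^{2}(\sigma^{2}+\sigma^{2}/n+\zeta_i)/T\bigr)$ becomes $\alpha_i^{4}(1-\alpha_i)^{2}\, O\bigl((\sigma^{2}+\sigma^{2}/n+\zeta_i)/\sqrt{nT}\bigr)$, which matches the $\alpha_i^{4}(\sigma^{2}+\sigma^{2}/n+\zeta_i)/\sqrt{nT}$ summand inside the last $O(\cdot)$ of the corollary. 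Finally, $(1-\alpha_i)^{2}\, O\bigl(\tau^{2}(\sigma^{2}+\sigma^{2}/n+\zeta/n)/T\bigr)$ becomes $(1-\alpha_i)^{2}\, O\bigl((\sigma^{2}+\sigma^{2}/n+\zeta/n)/\sqrt{nT}\bigr)$, exactly as stated.

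The $\tau$-independent pieces of Theorem~\ref{Thm: nonconvex} --- namely $O(L/\sqrt{T})$, $\alpha_i^{4}\, O(\sigma^{2}/\sqrt{T})$, $(1-\alpha_i)^{2}\, O(\sigma^{2}/(n\sqrt{T}))$ and $(1-\alpha_i^{2})^{2}(\zeta_i+\Gamma)$ --- carry over unchanged. The $(1-\alpha_i)^{2}\,L/\sqrt{T}$ term that appears grouped with the $\sigma^{2}/(n\sqrt{T})$ piece in the corollary is trivially dominated by the unconditional $O(L/\sqrt{T})$ already on the right-hand side, so adding it back does not enlarge the bound. Collecting the resulting expressions and regrouping by the multiplicative factors $\alpha_i^{4}$ and $(1-\alpha_i)^{2}$ (and $(1-\alpha_i^{2})^{2}$) yields precisely the statement of the corollary.

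There is essentially no obstacle here: the heavy lifting was completed in Theorem~\ref{Thm: nonconvex}, and what remains is arithmetic. The only conceptual point worth noting is that $\tau = n^{-1/4} T^{1/4}$ is the choice that balances the $\tau^{2}/T$ local-drift term against the $1/\sqrt{T}$ baseline, yielding the final $1/\sqrt{nT}$ rate on the averaged gradient-norm squared; any larger $\tau$ would inflate the $\tau^{2}/T$ term beyond $1/\sqrt{nT}$, while any smaller $\tau$ would not improve the dominant terms. It is also worth a sanity check that $\tau = n^{-1/4} T^{1/4}$ is a legitimate integer between $1$ and $T$ when $T \geq n$, which is the asymptotic regime of interest.
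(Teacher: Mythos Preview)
Your proposal is correct and matches the paper's approach exactly: the corollary is stated in the paper as ``an immediate implication'' of Theorem~\ref{Thm: nonconvex} with no separate proof, and your term-by-term substitution of $\tau = n^{-1/4}T^{1/4}$ (yielding $\tau^2/T = 1/\sqrt{nT}$ and $\tau^4/T^2 = 1/(nT)$) is precisely the intended arithmetic. Your remark that the $(1-\alpha_i)^2 L/\sqrt{T}$ piece is already dominated by the unconditional $O(L/\sqrt{T})$ is also correct and handles the slight redundancy in how the corollary groups terms.
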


Here we recover $O \left( \frac{ 1 }{\sqrt{T}}  \right) + (1-\alpha_i)^2 O \left( \frac{ 1 }{\sqrt{T}} +  \frac{ 1 }{\sqrt{nT}} \right)+  (1-\alpha_i^2)^2 \left (  \zeta_i + \Gamma \right)$, with $n^{3/4}T^{3/4}$ communication rounds. The rate with factor $(1-\alpha_i)^2$ is contributed from the global model convergence, and here we still have some additive residual error reflected by $\zeta_i$ and $\Gamma$. Compared to most related work of~\cite{haddadpour2019convergence} regarding the convergence of local SGD on nonconvex functions, they obtain $O(1/\sqrt{nT})$, the sublinear speedup w.r.t. $n$, while we only have speedup on partial terms in convergence rate. This could be solved by using different learning rate for local and global update, which we will leave as an open problem. Additionally, we assume $K=n$ to derive the convergence in nonconvex setting, so how to alleviate this assumption is still open. 

\section{Experiments}\label{sec:exp}
In this section, we empirically show the effectiveness of the proposed algorithm in personalized federated learning. To that end, we aim at showing the convergence of both optimization and generalization errors of our proposed algorithms on training data and local validation data, respectively. More importantly, we show that using our algorithm, we can utilize the model for each client, in order to get the best generalization error on their own validation data. Throughout these experiments we aim at finding answers to the following questions:
\begin{itemize}
    \item How does our proposed \texttt{APFL} algorithm perform on training a model with a strongly convex loss function, compared to other approaches?
    \item What is the effect of sampling of clients on the proposed \texttt{APFL}?
    \item How does the adaptive tuning of $\alpha$ affect the training of the personalized model?
    \item How does \texttt{APFL} performs on training a model with a non-convex loss function?
    \item How does the proposed \texttt{APFL} algorithm perform compared to other personalization approaches for federated learning?
\end{itemize}
To answer these questions we use various models (such as logistic regression, CNN, and MLP), on different datasets. First, we describe the experimental setup we used for our experiments, as close as possible to a real federated learning setup, and then present the results.

\subsection{Experimental setup}
To mimic the real setting of the federated learning, we run our code on Microsoft Azure systems, using Azure Machine Learning API. We developed our code on PyTorch~\citep{paszke2019pytorch} using its ``distributed'' API. Then, we deploy this code on Standard F64s family of VMs in Microsoft Azure, where each node has $64$ vCPUs that enable us to run multiple threads of the training simultaneously. We use the Message Passing Interface (MPI) to connect each node to the server. To use PyTorch in compliance with MPI, we need to build it against the MPI. Thus, we build our user-managed docker container with the aforementioned settings.

\paragraph{Datasets.} We use four datasets for our experiments, MNIST\footnote{\url{http://yann.lecun.com/exdb/mnist/}}, CIFAR10~\citep{krizhevsky2009learning}, EMNIST~\citep{cohen2017emnist}, and a synthetic dataset.

\paragraph{MNIST and CIFAR10} For the MNIST and CIFAR10 datasets to be similar to the setting in federated learning, we need to manually distribute them in a non-IID way, hence the data distribution is pathologically heterogeneous. To this end, we follow the steps used by~\cite{mcmahan2017communication}, where they partitioned the dataset based on labels and for each client draw samples from some limited number of classes. We use the same way to create $3$ datasets for the MNIST, that are, MNIST non-IID with $2$ classes per client, MNIST non-IID with $4$ classes per client, and MNIST IID, where the data is distributed uniformly random across different clients. Also, we create an non-IID CIFAR10 dataset, where each client has access to only $2$ classes of data. 

\paragraph{EMNIST} In addition to pathological heterogeneous data distribution, we applied our algorithm on a real-world heterogeneous dataset, which is an extension to the MNIST dataset. The EMNIST dataset includes images of characters divided by authors, where each author has a different style, make their distributions different~\cite{caldas2018leaf}. We use only digit characters and $1000$ authors' data to train our models on.

\paragraph{Synthetic} For generating the synthetic dataset, we follow the procedure used by~\cite{li2018federated}, where they use two parameters, say \texttt{synthetic$\left(\gamma,\beta\right)$}, that control how much the local model and the local dataset of each client differ from that of other clients, respectively. Using these parameters, we want to control the diversity between data and model of different clients. The procedure is that for each client we generate a weight matrix $\bm{W}_i \in \mathbb{R}^{m \times c}$ and a bias $\bm{b}\in \mathbb{R}^c$, where the output for the $i$th client is $y_i = \arg\max\left(\sigma\left(\bm{W}_i^\top \bm{x}_i + b\right)\right)$, where $\sigma(.)$ is the softmax. In this setting, the input data $\bm{x}_i \in \mathbb{R}^{m}$ has $m$ features and the output $y$ can have $c$ different values indicating number of classes. The model is generated based on a Gaussian distribution $\bm{W}_i \sim \mathcal{N}\left(\bm{\mu}_i,1\right)$ and $\bm{b}_i \sim \mathcal{N}\left(\bm{\mu}_i,1\right)$, where $\bm{\mu}_i \sim \mathcal{N}\left(0,\gamma\right)$. The input is drown from a Gaussian distribution $\bm{x}_i \sim \mathcal{N}\left(\bm{\nu}_i,\bm{\Sigma}\right)$, where $\bm{\nu}_i \sim \mathcal{N}\left(V_i,1\right)$ and $V_i \sim \mathcal{N}\left(0,\beta\right)$. Also the variance $\bm{\Sigma}$ is a diagonal matrix with value of $\bm{\Sigma}_{k,k} = k^{-1.2}$. Using this procedure, we generate three different datasets, namely \texttt{synthetic$\left(0.0,0.0\right)$}, \texttt{synthetic$\left(0.5,0.5\right)$}, and  \texttt{synthetic$\left(1.0,1.0\right)$}, where we move from an IID dataset to a highly non-IID data.

\subsection{Experimental results}
In this part, we discuss the results of applying \texttt{APFL} in a federated setting. For all the experiments, we have $100$ users (except for the EMNIST experiment, where we use the data of $1000$ clients), each of which has access to its own data only. For the learning rate, we use the linear decay structure with respect to the stochastic steps, suggested by~\cite{bottou2012stochastic}. At each iteration the learning rate is decreased by $1\%$, unless otherwise stated.
Throughout these experiments we are reporting the results for these three models:
\begin{itemize}
    \item \textbf{Global Model}: The global model is the aggregated model after each round of communication. In this experiment it could be the global model of either FedAvg or SCAFFOLD~\citep{karimireddy2019scaffold}.
    \item \textbf{Localized Global Model}: This is the fine-tuned version of the global model at each round of communication after $\tau$ steps of local SGD. Here, we have either the local FedAvg or the local SCAFFOLD, representing the local version of their respective global model. The reported results are for the average of the performance over all the local models on each online client. In all the experiments $\tau=10$, unless otherwise stated.
    \item \textbf{Personalized Model}: This model is the personalized model produced by our proposed algorithm \texttt{APFL} (Algorithm~\ref{algorithm:LDAPFL}). The reported results are the average of the respective performance of personalized models over all online clients at each round of communication. In the comparison with other personalization approaches, this model refers to their respective personalized model.
\end{itemize}
To corroborate the theoretical findings, we report the performance over training data for optimization part and local validation data (from the same distribution as training data for each client) for the generalization part.

\paragraph{Strongly convex loss.}  First, we run this set of experiments  on the MNIST dataset, with different levels of non-IIDness using the three datasets from MNIST discussed before. We use logistic regression with parameter regularization as our loss function, to have a strongly convex loss function: $f(\bm{w})= \frac{1}{m} \sum_{i=1}^m \log(1+\exp(-y_i \bm{w}^{T}\bm{x}_i))+ \frac{\lambda}{2}\|\bm{w}\|^2 $. For this part, we do not have client sampling for federated learning. We compare the personalized model of \texttt{APFL} with localized models of FedAvg~\citep{mcmahan2017communication} and SCAFFOLD~\citep{karimireddy2019scaffold}, as well as their global models. The initial learning rate is set to $0.1$ and it is decaying as mentioned before. The results of running this experiment on $100$ clients are depicted in Figure~\ref{fig:mnist_loss_acc}, where we move from highly non-IID data distribution (left) to IID data distribution (right). The first row shows the training loss of different local and global models as well as personalized models with different rates of personalization as $\alpha$. The second row shows the generalization performance of different models on each client's local validation data. As it can be seen, global models learned from FedAvg and SCAFFOLD have high local training losses. On the other hand, taking more proportion of the local model in the personalized model (namely, increasing $\alpha$) will result in the lower training losses. For generalization ability, we can see that the best performance is given by personalized model with $\alpha = 0.25$ in both (a) and (b) cases, which outperforms the global (FedAvg and SCAFFOLD) and their localized versions. However, as we move toward IID data distribution, the effect of personalization vanishes; that is, for IID data, models learned by FedAvg and SCAFFOLD have better generalization capabilities. Hence, as expected by the theoretical findings, we can benefit from personalization the most when there is a statistical heterogeneity between the data of different clients. When the data are distributed IID, local models of FedAvg or SCAFFOLD is preferable.
\begin{figure}[t!]
	\centering
	\subfigure{
		\centering
		\includegraphics[width=0.305\textwidth]{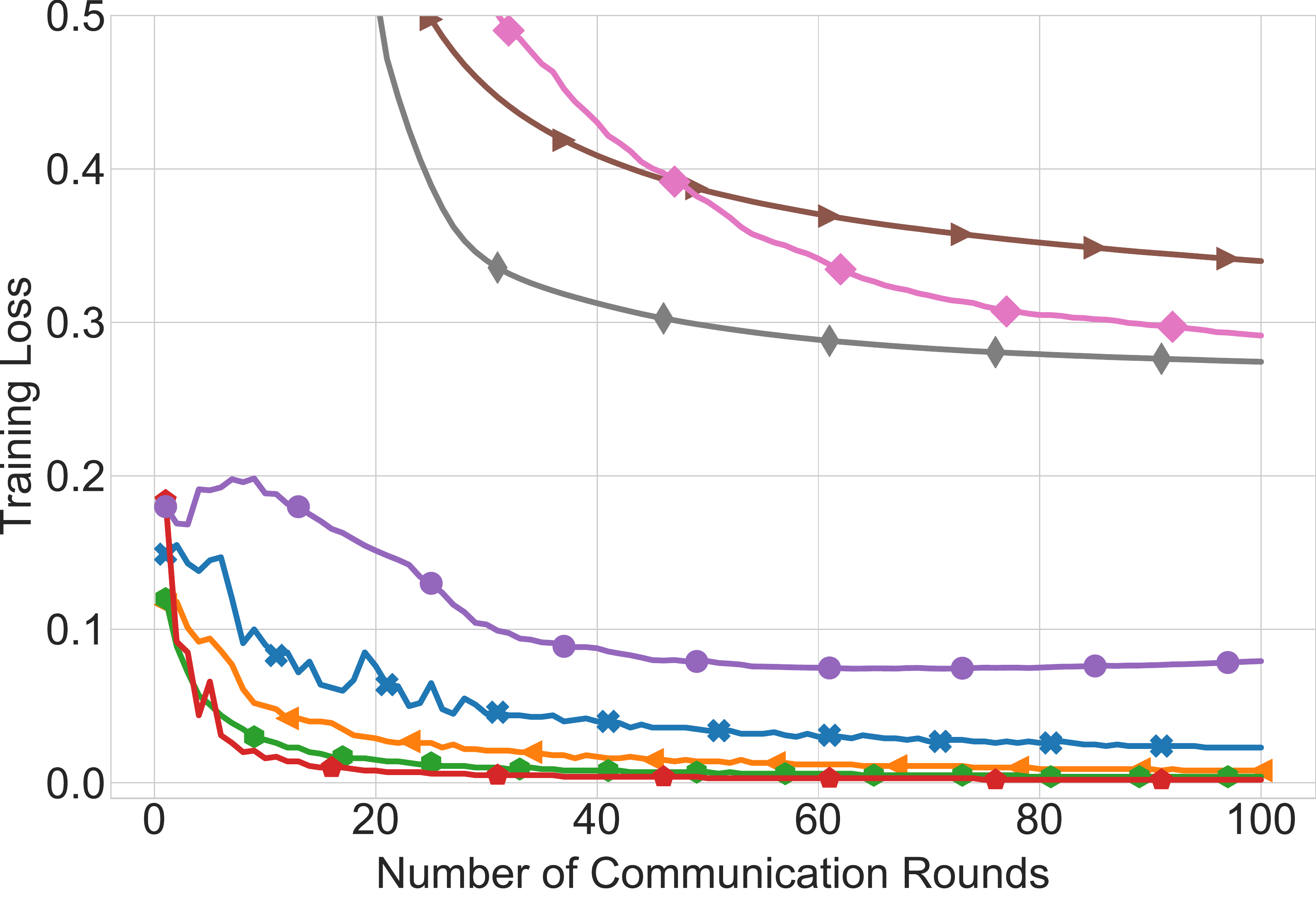}
		\label{fig:mnist_loss_comm_class2}
		}
		\hfill
		\subfigure{
			\centering 
			\includegraphics[width=0.305\textwidth]{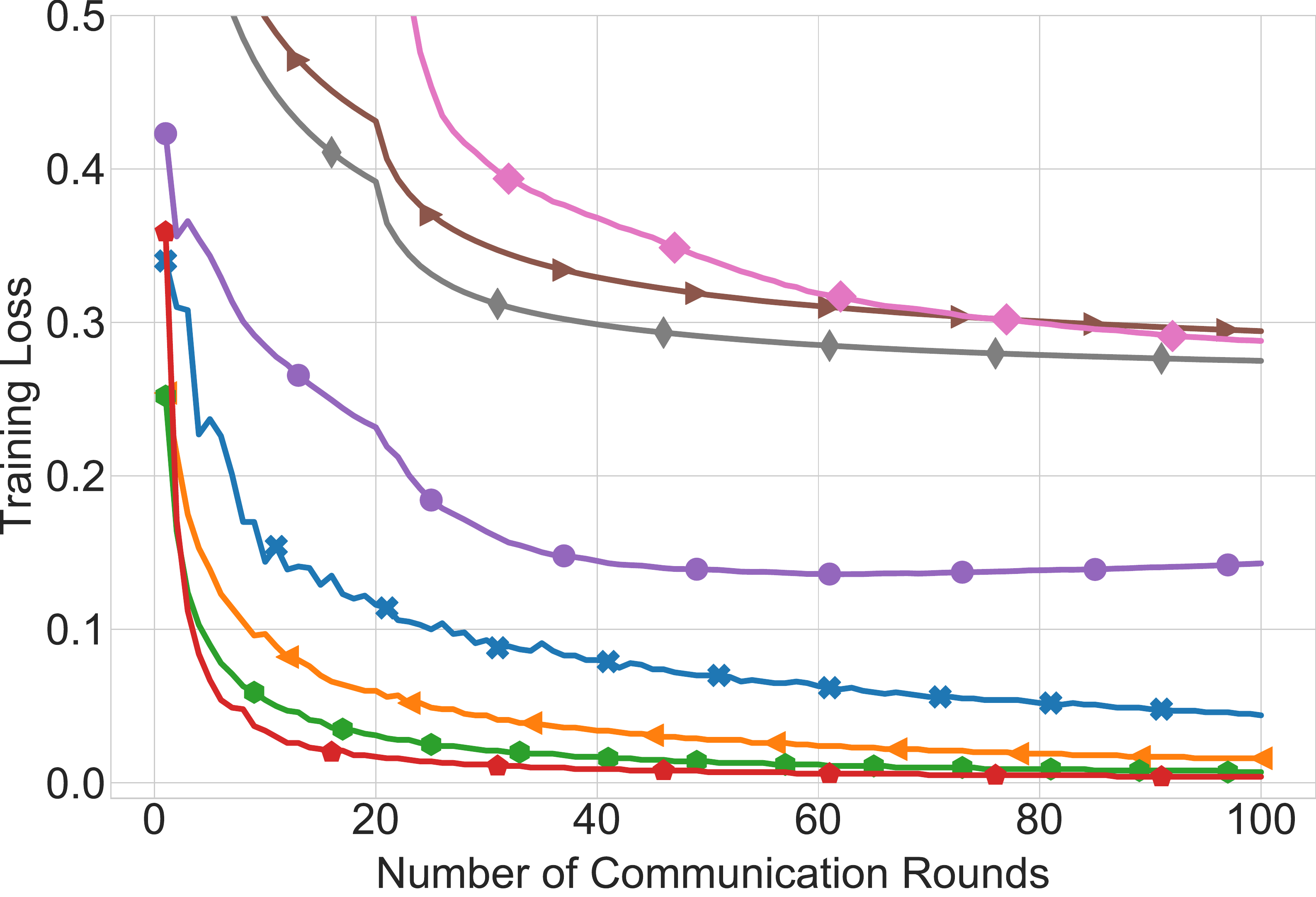}
			\label{fig:mnist_loss_comm_class4}
			}
		\hfill
	\subfigure{  
		\centering 
		\includegraphics[width=0.305\textwidth]{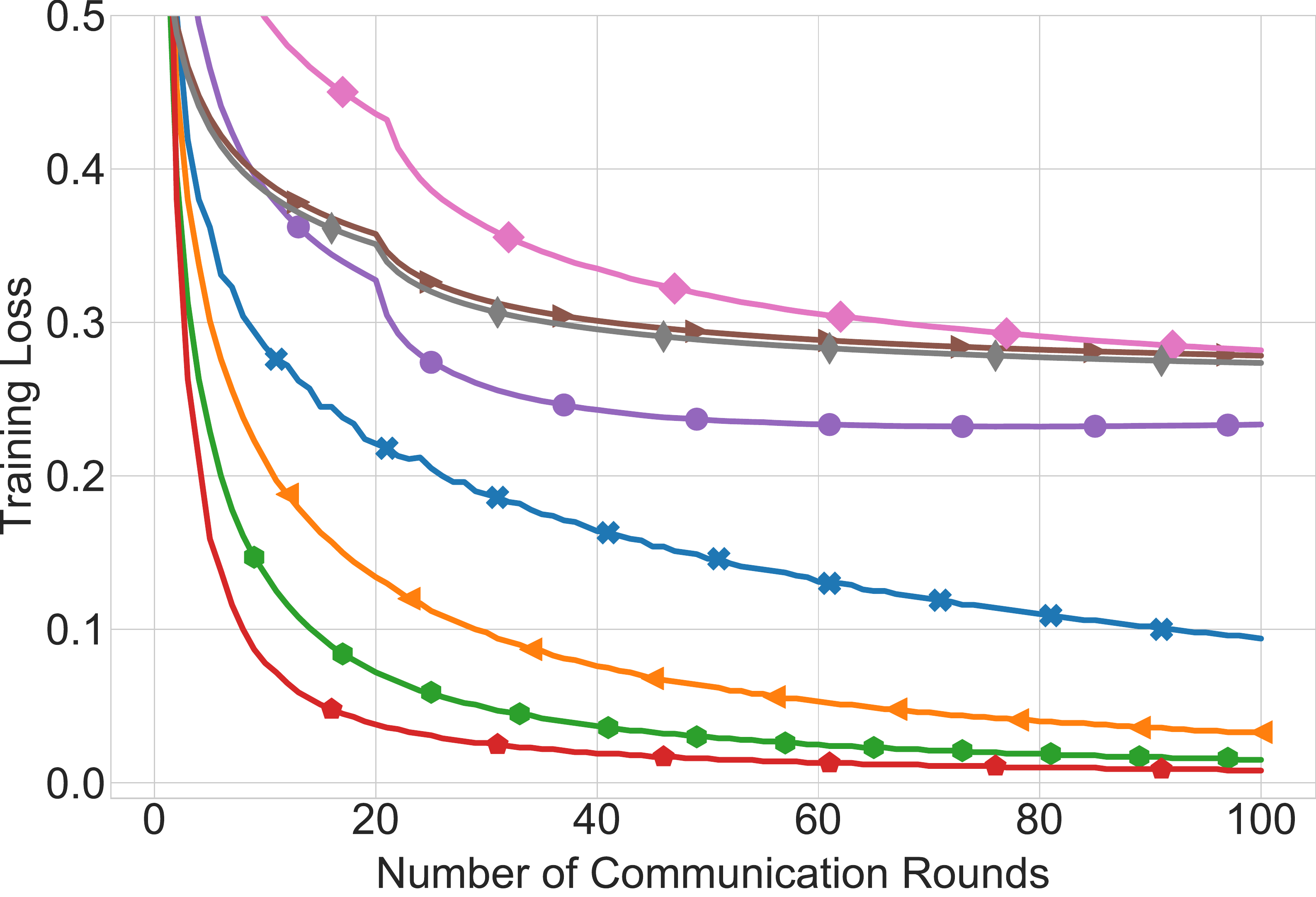}
		\label{fig:mnist_loss_comm_iid}
		}
		
	\setcounter{subfigure}{0}
	\subfigure[Non-IID with 2 classes per client]{
		\centering
		\includegraphics[width=0.305\textwidth]{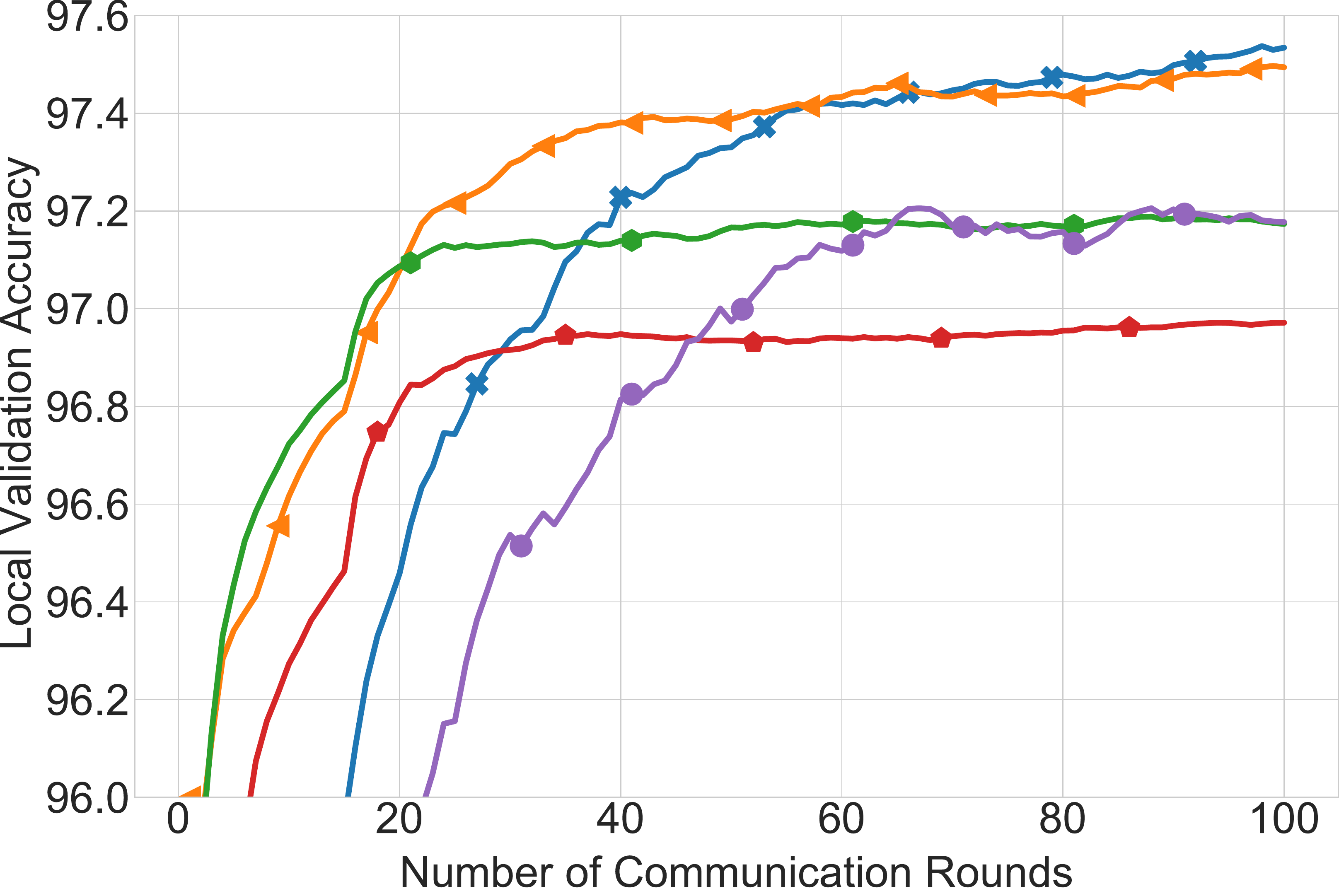}
		\label{fig:mnist_acc_comm_class2}
	}
	\hfill
	\subfigure[Non-IID with 4 classes per client]{  
			\centering 
			\includegraphics[width=0.305\textwidth]{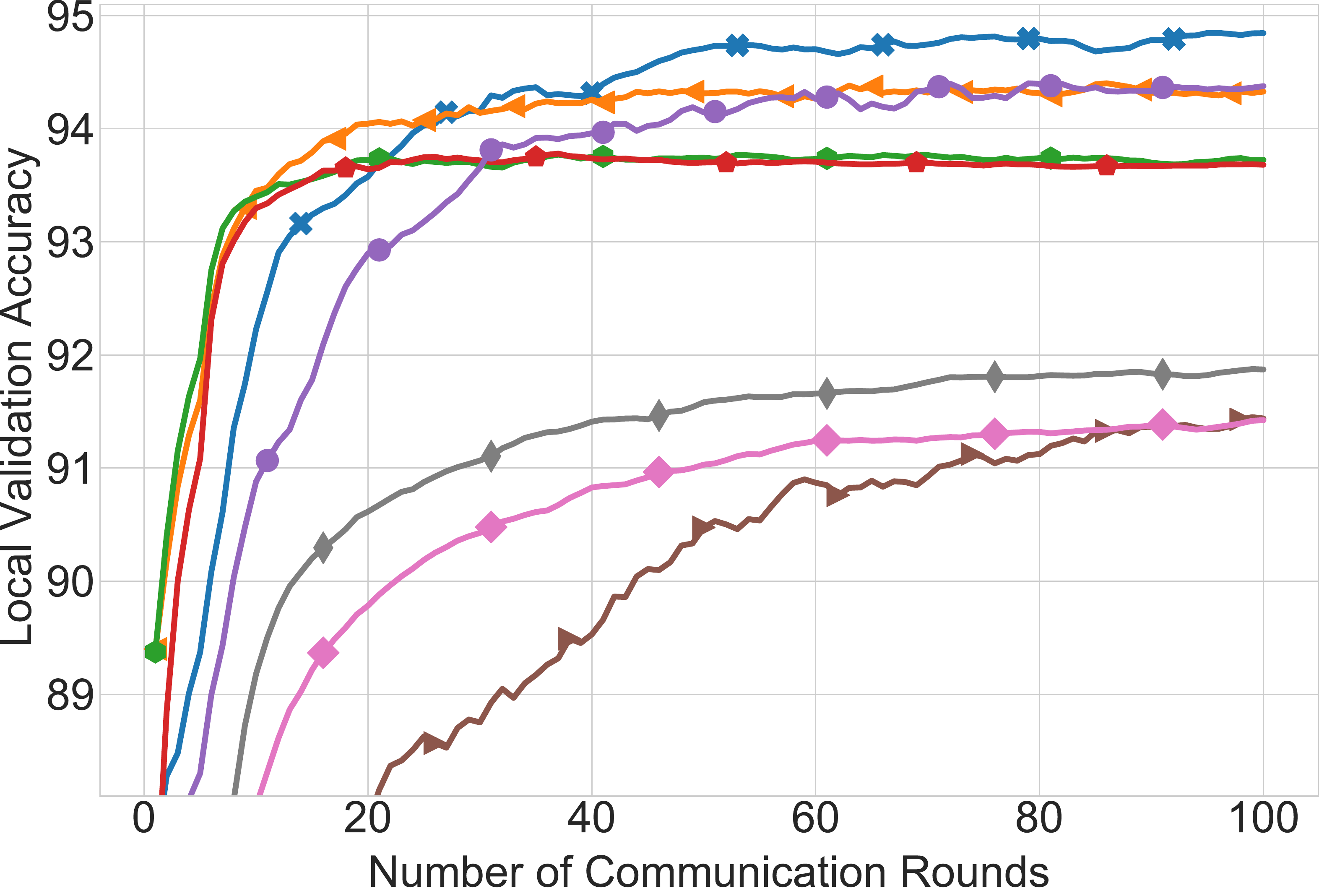}
			\label{fig:mnist_acc_comm_class4}
	}
	\hfill
	\subfigure[IID data distribution among clients]{   
		\centering 
		\includegraphics[width=0.305\textwidth]{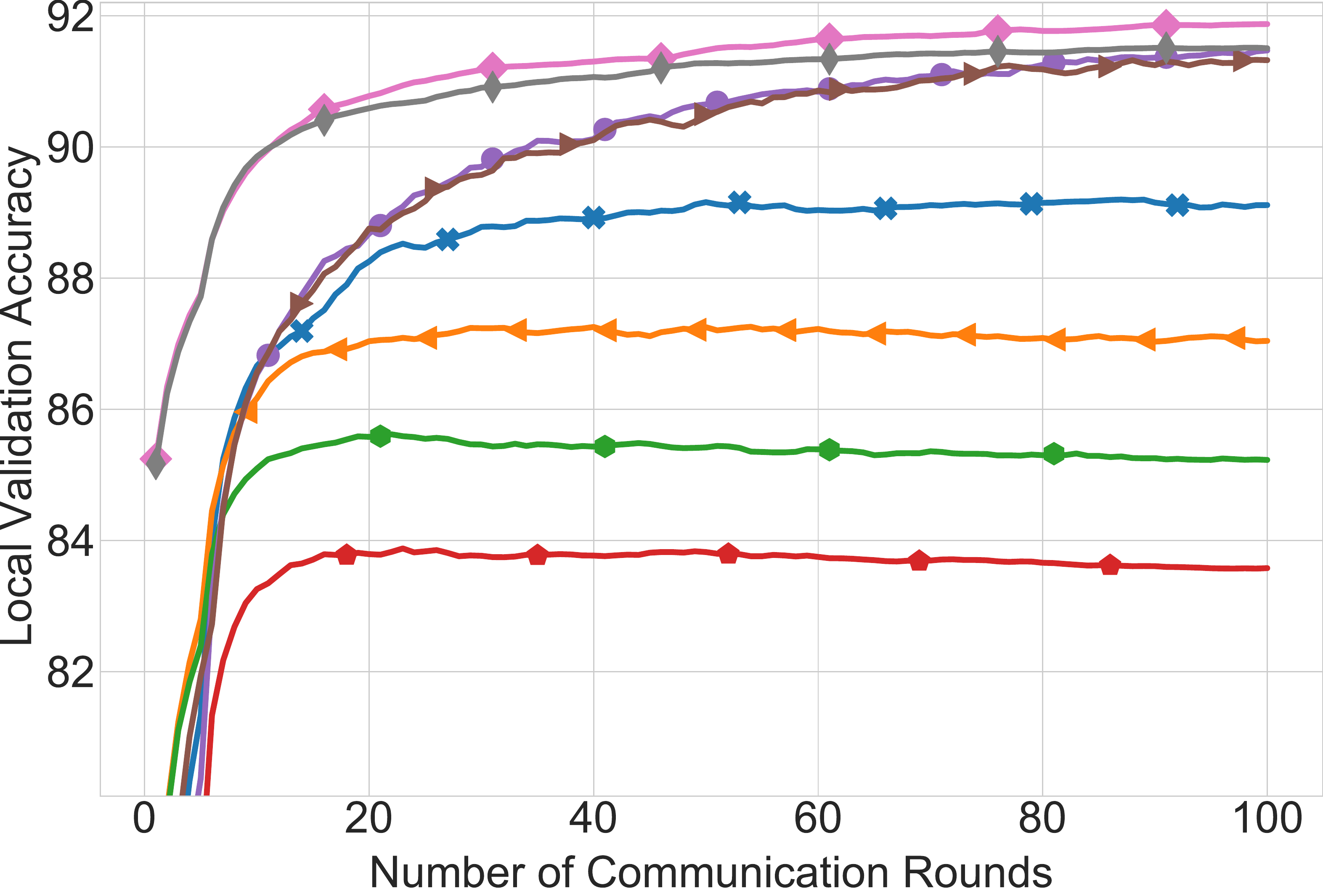}
		\label{fig:mnist_acc_comm_iid}
		}
		
	\centering
	\subfigure{\centering 
			\includegraphics[width=0.8\textwidth]{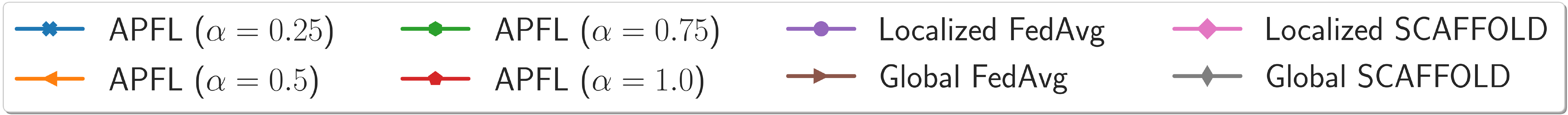}
	}
	\caption[]{Comparing the performance of the proposed \texttt{APFL} algorithm with FedAvg~\citep{mcmahan2017communication} (\texttt{APFL} with $\alpha=0$) and SCAFFOLD~\citep{karimireddy2019scaffold} on the MNIST dataset with different levels of non-IID data distribution among different clients using a logistic regression model. The first row shows the training loss for global models, as well as local and personalized models, averaged over all clients. The second row shows the generalization of the same models on their validation data. In (a), the second row, SCAFFOLD lines and global FedAvg line are removed since they represent low values, which degrade the readability of the plot.}
	\label{fig:mnist_loss_acc}
\end{figure}

\paragraph{Effect of sampling.} To understand how the sampling of different clients will affect the performance of the \texttt{APFL} algorithm, we run the same experiment with different sampling rates for the MNIST dataset. The results of this experiment are depicted in Figure~\ref{fig:mnist_loss_acc_sampling}, where we run the experiment for different sampling rates of $K \in \{0.3,0.5,0.7\}$. Also, we run it with different values of $\alpha \in \{0.25,0.5,0.75\}$. The results are reported for the personalized model of \texttt{APFL} and localized FedAvg. As it can be inferred, decreasing the sampling ratio has a negative impact on both the training and generalization performance of FedAvg. However, we can see that despite the sampling ratio, \texttt{APFL} is outperforming local model of the FedAvg in both training and generalization. Also, from the results of Figure~\ref{fig:mnist_loss_acc}, we know that for this dataset that is highly non-IID, larger $\alpha$ values are preferred. Increasing $\alpha$ can diminish the negative impacts of sampling on personalized models both in training and generalization.

\begin{figure*}[t!]
	\centering
	\subfigure{
		\centering
		\includegraphics[width=0.30\textwidth]{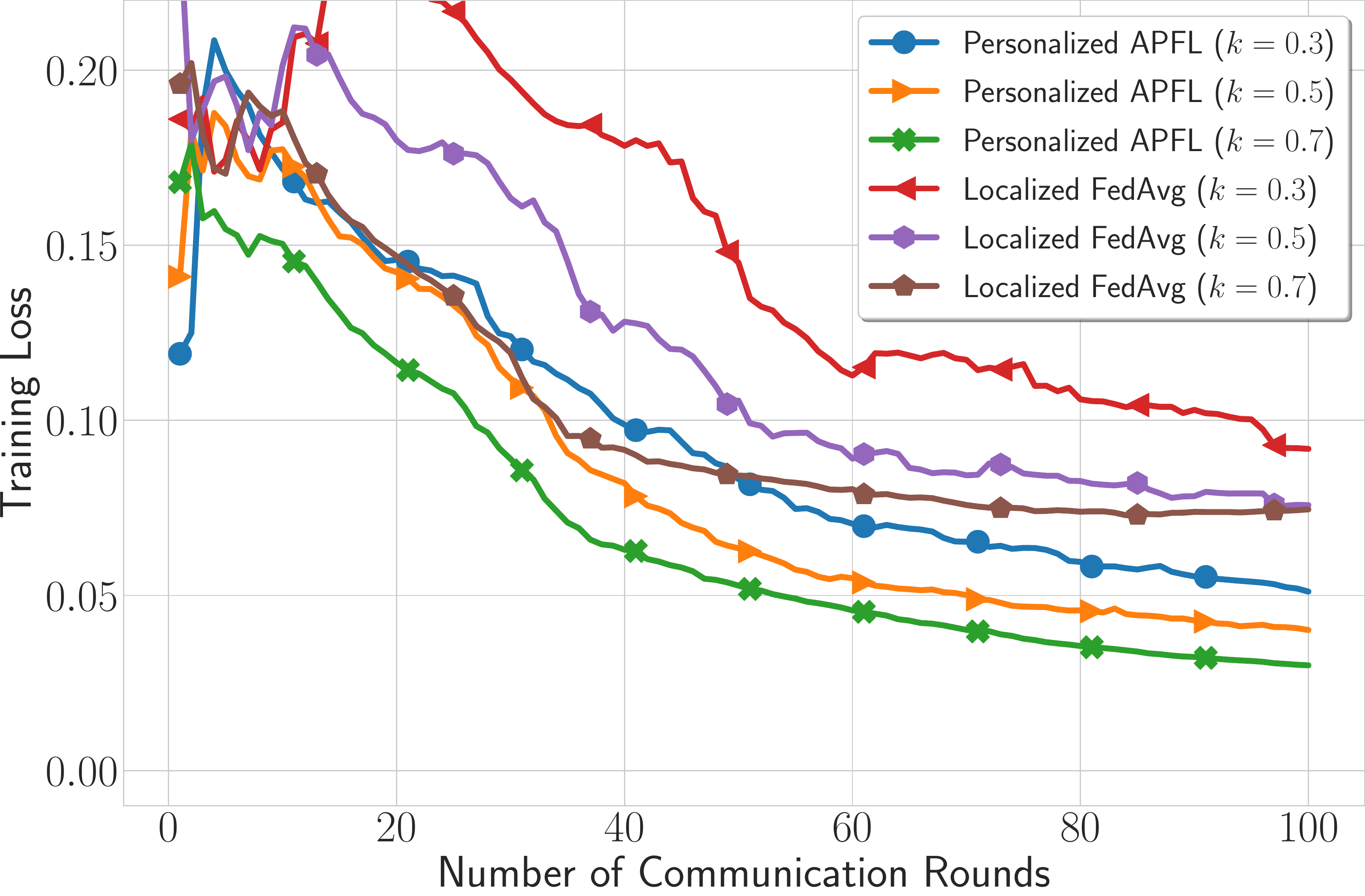}
		\label{fig:mnist_loss_comm_alpha025}
		}
		\hfill
		\subfigure{
			\centering 
			\includegraphics[width=0.30\textwidth]{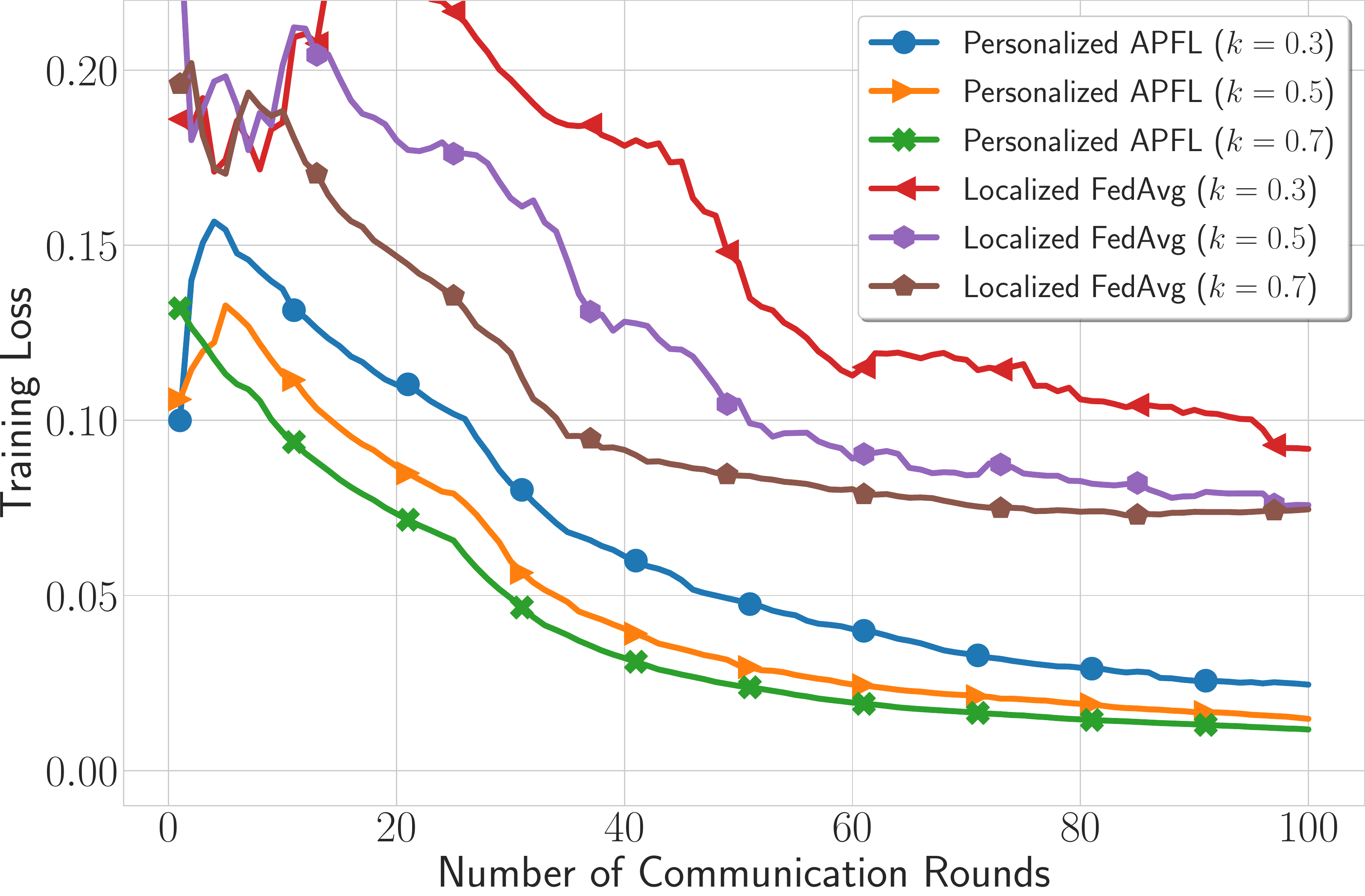}
			\label{fig:mnist_loss_comm_alpha05}
			}
		\hfill
	\subfigure{  
		\centering 
		\includegraphics[width=0.30\textwidth]{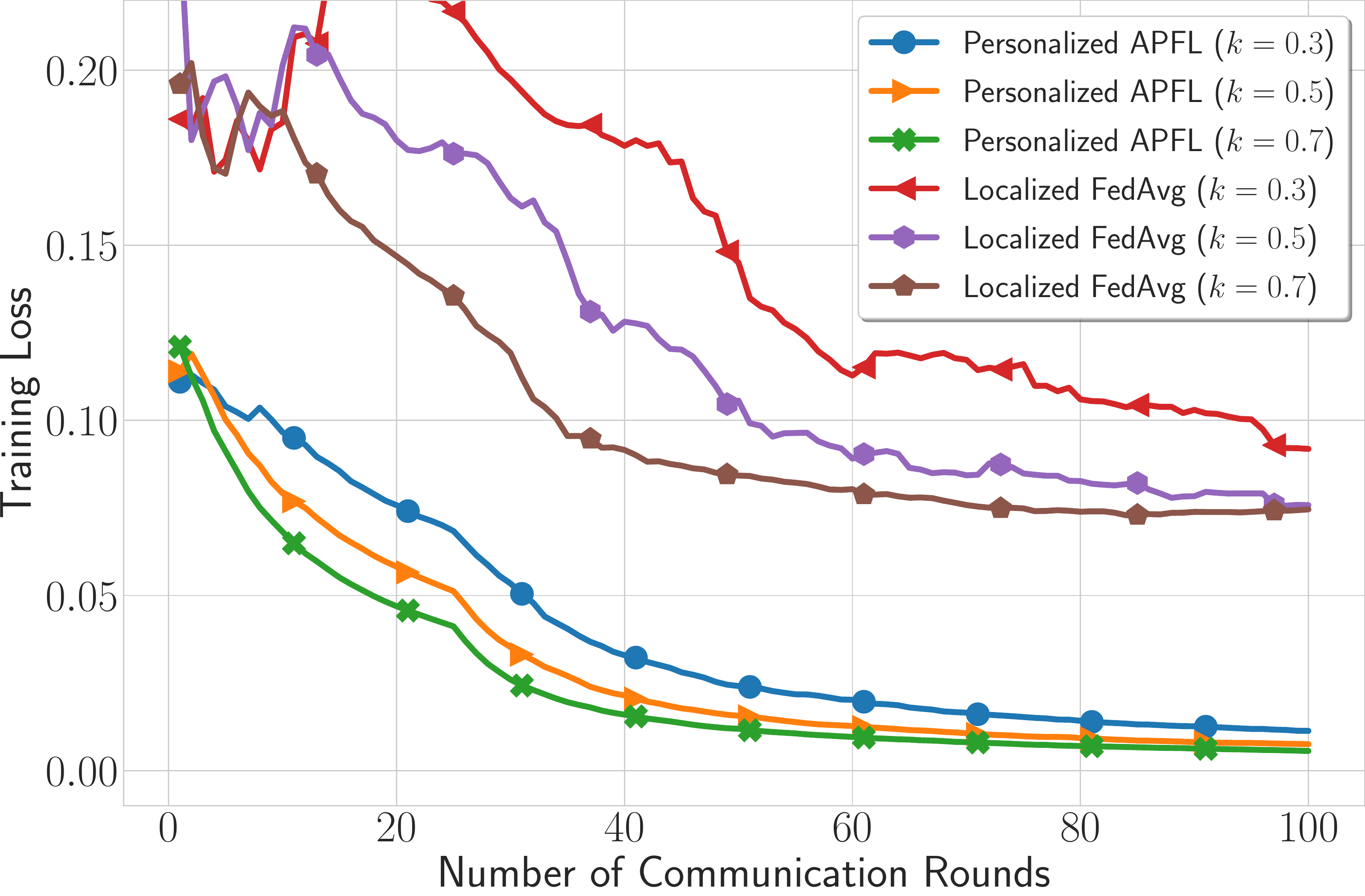}
		\label{fig:mnist_loss_comm_alpha075}
		}
		
	\setcounter{subfigure}{0}
	\subfigure[$\alpha=0.25$]{
		\centering
		\includegraphics[width=0.305\textwidth]{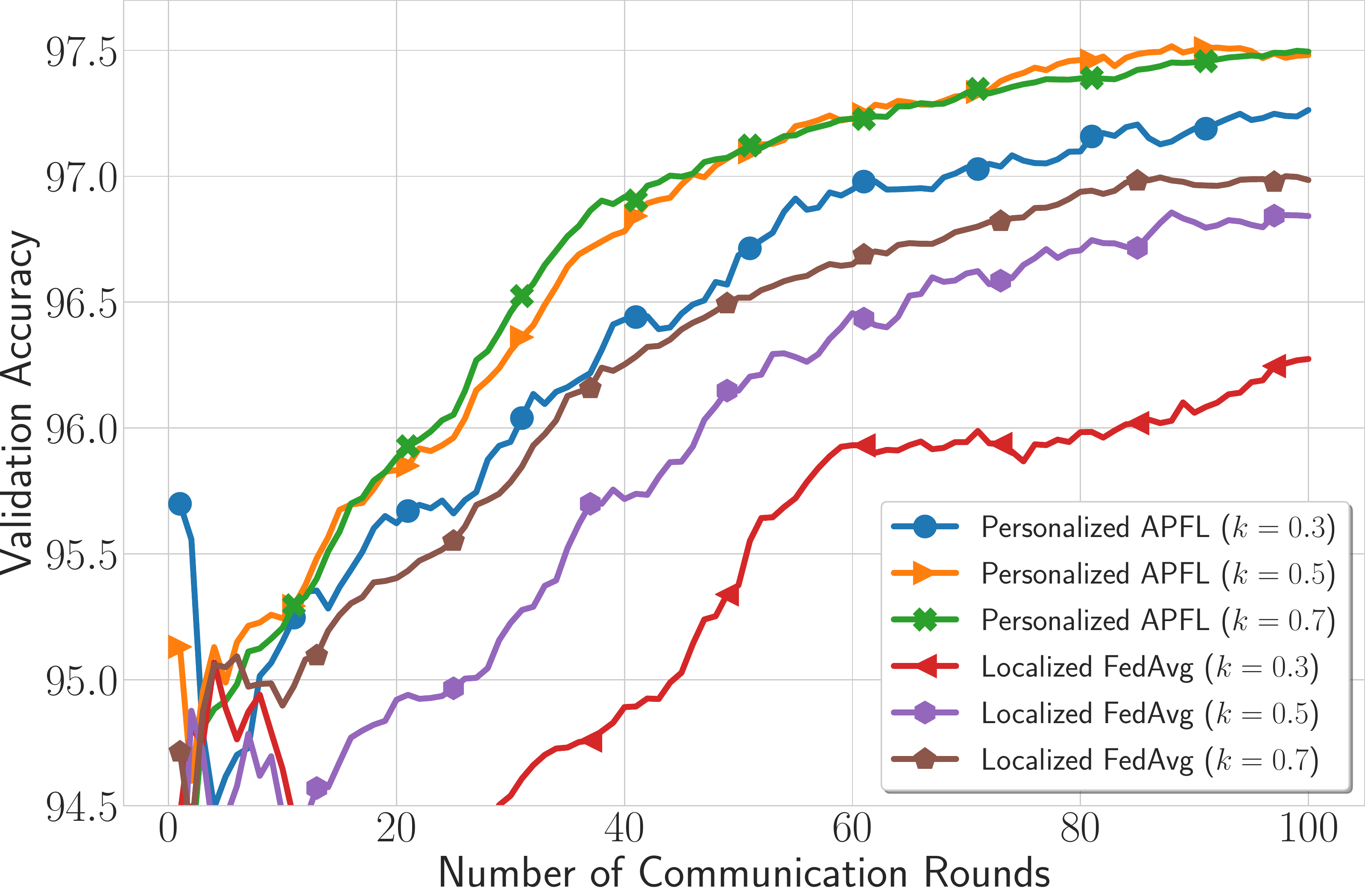}
		\label{fig:mnist_acc_comm_alpha025}
	}
	\hfill
	\subfigure[$\alpha=0.5$]{  
			\centering 
			\includegraphics[width=0.31\textwidth]{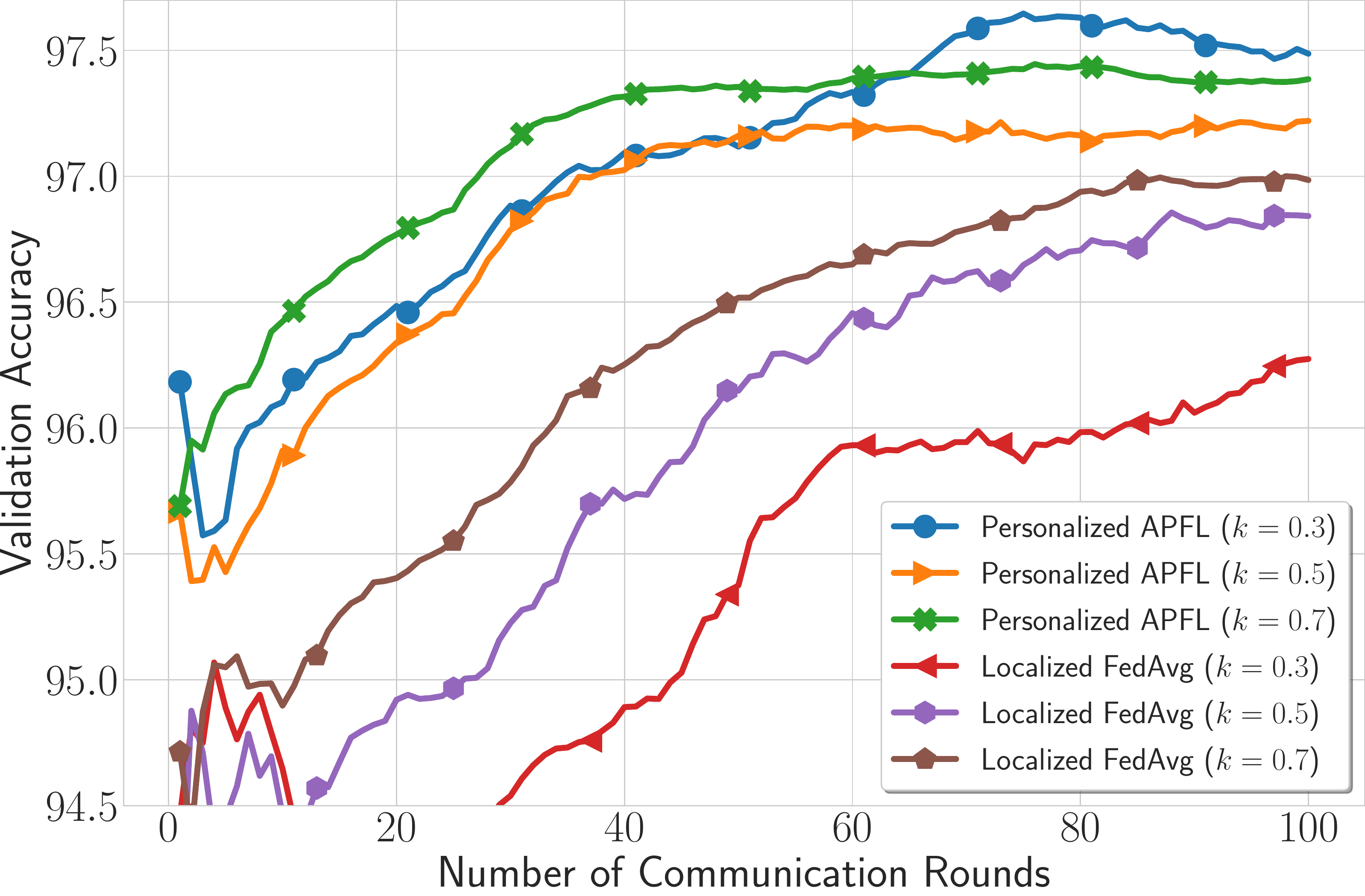}
			\label{fig:mnist_acc_comm_alpha05}
	}
	\hfill
	\subfigure[$\alpha=0.75$]{   
		\centering 
		\includegraphics[width=0.31\textwidth]{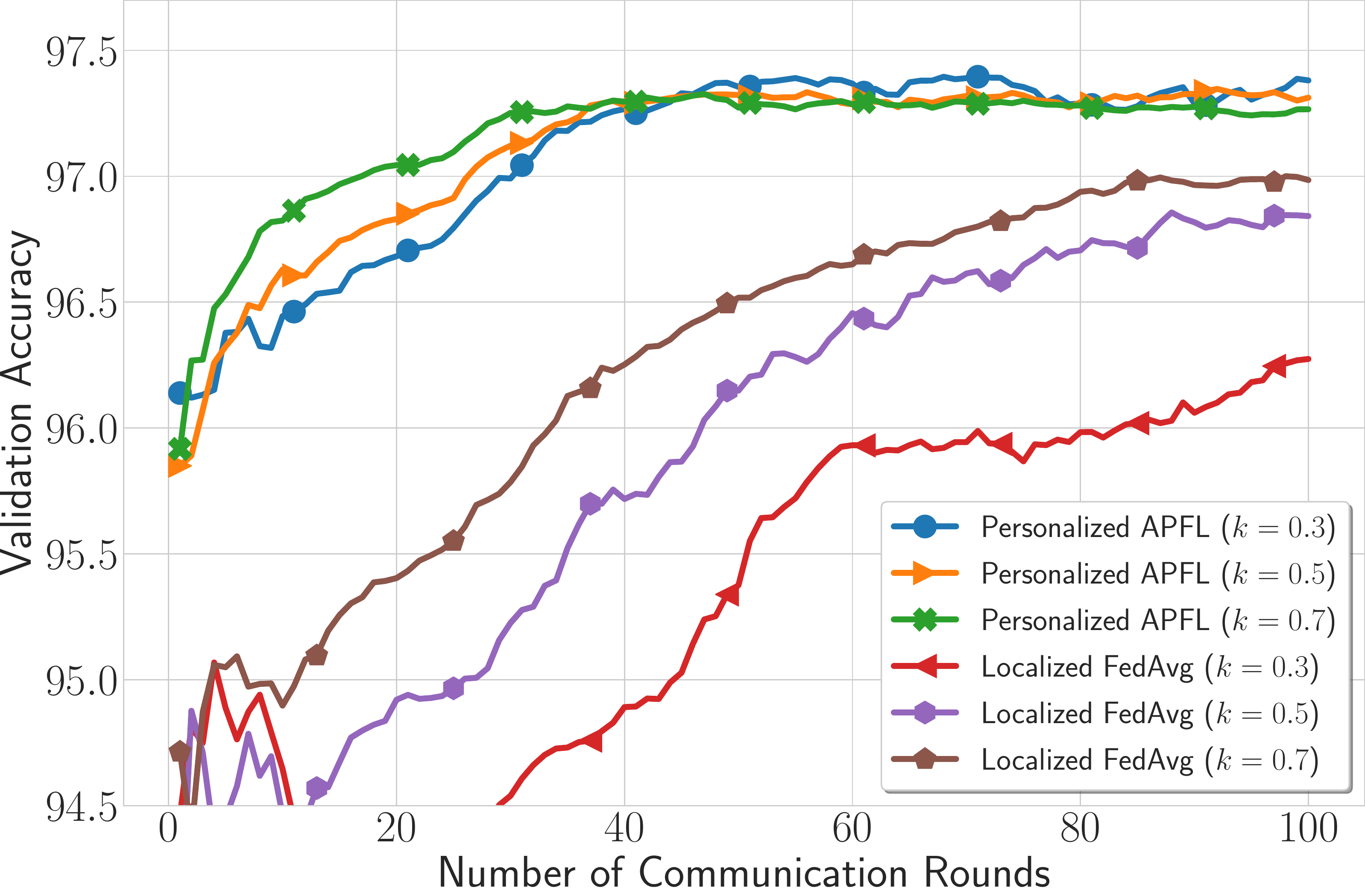}
		\label{fig:mnist_acc_comm_alpha75}
		}
	\caption[]{Evaluating the effect of sampling on \texttt{APFL} and FedAvg algorithm using the MNIST dataset that is non-IID with only $2$ classes per client with logistic regression as the loss. The first row is training performance on the local model of FedAvg and personalized model of \texttt{APFL} with different sampling rates from $\{0.3,0.5,0.7\}$. The second row is the generalization performance of models on local validation data, aggregated over all clients. It can be inferred that despite the sampling ratio, \texttt{APFL} can superbly outperform FedAvg. 
    }
	\label{fig:mnist_loss_acc_sampling}
\end{figure*}

\paragraph{Adaptive $\alpha$ update.} Now, we want to show how adaptively learning the value of $\alpha$ across different clients, based on (\ref{eq:alpha_gd}), will affect the training and generalization performance of \texttt{APFL}'s personalized models. For this experiment, we will use the three mentioned synthetic datasets we generated with logistic regression as the loss function. We set the intial value of $\alpha_i^{(0)}=0.01$ for every $i\in [n]$. The results of this experiment are depicted in Figure~\ref{fig:synthetic_loss_acc_adaptive}, where the first figure shows the training performance of different datasets. The second figure is comparing the generalization of local and personalized models. As it can be inferred, in training, \texttt{APFL} outperforms FedAvg in the same datasets. More interestingly, in generalization of learned \texttt{APFL} personalized models, all datasets achieve almost the same performance as a result of adaptively updating $\alpha$ values, while the FedAvg algorithm has a huge gap with them. This shows that, when we do not know the degree of diversity among data of different clients, we should adaptively update $\alpha$ values to guarantee the best generalization performance.
\begin{figure}[ht!]
	\centering
	\subfigure{
		\centering
		\includegraphics[width=0.4\textwidth]{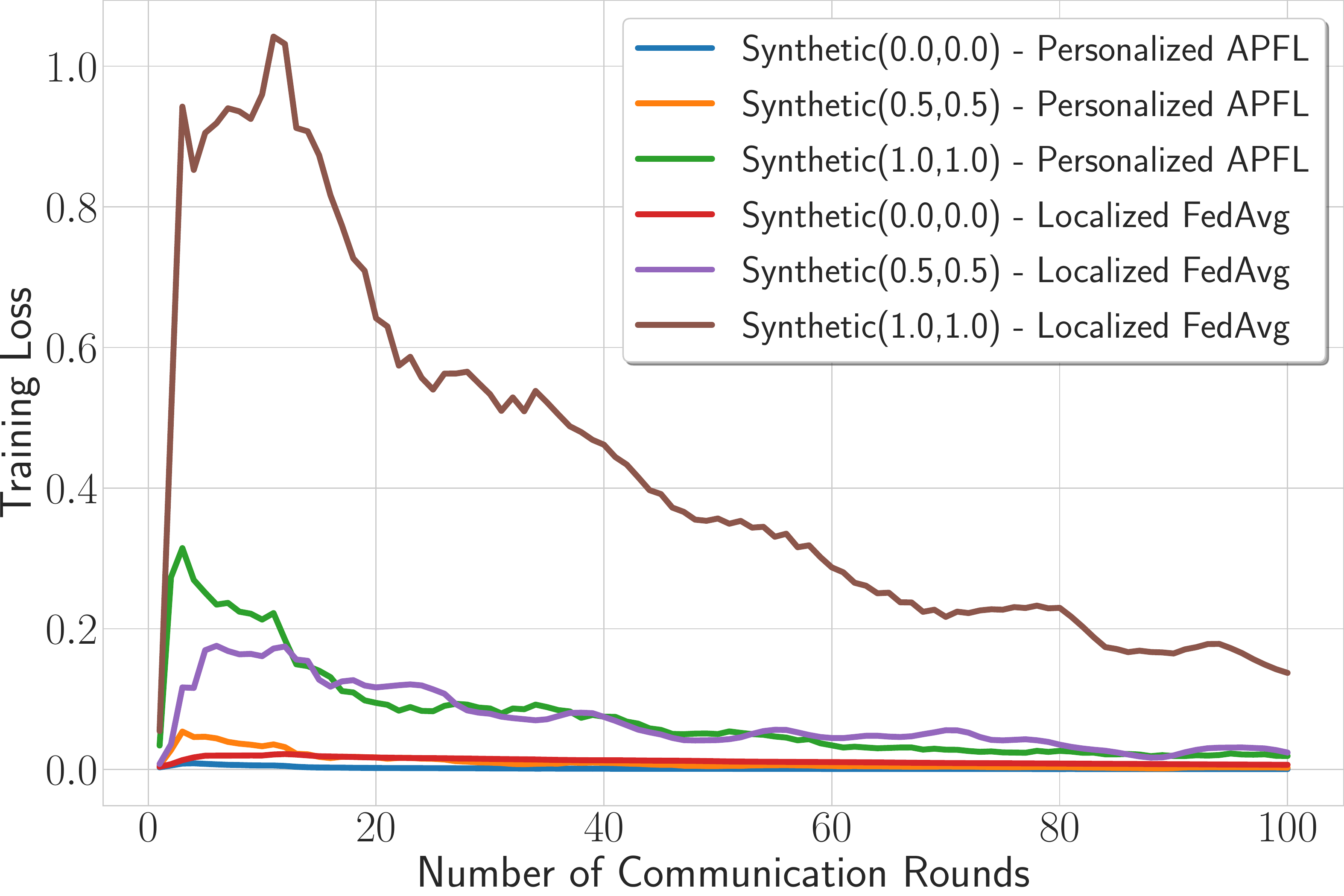}
		\label{fig:synthetic_loss_comm_adaptive}
		}
		\hspace{1cm}
		\subfigure{
			\centering 
			\includegraphics[width=0.4\textwidth]{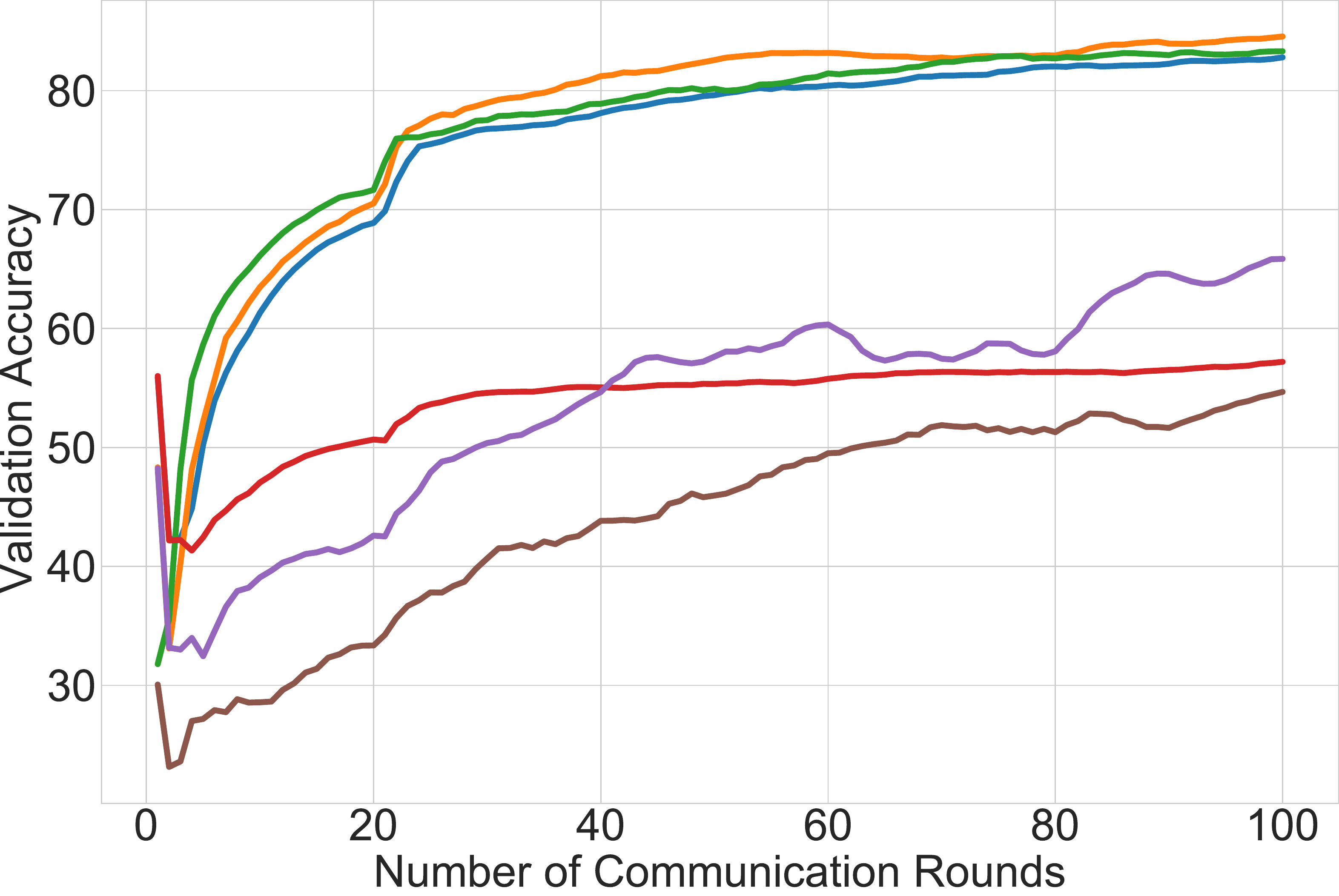}
			\label{fig:synthetic_acc_comm_adaptive}
			}
	\caption[]{Comparing the personalized model of \texttt{APFL} with adaptive $\alpha$ and the local model in FedAvg. The first figure is the training performance, where \texttt{APFL} outperforms FedAvg when comparing the same dataset. The second figure shows the generalization of these methods on local validation data. \texttt{APFL} superbly outperforms FedAvg in generalization performance and adaptively updating $\alpha$ results in the same performance for datasets with different levels of diversity.}
	\label{fig:synthetic_loss_acc_adaptive}
\end{figure}

\paragraph{Nonconvex loss.} To showcase the results for a nonconvex loss, we will use CIFAR10 dataset that is distributed in a non-IID way with $2$ classes per client. We apply it to a CNN model with $2$ convolution layers, followed by $2$ fully connected layers, using cross entropy as the loss function. As it can be inferred from the results in Table~\ref{tab:cifar_cnn}, even in the nonconvex loss function scenario, the personalized model learned from our \texttt{APFL} algorithm outperforms the localized  models of FedAvg and SCAFFOLD, as well as the global models, in both optimization and generalization. In this case adaptively tuning the $\alpha$ seems to achieve the best training loss, while $\alpha=0.25$ case achieves the best generalization performance. The initial learning rates of \texttt{APFL} and FedAvg algorithms are set to $0.1$ with the mentioned decay structure, while for SCAFFOLD this value is $0.05$ with $5\%$ decay per iteration to avoid divergence.

\begin{table}[]
\setstretch{1.25}
\centering
\resizebox{1\textwidth}{!}{
\begin{tabular}{l|c|c|c|c|c|c|c|c|}
\cline{2-9}
& \multicolumn{4}{c|}{\texttt{APFL}}  & \multicolumn{2}{c|}{FedAvg} & \multicolumn{2}{c|}{SCAFFOLD}                                             \\ \cline{2-9} 
 & $\alpha = 0.25$ & $\alpha = 0.5$ & $\alpha = 0.75$ & Adaptive $\alpha$ & Global Model  & Localized Model & Global Model   & Localized Model \\ \hline
\multicolumn{1}{|l|}{Training Loss} &
  \begin{tabular}[c]{@{}c@{}}$0.154 \pm$\\ $0.003$\end{tabular} &
  \begin{tabular}[c]{@{}c@{}}$0.113 \pm$\\ $0.008$\end{tabular} &
  \begin{tabular}[c]{@{}c@{}}$0.103 \pm$\\ $0.007$\end{tabular} &
  \begin{tabular}[c]{@{}c@{}}$\mathbf{0.101 \pm}$\\ $\mathbf{0.013}$\end{tabular} &
  \begin{tabular}[c]{@{}c@{}}$1.789 \pm$\\ $0.004$\end{tabular} &
  \begin{tabular}[c]{@{}c@{}}$0.369 \pm$\\ $0.005$\end{tabular} &
  \begin{tabular}[c]{@{}c@{}}$1.70 \pm$\\ $0.001$\end{tabular} &
  \begin{tabular}[c]{@{}c@{}}$0.593 \pm$\\ $0.012$\end{tabular} \\ \hline
\multicolumn{1}{|l|}{Validation Accuracy} &
  \begin{tabular}[c]{@{}c@{}}$\mathbf{89.33\%  \pm}$\\ $\mathbf{0.26\%}$\end{tabular} &
  \begin{tabular}[c]{@{}c@{}}$88.74\% \pm$\\ $0.14\%$\end{tabular} &
  \begin{tabular}[c]{@{}c@{}}$89.04\% \pm$\\ $0.22\%$\end{tabular} &
  \begin{tabular}[c]{@{}c@{}}$88.87\% \pm$\\ $0.51\%$\end{tabular} &
  \begin{tabular}[c]{@{}c@{}}$32.51\% \pm$\\ $0.47\%$\end{tabular} &
  \begin{tabular}[c]{@{}c@{}}$83.16\%  \pm$\\ $0.37\%$\end{tabular} &
  \begin{tabular}[c]{@{}c@{}}$37.16\% \pm$\\ $0.3\%$\end{tabular} &
  \begin{tabular}[c]{@{}c@{}}$85.25\% \pm$\\ $0.2\%$\end{tabular} \\ \hline
\end{tabular}
}
\vspace{0.1cm}
\caption{The results of training CIFAR10 dataset on a CNN model using different algorithms after 100 rounds of communication. \texttt{APFL} models outperform both localized and global models of both FedAvg and SCAFFOLD. Bold fonts show the best results in each row.}\label{tab:cifar_cnn}
\end{table}

\paragraph{Natural heterogeneous data.} In addition to the CIFAR10 dataset with a pathological heterogeneous data distribution, we apply our algorithm on a natural heterogeneous dataset, EMNIST~\citep{caldas2018leaf}. We use the data from $1000$ clients, and for each round of communication we randomly select $10\%$ of clients to participate in the training. We use an MLP model with $2$ hidden layers, each with $200$ neurons and ReLU as the activation function, using cross entropy as the loss function. For \texttt{APFL}, we use the adaptive $\alpha$ scheme with initial value of $0.5$ for each client. We run both algorithms for $250$ rounds of communication. In each round, each online client performs the local updates for $1$ epoch on its data. Figure~\ref{fig:emnist_loss_acc} shows the results of this experiment for personalized model of \texttt{APFL} and the localized model of the FedAvg. \texttt{APFL} with adaptive $\alpha$ can reach to the same training loss of the local FedAvg, while greatly outperforms the local FedAvg model in generalization on local validation data.
\begin{figure}[ht!]
	\centering
	\subfigure{
		\centering
		\includegraphics[width=0.4\textwidth]{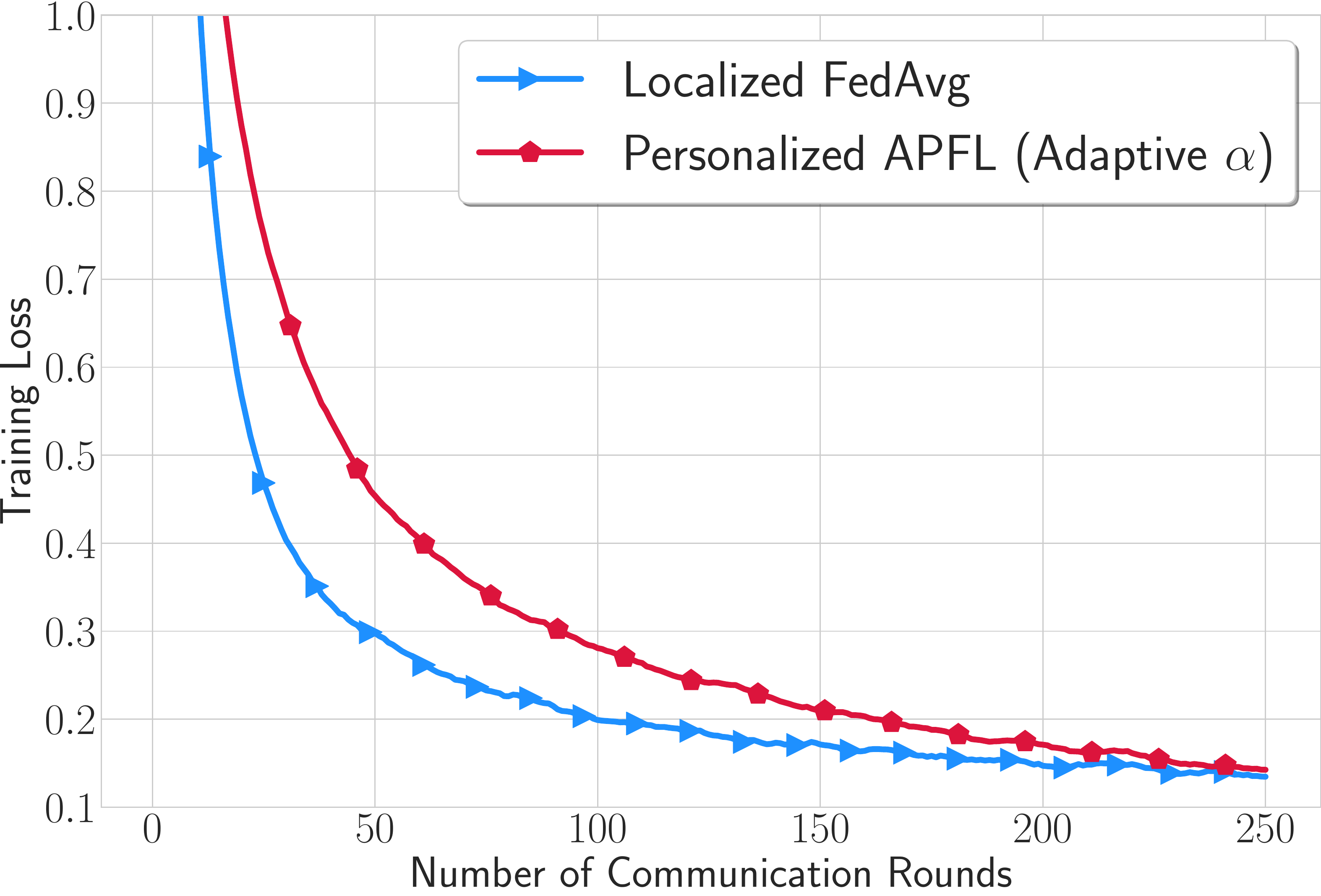}
		\label{fig:emnist_loss_comm}
		}
		\hspace{1cm}
		\subfigure{
			\centering 
			\includegraphics[width=0.4\textwidth]{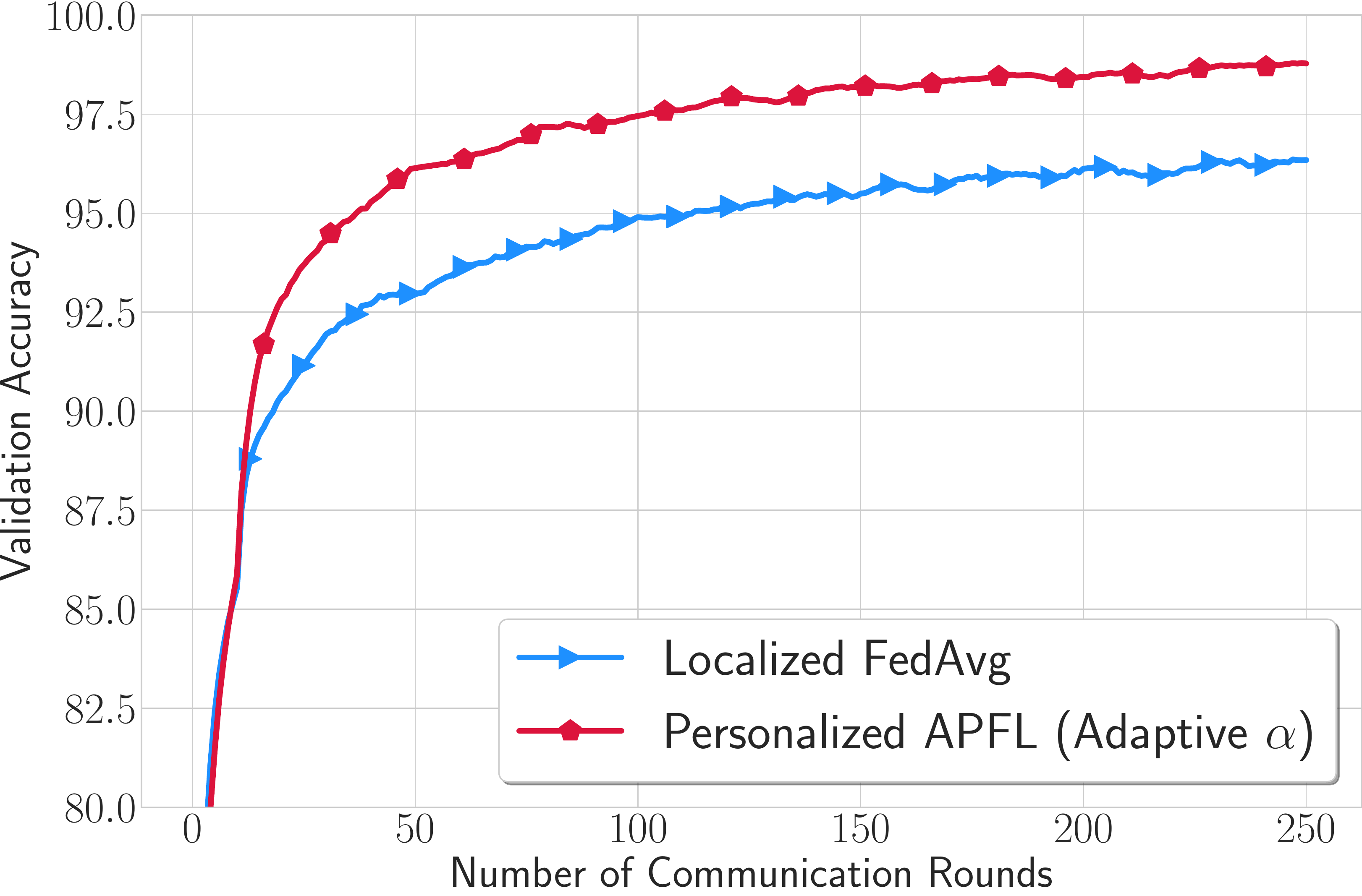}
			\label{fig:emnist_acc_comm}
			}
	\caption[]{The results of applying FedAvg and \texttt{APFL} (with adaptive $\alpha$) on an MLP model using EMNIST dataset, which is naturally heterogeneous. \texttt{APFL} achieves the same training loss of localized FedAVG, while outperforms it in validation accuracy.}
	\label{fig:emnist_loss_acc}
\end{figure}

\paragraph{Comparison with other personalization methods.} In this part, we compare our proposed \texttt{APFL} with two recent approaches of personalization in federated learning. In addition to FedAvg, we compare with perFedAvg introduced by~\cite{fallah2020personalized} using a meta-learning approach, and pFedMe introduce by~\cite{dinh2020personalized} using a regularization with Moreau envelope function. We run these algorithms to train an MLP with $2$ hidden layers, each with $200$ neurons similar to EMNIST case, on a non-IID MNIST dataset with $2$ classes per client. For perFedAvg, similar to their setting, we use learning rates of $\alpha = 0.01$ (different from the $\alpha$ in our \texttt{APFL}) and $\beta = 0.001$. To have a fair comparison, we use the same validation for perFedAvg in our algorithm and we use $10\%$ of training data as the test dataset that updates the meta-model. For pFedMe, following their setting, we use $\lambda = 15$, $\eta=0.01$, and $K=5$. For all experiments we use $\tau=20$ with total number of communications to $100$ and the batch size is $20$. The results of these experiments are presented in Table~\ref{tab:MNIST_MLP}, where \texttt{APFL} clearly outperforms all other models in both training and generalization. Among different \texttt{APFL} models the one with $\alpha=0.75$ has the lowest training loss, while the one with adaptive $\alpha$ has the best validation accuracy. perFedAvg is slightly better than the localized model of FedAvg, however, it is worse than the \texttt{APFL} models. While pFedMe performs better than the global model of FedAvg, it cannot surpass neither the localized model of the FedAvg nor \texttt{APFL} models. 

\begin{table}[]
\setstretch{1.25}
\resizebox{1\textwidth}{!}{
\begin{tabular}{l|c|c|c|c|c|c|c|c|}
\cline{2-9}
\multicolumn{1}{c|}{} & \multicolumn{4}{c|}{\texttt{APFL}} & \multicolumn{2}{c|}{FedAvg}  & perFedAvg  & pFedMe            \\ \cline{2-9} 
\multicolumn{1}{c|}{} & $\alpha = 0.25$ & $\alpha = 0.5$ & $\alpha = 0.75$ & Adaptive $\alpha$ & Global Model & Localized Model & Personalized Model & Personalized Model \\ \hline
\multicolumn{1}{|l|}{Training Loss} &
  \begin{tabular}[c]{@{}c@{}}$0.011 \pm$\\ $0.0007$\end{tabular} &
  \begin{tabular}[c]{@{}c@{}}$0.004 \pm$\\ $0.0004$\end{tabular} &
  \begin{tabular}[c]{@{}c@{}}$\mathbf{0.002 \pm}$\\ $\mathbf{0.0001}$\end{tabular} &
  \begin{tabular}[c]{@{}c@{}}$0.004 \pm$\\ $0.0008$\end{tabular} &
  \begin{tabular}[c]{@{}c@{}}$0.240 \pm$\\ $0.006$\end{tabular} &
  \begin{tabular}[c]{@{}c@{}}$0.041 \pm$\\ $0.002$\end{tabular} &
  \begin{tabular}[c]{@{}c@{}}$0.039 \pm$\\ $0.002$\end{tabular} &
  \begin{tabular}[c]{@{}c@{}}$0.182 \pm$\\ $0.004$\end{tabular} \\ \hline
\multicolumn{1}{|l|}{Validation Accuracy} &
  \begin{tabular}[c]{@{}c@{}}$98.07\% \pm$\\ $0.10\%$\end{tabular} &
  \begin{tabular}[c]{@{}c@{}}$98.04\% \pm$\\ $0.08\%$\end{tabular} &
  \begin{tabular}[c]{@{}c@{}}$97.86\% \pm$\\ $0.09\%$\end{tabular} &
  \begin{tabular}[c]{@{}c@{}}$\mathbf{98.10\% \pm}$\\ $\mathbf{0.10\%}$\end{tabular} &
  \begin{tabular}[c]{@{}c@{}}$93.81\% \pm$\\ $0.29\%$\end{tabular} &
  \begin{tabular}[c]{@{}c@{}}$97.75\% \pm$\\ $0.15\%$\end{tabular} &
  \begin{tabular}[c]{@{}c@{}}$97.83\% \pm$\\ $0.12\%$\end{tabular} &
  \begin{tabular}[c]{@{}c@{}}$95.92\% \pm$\\ $0.1\%$\end{tabular} \\ \hline
\end{tabular}
}
\vspace{0.1cm}
\caption{The results of applying different personalization models on MNIST dataset with an MLP model after $100$ rounds of communication. Models learned by \texttt{APFL} can outperform other models in both training loss and generalization accuracy learned by FedAvg, perFedAvg~\citep{fallah2020personalized}, and pFedMe~\citep{dinh2020personalized}. Bold fonts show the best results among different models.}\label{tab:MNIST_MLP}
\end{table} 
\section{Discussion and Extensions}\label{sec:discussion}
\paragraph{Connection between learning guarantee and convergence.}
As Theorem~\ref{thm:generalization} suggests, the generalization bound depends on the divergence of the local and global distributions. In the language of optimization, the counter-part of divergence of distribution is the gradient diversity; hence, the gradient diversity appears in our empirical loss convergence rate (Theorem~\ref{localpartial}). The other interesting discovery is in the generalization bound, we have the term $\lambda_{\mathcal{H}}$ and $\mathcal{L}_{\mathcal{D}_i}(h_i^*)$, which are intrinsic to the distributions and hypothesis class. Meanwhile, in the convergence result, we have the term $\|\bm{v}^*_i - \bm{w}^*\|^2$, which also only depends on the data distribution and hypothesis class we choose. In addition, $\|\bm{v}^*_i - \bm{w}^*\|^2$ also reveals the divergence between local  and global optimal solutions.

\paragraph{Why \texttt{APFL} is ``Adaptive''.}
Both information-theoretically (Theorem~\ref{thm:generalization}) and computationally (Theorem~\ref{localpartial}), we prove that when the local distribution drifts far away from the average distribution, the global model does not contribute too much to improve the local generalization and we have to tune the mixing parameter $\alpha$ to a larger value. Thus it is necessary to make $\alpha$ updated adaptively during empirical risk minimization. In Section \ref{sec:alpha_adap},~(\ref{eq:alpha_gd}) shows that the update of $\alpha$ depends on the correlation of local gradient and deviation between local and global models. Experimental results show that our method can adaptively tune $\alpha$, and can outperform the training scheme using fixed $\alpha$. 

\paragraph{Comparison with local ERM model.} A crucial question about personalization is \emph{when it is preferable to employ a mixed model?}, and  \emph{how bad a local ERM model will be?} In the following corollary, we answer this by showing that the risk of local ERM model  can be strictly worse than that of our personalized model. 

\begin{corollary}
\label{coro: generalization gap}
 Continuing with Theorem~\ref{thm:generalization},  there exist a distribution $\mathcal{D}_i$, constant $C_1$ and $C_2$, such that with probability at least $1-\delta$, the following upper bound  for the difference between risks of personalized model $h_{\alpha_i}$ and local ERM model $\hat{h}_{i}^*$ on $\mathcal{D}_i$, holds :
{\begin{equation}
\label{eq: generalization gap}
\begin{aligned}
\mathcal{L}_{\mathcal{D}_i}(h_{\alpha_i}) - \mathcal{L}_{\mathcal{D}_i}(\hat{h}_{i}^*)  &\leq (2\alpha_i^2-1)   \mathcal{L}_{\mathcal{D}_i}( h_{i}^* ) + (2\alpha_i^2 C_1-C_2) \sqrt{\frac{d+ \log (1/\delta)}{m_i}}  + 2\alpha_i^2 G \lambda_{\mathcal{H}}(\mathcal{S}_i) \nonumber\\
     & \quad + 2(1-\alpha_i)^2\left(\hat{\mathcal{L}}_{ \bar{\mathcal{D}}}(\bar{h}^*)+B\|\bar{\mathcal{D}}-\mathcal{D}_i\|_1 +
 C_1\sqrt{\frac{d+\log (1/\delta)}{m}}\right). 
\end{aligned}
\end{equation}} 
\end{corollary}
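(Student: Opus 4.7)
The plan is to derive this gap by pairing Theorem~\ref{thm:generalization} with a matching minimax lower bound for the excess risk of the local ERM hypothesis. I would first apply Theorem~\ref{thm:generalization} verbatim to obtain the upper bound on $\mathcal{L}_{\mathcal{D}_i}(h_{\alpha_i})$. After relabelling the universal constant $C$ that appears there as $C_1$, this already accounts for every positive contribution on the right-hand side of~(\ref{eq: generalization gap}): the full $2(1-\alpha_i)^2(\cdot)$ block, the $2\alpha_i^2 \mathcal{L}_{\mathcal{D}_i}(h_i^*)$ term, the $2\alpha_i^2 C_1 \sqrt{(d+\log(1/\delta))/m_i}$ term, and the $2\alpha_i^2 G \lambda_{\mathcal{H}}(\mathcal{S}_i)$ term.

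The remaining pieces $-\mathcal{L}_{\mathcal{D}_i}(h_i^*)$ and $-C_2\sqrt{(d+\log(1/\delta))/m_i}$ must come from lower bounding $\mathcal{L}_{\mathcal{D}_i}(\hat{h}_i^*)$. Here I would invoke the classical minimax lower bound for PAC-learning a class of VC dimension $d$: there exists a distribution $\mathcal{D}_i$ such that, with probability at least $1-\delta$ over a sample of size $m_i$, every empirical risk minimizer satisfies
$$\mathcal{L}_{\mathcal{D}_i}(\hat{h}_i^*) - \mathcal{L}_{\mathcal{D}_i}(h_i^*) \;\geq\; C_2 \sqrt{\tfrac{d+\log(1/\delta)}{m_i}}.$$
This is the standard adversarial construction (cf.\ Devroye--Györfi--Lugosi / Vapnik), planting a hard two-point mixture on a shattered set so that no learner can recover the labelling with $o(\sqrt{d/m_i})$ error; this is also what furnishes the existential quantifier over $\mathcal{D}_i$ in the corollary's statement. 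Negating the inequality yields $-\mathcal{L}_{\mathcal{D}_i}(\hat{h}_i^*) \leq -\mathcal{L}_{\mathcal{D}_i}(h_i^*) - C_2\sqrt{(d+\log(1/\delta))/m_i}$.

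Combining the upper and lower bounds under a joint high-probability event (obtained by a union bound over the two events with $\delta$ replaced by $\delta/2$ and folding the rescaling into $C_1, C_2$), I would collect the $\mathcal{L}_{\mathcal{D}_i}(h_i^*)$ coefficients into $2\alpha_i^2 - 1$ and the $\sqrt{(d+\log(1/\delta))/m_i}$ coefficients into $2\alpha_i^2 C_1 - C_2$, which reproduces~(\ref{eq: generalization gap}) exactly.

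The main obstacle is ensuring the minimax lower bound transfers cleanly to the surrogate losses adopted in the paper — squared hinge for classification and MSE for regression — rather than the 0-1 loss for which the classical lower bound is traditionally stated. One must verify that the two-point adversarial construction retains a $\Omega(\sqrt{d/m_i})$ excess-risk rate on the smooth surrogate loss (using, e.g., the Lipschitz/boundedness assumptions from Theorem~\ref{thm:generalization} to relate surrogate excess risk to the classification gap), and that the constants $C_1, C_2$ can be chosen independently of $\mathcal{D}_i$ so that both high-probability events hold simultaneously.
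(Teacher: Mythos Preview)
Your proposal is correct and follows essentially the same approach as the paper: apply Theorem~\ref{thm:generalization} for the upper bound on $\mathcal{L}_{\mathcal{D}_i}(h_{\alpha_i})$, invoke the agnostic PAC lower bound (the paper cites the fundamental theorem of statistical learning from \citep{shalev2014understanding,mohri2018foundations}) to obtain $\mathcal{L}_{\mathcal{D}_i}(\hat{h}_i^*) \geq \mathcal{L}_{\mathcal{D}_i}(h_i^*) + C_2\sqrt{(d+\log(1/\delta))/m_i}$ for some hard distribution, and subtract. Your explicit union-bound bookkeeping and your flagged concern about transferring the lower bound from 0--1 loss to the surrogate losses are in fact more careful than the paper's own argument, which applies the lower bound without addressing either point.
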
 
 By examining the above bound, the personalized model is preferable to local model if this value is less than 0. In this case,  we require $(2\alpha^2-1)$ and $(2\alpha_i^2 C_1-C_2)$ to be negative, which is  satisfied by choosing $\alpha_i \leq \min\{\frac{\sqrt{2}}{2},  \sqrt{\frac{C_2}{2C_1}} \}$. Then, the term $  \sqrt{\frac{d+ \log (1/\delta)}{m_i}}$, should be sufficiently large, and the divergence term, as well as the global model generalization error has to be small. In this case, from the local model perspective, it can benefit from incorporate some global model. Using the similar technique, we can prove the supremacy of mixed model over global model as well. 

\begin{proof} [Proof of Corollary~\ref{coro: generalization gap}]
 
Since in Theorem~\ref{thm:generalization}, we already obtained upper bound for $\mathcal{L}_{\mathcal{D}_i}(h_{\alpha_i})$ as following,
{\begin{equation} 
\begin{aligned}
\mathcal{L}_{\mathcal{D}_i}(h_{\alpha_i}) 
&\leq 2\alpha_i^2\left(\mathcal{L}_{\mathcal{D}_i}( h_{i}^* ) + 2C_1 \sqrt{\frac{d+ \log (1/\delta)}{m_i}} + G\lambda_{\mathcal{H}}(\mathcal{S}_i) \right)\\
&\quad +2(1-\alpha_i)^2\left(\hat{\mathcal{L}}_{ \bar{\mathcal{D}}}(\bar{h}^*)+B\|\bar{\mathcal{D}}-\mathcal{D}_i\|_1 +
C_1 \sqrt{\frac{d+ \log (1/\delta)}{m}}\right),\nonumber
\end{aligned}
\end{equation}}
to find the upper bound of $\mathcal{L}_{\mathcal{D}_i}(h_{\alpha_i}) - \mathcal{L}_{\mathcal{D}_i}(\hat{h}_{i}^*)$, we just need the lower bound of $\mathcal{L}_{\mathcal{D}_i}(\hat{h}_{i}^*)$. The fundamental theorem of statistical learning~\citep{shalev2014understanding,mohri2018foundations} states a lower risk bound for agnostic PAC learning: for a hypothesis class with finite VC dimension $d$, then there exists a distribution $\mathcal{D}$, such that for any learning algorithm, which learns a hypothesis $h \in \mathcal{H}$ on $m$ i.i.d. samples from $\mathcal{D}$, there exists a constant $C$, with the probability at least $1-\delta$, we have:
\begin{align}
    \mathcal{L}_{\mathcal{D}}(h) - \min_{h'\in\mathcal{H}}\mathcal{L}_{\mathcal{D}}(h') \geq C \sqrt{\frac{d+\log(1/\delta)}{m}}\nonumber.
\end{align}
Since $\hat{h}_{i}^*$ is learnt by ERM algorithm, the agnostic PAC learning lower risk bound also holds for it, so in worst case it might hold that under distribution $\mathcal{D}_i$, if  $\hat{h}_{i}^*$ is learnt by ERM algorithm using $m_i$ samples, then there is a $C_2$, such that with  probability at least $1-\delta$, we have:
\begin{align}
    \mathcal{L}_{\mathcal{D}_i}(\hat{h}_{i}^*)    \geq \mathcal{L}_{\mathcal{D}_i}(h_i^*) + C_2 \sqrt{\frac{d+\log(1/\delta)}{m_i}}\nonumber.
\end{align}
 Thus we can bound $\mathcal{L}_{\mathcal{D}_i}(h_{\alpha_i}) - \mathcal{L}_{\mathcal{D}_i}(\hat{h}_{i}^*)$ as Corollary~\ref{coro: generalization gap} claims.
\end{proof}

\paragraph{Personalization for new participant nodes.}
Suppose we already have a trained global model $\bm{\hat{w}}$, and now a new device $k$ joins in the network, which is desired to personalize the global model to adapt its own domain. This can be done by performing a few local stochastic gradient descent updates from the given global model as an initial local model:
\begin{align}
  \bm{v}^{(t+1)}_k =  \bm{v}^{(t)}_k - \eta_t \nabla_{\bm{v}} f_k (\alpha_k \bm{v}_k^{(t)} + (1-\alpha_k)\bm{\hat{w}};\xi_k^{(t)})
\end{align}
to quickly learn a personalized model for the newly joined device. One thing worthy of investigation is the difference between \texttt{APFL} and meta-learning approaches, such as model-agnostic meta-learning~\citep{finn2017model}. Our goal is to share the knowledge among the different users, in order to reduce the generalization error;  while meta-learning cares more about how to build a meta-learner, to help training models faster and with fewer samples. In this scenario, similar to FedAvg, when a new node joins the network, it gets the global model and takes a few stochastic steps based on its own data to update the global model. In Figure~\ref{fig:synthetic_loss_acc_meta}, we show the results of applying FedAvg and \texttt{APFL} on synthetic data with two different rates of diversity, \texttt{synthetic$\left(0.0,0.0\right)$} and \texttt{synthetic$\left(0.5,0.5\right)$}. In this experiment, we keep $3$ nodes with their data off in the entire training for $100$ rounds of communication between $97$ nodes. In each round, each client updates its local and personalized models for one epoch. After the training is done, those $3$ clients will join the network and get the latest global model and start training local and personalized models of their own. Figure~\ref{fig:synthetic_loss_acc_meta} shows the training loss and validation accuracy of these $3$ nodes during the $5$ epochs of updates. The local model represents the model that will be trained in FedAvg, while the personalized model is the one resulting from \texttt{APFL}. Although the goal of \texttt{APFL} is to adaptively learn the personalized model during the training, it can be inferred that \texttt{APFL} can learn a better personalized model in a meta-learning scenario as well.

\begin{figure}[ht!]
	\centering
	\subfigure{
		\centering
		\includegraphics[width=0.4\textwidth]{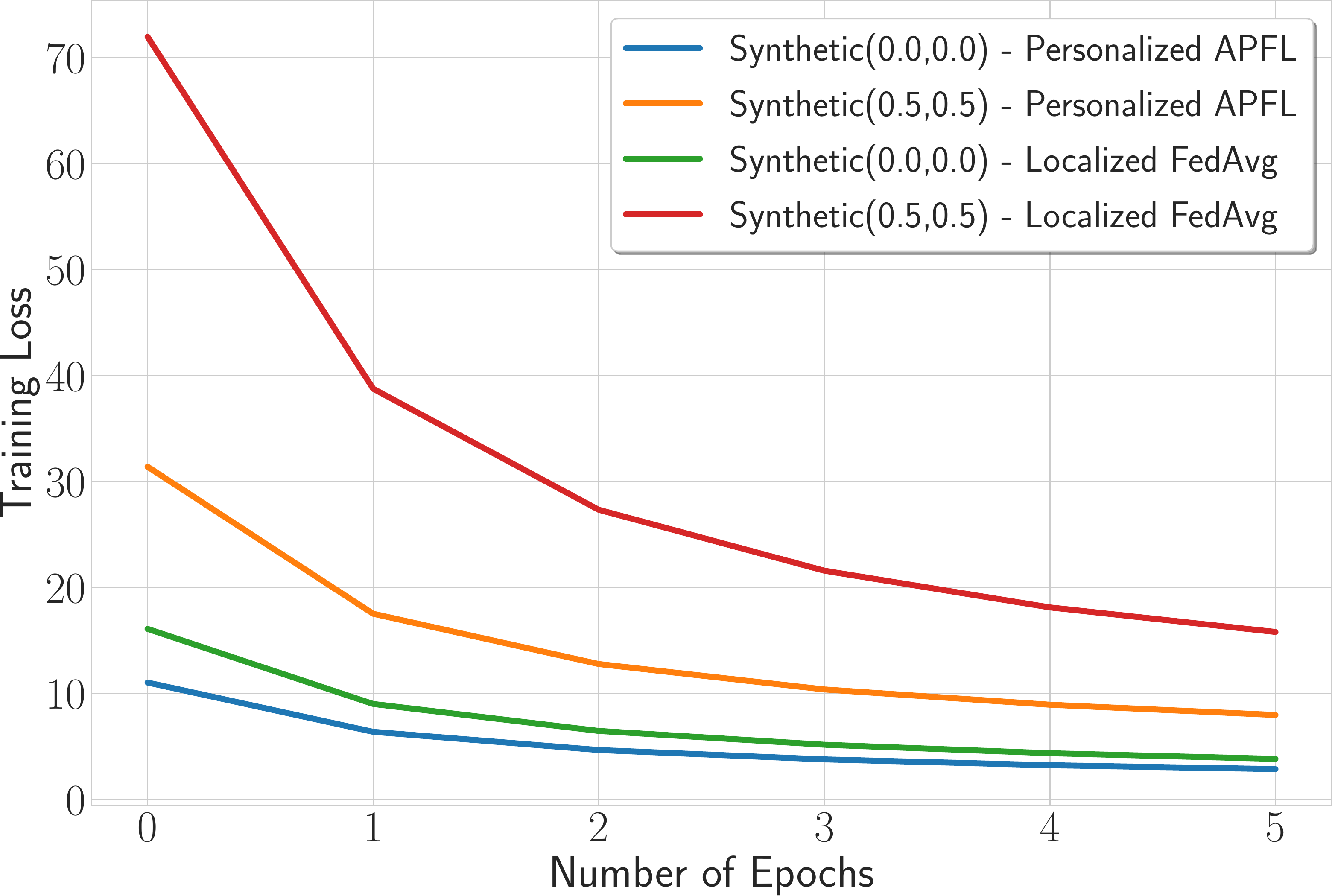}
		\label{fig:synthetic_loss_comm_meta}
		}
		\hspace{1cm}
		\subfigure{
			\centering 
			\includegraphics[width=0.4\textwidth]{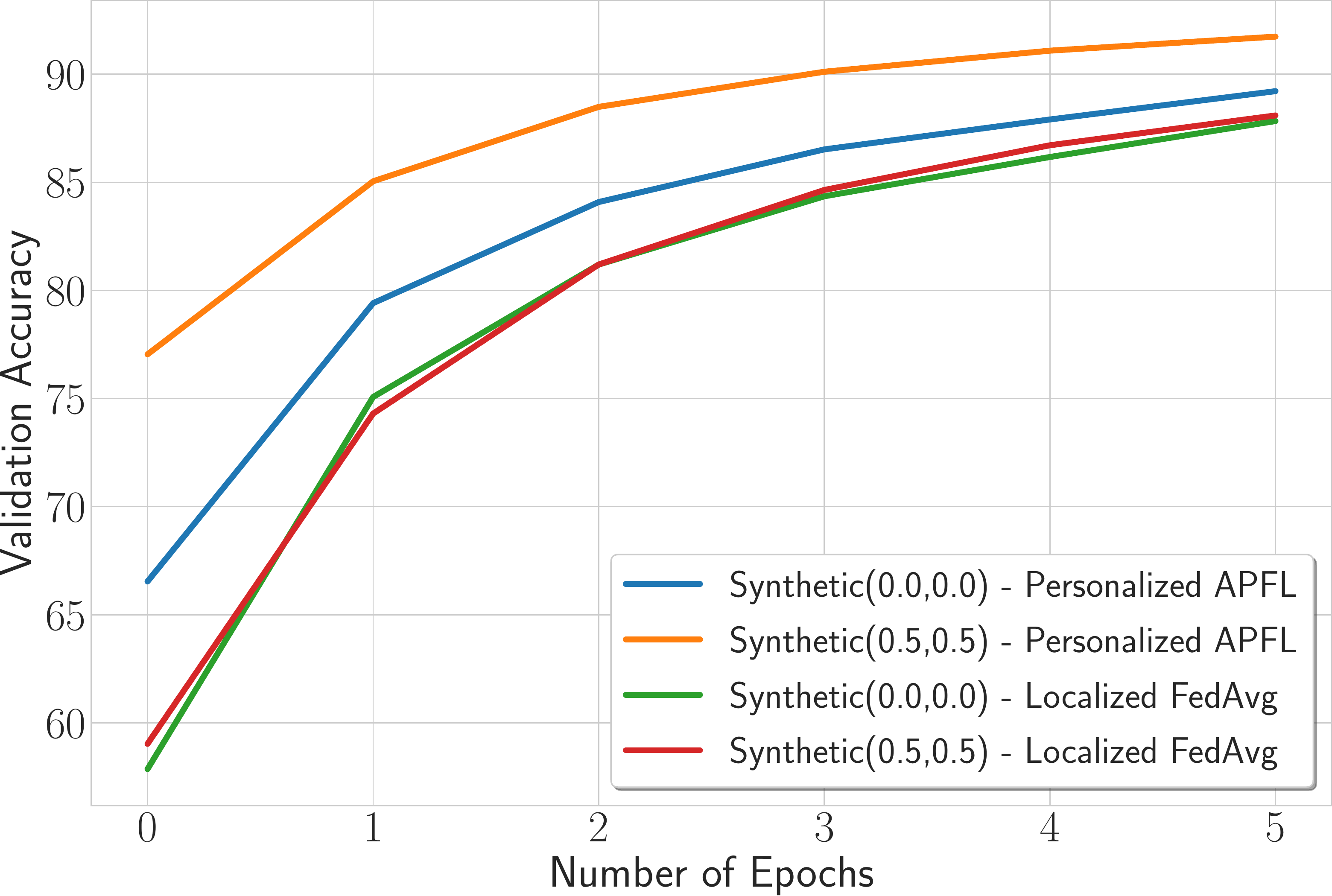}
			\label{fig:synthetic_acc_comm_meta}
			}
	\caption[]{Comparing the effect of fine-tuning with the local model of FedAvg and with the personalized model of \texttt{APFL} on the synthetic datasets. The model is trained for $100$ rounds of communication with $97$ clients, and then $3$ clients will join in fine-tuning the global model based on their own data. It can be seen that the model from \texttt{APFL} can better personalize the global model with respect to the FedAvg method both in training loss and validation accuracy. Increasing diversity makes it harder to personalize, however, \texttt{APFL} surpasses FedAvg again.}
	\label{fig:synthetic_loss_acc_meta}
	\vspace{-0.5cm}
\end{figure}

\paragraph{Agnostic global model.}
As pointed out by~\citet{mohri2019agnostic}, the global model can be distributionally robust if we optimize the agnostic loss:
\begin{align}
\min_{\bm{w}\in \mathbb{R}^d} \max_{\bm{q} \in \Delta_n} F(\boldsymbol{{w}}) :=  \sum_i^n q_i f_i(\boldsymbol{{w}}),
\end{align}
where $\Delta_n = \{\boldsymbol{q} \in \mathbb{R}_{+}^{n}\ \;|\; \sum_{}^{}{q_i} = 1 \}$ is the $n$-dimensional simplex. We call this scenario ``Adaptive Personalized Agnostic Federated Learning''.
In this case, the analysis will be more challenging since the global empirical risk minimization is performed at a totally different domain, so the risk upper bound for $h_{\alpha_i}$ we derived does not hold anymore. Also, from a computational standpoint, since the resulted problem is a minimax optimization problem,  the convergence analysis of agnostic \texttt{APFL} will be more involved, which we will leave as an interesting future work. 
\section{Conclusion and Future Work}\label{sec:conclusion}
In this paper, we proposed an adaptive federated learning algorithm that learns a mixture of local and global models as the personalized model. Motivated by learning theory in domain adaptation, we provided generalization guarantees for our algorithm that demonstrated the dependence on the diversity between each clients' data distribution and the representative sample of the overall distribution of data, and the number of per-device samples as key factors in personalization. Moreover, we proposed a communication-reduced optimization algorithm to learn the personalized models and analyzed its convergence rate  for both smooth strongly convex and nonconvex functions. Finally, we empirically backed up our theoretical results by conducting  experiments in a federated setting.

\section*{Acknowledgment}
We would like to thank Farzin Haddadpour for insightful discussions  in the early stage of this project.
\bibliography{ref.bib}

\begin{thebibliography}{51}
\providecommand{\natexlab}[1]{#1}
\providecommand{\url}[1]{\texttt{#1}}
\expandafter\ifx\csname urlstyle\endcsname\relax
  \providecommand{\doi}[1]{doi: #1}\else
  \providecommand{\doi}{doi: \begingroup \urlstyle{rm}\Url}\fi

\bibitem[Almeida and Xavier(2018)]{almeida2018djam}
In{\^e}s Almeida and Joao Xavier.
\newblock Djam: distributed jacobi asynchronous method for learning personal
  models.
\newblock \emph{IEEE Signal Processing Letters}, 25\penalty0 (9):\penalty0
  1389--1392, 2018.

\bibitem[Arivazhagan et~al.(2019)Arivazhagan, Aggarwal, Singh, and
  Choudhary]{arivazhagan2019federated}
Manoj~Ghuhan Arivazhagan, Vinay Aggarwal, Aaditya~Kumar Singh, and Sunav
  Choudhary.
\newblock Federated learning with personalization layers.
\newblock \emph{arXiv preprint arXiv:1912.00818}, 2019.

\bibitem[Bellet et~al.(2017)Bellet, Guerraoui, Taziki, and
  Tommasi]{bellet2017personalized}
Aur{\'e}lien Bellet, Rachid Guerraoui, Mahsa Taziki, and Marc Tommasi.
\newblock Personalized and private peer-to-peer machine learning.
\newblock \emph{arXiv preprint arXiv:1705.08435}, 2017.

\bibitem[Ben-David et~al.(2010)Ben-David, Blitzer, Crammer, Kulesza, Pereira,
  and Vaughan]{ben2010theory}
Shai Ben-David, John Blitzer, Koby Crammer, Alex Kulesza, Fernando Pereira, and
  Jennifer~Wortman Vaughan.
\newblock A theory of learning from different domains.
\newblock \emph{Machine learning}, 79\penalty0 (1-2):\penalty0 151--175, 2010.

\bibitem[Bottou(2012)]{bottou2012stochastic}
L{\'e}on Bottou.
\newblock Stochastic gradient descent tricks.
\newblock In \emph{Neural networks: Tricks of the trade}, pages 421--436.
  Springer, 2012.

\bibitem[Caldas et~al.(2018)Caldas, Wu, Li, Kone{\v{c}}n{\`y}, McMahan, Smith,
  and Talwalkar]{caldas2018leaf}
Sebastian Caldas, Peter Wu, Tian Li, Jakub Kone{\v{c}}n{\`y}, H~Brendan
  McMahan, Virginia Smith, and Ameet Talwalkar.
\newblock Leaf: A benchmark for federated settings.
\newblock \emph{arXiv preprint arXiv:1812.01097}, 2018.

\bibitem[Cohen et~al.(2017)Cohen, Afshar, Tapson, and
  Van~Schaik]{cohen2017emnist}
Gregory Cohen, Saeed Afshar, Jonathan Tapson, and Andre Van~Schaik.
\newblock Emnist: Extending mnist to handwritten letters.
\newblock In \emph{2017 International Joint Conference on Neural Networks
  (IJCNN)}, pages 2921--2926. IEEE, 2017.

\bibitem[Dean et~al.(2012)Dean, Corrado, Monga, Chen, Devin, Mao, Ranzato,
  Senior, Tucker, Yang, et~al.]{dean2012large}
Jeffrey Dean, Greg Corrado, Rajat Monga, Kai Chen, Matthieu Devin, Mark Mao,
  Marc'aurelio Ranzato, Andrew Senior, Paul Tucker, Ke~Yang, et~al.
\newblock Large scale distributed deep networks.
\newblock In \emph{NeurIPS}, pages 1223--1231, 2012.

\bibitem[Dinh et~al.(2020)Dinh, Tran, and Nguyen]{dinh2020personalized}
Canh~T Dinh, Nguyen~H Tran, and Tuan~Dung Nguyen.
\newblock Personalized federated learning with moreau envelopes.
\newblock \emph{arXiv preprint arXiv:2006.08848}, 2020.

\bibitem[Eichner et~al.(2019)Eichner, Koren, Mcmahan, Srebro, and
  Talwar]{eichner2019semi}
Hubert Eichner, Tomer Koren, Brendan Mcmahan, Nathan Srebro, and Kunal Talwar.
\newblock Semi-cyclic stochastic gradient descent.
\newblock In \emph{International Conference on Machine Learning}, pages
  1764--1773, 2019.

\bibitem[Fallah et~al.(2020)Fallah, Mokhtari, and
  Ozdaglar]{fallah2020personalized}
Alireza Fallah, Aryan Mokhtari, and Asuman Ozdaglar.
\newblock Personalized federated learning: A meta-learning approach.
\newblock \emph{arXiv preprint arXiv:2002.07948}, 2020.

\bibitem[Finn et~al.(2017)Finn, Abbeel, and Levine]{finn2017model}
Chelsea Finn, Pieter Abbeel, and Sergey Levine.
\newblock Model-agnostic meta-learning for fast adaptation of deep networks.
\newblock In \emph{ICML}, pages 1126--1135. JMLR. org, 2017.

\bibitem[Haddadpour and Mahdavi(2019)]{haddadpour2019convergence}
Farzin Haddadpour and Mehrdad Mahdavi.
\newblock On the convergence of local descent methods in federated learning.
\newblock \emph{arXiv preprint arXiv:1910.14425}, 2019.

\bibitem[Haddadpour et~al.(2019{\natexlab{a}})Haddadpour, Kamani, Mahdavi, and
  Cadambe]{haddadpour2019local}
Farzin Haddadpour, Mohammad~Mahdi Kamani, Mehrdad Mahdavi, and Viveck Cadambe.
\newblock Local sgd with periodic averaging: Tighter analysis and adaptive
  synchronization.
\newblock In \emph{Advances in Neural Information Processing Systems}, pages
  11080--11092, 2019{\natexlab{a}}.

\bibitem[Haddadpour et~al.(2019{\natexlab{b}})Haddadpour, Kamani, Mahdavi, and
  Cadambe]{haddadpour2019trading}
Farzin Haddadpour, Mohammad~Mahdi Kamani, Mehrdad Mahdavi, and Viveck Cadambe.
\newblock Trading redundancy for communication: Speeding up distributed sgd for
  non-convex optimization.
\newblock In \emph{International Conference on Machine Learning}, pages
  2545--2554, 2019{\natexlab{b}}.

\bibitem[Hanzely and Richt{\'a}rik(2020)]{hanzely2020federated}
Filip Hanzely and Peter Richt{\'a}rik.
\newblock Federated learning of a mixture of global and local models.
\newblock \emph{arXiv preprint arXiv:2002.05516}, 2020.

\bibitem[Hard et~al.(2018)Hard, Rao, Mathews, Ramaswamy, Beaufays, Augenstein,
  Eichner, Kiddon, and Ramage]{hard2018federated}
Andrew Hard, Kanishka Rao, Rajiv Mathews, Swaroop Ramaswamy, Fran{\c{c}}oise
  Beaufays, Sean Augenstein, Hubert Eichner, Chlo{\'e} Kiddon, and Daniel
  Ramage.
\newblock Federated learning for mobile keyboard prediction.
\newblock \emph{arXiv preprint arXiv:1811.03604}, 2018.

\bibitem[Huang et~al.(2020)Huang, Chu, Zhou, Wang, Liu, Pei, and
  Zhang]{huang2020personalized}
Yutao Huang, Lingyang Chu, Zirui Zhou, Lanjun Wang, Jiangchuan Liu, Jian Pei,
  and Yong Zhang.
\newblock Personalized federated learning: An attentive collaboration approach.
\newblock \emph{arXiv preprint arXiv:2007.03797}, 2020.

\bibitem[Jiang et~al.(2019)Jiang, Kone{\v{c}}n{\`y}, Rush, and
  Kannan]{jiang2019improving}
Yihan Jiang, Jakub Kone{\v{c}}n{\`y}, Keith Rush, and Sreeram Kannan.
\newblock Improving federated learning personalization via model agnostic meta
  learning.
\newblock \emph{arXiv preprint arXiv:1909.12488}, 2019.

\bibitem[Kairouz et~al.(2019)Kairouz, McMahan, Avent, Bellet, Bennis, Bhagoji,
  Bonawitz, Charles, Cormode, Cummings, et~al.]{kairouz2019advances}
Peter Kairouz, H~Brendan McMahan, Brendan Avent, Aur{\'e}lien Bellet, Mehdi
  Bennis, Arjun~Nitin Bhagoji, Keith Bonawitz, Zachary Charles, Graham Cormode,
  Rachel Cummings, et~al.
\newblock Advances and open problems in federated learning.
\newblock \emph{arXiv preprint arXiv:1912.04977}, 2019.

\bibitem[Karimireddy et~al.(2019)Karimireddy, Kale, Mohri, Reddi, Stich, and
  Suresh]{karimireddy2019scaffold}
Sai~Praneeth Karimireddy, Satyen Kale, Mehryar Mohri, Sashank~J Reddi,
  Sebastian~U Stich, and Ananda~Theertha Suresh.
\newblock Scaffold: Stochastic controlled averaging for on-device federated
  learning.
\newblock \emph{arXiv preprint arXiv:1910.06378}, 2019.

\bibitem[Khodak et~al.(2019)Khodak, Balcan, and Talwalkar]{khodak2019adaptive}
Mikhail Khodak, Maria-Florina~F Balcan, and Ameet~S Talwalkar.
\newblock Adaptive gradient-based meta-learning methods.
\newblock In \emph{Advances in Neural Information Processing Systems}, pages
  5915--5926, 2019.

\bibitem[Kifer et~al.(2004)Kifer, Ben-David, and Gehrke]{kifer2004detecting}
Daniel Kifer, Shai Ben-David, and Johannes Gehrke.
\newblock Detecting change in data streams.
\newblock 2004.

\bibitem[Konstantinov et~al.(2020)Konstantinov, Frantar, Alistarh, and
  Lampert]{konstantinov2020sample}
Nikola Konstantinov, Elias Frantar, Dan Alistarh, and Christoph~H Lampert.
\newblock On the sample complexity of adversarial multi-source pac learning.
\newblock \emph{arXiv preprint arXiv:2002.10384}, 2020.

\bibitem[Krizhevsky et~al.(2009)Krizhevsky, Hinton,
  et~al.]{krizhevsky2009learning}
Alex Krizhevsky, Geoffrey Hinton, et~al.
\newblock Learning multiple layers of features from tiny images.
\newblock 2009.

\bibitem[Li et~al.(2020{\natexlab{a}})Li, Sun, Wang, Duan, Li, Chen, and
  Li]{li2020lotteryfl}
Ang Li, Jingwei Sun, Binghui Wang, Lin Duan, Sicheng Li, Yiran Chen, and Hai
  Li.
\newblock Lotteryfl: Personalized and communication-efficient federated
  learning with lottery ticket hypothesis on non-iid datasets.
\newblock \emph{arXiv preprint arXiv:2008.03371}, 2020{\natexlab{a}}.

\bibitem[Li and Wang(2019)]{li2019fedmd}
Daliang Li and Junpu Wang.
\newblock Fedmd: Heterogenous federated learning via model distillation.
\newblock \emph{arXiv preprint arXiv:1910.03581}, 2019.

\bibitem[Li et~al.(2014)Li, Andersen, Park, Smola, Ahmed, Josifovski, Long,
  Shekita, and Su]{li2014scaling}
Mu~Li, David~G Andersen, Jun~Woo Park, Alexander~J Smola, Amr Ahmed, Vanja
  Josifovski, James Long, Eugene~J Shekita, and Bor-Yiing Su.
\newblock Scaling distributed machine learning with the parameter server.
\newblock In \emph{11th $\{$USENIX$\}$ Symposium on Operating Systems Design
  and Implementation ($\{$OSDI$\}$ 14)}, pages 583--598, 2014.

\bibitem[Li et~al.(2018)Li, Sahu, Zaheer, Sanjabi, Talwalkar, and
  Smith]{li2018federated}
Tian Li, Anit~Kumar Sahu, Manzil Zaheer, Maziar Sanjabi, Ameet Talwalkar, and
  Virginia Smith.
\newblock Federated optimization in heterogeneous networks.
\newblock \emph{arXiv preprint arXiv:1812.06127}, 2018.

\bibitem[Li et~al.(2020{\natexlab{b}})Li, Sahu, Zaheer, Sanjabi, Talwalkar, and
  Smith]{li2020feddane}
Tian Li, Anit~Kumar Sahu, Manzil Zaheer, Maziar Sanjabi, Ameet Talwalkar, and
  Virginia Smith.
\newblock Feddane: A federated newton-type method.
\newblock \emph{arXiv preprint arXiv:2001.01920}, 2020{\natexlab{b}}.

\bibitem[Liang et~al.(2020)Liang, Liu, Ziyin, Salakhutdinov, and
  Morency]{liang2020think}
Paul~Pu Liang, Terrance Liu, Liu Ziyin, Ruslan Salakhutdinov, and
  Louis-Philippe Morency.
\newblock Think locally, act globally: Federated learning with local and global
  representations.
\newblock \emph{arXiv preprint arXiv:2001.01523}, 2020.

\bibitem[Mansour et~al.(2009)Mansour, Mohri, and
  Rostamizadeh]{mansour2009domain}
Yishay Mansour, Mehryar Mohri, and Afshin Rostamizadeh.
\newblock Domain adaptation: Learning bounds and algorithms.
\newblock \emph{arXiv preprint arXiv:0902.3430}, 2009.

\bibitem[Mansour et~al.(2020)Mansour, Mohri, Ro, and Suresh]{mansour2020three}
Yishay Mansour, Mehryar Mohri, Jae Ro, and Ananda~Theertha Suresh.
\newblock Three approaches for personalization with applications to federated
  learning.
\newblock \emph{arXiv preprint arXiv:2002.10619}, 2020.

\bibitem[McMahan et~al.(2017)McMahan, Moore, Ramage, Hampson, and
  y~Arcas]{mcmahan2017communication}
Brendan McMahan, Eider Moore, Daniel Ramage, Seth Hampson, and Blaise~Aguera
  y~Arcas.
\newblock Communication-efficient learning of deep networks from decentralized
  data.
\newblock In \emph{AISTAT}, pages 1273--1282, 2017.

\bibitem[Mohri et~al.(2018)Mohri, Rostamizadeh, and
  Talwalkar]{mohri2018foundations}
Mehryar Mohri, Afshin Rostamizadeh, and Ameet Talwalkar.
\newblock \emph{Foundations of machine learning}.
\newblock MIT press, 2018.

\bibitem[Mohri et~al.(2019)Mohri, Sivek, and Suresh]{mohri2019agnostic}
Mehryar Mohri, Gary Sivek, and Ananda~Theertha Suresh.
\newblock Agnostic federated learning.
\newblock \emph{arXiv preprint arXiv:1902.00146}, 2019.

\bibitem[Nichol et~al.(2018)Nichol, Achiam, and Schulman]{nichol2018first}
Alex Nichol, Joshua Achiam, and John Schulman.
\newblock On first-order meta-learning algorithms.
\newblock \emph{arXiv preprint arXiv:1803.02999}, 2018.

\bibitem[Pan and Yang(2009)]{pan2009survey}
Sinno~Jialin Pan and Qiang Yang.
\newblock A survey on transfer learning.
\newblock \emph{IEEE Transactions on knowledge and data engineering},
  22\penalty0 (10):\penalty0 1345--1359, 2009.

\bibitem[Paszke et~al.(2019)Paszke, Gross, Massa, Lerer, Bradbury, Chanan,
  Killeen, Lin, Gimelshein, Antiga, et~al.]{paszke2019pytorch}
Adam Paszke, Sam Gross, Francisco Massa, Adam Lerer, James Bradbury, Gregory
  Chanan, Trevor Killeen, Zeming Lin, Natalia Gimelshein, Luca Antiga, et~al.
\newblock Pytorch: An imperative style, high-performance deep learning library.
\newblock In \emph{NeurIPS}, pages 8024--8035, 2019.

\bibitem[Pillutla et~al.(2019)Pillutla, Kakade, and
  Harchaoui]{pillutla2019robust}
Krishna Pillutla, Sham~M Kakade, and Zaid Harchaoui.
\newblock Robust aggregation for federated learning.
\newblock \emph{arXiv preprint arXiv:1912.13445}, 2019.

\bibitem[Shalev-Shwartz and Ben-David(2014)]{shalev2014understanding}
Shai Shalev-Shwartz and Shai Ben-David.
\newblock \emph{Understanding machine learning: From theory to algorithms}.
\newblock Cambridge university press, 2014.

\bibitem[Shen et~al.(2020)Shen, Zhang, Jia, Zhang, Huang, Zhou, Wu, and
  Wu]{shen2020federated}
Tao Shen, Jie Zhang, Xinkang Jia, Fengda Zhang, Gang Huang, Pan Zhou, Fei Wu,
  and Chao Wu.
\newblock Federated mutual learning.
\newblock \emph{arXiv preprint arXiv:2006.16765}, 2020.

\bibitem[Smith et~al.(2017)Smith, Chiang, Sanjabi, and
  Talwalkar]{smith2017federated}
Virginia Smith, Chao-Kai Chiang, Maziar Sanjabi, and Ameet~S Talwalkar.
\newblock Federated multi-task learning.
\newblock In \emph{Advances in Neural Information Processing Systems}, pages
  4424--4434, 2017.

\bibitem[Stich(2018)]{stich2018local}
Sebastian~U Stich.
\newblock Local sgd converges fast and communicates little.
\newblock \emph{arXiv preprint arXiv:1805.09767}, 2018.

\bibitem[Vanhaesebrouck et~al.(2017)Vanhaesebrouck, Bellet, and
  Tommasi]{vanhaesebrouck2017decentralized}
Paul Vanhaesebrouck, Aur{\'e}lien Bellet, and Marc Tommasi.
\newblock Decentralized collaborative learning of personalized models over
  networks.
\newblock 2017.

\bibitem[Wang et~al.(2019)Wang, Mathews, Kiddon, Eichner, Beaufays, and
  Ramage]{wang2019federated}
Kangkang Wang, Rajiv Mathews, Chlo{\'e} Kiddon, Hubert Eichner, Fran{\c{c}}oise
  Beaufays, and Daniel Ramage.
\newblock Federated evaluation of on-device personalization.
\newblock \emph{arXiv preprint arXiv:1910.10252}, 2019.

\bibitem[Woodworth et~al.(2020{\natexlab{a}})Woodworth, Patel, and
  Srebro]{woodworth2020minibatch}
Blake Woodworth, Kumar~Kshitij Patel, and Nathan Srebro.
\newblock Minibatch vs local sgd for heterogeneous distributed learning.
\newblock \emph{arXiv preprint arXiv:2006.04735}, 2020{\natexlab{a}}.

\bibitem[Woodworth et~al.(2020{\natexlab{b}})Woodworth, Patel, Stich, Dai,
  Bullins, McMahan, Shamir, and Srebro]{woodworth2020local}
Blake Woodworth, Kumar~Kshitij Patel, Sebastian~U Stich, Zhen Dai, Brian
  Bullins, H~Brendan McMahan, Ohad Shamir, and Nathan Srebro.
\newblock Is local sgd better than minibatch sgd?
\newblock \emph{arXiv preprint arXiv:2002.07839}, 2020{\natexlab{b}}.

\bibitem[Yu et~al.(2020)Yu, Bagdasaryan, and Shmatikov]{yu2020salvaging}
Tao Yu, Eugene Bagdasaryan, and Vitaly Shmatikov.
\newblock Salvaging federated learning by local adaptation.
\newblock \emph{arXiv preprint arXiv:2002.04758}, 2020.

\bibitem[Zantedeschi et~al.(2019)Zantedeschi, Bellet, and
  Tommasi]{zantedeschi2019fully}
Valentina Zantedeschi, Aur{\'e}lien Bellet, and Marc Tommasi.
\newblock Fully decentralized joint learning of personalized models and
  collaboration graphs.
\newblock 2019.

\bibitem[Zhang et~al.(2020)Zhang, Long, Wang, and Jordan]{zhang2020localized}
Yuchen Zhang, Mingsheng Long, Jianmin Wang, and Michael~I Jordan.
\newblock On localized discrepancy for domain adaptation.
\newblock \emph{arXiv preprint arXiv:2008.06242}, 2020.

\end{thebibliography}
\bibliographystyle{plainnat}

\newpage
\onecolumn
\appendix
\clearpage
\section{Proof of Generalization Bound}
\label{app:proof_gen}
In this section we  present the proof of generalization bound for APFL algorithm.  Recall that we define the following hypotheses on $i$th local true and empirical distributions:
\begin{equation*}
\begin{aligned}
    \hat{h}_{i}^* &= \arg \min_{h\in\mathcal{H}} \hat{\mathcal{L}}_{\mathcal{D}_i}(h) && (\textsc{Local Empirical  Risk Minimizer} )\\
    h_{i}^* &= \arg \min_{h\in\mathcal{H}} \mathcal{L}_{\mathcal{D}_i}(h) && (\textsc{Local True Risk Minimizer})\\
    \bar{h}^* &= \arg \min_{h\in\mathcal{H}} \hat{\mathcal{L}}_{\bar{\mathcal{D}}}(h) && (\textsc{Global Empirical Risk Minimizer})\\
    \hat{h}_{loc,i}^* &= \arg \min_{h\in \mathcal{H}} \hat{\mathcal{L}}_{\mathcal{D}_i} ( \alpha_i h + (1-\alpha_i) \bar{h}^*) && (\textsc{Mixed  Empirical Risk Minimizer})\\
    h_{loc,i}^* &= \arg \min_{h\in \mathcal{H}} \mathcal{L}_{\mathcal{D}_i} ( \alpha_i h + (1-\alpha_i) \bar{h}^*) && (\textsc{Mixed True  Risk Minimizer})
\end{aligned}
\end{equation*}
where $\hat{\mathcal{L}}_{\mathcal{D}_i}(h)$ and ${\mathcal{L}}_{\mathcal{D}_i}(h)$ denote the empirical and true risks on $\mathcal{D}_i$, respectively.

From a high-level technical view, since we wish to bound the risk of the mixed model on local distribution $\mathcal{D}_i$, first we need to utilize the convex property of the risk function, and decompose it into two parts: $ \mathcal{L}_{\mathcal{D}_i}\left( \hat{h}_{loc,i}^* \right)   $ and $  \mathcal{L}_{\mathcal{D}_i}\left( \bar{h}^* \right)$. To bound $ \mathcal{L}_{\mathcal{D}_i}\left( \hat{h}_{loc,i}^* \right)$, a natural idea is to characterize it by the risk of optimal model $\mathcal{L}_{\mathcal{D}_i}\left(  {h}_{ i}^* \right)$, plus some excess risk. However,  due to fact that $\hat{h}_{loc,i}^* $ is not the sole  local empirical risk minimizer, rather it partially incorporates the global model, we need to characterize to what extent  it drifts from the local empirical risk minimizer $\hat{h}_{i}^*$. This drift can be depicted by the hypothesis capacity, so that is our motivation to define $\lambda_{\mathcal{H}}(\mathcal{S})$ to quantify the empirical loss discrepancy over $\mathcal{S}$ among pair of hypotheses in $\mathcal{H}$. We have to admit that there should be a tighter theory to bound this drift, depending how global model is  incorporated, which we leave it as a future work.

The following simple result will be useful in the proof of generalization.
\begin{lemma} \label{lm1}
 Let $\mathcal{H}$ be a hypothesis class and  $\mathcal{D}$ and $\mathcal{D}'$ denote two probability measures  over space $\Xi$. Let  $\mathcal{L}_{\mathcal{D}} (h) = \mathbb{E}_{(\bm{x}, y) \sim \mathcal{D}}\left[\ell\left(h(\bm{x}), y\right)\right]$ denote the risk of $h$ over $\mathcal{D}$ . If the loss function $\ell(\cdot)$ is bounded by $B$, then for every $h \in \mathcal{H}$:
\begin{equation}
    \mathcal{L}_{\mathcal{D} }(h) \leq  \mathcal{L}_{\mathcal{D}'}(h)+ B\|\mathcal{D} - \mathcal{D}'\|_1,
\end{equation} 
where $\| \mathcal{D}-\mathcal{D}'\|_1 = \int_{\Xi} |\mathbb{P}_{(\bm{x}, y)\sim \mathcal{D}}-\mathbb{P}_{(\bm{x}, y)\sim\mathcal{D}'}|d\bm{x}dy$.
\begin{proof}
\begin{align*}
    & \mathcal{L}_{\mathcal{D}}(h) \leq \mathcal{L}_{\mathcal{D}'}(h) + |\mathcal{L}_{\mathcal{D}}(h)-\mathcal{L}_{\mathcal{D}'}(h) |\\
     & \qquad \quad \leq \mathcal{L}_{\mathcal{D}}(h) + \int_{\Xi}| \ell(y,h(\bm{x}))|   |\mathbb{P}_{(\bm{x}, y)\sim  \mathcal{D}}-\mathbb{P}_{(\bm{x}, y)\sim\mathcal{D}'}|d\bm{x}dy\\
     & \qquad \quad = \mathcal{L}_{\mathcal{D}}(h) + B\|\mathcal{D} - \mathcal{D}'\|_1.
\end{align*} 
\end{proof}
\end{lemma}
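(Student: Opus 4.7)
The plan is to bound the risk on $\mathcal{D}$ in terms of the risk on $\mathcal{D}'$ by controlling the difference between the two expectations using the boundedness of the loss and the $L^1$ distance between the densities. This is a one-line distribution-shift estimate, so I would keep the argument as short as possible.

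First I would write $\mathcal{L}_{\mathcal{D}}(h) \leq \mathcal{L}_{\mathcal{D}'}(h) + \bigl|\mathcal{L}_{\mathcal{D}}(h) - \mathcal{L}_{\mathcal{D}'}(h)\bigr|$, which is a trivial application of the triangle inequality. The whole proof reduces to showing the second term is at most $B\|\mathcal{D}-\mathcal{D}'\|_1$.

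Next I would expand the difference of risks as an integral against the signed measure $\mathbb{P}_{\mathcal{D}} - \mathbb{P}_{\mathcal{D}'}$:
\begin{equation*}
\mathcal{L}_{\mathcal{D}}(h) - \mathcal{L}_{\mathcal{D}'}(h) = \int_{\Xi} \ell(h(\bm{x}),y)\bigl(\mathbb{P}_{(\bm{x},y)\sim\mathcal{D}} - \mathbb{P}_{(\bm{x},y)\sim\mathcal{D}'}\bigr)d\bm{x}\,dy.
\end{equation*}
Then I would push the absolute value inside the integral (triangle inequality for integrals) and use the hypothesis that $|\ell(h(\bm{x}),y)| \leq B$ to pull the sup-bound outside. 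What remains inside is exactly the definition of $\|\mathcal{D}-\mathcal{D}'\|_1$, yielding the claimed inequality.

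There isn't really a hard step here — the only thing to be careful about is making sure the bound on the loss is applied after the absolute value is moved inside the integral, so that we are replacing $|\ell(h(\bm{x}),y)|$ (not the signed quantity) by $B$. The proof is essentially three lines and uses no tool beyond the triangle inequality and the boundedness assumption; no properties of $\mathcal{H}$ are invoked, which matches the universal-over-$h$ quantifier in the statement.
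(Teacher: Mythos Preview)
Your proposal is correct and follows essentially the same approach as the paper: start from $\mathcal{L}_{\mathcal{D}}(h) \leq \mathcal{L}_{\mathcal{D}'}(h) + |\mathcal{L}_{\mathcal{D}}(h)-\mathcal{L}_{\mathcal{D}'}(h)|$, write the difference as an integral against the signed density difference, move the absolute value inside, and bound $|\ell|$ by $B$. Your version is in fact slightly cleaner than the paper's, which contains a minor typo (writing $\mathcal{L}_{\mathcal{D}}(h)$ instead of $\mathcal{L}_{\mathcal{D}'}(h)$ on the second line) and an ``$=$'' where ``$\leq$'' is more accurate.
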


\paragraph{Proof of Theorem~\ref{thm:generalization}}
We now turn to  proving the generalization bound for the proposed APFL algorithm. Recall that for the classification task we consider squared hinge loss, and for the regression case we consider MSE loss. We will first prove that in both cases we can decompose the risk as follows:
  \begin{align}
\mathcal{L}_{\mathcal{D}_i}(h_{\alpha_i}^*)  \leq 2\alpha_i^2 \mathcal{L}_{\mathcal{D}_i}\left( \hat{h}_{loc,i}^* \right)   + 2(1-\alpha_i)^2 \mathcal{L}_{\mathcal{D}_i}\left( \bar{h}^*(\bm{x}) \right) .  
\end{align} 
We start with the classification case first. Note that, hinge loss: $\max\{0, 1-z\} $ is convex in $z$, so $\max\{0, 1-y(\alpha_i h + (1-\alpha_i)h')\} \leq \alpha_i \max\{0, 1-yh \} + (1-\alpha_i)\max\{0, 1-yh'\}$, according to Jensen's inequality. Hence, we have:
 \begin{align}
\mathcal{L}_{\mathcal{D}_i}(h_{\alpha_i}^*) &=  \mathcal{L}_{\mathcal{D}_i}(\alpha_i \hat{h}_{loc,i}^* + (1-\alpha_i)\bar{h}^*)\nonumber\\ 
&= \mathbb{E}_{(\bm{x},y)\sim \mathcal{D}_i}\left(\max\{0, 1-y(\alpha_i \hat{h}_{loc,i}^*(\bm{x}) + (1-\alpha_i)\bar{h}^*(\bm{x}))\} \right)^2\nonumber\\
&= \mathbb{E}_{(\bm{x},y)\sim \mathcal{D}_i}\left(\alpha_i \max\{0, 1-y\hat{h}_{loc,i}^*(\bm{x})  \} + (1-\alpha_i)\max\{0, 1-y\bar{h}^*(\bm{x})\}\right)^2\nonumber\\
&\leq 2\alpha_i^2\mathbb{E}_{(\bm{x},y)\sim \mathcal{D}_i}\left(\max\{0, 1-y\hat{h}_{loc,i}^*(\bm{x})  \}\right)^2\nonumber \\& \quad + 2(1-\alpha_i)^2\mathbb{E}_{(\bm{x},y)\sim \mathcal{D}_i}\left(\max\{0, 1-y\bar{h}^*(\bm{x})\}\right)^2\nonumber\\
&\leq 2\alpha_i^2 \mathcal{L}_{\mathcal{D}_i}\left( \hat{h}_{loc,i}^* \right)   + 2(1-\alpha_i)^2 \mathcal{L}_{\mathcal{D}_i}\left( \bar{h}^* \right) .  \nonumber
\end{align}

For regression case:
 \begin{equation*}
\begin{aligned}
\mathcal{L}_{\mathcal{D}_i}(h_{\alpha_i}^*) &=  \mathcal{L}_{\mathcal{D}_i}(\alpha_i \hat{h}_{loc,i}^* + (1-\alpha_i)\bar{h}^*)\nonumber\\ 
&= \mathbb{E}_{(\bm{x},y)\sim \mathcal{D}_i}\left\| y-(\alpha_i \hat{h}_{loc,i}^*(\bm{x}) + (1-\alpha_i)\bar{h}^*(\bm{x}))\right\|^2\\ 
&= \mathbb{E}_{(\bm{x},y)\sim \mathcal{D}_i}\left\|\alpha_i y- \alpha_i \hat{h}_{loc,i}^*(\bm{x}) +(1-\alpha_i)y- (1-\alpha_i)\bar{h}^*(\bm{x})\right\|^2\\ 
&\leq 2\alpha_i^2\mathbb{E}_{(\bm{x},y)\sim \mathcal{D}_i}\left\|y-  \hat{h}_{loc,i}^*(\bm{x})\right\|^2 + 2(1-\alpha_i)^2\mathbb{E}_{(\bm{x},y)\sim \mathcal{D}_i}\left\|y- \bar{h}^*(\bm{x})\right\|^2\\
&\leq 2\alpha_i^2 \mathcal{L}_{\mathcal{D}_i}\left( \hat{h}_{loc,i}^* \right)  + 2(1-\alpha_i)^2 \mathcal{L}_{\mathcal{D}_i}\left( \bar{h}^* \right)  
\end{aligned}
\end{equation*}
 Thus we can conclude:
 \begin{align}
\mathcal{L}_{\mathcal{D}_i}(h_{\alpha_i}^*) \leq 2\alpha_i^2\underbrace{\mathcal{L}_{\mathcal{D}_i}\left( \hat{h}_{loc,i}^* \right)}_{T_1}  + 2(1-\alpha_i)^2\underbrace{\mathcal{L}_{\mathcal{D}_i}\left( \bar{h}^* \right)}_{T_2}. \label{eq: T2}
\end{align}
We proceed to bound the terms $T_1$ and $T_2$ in RHS of above inequality. We first bound $T_1$ as follows. The first step is to utilize uniform VC dimension error bound over $\mathcal{H}$ ~\cite{mohri2018foundations}: 
\begin{align}
    \forall h \in \mathcal{H}, |\mathcal{L}_{\mathcal{D}_i}(h)-\hat{\mathcal{L}}_{\mathcal{D}_i}(h)|\leq C\sqrt{\frac{d+\log (1/\delta)}{ m_i}},\nonumber
\end{align}
where $C$ is constant factor.
So we can bound $T_1$ as:
\begin{equation*}
\begin{aligned}
   T_1 &= \mathcal{L}_{\mathcal{D}_i} ( \hat{h}_{loc,i}^*  ) =\mathcal{L}_{\mathcal{D}_i} ( h_{ i}^*  )+ \mathcal{L}_{\mathcal{D}_i}( \hat{h}_{loc,i}^*)-\mathcal{L}_{\mathcal{D}_i}(h_{ i}^*) \\
   & = \mathcal{L}_{\mathcal{D}_i} ( h_{ i}^*)\\
   & \quad + \underbrace{\mathcal{L}_{\mathcal{D}_i}( \hat{h}_{loc,i}^*)-\hat{\mathcal{L}}_{\mathcal{D}_i}( \hat{h}_{loc,i}^*)}_{\leq C \sqrt{\frac{d+ \log (1/\delta)}{m_i}}} +  {\hat{\mathcal{L}}_{\mathcal{D}_i}( \hat{h}_{loc,i}^*) - \hat{\mathcal{L}}_{\mathcal{D}_i}( h_{ i}^*)}
   +\underbrace{\hat{\mathcal{L}}_{\mathcal{D}_i}(h_{ i}^*)-\mathcal{L}_{\mathcal{D}_i}(h_{ i}^*)}_{\leq  C \sqrt{\frac{d+ \log (1/\delta)}{m_i}}} \\
   & \leq \mathcal{L}_{\mathcal{D}_i} ( h_{ i}^*  )+ 2C \sqrt{\frac{d+ \log (1/\delta)}{m_i}} +\hat{\mathcal{L}}_{\mathcal{D}_i}( \hat{h}_{loc,i}^*) - \hat{\mathcal{L}}_{\mathcal{D}_i}( \hat{h}_{i}^*).
\end{aligned}
\end{equation*}

 Note that 
 \begin{align}
    \hat{\mathcal{L}}_{\mathcal{D}_i}(\hat{h}_{loc,i}^*) - \hat{\mathcal{L}}_{\mathcal{D}_i}(\hat{h}_i^*)  \leq G\frac{1}{|\mathcal{S}_i|}\sum_{(\bm{x},y)\in \mathcal{S}_i}  |\hat{h}_{loc,i}^*(\bm{x})-\hat{h}_i^*(\bm{x})| \leq G \lambda_{\mathcal{H}} (\mathcal{S}_i),\nonumber
\end{align}
As a result  we can bound $T_1$ by:
\begin{align}
   T_1   \leq \mathcal{L}_{\mathcal{D}_i} ( h_{ i}^*  )+ 2C \sqrt{\frac{d+ \log (1/\delta)}{m_i}} +G \lambda_{\mathcal{H}} (\mathcal{S}_i).\nonumber
\end{align}
 We now turn to bounding  $T_2$. Plugging Lemma~\ref{lm1} in~(\ref{eq: T2}) and using uniform generalization risk bound will immediately give:
 \begin{align}
     T_2 \leq \hat{\mathcal{L}}_{\bar{\mathcal{D}}}(\bar{h}^*)+B^2\|\mathcal{D} - \bar{\mathcal{D}}\|_1 + C\sqrt{\frac{d+\log (1/\delta)}{m}}.\nonumber
 \end{align}

 Plugging $T_1$ and $T_2$ back into (\ref{eq: T2})  concludes the proof.
 \qed 

\begin{remark}
One thing worth mentioning is that, we assume the customary boundedness of loss functions. Actually it can be satisfied if the data and the parameters of hypothesis are bounded. For example, considering the scenario where we are learning a linear model $\bm{w}$ with the constraint $\|\bm{w}\| \leq 1$, and also the data tuples $(\bm{x},y)$ are drawn from some bounded domain, then the loss  is obviously bounded by some finite real value.
\end{remark}

\section{Proof of Convergence}\label{app:convergence}
In this section, we present the proof of convergence raters. For ease of mathematical derivations, we first consider the case \textit{without sampling clients} at each communication step and then generalize the proof to the setting where \textit{$K$ devices are sampled uniformly at random} by the server as employed in the proposed algorithm.

\subsection{Proof without Sampling}
Before giving the convergence analysis of the Algorithm~\ref{algorithm:LDAPFL} in the main paper, we first discuss a warm-up case: local descent APFL without client sampling. As Algorithm~\ref{algorithm:LDAPFLo} shows, all clients will participate in the averaging stage every $\tau$ iterations. The convergence of global and local models in  Algorithm~\ref{algorithm:LDAPFLo} are given in the following theorems. We start by stating the convergence of  local model.
\begin{theorem}[Local model convergence of \texttt{Local Descent APFL without Sampling}]
\label{localAPFLo}
If each client's objective function satisfies Assumption \ref{assumption: smoothness}-\ref{assumption: strong convexity}, using Algorithm \ref{algorithm:LDAPFLo}, choosing the mixing weight $\alpha_i \geq \max\{ 1-\frac{1}{4\sqrt{6}\kappa}, 1-\frac{1}{4\sqrt{6}\kappa \sqrt{\mu}}\} $, learning rate $\eta_t = \frac{16}{\mu(t+a)}$, where $a = \max\{128\kappa,\tau\}$, and using average scheme $\hat{\bm{v}}_i = \frac{1}{S_T} \sum_{t=1}^T p_t (\alpha_i\bm{v}_i^{(t)}+(1-\alpha_i)\frac{1}{n}\sum_{j=1}^n\boldsymbol{{w}}_j^{(t)})$, where $p_t = (t+a)^2$, $S_T = \sum_{t=1}^T p_t$, and $f_i^*$ is the local minimum of the $i$th client, then the following convergence holds for all $i \in [n]$:
\begin{align}
     \mathbb{E}[f_i(\boldsymbol{\hat{v}} _i)] - f_i^* &= O\left(\frac{\mu  }{T^3}\right) +\alpha_i^2O\left(\frac{\sigma^2}{\mu T}\right)+(1-\alpha_i)^2O\left(\frac{\zeta_i}{\mu} +  \kappa L\Delta_i\right)\nonumber\\
      & \quad+ (1-\alpha_i)^2\left(O\left(\frac{\kappa L \ln T}{T^3}\right) +O\left(\frac{\kappa^2\sigma^2}{\mu nT}\right)+ O\left(\frac{\kappa^2\tau \left(\sigma^2 + \tau(\zeta_i + \frac{\zeta}{n})\right) }{\mu T^2} \right)+ O\left(\frac{\kappa^4 \tau\left(\sigma^2 + 2\tau \frac{\zeta}{n}  \right) }{\mu T^2}\right)  \right).\nonumber
\end{align}
\end{theorem}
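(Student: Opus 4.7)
The plan is to track the coupled dynamics of three virtual sequences: $\bm{w}^{(t)} = \frac{1}{n}\sum_j \bm{w}_j^{(t)}$ (virtual average), $\hat{\bm{v}}_i^{(t)} = \alpha_i \bm{v}_i^{(t)} + (1-\alpha_i)\bm{w}^{(t)}$ (personalized iterate against the averaged global), and $\bar{\bm{v}}_i^{(t)} = \alpha_i \bm{v}_i^{(t)} + (1-\alpha_i)\bm{w}_i^{(t)}$ (the actual gradient-query point). The final bound on $\mathbb{E}[f_i(\hat{\bm{v}}_i)] - f_i^*$ follows from $L$-smoothness applied to the polynomial-weighted average output, together with a rate on $\mathbb{E}\|\hat{\bm{v}}_i^{(t)} - \bm{v}_i^*\|^2$, where $\bm{v}_i^* = \arg\min f_i$.

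First I would write a one-step recursion for $\mathbb{E}\|\hat{\bm{v}}_i^{(t)} - \bm{v}_i^*\|^2$. Using the chain-rule identity $\nabla_{\bm{v}} f_i(\bar{\bm{v}}_i) = \alpha_i \nabla f_i(\bar{\bm{v}}_i)$, the effective stochastic increment on $\hat{\bm{v}}_i^{(t)}$ is
\[
\hat{\bm{v}}_i^{(t)} - \hat{\bm{v}}_i^{(t-1)} = -\eta_t\!\left(\alpha_i^2\,\nabla f_i(\bar{\bm{v}}_i^{(t-1)};\xi_i^t) + \frac{1-\alpha_i}{n}\sum_{j=1}^n \nabla f_j(\bm{w}_j^{(t-1)};\xi_j^t)\right).
\]
Expanding $\|\hat{\bm{v}}_i^{(t)}-\bm{v}_i^*\|^2$, taking conditional expectation to strip the unbiased noise (Assumption~\ref{assumption: bounded grad}), and replacing the ``global part'' by $\nabla F(\bm{w}^{(t-1)})$ plus controllable perturbations yields a recursion of the schematic form
\[
\mathbb{E}\|\hat{\bm{v}}_i^{(t)} - \bm{v}_i^*\|^2 \leq \bigl(1-\eta_t \mu_{\mathrm{eff}}\bigr)\,\mathbb{E}\|\hat{\bm{v}}_i^{(t-1)} - \bm{v}_i^*\|^2 + \eta_t^2 R_t + \eta_t D_t,
\]
where $R_t$ aggregates stochastic-noise terms and $D_t$ aggregates drift and heterogeneity contributions.

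Next I would control the pieces of $D_t$. The client-drift quantity $\frac{1}{n}\sum_j \mathbb{E}\|\bm{w}_j^{(t)} - \bm{w}^{(t)}\|^2$ is bounded by the standard local-SGD lemma with synchronization gap $\tau$, giving an $O(\eta^2 \tau(\sigma^2 + \tau\zeta))$ term, which also handles $\|\bar{\bm{v}}_i^{(t)} - \hat{\bm{v}}_i^{(t)}\| = (1-\alpha_i)\|\bm{w}_i^{(t)}-\bm{w}^{(t)}\|$. The residual gradient mismatch is controlled by the key inequality
\[
\mathbb{E}\|\nabla f_i(\hat{\bm{v}}_i^{(t)}) - \nabla F(\bm{w}^{(t)})\|^2 \leq 6\zeta_i + 2L^2\mathbb{E}\|\hat{\bm{v}}_i^{(t)} - \bm{v}_i^*\|^2 + 6L^2\mathbb{E}\|\bm{w}^{(t)} - \bm{w}^*\|^2 + 6L^2\Delta_i,
\]
which I would prove by a four-way triangle decomposition: $\nabla f_i(\hat{\bm{v}}_i^{(t)}) - \nabla f_i(\bm{v}_i^*)$ via smoothness, $\nabla f_i(\bm{v}_i^*)-\nabla f_i(\bm{w}^*)$ via smoothness and $\Delta_i$, $\nabla f_i(\bm{w}^*)-\nabla F(\bm{w}^*)$ via the gradient-diversity $\zeta_i$, and $\nabla F(\bm{w}^*)-\nabla F(\bm{w}^{(t)})$ via smoothness. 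Plugging this in, and substituting the global-model convergence (the $K=n$ specialization of Theorem~\ref{Thm: Global Convergence w sampling}) for $\mathbb{E}\|\bm{w}^{(t)}-\bm{w}^*\|^2$, produces a recursion in which every error term inherited from the global dynamics carries a $(1-\alpha_i)^2$ pre-factor, matching the rate statement.

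Finally, with $\eta_t = 16/(\mu(t+a))$ and $a = \max\{128\kappa,\tau\}$ the recursion takes the form $\mathbb{E}\|\hat{\bm{v}}_i^{(t)}-\bm{v}_i^*\|^2 \leq \bigl(1-\tfrac{c}{t+a}\bigr)\mathbb{E}\|\hat{\bm{v}}_i^{(t-1)}-\bm{v}_i^*\|^2 + \tfrac{A_t}{(t+a)^2}$. Multiplying by $p_t = (t+a)^2$, telescoping, and dividing by $S_T = \Theta(T^3)$ gives the weighted-average bound, and smoothness together with Jensen's inequality (applied to the convex combination with weights $p_t/S_T$) converts it into the claimed function-value rate. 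The lower bound $\alpha_i \geq \max\{1-\tfrac{1}{4\sqrt{6}\kappa}, 1-\tfrac{1}{4\sqrt{6}\kappa\sqrt{\mu}}\}$ is exactly what guarantees that the $2L^2\mathbb{E}\|\hat{\bm{v}}_i^{(t)}-\bm{v}_i^*\|^2$ term generated by the gradient-mismatch inequality, once multiplied by $(1-\alpha_i)^2$, can be absorbed into the strong-convexity contraction without destroying it.

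The main obstacle is this absorption step: the update of $\bm{v}_i$ queries gradients at $\bar{\bm{v}}_i$ (which uses the local $\bm{w}_i$), the analysis is performed on $\hat{\bm{v}}_i$ (which uses the virtual average $\bm{w}$), and the external convergence guarantee bounds distance to $\bm{w}^*$ rather than anything local. Getting each error term to enter the final rate with the correct $(1-\alpha_i)^2$ multiplier, so that the residual degenerates to vanilla SGD at $\alpha_i = 1$ and to local-SGD at $\alpha_i \to 0$, requires a careful orchestration of Young's inequality, when to invoke strong convexity versus smoothness, and the restriction on $\alpha_i$ to keep the linear-in-iterate term in the gradient-mismatch bound safely smaller than the contraction coefficient.
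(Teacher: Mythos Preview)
Your proposal is correct and follows essentially the same route as the paper: the same virtual sequences $\bm{w}^{(t)}$, $\bar{\bm{v}}_i^{(t)}$, $\hat{\bm{v}}_i^{(t)}$, the same gradient-mismatch lemma with the four-term triangle decomposition, the same local-drift bound, the same substitution of the global-model rate for $\mathbb{E}\|\bm{w}^{(t)}-\bm{w}^*\|^2$, and the same $p_t$-weighted telescoping, with the $\alpha_i$ lower bound used precisely to absorb the $(1-\alpha_i)^2\cdot 2L^2\mathbb{E}\|\hat{\bm{v}}_i^{(t)}-\bm{v}_i^*\|^2$ feedback into the contraction. The only cosmetic difference is the final conversion: the paper extracts $-\Theta(\eta_t)\bigl(f_i(\hat{\bm{v}}_i^{(t)})-f_i^*\bigr)$ directly from the strong-convexity step in the recursion and then applies Jensen (convexity of $f_i$) to the weighted average, whereas you propose to bound the weighted average of $\|\hat{\bm{v}}_i^{(t)}-\bm{v}_i^*\|^2$ and invoke smoothness; both yield the stated $O(\cdot)$ rate.
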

\begin{algorithm2e}[t]
	\DontPrintSemicolon
    \caption{\texttt{Local Descent APFL (without sampling)}}
	\label{algorithm:LDAPFLo}
	\textbf{input:} Mixture weights $\alpha_1,\cdots, \alpha_n$, Synchronization gap $\tau$, Local models $\bm{v}^{(0)}_{i}$ for $i\in [n]$ and local version of global model $\bm{w}^{(0)}_{i}$ for $i\in [n]$.
	\\
	\For{$t=0,\cdots,T$}{

	    \eIf{t not divides $\tau$}{
	    ${\bm{w}}^{(t)}_i = {\bm{w}}^{(t-1)}_i - \eta_t \nabla f_i\left({\bm{w}}^{(t-1)}_i;\xi_i^t\right)$  \\
	    $\bm{v}^{(t)}_{i} = \bm{v}^{(t-1)}_i - \eta_t \nabla_{\bm{v}} f_i\left(\bar{\bm{v}}_i^{(t-1)}; \xi_i^t\right)$ \\
	    $ \bar{\bm{v}}_i^{(t)} = \alpha_i\bm{v}^{(t)}_{i} + (1-\alpha_i){\bm{w}}^{(t)}_i$\\
	    }{
	    {each client sends $\bm{w}_j^{(t)}$ to the server\\
	   
	    ${\bm{w}}^{(t)} = \frac{1}{n} \sum_{j=1}^n {\bm{w}}^{(t)}_j $  \\ 
	    server broadcast ${\bm{w}}^{(t)}$ to all clients\\
	   }

	    }
	
  }
    \For{$i = 1,\cdots, n$}{ \textbf{output:} Personalized model: $\hat{\bm{v}}_i = \frac{1}{S_T} \sum_{t=1}^T p_t (\alpha_i\bm{v}_i^{(t)}+(1-\alpha_i)\frac{1}{n}\sum_{j=1}^n\bm{w}_j^{(t)})$;\\ \qquad \qquad \ Global model: 
    $\hat{\bm{w}}  = \frac{1}{nS_T} \sum_{t=1}^T p_t   \sum_{j=1}^n \boldsymbol{{w}}_j^{(t)}$.
    }
\end{algorithm2e}
The following theorem obtains the convergence of global model in Algorithm~\ref{algorithm:LDAPFLo}.
\begin{theorem}[Global model convergence of \texttt{Local Descent APFL without Sampling}]
\label{Thm: Global Convergence o sampling}
If each client's objective function satisfies Assumptions~\ref{assumption: smoothness}-\ref{assumption: strong convexity}, using Algorithm \ref{algorithm:LDAPFLo}, choosing the mixing weight $\alpha_i \geq \max\{ 1-\frac{1}{4\sqrt{6}\kappa}, 1-\frac{1}{4\sqrt{6}\kappa \sqrt{\mu}}\} $, learning rate $\eta_t = \frac{16}{\mu(t+a)}$, where $a = \max\{128\kappa,\tau\}$, and using average scheme $\hat{\bm{w}}  = \frac{1}{nS_T} \sum_{t=1}^T p_t   \sum_{j=1}^n\boldsymbol{{w}}_j^{(t)}$, where $p_t = (t+a)^2$, $S_T = \sum_{t=1}^T p_t$,  then the following convergence holds:
{\small\begin{align}
      \mathbb{E}\left[F( \bm{\hat{w}})\right] 
    -   F( \bm{w}^{*})  \leq O\left(\frac{\mu\mathbb{E}\left[\|\bm{w}^{(1)} - \bm{w}^*\|^2\right]}{T^3}  \right) +O\left(\frac{\kappa^2 \tau\left(\sigma^2 + 2\tau \frac{\zeta}{n}  \right)}{\mu T^2}\right) +O\left(\frac{\kappa^2\tau\left(\sigma^2 + 2\tau \frac{\zeta}{n}  \right)\ln T}{\mu T^3}\right)  + O\left(\frac{\sigma^2}{nT}\right),\nonumber
\end{align}}
where $\bm{w}_* = \arg\min_{\bm{w}} F(\bm{w})$ is the optimal global solution.
\end{theorem}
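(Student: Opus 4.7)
The first observation is that the dynamics of the $\bm{w}_i^{(t)}$ variables in Algorithm~\ref{algorithm:LDAPFLo} are entirely decoupled from the personalized iterates $\bm{v}_i^{(t)}$ and from the mixture parameters $\alpha_i$: each client runs vanilla local SGD on $f_i$ for $\tau$ steps between synchronizations, and at each synchronization the server computes a simple average. Consequently, proving Theorem~\ref{Thm: Global Convergence o sampling} reduces to a local SGD / FedAvg convergence analysis for smooth strongly convex $F$, but phrased in terms of the gradient diversity $\zeta$ defined earlier. I will introduce the virtual averaged sequence $\bar{\bm{w}}^{(t)} := \frac{1}{n}\sum_{i=1}^n \bm{w}_i^{(t)}$, which is well-defined at every $t$ (not only at synchronization times) and coincides with the true global model right after each synchronization. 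My goal is to track $r_t := \mathbb{E}\|\bar{\bm{w}}^{(t)} - \bm{w}^*\|^2$.

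The next step is to derive a one-step perturbed contraction for $r_t$. Using $\bar{\bm{w}}^{(t)} = \bar{\bm{w}}^{(t-1)} - \eta_t \cdot \frac{1}{n}\sum_{i}\nabla f_i(\bm{w}_i^{(t-1)};\xi_i^t)$, expanding $\|\bar{\bm{w}}^{(t)}-\bm{w}^*\|^2$, and taking expectation, the stochastic-noise cross term vanishes and the squared noise term contributes $\eta_t^2 \sigma^2/n$ thanks to averaging $n$ independent minibatches. The remaining inner product can be split, via $\nabla f_i(\bm{w}_i^{(t-1)}) = \nabla f_i(\bar{\bm{w}}^{(t-1)}) + (\nabla f_i(\bm{w}_i^{(t-1)}) - \nabla f_i(\bar{\bm{w}}^{(t-1)}))$, into a $\mu$-strong convexity contraction of $F$ plus an error controlled by $L$-smoothness and the consensus error $Q_t := \frac{1}{n}\sum_i \mathbb{E}\|\bm{w}_i^{(t-1)} - \bar{\bm{w}}^{(t-1)}\|^2$. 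Standard manipulations then yield a recursion of the form
\begin{equation}
r_t \;\le\; \left(1-\tfrac{\mu \eta_t}{2}\right) r_{t-1} \;+\; \tfrac{\eta_t^2 \sigma^2}{n} \;+\; c\,\eta_t L^2\, Q_{t-1}. \nonumber
\end{equation}

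The main technical step, and the hardest part of the proof, is to bound the consensus error $Q_t$. Since $\bm{w}_i^{(t_c)} = \bar{\bm{w}}^{(t_c)}$ at the last synchronization time $t_c \le t$ with $t - t_c < \tau$, I will unroll $\bm{w}_i^{(t)} - \bar{\bm{w}}^{(t)}$ as a sum of $(t - t_c)$ stochastic-gradient differences between client $i$ and the average. Splitting each such difference into a variance part (bounded by Assumption~\ref{assumption: bounded grad}) and a drift part (bounded by smoothness plus the gradient-diversity quantity $\zeta_i$ since $\nabla F - \nabla f_i$ is uniformly bounded by $\sqrt{\zeta_i}$), and using the fact that the step-size is nearly constant across any window of length $\tau$ under the choice $a \ge \tau$, I expect to obtain
\begin{equation}
Q_t \;\le\; C\,\eta_t^2\,\tau\bigl(\sigma^2 + 2\tau\,\zeta/n\bigr), \nonumber
\end{equation}
which is the standard local-SGD consensus bound. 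Carefully handling the grow-then-reset behavior of $Q_t$ between synchronizations, and keeping the dependence on $\kappa = L/\mu$ explicit, is where the bookkeeping becomes delicate.

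Finally, I will close the argument by multiplying the one-step recursion by the weights $p_t = (t+a)^2$ and summing. With $\eta_t = 16/(\mu(t+a))$ the factor $(1-\mu\eta_t/2)p_t$ is dominated by $p_{t-1}$, so the sum telescopes and divides by $S_T = \Theta(T^3)$. The initial-condition term contributes $O(\mu r_1/T^3)$; the stochastic term contributes $\sum_t p_t \eta_t^2 \sigma^2/(n S_T) = O(\sigma^2/(nT))$; the consensus-error term, using the bound on $Q_t$, contributes the $O(\kappa^2 \tau(\sigma^2 + 2\tau\zeta/n)/(\mu T^2))$ and logarithmic $\ln T/T^3$ pieces. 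To convert the resulting bound on $\sum_t p_t r_t / S_T$ into a bound on $\mathbb{E}[F(\hat{\bm{w}})] - F^*$, I will use convexity of $F$ together with Jensen's inequality applied to the definition $\hat{\bm{w}} = \frac{1}{n S_T}\sum_t p_t \sum_j \bm{w}_j^{(t)} = \frac{1}{S_T}\sum_t p_t \bar{\bm{w}}^{(t)}$, and then smoothness of $F$ to pass from squared distance to function suboptimality, which introduces the leading $\mu$ factor in the initial-condition term and completes the stated bound.
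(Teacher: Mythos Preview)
Your plan matches the paper's argument almost exactly: the virtual average $\bar{\bm w}^{(t)}$, the one-step recursion for $r_t=\mathbb{E}\|\bar{\bm w}^{(t)}-\bm w^*\|^2$, the consensus-error bound $Q_t\le C\,\eta_{t-1}^2\,\tau\bigl(\sigma^2+2\tau\zeta/n\bigr)$ (this is precisely the paper's local-deviation lemma, itself taken from the local-SGD literature), and the telescoping against the weights $p_t$. Your observation that the $\bm w_i^{(t)}$-dynamics are completely decoupled from $\alpha_i$ and from $\bm v_i^{(t)}$ is correct and is exactly why the paper's proof is nothing more than a FedAvg/local-SGD analysis.

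The one place where your description drifts from the paper is the very last step, and there it would not recover the stated constants. The paper's one-step inequality \emph{retains} the negative term $-(2\eta_t-4\eta_t^2L)\bigl(\mathbb{E}[F(\bar{\bm w}^{(t)})]-F^*\bigr)$, obtained from strong convexity in the form $\langle\nabla F(\bm x),\bm x-\bm w^*\rangle\ge F(\bm x)-F^*+\tfrac{\mu}{2}\|\bm x-\bm w^*\|^2$; after multiplying by $p_t/\eta_t$ (not $p_t$) and summing, this term is moved to the left to yield a direct bound on $\tfrac{1}{S_T}\sum_t p_t\bigl(\mathbb{E}[F(\bar{\bm w}^{(t)})]-F^*\bigr)$, and a single Jensen step on $\hat{\bm w}=\tfrac{1}{S_T}\sum_t p_t\bar{\bm w}^{(t)}$ finishes. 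Your recursion keeps only the $(1-\mu\eta_t/2)$ contraction; telescoping that controls $r_T$, not the weighted sum $\sum_t p_t r_t/S_T$, so the step you describe does not follow from the ingredients you list. Moreover, passing from squared distance to function value via smoothness introduces a factor $L$, not the ``leading $\mu$'' you claim, so your route would pick up an extra $\kappa$ everywhere and miss the stated $O(\mu r_1/T^3)$ and $O(\sigma^2/(nT))$ terms. The fix is easy: keep the $-(F-F^*)$ term in the recursion, multiply by $p_t/\eta_t$, telescope, and isolate the function-value sum directly; no smoothness step is needed at the end.
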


\subsubsection{Proof of Useful Lemmas}
Before giving the proof of Theorem~\ref{Thm: Global Convergence o sampling} and \ref{localAPFLo}, we first prove few useful lemmas.Recall that we define virtual sequences $\{\bm{w}^{(t)}\}_{t=1}^T$,$\{\bar{\bm{v}}_i^{(t)}\}_{t=1}^T$,$\{\hat{\bm{v}}_i^{(t)}\}_{t=1}^T$ where $\bm{w}^{(t)} = \frac{1}{n} \sum_{i=1}^n \bm{w}_i^{(t)}$,$\bar{\bm{v}}_i^{(t)} = \alpha_i \bm{v}_i^{(t)} + (1-\alpha_i) \bm{w}_i^{(t)}$,$\hat{\bm{v}}_i^{(t)} = \alpha_i \bm{v}_i^{(t)} + (1-\alpha_i) \bm{w}^{(t)}$. 

We start  with the following lemma that bounds the difference between the gradients of local objective and global objective at local and global models.
\begin{lemma} \label{v-w}
For Algorithm~\ref{algorithm:LDAPFLo}, at each iteration, the gap between local gradient and global gradient is bounded by
{\small\begin{equation}
    \mathbb{E}\left[\|\nabla f_i(\hat{\bm{v}}_i^{(t)}) - \nabla F(\bm{w}^{(t)})\|^2\right] \leq 2L^2\mathbb{E}\left[\|\hat{\bm{v}}_i^{(t)} - \bm{v}^*\|^2\right] + 6\zeta_i + 6L^2\mathbb{E}\left[\|\bm{w}^{(t)} - \bm{w}^*\|^2\right] +  6L^2\Delta_i.\nonumber
\end{equation}}
\end{lemma}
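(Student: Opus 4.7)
The plan is to bound the quantity by inserting well-chosen intermediate points into the gradient difference, exploiting the optimality conditions $\nabla f_i(\bm{v}_i^*) = \bm{0}$ and $\nabla F(\bm{w}^*) = \bm{0}$, together with $L$-smoothness (Assumption~\ref{assumption: smoothness}), the gradient diversity notion $\zeta_i$, and the local--global gap $\Delta_i$. Since all inequalities are pathwise, the expectation is carried through at the very end.

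First, I would split the gap into a piece controlled by smoothness at $\hat{\bm{v}}_i^{(t)}$ and a residual piece evaluated at local/global optima. Concretely, write
\begin{align*}
\nabla f_i(\hat{\bm{v}}_i^{(t)}) - \nabla F(\bm{w}^{(t)}) = \bigl[\nabla f_i(\hat{\bm{v}}_i^{(t)}) - \nabla f_i(\bm{v}_i^*)\bigr] + \bigl[\nabla f_i(\bm{v}_i^*) - \nabla F(\bm{w}^{(t)})\bigr],
\end{align*}
and apply $\|a+b\|^2 \le 2\|a\|^2 + 2\|b\|^2$. The first bracket is bounded by $L^2\|\hat{\bm{v}}_i^{(t)} - \bm{v}_i^*\|^2$ via $L$-smoothness, giving the $2L^2\|\hat{\bm{v}}_i^{(t)} - \bm{v}_i^*\|^2$ term.

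For the residual bracket, the key move is to insert the two optimal points $\bm{w}^*$ (at which $\nabla F$ vanishes) and to exploit the fact that $\nabla f_i(\bm{v}_i^*) = \bm{0}$ to effectively rewrite it as
\begin{align*}
\nabla f_i(\bm{v}_i^*) - \nabla F(\bm{w}^{(t)}) = \bigl[\nabla f_i(\bm{v}_i^*) - \nabla f_i(\bm{w}^*)\bigr] + \bigl[\nabla f_i(\bm{w}^*) - \nabla F(\bm{w}^*)\bigr] + \bigl[\nabla F(\bm{w}^*) - \nabla F(\bm{w}^{(t)})\bigr].
\end{align*}
Applying $\|a+b+c\|^2 \le 3(\|a\|^2 + \|b\|^2 + \|c\|^2)$, the first and third terms yield $3L^2\|\bm{v}_i^* - \bm{w}^*\|^2 = 3L^2 \Delta_i$ and $3L^2\|\bm{w}^{(t)} - \bm{w}^*\|^2$ respectively by smoothness, while the middle term is bounded by $3\zeta_i$ directly from Definition~\ref{gradient divergence} (taking the supremum at $\bm{w}^*$). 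Multiplying by the factor of $2$ carried from the outer split produces the $6L^2\Delta_i$, $6L^2\|\bm{w}^{(t)}-\bm{w}^*\|^2$, and $6\zeta_i$ contributions on the right-hand side.

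There is no real obstacle here beyond choosing the right intermediate points; once one sees that the optimal solutions $\bm{v}_i^*$ and $\bm{w}^*$ are the natural pivots (because their gradients vanish, converting differences into distances via smoothness) and that $\zeta_i$ is exactly the deviation of $\nabla f_i$ from $\nabla F$ at any point, the decomposition above falls out. Taking expectations on both sides yields the claimed inequality.
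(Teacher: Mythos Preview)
Your proof is correct and matches the paper's approach exactly: the same two-then-three telescoping split via $\bm{v}_i^*$ and $\bm{w}^*$, followed by $L$-smoothness and the definition of $\zeta_i$. One minor remark: the optimality conditions $\nabla f_i(\bm{v}_i^*)=\bm{0}$ and $\nabla F(\bm{w}^*)=\bm{0}$ that you invoke are not actually needed anywhere---the decomposition is pure add-and-subtract telescoping---so the argument goes through without them (the paper's proof does not mention them either).
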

\begin{proof}
From the smoothness assumption and by applying the Jensen's inequality we  have:
{\small\begin{align}
\mathbb{E}\left[\|\nabla f_i(\hat{\bm{v}}_i^{(t)}) - \nabla F(\bm{w}^{(t)})\|^2\right] &\leq 2\mathbb{E}\left[\|\nabla f_i(\hat{\bm{v}}_i^{(t)}) - \nabla f_i(\bm{v}_i^{*})\|^2\right] + 2\mathbb{E}\left[\|\nabla f_i(\bm{v}_i^{*}) - \nabla F(\bm{w}^{(t)})\|^2\right]\nonumber\\
   &\leq 2L^2\mathbb{E}\left[\|\hat{\bm{v}}_i^{(t)} - \bm{v}^*\|^2\right] + 6\mathbb{E}\left[\|\nabla f_i(\bm{v}_i^{*}) - \nabla f_i(\bm{w}^{*})\|^2\right] \nonumber \\
   &\quad + 6\mathbb{E}\left[\|\nabla f_i(\bm{w}^{*}) - \nabla F(\bm{w}^{*})\|^2\right] + 6\mathbb{E}\left[\|\nabla F(\bm{w}^{*}) - \nabla F(\bm{w}^{(t)})\|^2\right]  \nonumber\\
   & \leq 2L^2\mathbb{E}\left[\|\hat{\bm{v}}_i^{(t)} - \bm{v}^*\|^2\right] +  6L^2\mathbb{E}\left[\|\bm{v}^{*}_i - \bm{w}^*\|^2\right]   + 6\zeta_i  + 6L^2\mathbb{E}\left[\|\bm{w}^{(t)} - \bm{w}^*\|^2\right] \nonumber\\
  &\leq 2L^2\mathbb{E}\left[\|\hat{\bm{v}}_i^{(t)} - \bm{v}^*\|^2\right] +  6L^2\Delta_i + 6\zeta_i + 6L^2\mathbb{E}\left[\|\bm{w}^{(t)} - \bm{w}^*\|^2\right].\nonumber
\end{align}}
\end{proof}

\begin{lemma}[Local model deviation without sampling]\label{deviation}
For Algorithm\ref{algorithm:LDAPFLo}, at each iteration, the deviation between each local version of the global model $\bm{w}^{(t)}_i$ and the global model $\bm{w}^{(t)}$ is bounded by:
\begin{align}
     &\mathbb{E}\left[\|\bm{w}^{(t)} - \bm{w}_i^{(t)}\|^2\right] \leq 3\tau \sigma^2 \eta_{t-1}^2 + 3(\zeta_i + \frac{\zeta}{n}) \tau^2   \eta_{t-1}^2,\nonumber\\
    &\frac{1}{n}\sum_{i=1}^n \mathbb{E}\left[\|\bm{w}^{(t)} - \bm{w}_i^{(t)}\|^2\right] \leq 3\tau \sigma^2 \eta_{t-1}^2 + 6 \tau^2 \frac{\zeta}{n} \eta_{t-1}^2,\nonumber
\end{align}
where $\frac{\zeta}{n} = \frac{1}{n} \sum_{i=1}^n \zeta_i$.
\end{lemma}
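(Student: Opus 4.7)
The plan is to expand the deviation $\bm{w}^{(t)}-\bm{w}_i^{(t)}$ as an accumulated sum of stochastic gradient differences since the last synchronization round $t_c$, which lies within the past $\tau$ iterations and at which $\bm{w}_j^{(t_c)}=\bm{w}^{(t_c)}$ for every $j$. Unrolling the local recursions and using the identity $\bm{w}^{(s)}=\tfrac{1}{n}\sum_j\bm{w}_j^{(s)}$ yields
\begin{equation*}
\bm{w}^{(t)}-\bm{w}_i^{(t)}=\sum_{s=t_c+1}^{t}\eta_{s-1}\Bigl[\nabla f_i(\bm{w}_i^{(s-1)};\xi_i^{s-1})-\tfrac{1}{n}\sum_{j=1}^n\nabla f_j(\bm{w}_j^{(s-1)};\xi_j^{s-1})\Bigr].
\end{equation*}
Because $a\ge\tau$, every $\eta_{s-1}$ in this window is within a constant factor of $\eta_{t-1}$, so I will absorb these factors into a clean $\eta_{t-1}^2$.

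Next, I split each summand into $N_s+H_s$, where $N_s$ is the centered stochastic piece
\[
\bigl(\nabla f_i(\bm{w}_i^{(s-1)};\xi_i^{s-1})-\nabla f_i(\bm{w}_i^{(s-1)})\bigr)-\tfrac{1}{n}\sum_j\bigl(\nabla f_j(\bm{w}_j^{(s-1)};\xi_j^{s-1})-\nabla f_j(\bm{w}_j^{(s-1)})\bigr),
\]
and $H_s$ is the deterministic heterogeneity $\nabla f_i(\bm{w}_i^{(s-1)})-\tfrac{1}{n}\sum_j\nabla f_j(\bm{w}_j^{(s-1)})$. For the noise sum, the conditional martingale property kills all cross terms, so $\mathbb{E}\|\sum_s\eta_{s-1}N_s\|^2=\sum_s\eta_{s-1}^2\,\mathbb{E}\|N_s\|^2$; a direct variance calculation using independence across clients of the $\xi_j^{s-1}$ and Assumption~\ref{assumption: bounded grad} gives $\mathbb{E}\|N_s\|^2\le\tfrac{n-1}{n}\sigma^2\le\sigma^2$, yielding an $O(\tau\sigma^2\eta_{t-1}^2)$ contribution after summing at most $\tau$ indices. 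For the heterogeneity sum, Cauchy--Schwarz over the $\tau$-length window gives $\|\sum_s\eta_{s-1}H_s\|^2\le\tau^2\eta_{t-1}^2\sup_s\|H_s\|^2$; inserting $\nabla F(\bm{w}_i^{(s-1)})$ and $\tfrac{1}{n}\sum_j\nabla F(\bm{w}_j^{(s-1)})$ as references and applying the definition $\zeta_i=\sup_w\|\nabla F(w)-\nabla f_i(w)\|^2$ pointwise (together with Jensen on the outer average, which replaces $\tfrac{1}{n}\sum_j\zeta_j$ by $\zeta/n$) bounds $\|H_s\|^2$ by a constant multiple of $\zeta_i+\zeta/n$, up to a smoothness-drift residual. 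Combining the two pieces through Young's inequality produces the claimed $3\tau\sigma^2\eta_{t-1}^2+3(\zeta_i+\zeta/n)\tau^2\eta_{t-1}^2$ bound.

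The main obstacle I anticipate is disposing of the smoothness-drift residual $\tfrac{L^2}{n}\sum_j\|\bm{w}_i^{(s-1)}-\bm{w}_j^{(s-1)}\|^2$ produced by inserting the $\nabla F$ references, since it recouples to the very deviation being controlled. I expect to handle it by either (i) leveraging the step-size choice $a\ge 128\kappa$ so that $L^2\tau^2\eta_{t-1}^2$ is small enough to absorb the coupled piece into the left-hand side via a self-bounding (Gronwall-type) argument on the window supremum $\max_{s\in[t_c,t]}\mathbb{E}\|\bm{w}^{(s)}-\bm{w}_i^{(s)}\|^2$, or (ii) adopting a tighter two-term decomposition of $H_s$ that sidesteps the reference point altogether. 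Finally, the averaged bound follows upon summing the per-client inequality over $i=1,\dots,n$ and dividing by $n$: the term $3\tau\sigma^2\eta_{t-1}^2$ is $i$-independent and is unchanged, while $\tfrac{1}{n}\sum_i(\zeta_i+\zeta/n)=\zeta/n+\zeta/n=2\zeta/n$ produces the coefficient $6\tau^2(\zeta/n)\eta_{t-1}^2$.
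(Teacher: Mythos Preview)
Your high-level plan (unroll to the last sync time, split into a martingale noise piece and a deterministic heterogeneity piece, bound each, then average over $i$ for the second claim) is the right skeleton, and the averaging step at the end is exactly correct. The difficulty you flag, however, is real, and neither of your proposed resolutions closes it.

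The paper does not argue from scratch here; it invokes Lemma~8 of \cite{woodworth2020minibatch}, which already returns
\[
\mathbb{E}\bigl\|\bm{w}_j^{(t)}-\bm{w}_i^{(t)}\bigr\|^2 \;\le\; 3\Bigl(\sigma^2+\zeta_i\tau+\tfrac{\zeta}{n}\tau\Bigr)\sum_{p=t_c}^{t-1}\eta_p^2\prod_{q=p+1}^{t-1}(1-\mu\eta_q),
\]
and the essential feature is the contraction factor $\prod_q(1-\mu\eta_q)$ coming from \emph{strong convexity} (Assumption~\ref{assumption: strong convexity}). Concretely, in the one-step recursion for $\|\bm{w}_i-\bm{w}_j\|^2$ one uses $\langle \bm{w}_i-\bm{w}_j,\nabla f_i(\bm{w}_i)-\nabla f_i(\bm{w}_j)\rangle\ge\mu\|\bm{w}_i-\bm{w}_j\|^2$, and Young's inequality on the residual $\nabla f_i(\bm{w}_j)-\nabla f_j(\bm{w}_j)$ contributes only additive $\zeta$-terms; this yields a $(1-\mu\eta)$ multiplier on the deviation itself, after which no self-coupling remains. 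The paper then plugs in $\eta_q=\tfrac{16}{\mu(q+a)}$ and telescopes $\prod_q(1-\mu\eta_q)\le \eta_{t-1}^2/\eta_p^2$, which cancels $\eta_p^2$ and leaves $\tau\eta_{t-1}^2$.

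Your route never invokes strong convexity, and that is the gap. Option~(i) requires $L^2\tau^2\eta_{t-1}^2$ to be an absolute constant below $1$ for the Gronwall absorption to work, but with $\eta_{t-1}\le 16/(\mu a)$ and $a=\max\{128\kappa,\tau\}$ one gets $L^2\tau^2\eta_{t-1}^2\le 256\kappa^2\tau^2/a^2$, which is $\tau^2/64$ when $a=128\kappa\ge\tau$ and $256\kappa^2$ when $a=\tau>128\kappa$; neither is small in general. (Contrast the nonconvex Lemma~\ref{lm: nonconvex deviation}, where the constant step $\eta\le 1/(2\sqrt{5}\tau L)$ is \emph{chosen} precisely so that this self-bounding does close.) Option~(ii) cannot bypass the issue either: the quantity $\zeta_i$ only controls $\|\nabla f_i(w)-\nabla F(w)\|$ at a \emph{common} argument $w$, so any decomposition of $H_s=\nabla f_i(\bm{w}_i)-\tfrac{1}{n}\sum_j\nabla f_j(\bm{w}_j)$ that connects to $\zeta_i,\zeta/n$ must pass through a smoothness comparison of different arguments and hence reproduce the drift term. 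The fix is to use $\mu$-strong convexity in the one-step recursion before unrolling, exactly as the cited lemma does.
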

\begin{proof}
According to Lemma 8 in~\cite{woodworth2020minibatch}:
\begin{align}
  \mathbb{E}\left[\|\bm{w}^{(t)} - \bm{w}_i^{(t)}\|^2\right] &\leq \frac{1}{n} \sum_{j=1}^n \mathbb{E}\left[\|\bm{w}_j^{(t)} - \bm{w}_i^{(t)}\|^2\right] \nonumber\\
    &\leq 3\left(\sigma^2 + \zeta_i\tau+ \frac{\zeta}{n}\tau \right)\sum_{p=t_c}^{t-1}\eta_p^2 \prod_{q=p+1}^{t-1}\left(1 - \mu \eta_q \right) \nonumber\\ 
    \frac{1}{n}\sum_{i=1}^n \mathbb{E}\left[\|\bm{w}^{(t)} - \bm{w}_i^{(t)}\|^2\right] &\leq \frac{1}{n^2}\sum_{i=1}^n \sum_{j=1}^n \mathbb{E}\left[\|\bm{w}_j^{(t)} - \bm{w}_i^{(t)}\|^2\right] \nonumber\\
    &\leq 3\left(\sigma^2 + 2\tau \frac{\zeta}{n}  \right)\sum_{p=t_c}^{t-1}\eta_p^2 \prod_{q=p+1}^{t-1}\left(1 - \mu \eta_q \right) \nonumber. 
\end{align}
Plugging in $\eta_q = \frac{16}{\mu(a+q)}$ yields:
\begin{align}
      \mathbb{E}\left[\|\bm{w}^{(t)} - \bm{w}_i^{(t)}\|^2\right] & \leq 3\left(\sigma^2 + \zeta_i\tau+ \frac{\zeta}{n}\tau \right)\sum_{p=t_c}^{t-1}\eta_p^2 \prod_{q=p+1}^{t-1}\frac{a+q-16}{a+q} \nonumber\\
    & \leq 3\left(\sigma^2 + \zeta_i\tau+ \frac{\zeta}{n}\tau \right)\sum_{p=t_c}^{t-1}\eta_p^2 \prod_{q=p+1}^{t-1}\frac{a+q-16}{a+q} \nonumber\\
    & \leq 3\left(\sigma^2 + \zeta_i\tau+ \frac{\zeta}{n}\tau \right)\sum_{p=t_c}^{t-1}\eta_p^2 \prod_{q=p+1}^{t-1}\frac{a+q-2}{a+q} \nonumber\\
    & \leq 3\left(\sigma^2 + \zeta_i\tau+ \frac{\zeta}{n}\tau \right)\sum_{p=t_c}^{t-1}\eta_p^2  \frac{(a+p-1)(a+p)}{(a+t-2)(a+t-1)} \nonumber\\
     & \leq 3\left(\sigma^2 + \zeta_i\tau+ \frac{\zeta}{n}\tau \right)\sum_{p=t_c}^{t-1}\eta_p^2  \frac{\eta_{t-1}^2}{\eta_p^2} \nonumber\\
     &\leq 3\tau\left(\sigma^2 + \zeta_i\tau+ \frac{\zeta}{n}\tau \right)\eta_{t-1}^2.\nonumber
\end{align}
Similarly,
\begin{align}
      \frac{1}{n}\sum_{i=1}^n \mathbb{E}\left[\|\bm{w}^{(t)} - \bm{w}_i^{(t)}\|^2\right] \leq 3\tau \sigma^2 \eta_{t-1}^2 + 6 \tau^2 \frac{\zeta}{n} \eta_{t-1}^2. \nonumber
\end{align}
\end{proof}

\begin{lemma}(Convergence of global model) \label{lm5}
Let $\bm{w}^{(t)} = \frac{1}{n} \sum_{i=1}^n \bm{w}_i^{(t)}$. Under the setting of Theorem \ref{localAPFLo}, we have:
\begin{align}
   &\mathbb{E}\left[\|\bm{w}^{(T+1)} - \bm{w}^*\|^2\right] \leq \frac{a^3}{ (T+a)^3}  \mathbb{E}\left[\|\bm{w}^{(1)} - \bm{w}^*\|^2\right] \nonumber\\
   &  +\left(T+16\left(\frac{1}{a+1}+\ln (T+a)\right)\right) \frac{1536a^2\tau\left(\sigma^2 + 2\tau \frac{\zeta}{n}  \right)L^2 }{(a-1)^2\mu^4(T+a)^3}+\frac{128\sigma^2 T (T+2a)}{n\mu^2(T+a)^3}.\nonumber
\end{align}
\end{lemma}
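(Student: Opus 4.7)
The plan is to derive a one-step contraction for $\mathbb{E}\|\bm{w}^{(t+1)} - \bm{w}^*\|^2$ and then unroll this recursion with the specific learning rate $\eta_t = 16/(\mu(t+a))$. Starting from the update $\bm{w}^{(t+1)} = \bm{w}^{(t)} - \eta_t \cdot \frac{1}{n}\sum_{i=1}^n \nabla f_i(\bm{w}_i^{(t)}; \xi_i^t)$, I would expand the squared distance to $\bm{w}^*$, take expectations to isolate the stochastic variance (which contributes a $\sigma^2/n$ factor after averaging over $n$ independent mini-batch gradients), and then focus on the cross term $-2\eta_t \mathbb{E}\langle \bm{w}^{(t)} - \bm{w}^*, \frac{1}{n}\sum_i \nabla f_i(\bm{w}_i^{(t)})\rangle$. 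The key decomposition here is $\bm{w}^{(t)} - \bm{w}^* = (\bm{w}_i^{(t)} - \bm{w}^*) + (\bm{w}^{(t)} - \bm{w}_i^{(t)})$, so that strong convexity can be applied at each local iterate while the drift $\bm{w}^{(t)} - \bm{w}_i^{(t)}$ is absorbed via Young's inequality into terms that Lemma~\ref{deviation} already controls.

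After these manipulations and using smoothness to bound $\|\nabla f_i(\bm{w}_i^{(t)})\|^2$ by $2L(f_i(\bm{w}_i^{(t)}) - f_i^*)$ (or an analogous argument combined with the gradient diversity $\zeta$), the goal is a recursion of the form
\begin{equation*}
\mathbb{E}\|\bm{w}^{(t+1)} - \bm{w}^*\|^2 \leq (1 - \tfrac{\mu}{2}\eta_t)\,\mathbb{E}\|\bm{w}^{(t)} - \bm{w}^*\|^2 + \eta_t^2 \cdot \tfrac{\sigma^2}{n} + L^2 \eta_t \cdot \tfrac{1}{n}\sum_i \mathbb{E}\|\bm{w}^{(t)} - \bm{w}_i^{(t)}\|^2.
\end{equation*}
Substituting the drift bound $\tfrac{1}{n}\sum_i \mathbb{E}\|\bm{w}^{(t)} - \bm{w}_i^{(t)}\|^2 \leq 3\tau\sigma^2 \eta_{t-1}^2 + 6\tau^2 \tfrac{\zeta}{n}\eta_{t-1}^2$ from Lemma~\ref{deviation} turns this into a closed recursion that depends only on the previous iterate and the learning rate.

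With $\eta_t = 16/(\mu(t+a))$ and $a = \max\{128\kappa,\tau\}$, the contraction factor satisfies $1 - \tfrac{\mu}{2}\eta_t = 1 - \tfrac{8}{t+a} \leq \tfrac{(t+a-1)(t+a-2)(t+a-3)}{(t+a)^3}$ for $t+a$ large enough, which allows one to unroll telescopically. Multiplying the recursion by $(t+a)^3$ and summing, the $\eta_t^2 / (1-\tfrac{\mu}{2}\eta_t)$ factors become $O(1/(\mu^2(t+a)))$, producing the harmonic sum $\sum_{t=1}^T 1/(t+a) \leq 1/(a+1) + \ln(T+a)$ that appears in the stated bound; the $\eta_{t-1}^2 \eta_t$ factors produce a $\sum_t 1/(t+a)^3$ term whose partial sums are bounded by a geometric factor, yielding the $T/(T+a)^3$ and $(T+2a)T/(T+a)^3$ weights. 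The initial condition contributes the $a^3/(T+a)^3$ factor on $\mathbb{E}\|\bm{w}^{(1)}-\bm{w}^*\|^2$.

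The main obstacle will be the bookkeeping in the unrolling step: matching the telescoping factors $(t+a-j)/(t+a)$ cleanly enough that the constants $a^3$, $L^2$, and $\mu^{-4}$ appear exactly as in the statement, while also tracking the contribution of both the stochastic variance $\sigma^2/n$ (which accumulates linearly in $T$ with a $1/n$ speedup because the gradient noise at distinct clients is independent) and the drift term $\tau\sigma^2 + 2\tau^2 \zeta/n$ (which does not get a $1/n$ speedup on the $\zeta$ component because of the $\zeta_i$ summed across $n$ devices). The condition on $\alpha_i$ and any dependence on the personalized sequence do not enter here because $\bm{w}^{(t)}$ evolves solely via the local versions of the global model; the mixing happens only in $\bar{\bm v}_i^{(t)}$.
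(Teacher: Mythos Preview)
Your plan is correct and matches the paper's proof in overall structure: derive a one-step contraction of the form $(1-\tfrac{\mu\eta_t}{2})$ plus variance plus drift, substitute Lemma~\ref{deviation}, then unroll by multiplying through by $p_t/\eta_t \propto (t+a)^3$ and telescope. The one place you diverge is the cross-term decomposition: the paper splits the \emph{gradient}, writing $\frac{1}{n}\sum_j \nabla f_j(\bm{w}_j^{(t)}) = \nabla F(\bm{w}^{(t)}) + \bigl[\frac{1}{n}\sum_j \nabla f_j(\bm{w}_j^{(t)}) - \nabla F(\bm{w}^{(t)})\bigr]$ and applies strong convexity of $F$ at the averaged iterate, whereas you split the \emph{iterate} $\bm{w}^{(t)}-\bm{w}^* = (\bm{w}_i^{(t)}-\bm{w}^*)+(\bm{w}^{(t)}-\bm{w}_i^{(t)})$ and apply strong convexity of each $f_i$ at the local point. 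Both work, but the paper's route is slightly cleaner because the residual $\nabla f_j(\bm{w}_j^{(t)})-\nabla f_j(\bm{w}^{(t)})$ is bounded directly by $L\|\bm{w}_j^{(t)}-\bm{w}^{(t)}\|$ via smoothness, avoiding any detour through $\|\nabla f_i(\bm{w}_i^{(t)})\|^2$ (which in your route forces you to invoke either $f_i^*$ or $\zeta_i$ and then argue those extra pieces disappear after averaging). One small correction to your displayed recursion: after Young's inequality with weight $\mu$, the drift coefficient is $\tfrac{2L^2\eta_t}{\mu}$ (plus a lower-order $2L^2\eta_t^2$), not $L^2\eta_t$; that extra $1/\mu$ is precisely what produces the $\mu^{-4}$ in the stated bound.
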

\begin{proof}
By the updating rule we have:
\begin{equation}
   \bm{w}^{(t+1)} - \bm{w}^* = \bm{w}^{(t)} - \bm{w}^* - \eta_t \frac{1}{n} \sum_{j=1}^n \nabla f_j( \bm{w}_j^{(t)}; \xi_j^t).\nonumber
\end{equation}
Then, taking square of norm and expectation on both sides, as well as applying strong convexity and smoothness assumptions yields:
{\small\begin{align}
   \mathbb{E}\left[\| \bm{w}^{(t+1)} - \bm{w}^*\|^2\right]  &\leq \mathbb{E}\left[\| \bm{w}^{(t)} - \bm{w}^*\|^2\right] - 2\eta_t \mathbb{E}\left[\left \langle \frac{1}{n} \sum_{j=1}^n \nabla f_j( \bm{w}_j^{(t)}), \bm{w}^{(t)} - \bm{w}^*  \right\rangle \right]+ \eta_t^2\frac{\sigma^2}{n} + \eta_t^2\mathbb{E}\left[\left\|\frac{1}{n} \sum_{j=1}^n \nabla f_j( \bm{w}_j^{(t)})\right\|^2\right] \nonumber\\
   &\leq \mathbb{E}\left[\| \bm{w}^{(t)} - \bm{w}^*\|^2\right] - 2\eta_t\mathbb{E}\left[ \left \langle \nabla F( \bm{w}^{(t)}), \bm{w}^{(t)} - \bm{w}^*  \right\rangle \right]+ \eta_t^2\frac{\sigma^2}{n} + \eta_t^2\underbrace{\mathbb{E}\left[\left\|\frac{1}{n} \sum_{j=1}^n \nabla f_j( \bm{w}_j^{(t)})\right\|^2\right]}_{T_1}\nonumber \\
   & \quad \underbrace{- 2\eta_t \mathbb{E}\left[\left \langle \frac{1}{n} \sum_{j=1}^n \nabla f_j( \bm{w}_j^{(t)})-\nabla F( \bm{w}^{(t)}), \bm{w}^{(t)} - \bm{w}^*  \right\rangle\right]}_{T_2}\nonumber \\
   &\leq (1-\mu\eta_t)\mathbb{E}\left[\| \bm{w}^{(t)} - \bm{w}^*\|^2\right]  - 2\eta_t ( \mathbb{E}[F( \bm{w}^{(t)})] - F( \bm{w}^{*})) + \eta_t^2\frac{\sigma^2}{n} + T_1 + T_2, \label{lm5 0}
\end{align}}
where at the last step we used the strongly convex property. 

Now we are going to bound $T_1$. By the Jensen's inequality and smoothness, we have:
\begin{align}
    T_1 &\leq 2\eta_t^2\mathbb{E}\left[\left\|\frac{1}{n} \sum_{j=1}^n \nabla f_j( \bm{w}_j^{(t)}) - \nabla F( \bm{w}^{(t)})\right\|^2\right] + 2\eta_t^2\mathbb{E}\left[\left\|\nabla F( \bm{w}^{(t)})\right\|^2\right]\nonumber \\
    &\leq 2\eta_t^2 L^2 \frac{1}{n} \sum_{j=1}^n\mathbb{E}\left[\|\bm{w}_j^{(t)} -\bm{w}^{(t)} \|^2\right] + 4\eta_t^2L\left(\mathbb{E}\left[F( \bm{w}^{(t)})\right] - F( \bm{w}^{*})\right) \label{eq:T1}
\end{align}
Then, we bound $T_2$ as:
\begin{align}
    T_2 &\leq   \eta_t\left(\frac{2}{\mu}\mathbb{E}\left[\left \| \frac{1}{n} \sum_{j=1}^n \nabla f_j( \bm{w}_j^{(t)})-\nabla F( \bm{w}^{(t)})\right\|^2\right] + \frac{\mu}{2}\mathbb{E}\left[\| \bm{w}^{(t)} - \bm{w}^* \|^2\right]\right) \nonumber \\
    &\leq \frac{2\eta_tL^2}{\mu} \frac{1}{n} \sum_{j=1}^n\mathbb{E}\left[\left \|\bm{w}_j^{(t)}- \bm{w}^{(t)}\right\|^2\right] + \frac{\mu\eta_t}{2}\mathbb{E}\left[\| \bm{w}^{(t)} - \bm{w}^* \|^2\right]. \label{eq:T2}
\end{align}
Now, by plugging back $T_1$ and $T_2$ from~(\ref{eq:T1}) and~(\ref{eq:T2}) in~(\ref{lm5 0}), we have:
\begin{align}
  &\mathbb{E}\left[ \|\bm{w}^{(t+1)} - \bm{w}^*\|^2 \right] \nonumber\\
  &\leq \left(1-\frac{\mu\eta_t}{2}\right)\mathbb{E}\left[\| \bm{w}^{(t)} - \bm{w}^*\|^2\right]  \underbrace{- (2\eta_t - 4\eta_t^2L)}_{\leq -\eta_t} \left( \mathbb{E}\left[F( \bm{w}^{(t)})\right] - F( \bm{w}^{*})\right) + \eta_t^2\frac{\sigma^2}{n} \nonumber \\
   & \quad +  \left(\frac{2\eta_t L^2}{\mu}  + 2\eta_t^2 L^2 \right)\frac{1}{n} \sum_{j=1}^n\mathbb{E}\left[  \left \|\bm{w}_j^{(t)}- \bm{w}^{(t)}\right\|^2\right] \label{eq: L5_1}\\
    &\leq \left(1-\frac{\mu\eta_t}{2}\right)\mathbb{E}\left[\| \bm{w}^{(t)} - \bm{w}^*\|^2\right]  + \eta_t^2\frac{\sigma^2}{n} +  \left(\frac{2\eta_t L^2}{\mu}  + 2\eta_t^2 L^2 \right)\frac{1}{n} \sum_{j=1}^n \mathbb{E}\left[ \left \|\bm{w}_j^{(t)}- \bm{w}^{(t)}\right\|^2\right].  \nonumber
\end{align}
Now, by using Lemma~\ref{deviation} we have:
\begin{equation}
  \mathbb{E}\left[  \|\bm{w}^{(t+1)} - \bm{w}^*\|^2\right] \leq \left(1-\frac{\mu\eta_t}{2}\right) \mathbb{E}\left[ \|\bm{w}^{(t)} - \bm{w}^*\|^2\right] + \left(\frac{2\eta_tL^2}{\mu}  + 2\eta_t^2 L^2 \right) 3\tau\left(\sigma^2 + 2\tau \frac{\zeta}{n}  \right)\eta_{t-1}^2 + \eta_t^2  \frac{\sigma^2}{n}.\nonumber
\end{equation}
Note that $(1-\frac{\mu\eta_t}{2})\frac{p_t}{\eta_t} = \frac{\mu (t+a)^2(t-8+a)}{16}\leq \frac{\mu (t-1+a)^3}{16}=\frac{p_{t-1}}{\eta_{t-1}}$, so we multiply $\frac{p_{t}}{\eta_{t}}$ on both sides and do the telescoping sum:
\begin{align}
  \frac{p_T}{\eta_T} \mathbb{E}\left[ \|\bm{w}^{(T+1)} - \bm{w}^*\|^2 \right]&\leq \frac{p_0}{\eta_0}  \mathbb{E}\left[\|\bm{w}^{(1)} - \bm{w}^*\|^2\right]+\sum_{t=1}^{T} \left(\frac{2 L^2}{\mu}  + 2\eta_t  L^2 \right) 3\tau\left(\sigma^2 + 2\tau \frac{\zeta}{n}  \right)p_t \eta_{t-1}^2  + \sum_{t=1}^{T} p_t\eta_t \frac{\sigma^2}{n} \\
  & \leq \frac{p_0}{\eta_0}  \mathbb{E}\left[\|\bm{w}^{(1)} - \bm{w}^*\|^2\right]+\sum_{t=1}^{T} \left(\frac{2 L^2}{\mu}  + 2\eta_t  L^2 \right) 3\tau\left(\sigma^2 + 2\tau \frac{\zeta}{n}  \right)\frac{256 a^2}{\mu^2(a-1)^2}  + \sum_{t=1}^{T} p_t\eta_t \frac{\sigma^2}{n}.\label{eq: L5_2}
\end{align}
Then, by re-arranging the terms will conclude the proof:
\begin{align}
   \mathbb{E}\left[\|\bm{w}^{(T+1)} - \bm{w}^*\|^2\right] &\leq \frac{a^3}{ (T+a)^3}  \mathbb{E}\left[\|\bm{w}^{(1)} - \bm{w}^*\|^2\right] \nonumber\\
   &\quad +\left(T+16\left(\frac{1}{a+1}+\ln (T+a)\right)\right) \frac{1536a^2\tau\left(\sigma^2 + 2\tau \frac{\zeta}{n}  \right)L^2 }{(a-1)^2\mu^4(T+a)^3}+\frac{128\sigma^2 T (T+2a)}{n\mu^2(T+a)^3}, \nonumber
\end{align}
where we use the inequality $\sum_{t=1}^T \frac{1}{t+a} \leq \frac{1}{a+1}+\int_{1}^T \frac{1}{t+a} < \frac{1}{a+1}+\ln (T+a)$.
\end{proof}

\subsubsection{Proof of Theorem~\ref{Thm: Global Convergence o sampling}} \label{proof thm4}
\begin{proof}
According to (\ref{eq: L5_1}) and (\ref{eq: L5_2}) in the proof of Lemma~\ref{lm5} we have:
{\begin{align}
    \frac{p_T}{\eta_T} \mathbb{E}\left[ \|\bm{w}^{(T+1)} - \bm{w}^*\|^2 \right] &\leq \frac{p_0}{\eta_0}  \mathbb{E}\left[\|\bm{w}^{(1)} - \bm{w}^*\|^2\right]-\sum_{t=1}^Tp_t \left( \mathbb{E}\left[F( \bm{w}^{(t)})\right] 
    -  F( \bm{w}^{*})\right) \nonumber \\
    &\quad+\sum_{t=1}^{T} \left(\frac{2 L^2}{\mu}  + 2\eta_t  L^2 \right) 3\tau\left(\sigma^2 + 2\tau \frac{\zeta}{n}  \right)\frac{256 a^2}{\mu^2(a-1)^2}   + \sum_{t=1}^{T} p_t\eta_t \frac{\sigma^2}{n},\nonumber
\end{align}}
re-arranging term and dividing both sides by $S_T = \sum_{t=1}^T p_t > T^3$ yields:
{\begin{align}
    \frac{1}{S_T}\sum_{t=1}^Tp_t&\left( \mathbb{E}\left[F( \bm{w}^{(t)})\right] 
    -   F( \bm{w}^{*})\right)  \leq \frac{p_0}{S_T\eta_0}  \mathbb{E}\left[\|\bm{w}^{(1)} - \bm{w}^*\|^2\right] \nonumber\\
    & \quad +\frac{1}{S_T}\sum_{t=1}^{T} \left(\frac{2 L^2}{\mu}  + 2\eta_t  L^2 \right) 3\tau\left(\sigma^2 + 2\tau \frac{\zeta}{n}  \right)\frac{256 a^2}{\mu^2(a-1)^2}   + \frac{1}{S_T}\sum_{t=1}^{T} p_t\eta_t \frac{\sigma^2}{n} \nonumber\\
    &\leq O\left(\frac{\mu\mathbb{E}\left[\|\bm{w}^{(1)} - \bm{w}^*\|^2\right]}{T^3}  \right) +O\left(\frac{\kappa^2 \tau\left(\sigma^2 + 2\tau \frac{\zeta}{n}  \right)}{\mu T^2}\right) +O\left(\frac{\kappa^2\tau\left(\sigma^2 + 2\tau \frac{\zeta}{n}  \right)\ln T}{\mu T^3}\right)  + O\left(\frac{\sigma^2}{nT}\right).\nonumber
\end{align}}
Recall that $\bm{\hat{w}} = \frac{1}{nS_T}\sum_{t=1}^T \sum_{j=1}^n\bm{w}_j^{(t)} $ and convexity of $F$, we can conclude that:
{\begin{align}
      \mathbb{E}\left[F( \bm{\hat{w}})\right] 
    -   F( \bm{w}^{*})  \leq O\left(\frac{\mu\mathbb{E}\left[\|\bm{w}^{(1)} - \bm{w}^*\|^2\right]}{T^3}  \right) +O\left(\frac{\kappa^2 \tau\left(\sigma^2 + 2\tau \frac{\zeta}{n}  \right)}{\mu T^2}\right) +O\left(\frac{\kappa^2\tau\left(\sigma^2 + 2\tau \frac{\zeta}{n}  \right)\ln T}{\mu T^3}\right)  + O\left(\frac{\sigma^2}{nT}\right).\nonumber
\end{align}}
\end{proof}

\subsubsection{Proof of Theorem~\ref{localAPFLo}} \label{proof thm5}
\begin{proof}
Recall that we defined virtual sequences $\{\bm{w}^{(t)}\}_{t=1}^T$ where $\bm{w}^{(t)} = \frac{1}{n} \sum_{i=1}^n \bm{w}_i^{(t)}$ and $\hat{\bm{v}}_i^{(t)} = \alpha_i \bm{v}_i^{(t)} + (1-\alpha_i) \bm{w}^{(t)}$, then by the updating rule we have:
{\small\begin{align}
    \mathbb{E}\left[\|\hat{\bm{v}} _i^{(t+1)} -\bm{v}_i^{*} \|^2\right] &=  \mathbb{E}\left[\left\| \hat{\bm{v}} _i^{(t)}- \alpha_i^2 \eta_t \nabla f_i(\bar{\bm{v}}_i^{(t)}) - (1-\alpha_i)\eta_t\frac{1}{n}\sum_{j=1}^n \nabla f_j(\bm{w}^{(t)}_j) - \bm{v}_i^{*} \right\|^2\right]\nonumber\\ 
    &\quad +  \mathbb{E}\left[\left\|\alpha_i^2 \eta_t (\nabla f_i(\bar{\bm{v}}_i^{(t)}) - \nabla f_i(\bar{\bm{v}}_i^{(t)};\xi_i^t)) + (1-\alpha_i)\eta_t\frac{1}{n}\sum_{j\in U_t}\left(\nabla f_j(\bm{w}^{(t)}_j)- \nabla f_j(\bm{w}^{(t)}_j;\xi^t_j)\right)\right\|^2\right] \nonumber \\
    & \leq  \mathbb{E}\left[\| \hat{\bm{v}} _i^{(t)}- \bm{v}_i^{*} \|^2\right] -2 \mathbb{E}\left[ \left\langle \alpha_i^2 \eta_t \nabla f_i(\bar{\bm{v}}_i^{(t)})+ (1-\alpha_i)\eta_t\frac{1}{n}\sum_{j=1}^n\nabla f_j(\bm{w}^{(t)}_j),\hat{\bm{v}} _i^{(t)}- \bm{v}_i^{*} \right\rangle \right] \nonumber\\ 
    & \quad  +\eta_t^2 \mathbb{E}\left[\left\| \alpha_i^2 \nabla f_i( \bar{\bm{v}} _i^{(t)})+ (1-\alpha_i)\frac{1}{n}\sum_{j=1}^n\nabla f_j(\bm{w}^{(t)}_j)\right\|^2\right]+  \alpha_i^2\eta_t^2 \sigma^2+ (1-\alpha_i)^2\eta_t^2 \frac{\sigma^2}{n} \nonumber \\
     & =  \mathbb{E}\left[\|\hat{\bm{v}} _i^{(t)}- \bm{v}_i^{*} \|^2\right]\underbrace{ -2(\alpha_i^2+1-\alpha_i)\eta_t \mathbb{E}\left[\left\langle  \nabla f_i(\bar{\bm{v}}_i^{(t)}),   \hat{\bm{v}}_i^{(t)}- \bm{v}_i^{*} \right\rangle\right]}_{T_1}  \nonumber\\
     & \quad  \underbrace{- 2\eta_t (1-\alpha_i) \mathbb{E}\left[ \left\langle  \frac{1}{n}\sum_{j=1}^n\nabla f_j(\bm{w}^{(t)}_j) - \nabla f_i(\bar{\bm{v}}_i^{(t)}),   \hat{\bm{v}}_i^{(t)}- \bm{v}_i^{*} \right\rangle\right]}_{T_2}  \nonumber\\
     & \quad  +\eta_t^2\underbrace{ \mathbb{E}\left[\| \alpha_i^2 \nabla f_i(\bar{\bm{v}}_i^{(t)})+ (1-\alpha_i)\frac{1}{n}\sum_{j=1}^n\nabla f_j(\bm{w}^{(t)}_j)\|^2\right]}_{T_3}+  \alpha_i^2\eta_t^2 \sigma^2+ (1-\alpha_i)^2\eta_t^2 \frac{\sigma^2}{n}\label{proof thm5 0}.
\end{align}}
Now, we bound the term $T_1$ as follows:
\begin{align}
    T_1 &= -2\eta_t(\alpha_i^2+1-\alpha_i)\mathbb{E}\left[\left\langle   \nabla f_i(\hat{\bm{v}}_i^{(t)}),  \hat{\bm{v}} _i^{(t)}- \bm{v}_i^{*} \right\rangle\right] \nonumber\\
    & \quad -2\eta_t(\alpha_i^2+1-\alpha_i) \mathbb{E}\left[\left\langle  \nabla f_i(\bar{\bm{v}}_i^{(t)}) - \nabla f_i(\hat{\bm{v}}_i^{(t)}),    \hat{\bm{v}} _i^{(t)} - \bm{v}_i^{*}\right \rangle\right] \nonumber\\
    &\leq -2\eta_t(\alpha_i^2+1-\alpha_i)\left( \mathbb{E}\left[f_i( \hat{\bm{v}} _i^{(t)})\right]- f_i(\boldsymbol{v} _i^{*}) + \frac{\mu}{2}  \mathbb{E}\left[\| \hat{\bm{v}} _i^{(t)} - \boldsymbol{v} _i^{*}\|^2\right]\right) \nonumber\\
    & \quad   + (\alpha_i^2+1-\alpha_i)\eta_t \left(\frac{8L^2}{\mu(1-8(\alpha_i - \alpha_i^2))} \mathbb{E}\left[\| \hat{\bm{v}} _i^{(t)}- \boldsymbol{\bar{v}} _i^{(t)} \|^2\right] + \frac{\mu(1-8(\alpha_i - \alpha_i^2))}{8} \mathbb{E}\left[\|  \hat{\bm{v}} _i^{(t)}- \bm{v}_i^{*}\|^2\right]\right)    \nonumber\\
    &\leq -2\eta_t(\alpha_i^2+1-\alpha_i)\left( \mathbb{E}\left[f_i( \hat{\bm{v}} _i^{(t)})\right]- f_i(\boldsymbol{v} _i^{*}) + \frac{\mu}{2}  \mathbb{E}\left[\| \hat{\bm{v}} _i^{(t)} - \boldsymbol{v} _i^{*}\|^2\right]\right) \nonumber\\
    & \quad    +  \eta_t \left(\frac{8L^2 (1-\alpha_i)^2}{\mu(1-8(\alpha_i - \alpha_i^2))} \mathbb{E}\left[\|\boldsymbol{w}^{(t)} - \boldsymbol{w} _i^{(t)} \|^2\right] + \frac{\mu(1-8(\alpha_i - \alpha_i^2))}{8} \mathbb{E}\left[\| \hat{\bm{v}} _i^{(t)}- \bm{v}_i^{*}\|^2\right]\right) \nonumber\\
    &\leq -2\eta_t(\alpha_i^2+1-\alpha_i)\left( \mathbb{E}\left[f_i( \hat{\bm{v}} _i^{(t)})\right]- f_i(\boldsymbol{v} _i^{*}) \right) - \frac{7\mu\eta_t}{8} \mathbb{E}\left[ \| \hat{\bm{v}} _i^{(t)} - \boldsymbol{v} _i^{*}\|^2\right] \nonumber\\
    & \quad  +  \frac{8\eta_tL^2 (1-\alpha_i)^2}{\mu(1-8(\alpha_i - \alpha_i^2))} \mathbb{E}\left[\|\boldsymbol{w}^{(t)} - \boldsymbol{w} _i^{(t)} \|^2\right],\label{Thm3 1}
\end{align}
where we use the fact $(\alpha_i^2+1-\alpha_i) \leq 1$. Note that, because we set $\alpha_i \geq \max\{ 1-\frac{1}{4\sqrt{6}\kappa}, 1-\frac{1}{4\sqrt{6}\kappa \sqrt{\mu}}\}$, and hence $1- 8(\alpha_i-\alpha_i^2) \geq 0$, so in the second inequality we can use the arithmetic-geometry inequality.

Next, we turn to bounding the term $T_2$ in~(\ref{proof thm5 0}):
{\begin{align}
    T_2 &= - 2\eta_t (1-\alpha_i)\mathbb{E}\left[ \left\langle  \frac{1}{n}\sum_{j=1}^n\nabla f_j(\bm{w}^{(t)}_j) - \nabla f_i(\bar{\bm{v}}_i^{(t)}),    \hat{\bm{v}} _i^{(t)}- \bm{v}_i^{*} \right\rangle\right] \nonumber\\
    & \leq  \eta_t (1-\alpha_i) \left( \frac{2(1-\alpha_i)}{\mu} \mathbb{E}\left[\left\| \nabla f_i(\bar{\bm{v}}_i^{(t)}) - \frac{1}{n}\sum_{j=1}^n\nabla f_j(\bm{w}^{(t)}_j)\right\|^2\right] +\frac{\mu}{2(1-\alpha_i)}\mathbb{E}\left[\| \hat{\bm{v}}^{(t)}_i- \bm{v}_i^{*}\|^2\right] \right) \nonumber\\
   & \leq   \frac{6(1-\alpha_i)^2\eta_t}{\mu}\times \nonumber\\
   &\left( \mathbb{E}\left[\left\| \nabla f_i(\bar{\bm{v}}_i^{(t)}) - \nabla f_i(\hat{\bm{v}}_i^{(t)})\right\|^2\right] +\mathbb{E}\left[\left\|\nabla f_i(\hat{\bm{v}}_i^{(t)}) - \nabla F(\bm{w}^{(t)})\right\|^2\right]+   \mathbb{E}\left[\left\| \nabla F(\bm{w}^{(t)}) -  \frac{1}{n}\sum_{j=1}^n\nabla f_j(\bm{w}^{(t)}_j)\right\|^2\right]\right)\nonumber\\
   & \quad + \frac{\eta_t\mu}{2}\mathbb{E}\left[\| \hat{\bm{v}}^{(t)}_i- \bm{v}_i^{*}\|^2 \right]\nonumber\\
    & \leq   \frac{6(1-\alpha_i)^2\eta_t}{\mu} \nonumber\left(L^2 \mathbb{E}\left[\left\| \boldsymbol{w}^{(t)} - \boldsymbol{w} _i^{(t)}\right\|^2\right] +\mathbb{E}\left[\left\|\nabla f_i(\hat{\bm{v}}_i^{(t)}) - \nabla F(\bm{w}^{(t)})\right\|^2\right]+  \frac{1}{n}\sum_{j=1}^n L^2\mathbb{E}\left[\left\| \boldsymbol{w}^{(t)} - \boldsymbol{w} _j^{(t)}\right\|^2\right]\right)\nonumber\\
   & \quad  + \frac{\eta_t\mu}{2}\mathbb{E}\left[\|\hat{\bm{v}}^{(t)}_i- \bm{v}_i^{*}\|^2\right].\label{Thm3 2}
\end{align}}
And finally, we bound the term $T_3$ in~(\ref{proof thm5 0}) as follows:
{\begin{align}
    T_3 &= \mathbb{E}\left[\left\| \alpha_i^2 \nabla f_i(\bar{\bm{v}}_i^{(t)})+ (1-\alpha_i)\frac{1}{n}\sum_{j=1}^n\nabla f_j(\bm{w}^{(t)}_j)\right\|^2\right] \nonumber\\
    &\leq 2(\alpha_i^2+1-\alpha_i)^2\mathbb{E}\left[\| \nabla f_i(\bar{\bm{v}}_i^{(t)})\|^2\right] +2\mathbb{E}\left[\left\|  (1-\alpha_i)\left(\frac{1}{n}\sum_{j=1}^n\nabla f_j(\bm{w}^{(t)}_j)-\nabla f_i(\bar{\bm{v}}_i^{(t)})\right)\right\|^2\right] \nonumber\\
    &\leq 2\left(2(\alpha_i^2+1-\alpha_i)^2\mathbb{E}\left[\| \nabla f_i(\hat{\bm{v}}_i^{(t)}) - \nabla f_i^*\|^2\right]+2(\alpha_i^2+1-\alpha_i)^2\mathbb{E}\left[\|\nabla f_i(\bar{\bm{v}}^{(t)}_i)-\nabla f_i(\hat{\bm{v}}_i^{(t)})\|^2\right]\right)  \nonumber\\
    & \quad+2(1-\alpha_i)^2\mathbb{E}\left[\left\| \frac{1}{n}\sum_{j=1}^n\nabla f_j(\bm{w}^{(t)}_j)-\nabla f_i(\bar{\bm{v}}_i^{(t)})\right\|^2\right] \nonumber\\
    &\leq 8L(\alpha_i^2+1-\alpha_i)\left (\mathbb{E}\left[f_i(\hat{\bm{v}}^{(t)}_i)\right] -  f_i^*\right)+4(1-\alpha_i)^2L^2\mathbb{E}\left[\|\boldsymbol{w}^{(t)} - \boldsymbol{w} _i^{(t)}\|^2\right] \nonumber\\
    & \quad +6(1-\alpha_i)^2\left(L^2\mathbb{E}\left[\left\| \boldsymbol{w}^{(t)} - \boldsymbol{w} _i^{(t)}\right\|^2\right] +\mathbb{E}\left[\left\|\nabla f_i(\hat{\bm{v}}_i^{(t)}) - \nabla F(\bm{w}^{(t)})\right\|^2\right]+  \frac{1}{n}\sum_{j=1}^n L^2\mathbb{E}\left[\left\| \boldsymbol{w}^{(t)} - \boldsymbol{w} _j^{(t)}\right\|^2\right]\right). 
    \label{Thm3 3}
\end{align}}
Now, using Lemma \ref{deviation}, $(1-\alpha_i)^2 \leq 1 $ and  plugging back $T_1$, $T_2$, and $T_3$ from~(\ref{Thm3 1}),~(\ref{Thm3 2}), and~(\ref{Thm3 3}) into~(\ref{proof thm5 0}), yields:
{\begin{align}
   &\mathbb{E}\left[\| \boldsymbol{\hat{v}} _i^{(t+1)}- \bm{v}_i^{*} \|^2\right]\nonumber\\
   &\leq \left(1-\frac{3\mu \eta_t}{8}\right)\mathbb{E}\left[\| \boldsymbol{\hat{v}} _i^{(t)}- \bm{v}_i^{*} \|^2\right] -2(\eta_t-4\eta_t^2L)(\alpha_i^2+1-\alpha_i)\left(\mathbb{E}\left[f_i(\boldsymbol{\hat{v}} _i^{(t)})\right]- f_i(\boldsymbol{v} _i^{*}) \right) \nonumber\\
   &  +  \alpha_i^2\eta_t^2 \sigma^2+ (1-\alpha_i)^2\eta_t^2 \frac{\sigma^2}{n} \nonumber\\
    &   + \left(\frac{8\eta_tL^2 (1-\alpha_i)^2}{\mu(1-8(\alpha_i - \alpha_i^2))} +\frac{6(1-\alpha_i)^2\eta_t L^2}{\mu} + 10(1-\alpha_i)^2\eta_t^2L^2 \right)\mathbb{E}\left[\left\| \boldsymbol{w}^{(t)} - \boldsymbol{w} _i^{(t)}\right\|^2\right]\nonumber\\ & + \left( \frac{6(1-\alpha_i)^2\eta_t L^2}{\mu} + 6(1-\alpha_i)^2\eta_t^2L^2 \right) \frac{1}{n}\sum_{j=1}^n  \mathbb{E}\left[\left\| \boldsymbol{w}^{(t)} - \boldsymbol{w} _j^{(t)}\right\|^2\right] \nonumber\\
     & +  \left( \frac{6\eta_t}{\mu}+ 6\eta_t^2 \right)    (1-\alpha_i)^2\mathbb{E}\left[\left\|\nabla F(\bm{w}^{(t)})-\nabla f_i(\hat{\bm{v}}_i^{(t)})\right\|^2\right] ,\nonumber\\
   &\leq \left(1-\frac{3\mu \eta_t}{8}\right)\mathbb{E}\left[\| \boldsymbol{\hat{v}} _i^{(t)}- \bm{v}_i^{*} \|^2\right] -2(\eta_t-4\eta_t^2L)(\alpha_i^2+1-\alpha_i)\left(\mathbb{E}\left[f_i(\boldsymbol{\hat{v}} _i^{(t)})\right]- f_i(\boldsymbol{v} _i^{*}) \right) \nonumber\\
   & +  \alpha_i^2\eta_t^2 \sigma^2+ (1-\alpha_i)^2\eta_t^2 \frac{\sigma^2}{n} \nonumber\\
    &   + \left(\frac{8\eta_tL^2 (1-\alpha_i)^2}{\mu(1-8(\alpha_i - \alpha_i^2))} +\frac{6(1-\alpha_i)^2\eta_t L^2}{\mu} + 10(1-\alpha_i)^2\eta_t^2L^2 \right)3\tau \left(\sigma^2   + (\zeta_i + \frac{\zeta}{n}) \tau  \right)\eta_{t-1}^2\nonumber\\ 
    & + \left( \frac{6(1-\alpha_i)^2\eta_t L^2}{\mu} + 6(1-\alpha_i)^2\eta_t^2L^2 \right)3\tau \left(\sigma^2   + 2 \frac{\zeta}{n}  \tau  \right)\eta_{t-1}^2 \nonumber\\
     &  +  \underbrace{\left( \frac{6\eta_t}{\mu}+ 6\eta_t^2 \right)    (1-\alpha_i)^2\mathbb{E}\left[\left\|\nabla F(\bm{w}^{(t)})-\nabla f_i(\hat{\bm{v}}_i^{(t)})\right\|^2\right]}_{T_4},\label{Thm3_41}
\end{align}}
where using Lemma~\ref{v-w} we can bound $T_4$ as:
\begin{align}
   T_4 &\leq \frac{6\eta_t}{\mu}   (1-\alpha_i)^2\left(2L^2\mathbb{E}\left[\|\hat{\bm{v}}_i^{(t)} - \bm{v}^*\|^2\right] + 6\zeta_i + 6L^2\mathbb{E}\left[\|\bm{w}^{(t)} - \bm{w}^*\|^2\right] +  6L^2\Delta_i\right) \nonumber\\
   & \quad + 6\eta_t^2(1-\alpha_i)^2   \left(2L^2\mathbb{E}\left[\|\hat{\bm{v}}_i^{(t)} - \bm{v}^*\|^2\right] + 6\zeta_i + 6L^2\mathbb{E}\left[\|\bm{w}^{(t)} - \bm{w}^*\|^2\right] +  6L^2\Delta_i\right).
\end{align}
Note that we choose $\alpha_i \geq \max\{ 1-\frac{1}{4\sqrt{6}\kappa}, 1-\frac{1}{4\sqrt{6}\kappa \sqrt{\mu}}\}$, hence $\frac{12L^2(1-\alpha_i)^2}{\mu} \leq \frac{\mu}{8}$ and  $12L^2(1-\alpha_i)^2 \leq \frac{\mu}{8}$, thereby we have:
\begin{align}
   T_4 \leq \frac{\mu\eta_t}{4}\|\boldsymbol{\hat{v}} _i^{(t)}- \bm{v}^*\|^2 
  +36\eta_t\left(\frac{1}{\mu}+\eta_t\right)   (1-\alpha_i)^2   \left( \zeta_i + L^2\mathbb{E}\left[\|\bm{w}^{(t)} - \bm{w}^*\|^2\right] +  L^2\Delta_i \right).   \nonumber
 \end{align}
Now, using Lemma~\ref{lm5} we have:
\begin{align} 
   T_4 &\leq \frac{\mu\eta_t}{4}\mathbb{E}\left[\|\boldsymbol{\hat{v}} _i^{(t)}- \bm{v}^*\|^2\right]  
   +36\eta_t\left(\frac{1}{\mu}+\eta_t\right)   (1-\alpha_i)^2 \nonumber\\
   &\left( \zeta_i + L^2\left( \frac{a^3}{ (t+a-1)^3}  \mathbb{E}\left[\|\bm{w}^{(1)} - \bm{w}^*\|^2\right] \right.\right.\nonumber\\
   & \left.\left. \quad +\left(t+16\left(\frac{1}{a+1}+\ln (t+a)\right)\right) \frac{1536\tau\left(\sigma^2 + 2\tau \frac{\zeta}{n}  \right)L^2 }{\mu^4(t+a-1)^3}+\frac{128\sigma^2 t (t+2a)}{n\mu^2(t+a-1)^3}\right) + L^2 \Delta_i\right).\label{Thm3 4} 
\end{align}
By plugging back $T_4$ from~(\ref{Thm3 4}) in~(\ref{Thm3_41}) and using the fact $-(\eta_t-4\eta_t^2L) \leq -\frac{1}{2}\eta_t$, and $(\alpha_i^2+1-\alpha_i)\geq \frac{3}{4}$, we have:
{\begin{equation}\label{eq:dri}
\begin{split}
   \mathbb{E}\left[\| \boldsymbol{\hat{v}} _i^{(t+1)}- \bm{v}_i^{*} \|^2\right] 
   &\leq (1-\frac{\mu \eta_t}{8})\mathbb{E}\left[\| \boldsymbol{\hat{v}} _i^{(t)}- \bm{v}_i^{*} \|^2\right] -\frac{3\eta_t}{4}\left(\mathbb{E}\left[f_i(\boldsymbol{\hat{v}} _i^{(t)})\right]- f_i(\boldsymbol{v} _i^{*}) \right) + \alpha_i^2\eta_t^2 \sigma^2+ (1-\alpha_i)^2\eta_t^2 \frac{\sigma^2}{n} \nonumber\\
    &  + \left(\frac{8\eta_tL^2 (1-\alpha_i)^2}{\mu(1-8(\alpha_i - \alpha_i^2))} +\frac{6(1-\alpha_i)^2\eta_t L^2}{\mu} + 10(1-\alpha_i)^2\eta_t^2L^2 \right)3\tau \left(\sigma^2   + (\zeta_i + \frac{\zeta}{n}) \tau  \right)\eta_{t-1}^2\nonumber\\ 
    & + \left( \frac{6(1-\alpha_i)^2\eta_t L^2}{\mu} + 6(1-\alpha_i)^2\eta_t^2L^2 \right)3\tau \left(\sigma^2   + 2 \frac{\zeta}{n}  \tau  \right)\eta_{t-1}^2  \nonumber\\
     & \quad +36\eta_t\left(\frac{1}{\mu}+\eta_t\right)   (1-\alpha_i)^2   \left( \zeta_i + L^2\left( \frac{a^3\mathbb{E}\left[\|\bm{w}^{(1)} - \bm{w}^*\|^2\right]}{ (t-1+a)^3} \right. \right. \nonumber\\ 
      & \quad\left. \left. +\left(t+16\left(\frac{1}{a+1}+\ln (t+a)\right)\right) \frac{1536\tau\left(\sigma^2 + 2\tau \frac{\zeta}{n}  \right)L^2 }{\mu^4(t+a-1)^3}+\frac{128\sigma^2 t (t+2a)}{n\mu^2(t-1+a)^3}+  \Delta_i\right) \right).\nonumber
     \end{split}
\end{equation}}
Note that $(1-\frac{\mu \eta_t}{8}) \frac{p_t}{\eta_t} \leq \frac{p_{t-1}}{\eta_{t-1}}$ where $p_t = (t+a)^2$, so, we multiply $\frac{p_t}{\eta_t}$ on both sides, and re-arrange the terms:
{\begin{align}
   &\frac{3p_t}{4}\left(\mathbb{E}\left[f_i(\boldsymbol{\hat{v}} _i^{(t)})\right]- f_i(\boldsymbol{v} _i^{*}) \right) \nonumber\\
    & \leq\frac{p_{t-1}}{\eta_{t-1}}\mathbb{E}\left[\| \boldsymbol{\hat{v}} _i^{(t)}- \bm{v}_i^{*} \|^2\right] -\frac{p_t}{\eta_t}\mathbb{E}\left[\| \boldsymbol{\hat{v}} _i^{(t+1)}- \bm{v}_i^{*} \|^2\right]+ p_t\eta_t\left(\alpha_i^2 \sigma^2+(1-\alpha_i)^2 \frac{\sigma^2}{n}\right)\nonumber\\ 
     & \quad + \left(\frac{8L^2 (1-\alpha_i)^2}{\mu(1-8(\alpha_i - \alpha_i^2))} +\frac{6(1-\alpha_i)^2 L^2}{\mu} + 10(1-\alpha_i)^2\eta_tL^2 \right)3\tau \left(\sigma^2   + (\zeta_i + \frac{\zeta}{n}) \tau  \right)p_t\eta_{t-1}^2\nonumber\\ 
    & \quad+ \left( \frac{6(1-\alpha_i)^2 L^2}{\mu} + 6(1-\alpha_i)^2\eta_t L^2 \right)3\tau \left(\sigma^2   + 2 \frac{\zeta}{n}  \tau  \right)p_t\eta_{t-1}^2\nonumber\\
     &  \quad+36p_t\left(\frac{1}{\mu}+\eta_t\right)   (1-\alpha_i)^2    L^2\left(\frac{a^3\mathbb{E}\left[\|\bm{w}^{(1)} - \bm{w}^*\|^2\right]}{ (t-1+a)^3} +\left(t+16\Theta(\ln t)\right) \frac{1536\tau\left(\sigma^2 + 2\tau \frac{\zeta}{n}  \right)L^2 }{\mu^4(t+a-1)^3}+\frac{128\sigma^2 t (t+2a)}{n\mu^2(t-1+a)^3} \right) \nonumber\\
      & \quad +36p_t\left(\frac{1}{\mu}+\eta_t\right)   (1-\alpha_i)^2   \left( \zeta_i +  L^2\Delta_i \right)\nonumber
\end{align}}
By applying the telescoping sum and dividing both sides by $S_T = \sum_{t=1}^T p_t \geq T^3$ we have:
{\small\begin{align}
   & f_i(\boldsymbol{\hat{v}} _i)- f_i(\boldsymbol{v} _i^{*}) \nonumber\\
   &\leq \frac{1}{S_T} \sum_{t=1}^Tp_t(f_i(\boldsymbol{\hat{v}} _i^{(t)})- f_i(\boldsymbol{v} _i^{*}) )     \nonumber  \\
   &\leq \frac{4p_{0}\mathbb{E}\left[\| \boldsymbol{\hat{v}} _i^{(1)}- \bm{v}_i^{*} \|^2\right] }{3\eta_{0}S_T}+ \frac{1}{S_T}\frac{4}{3} \sum_{t=1}^T p_t\eta_t\left(\alpha_i^2 \sigma^2+(1-\alpha_i)^2 \frac{\sigma^2}{n}\right)  \nonumber\\
   & \quad + \frac{1}{S_T}\frac{4}{3} \sum_{t=1}^T\left(\frac{8L^2 (1-\alpha_i)^2}{\mu(1-8(\alpha_i - \alpha_i^2))} +\frac{6(1-\alpha_i)^2 L^2}{\mu} + 10(1-\alpha_i)^2\eta_tL^2 \right)3\tau \left(\sigma^2   + (\zeta_i + \frac{\zeta}{n}) \tau  \right)p_t\eta_{t-1}^2\nonumber\\ 
    & \quad + \frac{1}{S_T}\frac{4}{3} \sum_{t=1}^T\left( \frac{6(1-\alpha_i)^2 L^2}{\mu} + 6(1-\alpha_i)^2\eta_t L^2 \right)3\tau \left(\sigma^2   + 2 \frac{\zeta}{n}  \tau  \right)p_t\eta_{t-1}^2\nonumber\\
     &\quad + 48(1-\alpha_i)^2   \frac{ L^2}{S_T} \sum_{t=1}^Tp_t\left(\frac{1}{\mu}+\eta_t\right)  \left(\frac{a^3\mathbb{E}\left[\|\bm{w}^{(1)} - \bm{w}^*\|^2\right]}{ (t-1+a)^3} +\left(t+16\Theta(\ln t)\right) \frac{1536\tau\left(\sigma^2 + 2\tau \frac{\zeta}{n}  \right)L^2 }{\mu^4(t+a-1)^3}+\frac{128\sigma^2 t (t+2a)}{n\mu^2(t-1+a)^3} \right) \nonumber\\
      & \quad +48(1-\alpha_i)^2   \left( \zeta_i +  L^2\Delta_i \right)\frac{1}{S_T} \sum_{t=1}^Tp_t\left(\frac{1}{\mu}+\eta_t\right)   \nonumber\\
      &\leq \frac{4p_{0}\mathbb{E}\left[\| \boldsymbol{\hat{v}} _i^{(1)}- \bm{v}_i^{*} \|^2\right] }{3\eta_{0}S_T} +\frac{32 T(T+a)}{3\mu S_T}\left(\alpha_i^2 \sigma^2+(1-\alpha_i)^2 \frac{\sigma^2}{n}\right)  \nonumber\\
      & \quad +  \frac{4}{3}  \left(\frac{8L^2 (1-\alpha_i)^2T}{\mu(1-8(\alpha_i - \alpha_i^2))S_T} +\frac{6(1-\alpha_i)^2 L^2T}{\mu S_T} + \frac{10(1-\alpha_i)^2 L^2 \Theta(\ln T) }{\mu S_T}\right)3\tau \left(\sigma^2   + (\zeta_i + \frac{\zeta}{n}) \tau  \right)\frac{256a^2}{\mu^2 (a-1)^2}\nonumber\\ 
     & \quad+ \frac{4}{3}  \left( \frac{6(1-\alpha_i)^2 L^2 T}{\mu S_T} + \frac{6(1-\alpha_i)^2 L^2 \Theta(\ln T)}{\mu S_T} \right)3\tau \left(\sigma^2   + 2 \frac{\zeta}{n}  \tau  \right)\frac{256a^2}{\mu^2 (a-1)^2}\nonumber\\
     &\quad + 48(1-\alpha_i)^2    L^2\frac{a^2}{(a-1)^2S_T} \left(\frac{a^3\Theta(\ln T) \mathbb{E}\left[\|\bm{w}^{(1)} - \bm{w}^*\|^2\right]}{\mu}  + \left(\frac{T}{a}+\Theta(\ln T)\right) \frac{1536L^2\tau\left(\sigma^2 + 2\tau \frac{\zeta}{n}  \right)}{\mu^5}+\frac{64(2a+1)\sigma^2 T(T+a) }{na\mu^3} \right) \nonumber\\
     &\quad + 48(1-\alpha_i)^2    L^2\frac{a^2}{(a-1)^2S_T}  \left(\frac{16a^3\pi^2 \mathbb{E}\left[\|\bm{w}^{(1)} - \bm{w}^*\|^2\right]}{6\mu}  + \left( \Theta(\ln T) +O(1)\right) \frac{1536L^2\tau\left(\sigma^2 + 2\tau \frac{\zeta}{n}  \right)}{\mu^5}+\frac{2048(2a+1)\sigma^2  }{na\mu^3}T \right) \nonumber\\
      & \quad +48(1-\alpha_i)^2   \left( \zeta_i +  L^2\Delta_i \right)\frac{1}{S_T} \left(\frac{S_T}{\mu}+\frac{8T(T+2a)}{\mu}\right)   \nonumber\\
      & = O\left(\frac{\mu  }{T^3}\right) +\alpha_i^2O\left(\frac{\sigma^2}{\mu T}\right)+(1-\alpha_i)^2O\left(\frac{\zeta_i}{\mu} +  \kappa L\Delta_i\right)\nonumber\\
      & \quad+ (1-\alpha_i)^2\left(O\left(\frac{\kappa L \ln T}{T^3}\right) +O\left(\frac{\kappa^2\sigma^2}{\mu nT}\right)+ O\left(\frac{\kappa^2 \tau^2(\zeta_i + \frac{\zeta}{n})+\kappa^2 \tau\sigma^2 }{\mu T^2} \right)+ O\left(\frac{\kappa^4 \tau\left(\sigma^2 + 2\tau \frac{\zeta}{n}  \right) }{\mu T^2}\right)  \right). \nonumber  
\end{align}}
where we use the convergence of $\sum_{t=1}^{\infty} \frac{\ln t}{t^2} \xrightarrow{} O(1)$, and $\sum_{t=1}^{\infty} \frac{1}{t^2} \xrightarrow{} \frac{\pi^2}{6}$.

\end{proof}

\subsection{Proof with Sampling}
In this section we will provide the formal proof of the Theorem~\ref{localpartial}.  The proof pipeline is similar to what we did in Appendix \ref{proof thm5}. The only difference is that we use sampling method here, hence, we will introduce the variance depending on sampling size $K$.
Now we first begin with the proof of some technique lemmas.

\subsubsection{Proof of Useful Lemmas}
\begin{lemma} \label{lm: local-global gradient gap with sampling}
For Algorithm~\ref{algorithm:LDAPFL}, at each iteration, the gap between local gradient and global gradient is bounded by
{\small\begin{equation}
    \mathbb{E}\left[\|\nabla f_i(\hat{\bm{v}}_i^{(t)}) - \frac{1}{K} \sum_{j\in U_t}\nabla f_j(\bm{w}^{(t)})\|^2\right] \leq 2L^2\mathbb{E}\left[\|\hat{\bm{v}}_i^{(t)} - \bm{v}^*\|^2\right] + 6\left(2\zeta_i + 2\frac{\zeta}{K}  \right)  + 6L^2\mathbb{E}\left[\|\bm{w}^{(t)} - \bm{w}^*\|^2\right] +  6L^2\Delta_i.\nonumber
\end{equation}}
\end{lemma}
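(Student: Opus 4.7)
The plan is to closely mimic the proof of Lemma~\ref{v-w} but introduce one extra layer of decomposition to account for the client subsampling. First I would write
\begin{align*}
\nabla f_i(\hat{\bm{v}}_i^{(t)}) - \tfrac{1}{K}\sum_{j\in U_t}\nabla f_j(\bm{w}^{(t)})
 &= [\nabla f_i(\hat{\bm{v}}_i^{(t)})-\nabla f_i(\bm{v}_i^*)] + [\nabla f_i(\bm{v}_i^*) - \nabla f_i(\bm{w}^*)] \\
 &\quad + [\nabla f_i(\bm{w}^*) - \tfrac{1}{K}\sum_{j \in U_t}\nabla f_j(\bm{w}^*)] + [\tfrac{1}{K}\sum_{j \in U_t}\nabla f_j(\bm{w}^*) - \tfrac{1}{K}\sum_{j\in U_t}\nabla f_j(\bm{w}^{(t)})].
\end{align*}
Then I would apply Jensen's inequality in two stages, just as in Lemma~\ref{v-w}: first $\|a+b\|^2 \le 2\|a\|^2+2\|b\|^2$ to peel off the $\hat{\bm{v}}$-dependent piece and produce the term $2L^2\mathbb{E}[\|\hat{\bm{v}}_i^{(t)}-\bm{v}_i^*\|^2]$ via Assumption~\ref{assumption: smoothness}; then a second Jensen with factor $3$ on the remaining three-term sum.

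Of those three pieces, the first and third are controlled by $L$-smoothness: the first yields $3L^2\|\bm{v}_i^*-\bm{w}^*\|^2 = 3L^2\Delta_i$ directly from the definition in~(\ref{local-global gap}), and the third yields $3L^2\|\bm{w}^{(t)}-\bm{w}^*\|^2$ after one more Jensen step to pull the square inside the sample average $\frac{1}{K}\sum_{j\in U_t}$. After multiplying through by the leading factor of $2$ from the outer Jensen, these contribute exactly the $6L^2\Delta_i$ and $6L^2\mathbb{E}[\|\bm{w}^{(t)}-\bm{w}^*\|^2]$ terms in the claimed bound.

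The only genuinely new ingredient, and the main obstacle, is the sampling-noise piece $\|\nabla f_i(\bm{w}^*) - \frac{1}{K}\sum_{j\in U_t}\nabla f_j(\bm{w}^*)\|^2$. I would handle this by writing it as $\|\frac{1}{K}\sum_{j\in U_t}(\nabla f_i(\bm{w}^*)-\nabla f_j(\bm{w}^*))\|^2$, applying Jensen to push the square inside the average, then inserting $\pm\nabla F(\bm{w}^*)$ and using Jensen once more to bound $\|\nabla f_i(\bm{w}^*)-\nabla f_j(\bm{w}^*)\|^2 \le 2\zeta_i + 2\zeta_j$ via Definition~\ref{gradient divergence}. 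Finally, I would use the (deterministic) bound $\frac{1}{K}\sum_{j\in U_t}\zeta_j \le \frac{1}{K}\sum_{j=1}^n \zeta_j = \zeta/K$ to obtain $2\zeta_i + 2\zeta/K$. Combined with the outer factor of $2$ and the intermediate factor of $3$, this gives the $6(2\zeta_i+2\zeta/K)$ term in the statement.

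The only subtlety is this last step, where the bound is deliberately loose: a tighter variance-of-a-subsample calculation would give $\zeta/n$ rather than $\zeta/K$, but such a refinement is unnecessary for the downstream analysis and would complicate statements elsewhere, so the weaker deterministic bound is the natural choice.
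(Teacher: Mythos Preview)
Your proposal is correct and matches the paper's proof essentially step for step: the same two-stage Jensen decomposition (outer factor $2$, inner factor $3$), the same use of smoothness for the $\hat{\bm{v}}$-, $\Delta_i$-, and $\bm{w}^{(t)}$-terms, and the same handling of the sampling-noise piece via $\|\nabla f_i(\bm{w}^*)-\nabla f_j(\bm{w}^*)\|^2 \le 2\zeta_i + 2\zeta_j$ followed by the deterministic bound $\tfrac{1}{K}\sum_{j\in U_t}\zeta_j \le \zeta/K$.
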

\begin{proof}
From the smoothness assumption and by applying the Jensen's inequality we  have:
{\begin{align}
&\mathbb{E}\left[\|\nabla f_i(\hat{\bm{v}}_i^{(t)}) - \frac{1}{K} \sum_{j\in U_t}\nabla f_j(\bm{w}^{(t)})\|^2\right] \nonumber \\
&\leq 2\mathbb{E}\left[\|\nabla f_i(\hat{\bm{v}}_i^{(t)}) - \nabla f_i(\bm{v}_i^{*})\|^2\right] + 2\mathbb{E}\left[\|\nabla f_i(\bm{v}_i^{*}) - \frac{1}{K} \sum_{j\in U_t}\nabla f_j(\bm{w}^{(t)})\|^2\right]\nonumber\\
   &\leq 2L^2\mathbb{E}\left[\|\hat{\bm{v}}_i^{(t)} - \bm{v}^*\|^2\right] + 6\mathbb{E}\left[\|\nabla f_i(\bm{v}_i^{*}) - \nabla f_i(\bm{w}^{*})\|^2\right] \nonumber \\
   &+ 6\mathbb{E}\left[\|\nabla f_i(\bm{w}^{*}) - \frac{1}{K} \sum_{j\in U_t}\nabla f_j (\bm{w}^{*})\|^2\right] + 6\mathbb{E}\left[\|\nabla \frac{1}{K} \sum_{j\in U_t}\nabla f_j (\bm{w}^{*}) - \frac{1}{K} \sum_{j\in U_t}\nabla f_j (\bm{w}^{(t)})\|^2\right]  \nonumber\\
   & \leq 2L^2\mathbb{E}\left[\|\hat{\bm{v}}_i^{(t)} - \bm{v}^*\|^2\right] +  6L^2\mathbb{E}\left[\|\bm{v}^{*}_i - \bm{w}^*\|^2\right]   + 6\left(2\zeta_i + 2\frac{1}{K}\sum_{j\in U_t} \zeta_j \right)  + 6L^2\mathbb{E}\left[\|\bm{w}^{(t)} - \bm{w}^*\|^2\right] \nonumber\\
  &\leq 2L^2\mathbb{E}\left[\|\hat{\bm{v}}_i^{(t)} - \bm{v}^*\|^2\right] +  6L^2\Delta_i +  6\left(2\zeta_i + 2\frac{\zeta}{K}  \right) + 6L^2\mathbb{E}\left[\|\bm{w}^{(t)} - \bm{w}^*\|^2\right].\nonumber
\end{align}}
\end{proof}

\begin{lemma}[Local model deviation with sampling]\label{lm: deviation with sampling}
For Algorithm~\ref{algorithm:LDAPFL}, at each iteration, the deviation between each local version of the global model $\bm{w}^{(t)}_i$ and the global model $\bm{w}^{(t)}$ is bounded by:
\begin{align}
     &\mathbb{E}\left[\|\bm{w}^{(t)} - \bm{w}_i^{(t)}\|^2\right] \leq 3\tau \sigma^2 \eta_{t-1}^2 + 3(\zeta_i + \frac{\zeta}{K}) \tau^2   \eta_{t-1}^2,\nonumber\\
    &\frac{1}{K}\sum_{i\in U_t} \mathbb{E}\left[\|\bm{w}^{(t)} - \bm{w}_i^{(t)}\|^2\right] \leq 3\tau \sigma^2 \eta_{t-1}^2 + 6 \tau^2 \frac{\zeta}{K} \eta_{t-1}^2.\nonumber
\end{align}
where $\frac{\zeta}{K} = \frac{1}{K} \sum_{i=1}^n \zeta_i$.
\end{lemma}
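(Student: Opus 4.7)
The proof follows the same blueprint as the no-sampling analogue (Lemma~\ref{deviation}), the only modification being that the ``averaging at the last communication round'' is now taken over the sampled subset $U_t$ of size $K$ rather than over all $n$ clients. The plan is first to unroll the local updates since the most recent averaging step, then to split the squared deviation into a stochastic-noise part and a gradient-heterogeneity part, and finally to bound the heterogeneity part using a deterministic upper estimate on $\frac{1}{K}\sum_{j\in U_t}\zeta_j$.

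First I would fix the random set $U_t$ and let $t_c$ denote the largest multiple of $\tau$ not exceeding $t$. Because the server broadcast $\bm{w}^{(t_c)}$ to every selected client, we have $\bm{w}_j^{(t_c)} = \bm{w}^{(t_c)}$ for all $j\in U_t$, and consequently, by unrolling the local SGD updates,
\begin{equation*}
\bm{w}_i^{(t)} - \bm{w}^{(t)} \;=\; -\sum_{p=t_c}^{t-1}\eta_p\!\left(\nabla f_i(\bm{w}_i^{(p)};\xi_i^p) - \frac{1}{K}\sum_{j\in U_t}\nabla f_j(\bm{w}_j^{(p)};\xi_j^p)\right).
\end{equation*}
Squaring, taking conditional expectation over the stochastic gradients, and using $\|a+b\|^2\le 2\|a\|^2+2\|b\|^2$ in the usual way separates the bound into (i) a variance term produced by the mean-zero, mutually independent noises $\nabla f_i(\cdot;\xi) - \nabla f_i(\cdot)$, which contributes the $3\tau\sigma^2\eta_{t-1}^2$ piece, and (ii) a bias term $\|\nabla f_i(\bm{w}_i^{(p)}) - \frac{1}{K}\sum_{j\in U_t}\nabla f_j(\bm{w}_j^{(p)})\|^2$.

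For the bias term I would apply Jensen's inequality to pull out a factor $\tau$ and then invoke the gradient-diversity bound
\begin{equation*}
\left\|\nabla f_i(\bm{w}) - \tfrac{1}{K}\sum_{j\in U_t}\nabla f_j(\bm{w})\right\|^2 \;\le\; 2\zeta_i + 2\cdot\tfrac{1}{K}\sum_{j\in U_t}\zeta_j \;\le\; 2\zeta_i + 2\tfrac{\zeta}{K},
\end{equation*}
where the last inequality is just the deterministic estimate $\sum_{j\in U_t}\zeta_j \le \sum_{j=1}^n\zeta_j = \zeta$, valid because each $\zeta_j\ge 0$ and $U_t\subseteq[n]$. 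Plugging this in and using $\sum_{p=t_c}^{t-1}\eta_p^2 \le \tau \eta_{t-1}^2$ (either directly or, following the proof of Lemma~\ref{deviation}, by the $\eta_p = \frac{16}{\mu(p+a)}$ telescoping-product manipulation) yields the first claimed bound. The second claim is obtained by the identical argument with the roles symmetrized: replacing $\bm{w}_i^{(t)}$ with the average over $i\in U_t$ forces both ``sides'' of the discrepancy to be $K$-averages, so the $\zeta_i$ on the right-hand side is replaced by another $\frac{\zeta}{K}$ and the constant becomes $6$ instead of $3$.

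The only subtlety — which is essentially notational rather than mathematical — is the presence of the random set $U_t$. Because the heterogeneity estimate $\frac{1}{K}\sum_{j\in U_t}\zeta_j \le \frac{\zeta}{K}$ holds pointwise in $U_t$, no outer expectation over the sampling is needed, and the remaining expectations over $\xi_i^p$ reduce to the same noise accounting used in the no-sampling case. Consequently, I do not expect any real obstacle beyond careful bookkeeping, and the proof carries over essentially verbatim from Lemma~\ref{deviation}.
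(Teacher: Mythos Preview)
Your plan is close to the paper's own argument, which simply records that the sampling version follows the no-sampling Lemma~\ref{deviation} verbatim after replacing $n$ by $K$ and averaging over $U_t$ (the pairwise deviation bound being imported from Lemma~8 of \cite{woodworth2020minibatch}); your pointwise estimate $\tfrac{1}{K}\sum_{j\in U_t}\zeta_j\le \zeta/K$ is precisely the right way to absorb the randomness of $U_t$.

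There is, however, one step that is not just bookkeeping. The bias term you must control is
\[
\Bigl\|\nabla f_i\bigl(\bm{w}_i^{(p)}\bigr)-\tfrac{1}{K}\sum_{j\in U_t}\nabla f_j\bigl(\bm{w}_j^{(p)}\bigr)\Bigr\|^2,
\]
with the gradients evaluated at \emph{different} local iterates, whereas the diversity inequality you wrote is stated at a common point $\bm{w}$. Passing from one to the other via smoothness costs an additional $L^2\|\bm{w}_i^{(p)}-\bm{w}_j^{(p)}\|^2$ term, and hence a recursion in the very deviation you are bounding. That recursion is exactly what the contraction product $\prod_q(1-\mu\eta_q)$ in the cited Lemma~8 encodes (and what Lemma~\ref{lm: nonconvex deviation} carries out explicitly in the nonconvex case). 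You mention the telescoping product only as a device for summing step sizes, but its real role here is to close this recursion. Once that mechanism is in place, the rest of your outline goes through.
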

\begin{proof}
According to Lemma 8 in~\cite{woodworth2020minibatch}:
\begin{align}
  \mathbb{E}\left[\|\bm{w}^{(t)} - \bm{w}_i^{(t)}\|^2\right] &\leq \frac{1}{K} \sum_{j\in U_t} \mathbb{E}\left[\|\bm{w}_j^{(t)} - \bm{w}_i^{(t)}\|^2\right] \nonumber\\
    &\leq 3\left(\sigma^2 + \zeta_i\tau+ \frac{\zeta}{K}\tau \right)\sum_{p=t_c}^{t-1}\eta_p^2 \prod_{q=p+1}^{t-1}\left(1 - \mu \eta_q \right) \nonumber\\ 
    \frac{1}{n}\sum_{i=1}^n \mathbb{E}\left[\|\bm{w}^{(t)} - \bm{w}_i^{(t)}\|^2\right] &\leq \frac{1}{n^2}\sum_{i=1}^n \sum_{j=1}^n \mathbb{E}\left[\|\bm{w}_j^{(t)} - \bm{w}_i^{(t)}\|^2\right] \nonumber\\
    &\leq 3\left(\sigma^2 + 2\tau \frac{\zeta}{K}  \right)\sum_{p=t_c}^{t-1}\eta_p^2 \prod_{q=p+1}^{t-1}\left(1 - \mu \eta_q \right) \nonumber. 
\end{align}
Then the rest of the proof follows Lemma~\ref{deviation}.
\end{proof}

 \begin{lemma} (Convergence of Global Model) \label{lm6}
Let $\bm{w}^{(t)} = \frac{1}{K} \sum_{j\in U_t}  \bm{w}_j^{(t)}$. Assume each client's objective function satisfies Assumption \ref{assumption: strong convexity}-\ref{assumption: bounded grad}. then, using Algorithm~\ref{algorithm:LDAPFL} by choosing learning rate as $\eta_t = \frac{16}{\mu(t+a)}$ and letting $\kappa = L/\mu$,  we have:
{\begin{align}
   \mathbb{E}\left[\|\bm{w}^{(T+1)} - \bm{w}^*\|^2\right] &\leq \frac{a^3}{ (T+a)^3}  \mathbb{E}\left[\|\bm{w}^{(1)} - \bm{w}^*\|^2\right] \nonumber\\
   & \quad  +\left(T+16\left(\frac{1}{a+1}+\ln (T+a)\right)\right) \frac{1536a^2\tau\left(\sigma^2 + 2\tau \frac{\zeta}{K}  \right)L^2 }{(a-1)^2\mu^4(T+a)^3}+\frac{128\sigma^2 T (T+2a)}{K\mu^2(T+a)^3}.\nonumber
\end{align}}
\end{lemma}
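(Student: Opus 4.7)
The plan is to mirror the argument used for the no-sampling version (Lemma~\ref{lm5}), but carry two sources of randomness simultaneously: the mini-batch noise $\xi_j^t$ and the uniform client sampling $U_t$. Since both sources are independent and unbiased for the full average, the resulting expression will look identical in form to~(\ref{eq: L5_1}), with $n$ replaced by $K$ everywhere that the sampling variance enters.

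First I would write the one-step update at a synchronization step as $\bm{w}^{(t+1)} = \bm{w}^{(t)} - \eta_t \tfrac{1}{K}\sum_{j\in U_t} \nabla f_j(\bm{w}^{(t)}_j;\xi_j^t)$ and, inside the square-norm, add and subtract $\eta_t \nabla F(\bm{w}^{(t)})$. Taking conditional expectations, the cross term yields $-2\eta_t \langle \nabla F(\bm{w}^{(t)}), \bm{w}^{(t)}-\bm{w}^*\rangle$ (to which strong convexity applies), while the residual decomposes into: (a) a pure mini-batch/sampling variance bounded by $\eta_t^2 \sigma^2/K$, since $U_t$ is drawn uniformly and the $\xi_j^t$ are independent of it; (b) a squared deterministic term that expands exactly as $T_1$ in~(\ref{eq:T1}); and (c) a cross term identical in form to $T_2$ in~(\ref{eq:T2}). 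In both $T_1$ and $T_2$, the contribution from $\tfrac{1}{K}\sum_{j\in U_t}\nabla f_j(\bm{w}_j^{(t)}) - \nabla F(\bm{w}^{(t)})$ is controlled by the local-model deviation $\tfrac{1}{K}\sum_{i\in U_t}\mathbb{E}\|\bm{w}^{(t)}-\bm{w}_i^{(t)}\|^2$, which is exactly what Lemma~\ref{lm: deviation with sampling} bounds by $3\tau\sigma^2\eta_{t-1}^2 + 6\tau^2\tfrac{\zeta}{K}\eta_{t-1}^2$.

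Plugging these bounds in, I would obtain the one-step recursion
\begin{equation*}
\mathbb{E}\|\bm{w}^{(t+1)}-\bm{w}^*\|^2 \le \left(1-\tfrac{\mu\eta_t}{2}\right)\mathbb{E}\|\bm{w}^{(t)}-\bm{w}^*\|^2 + \left(\tfrac{2\eta_t L^2}{\mu}+2\eta_t^2 L^2\right)\cdot 3\tau\left(\sigma^2+2\tau\tfrac{\zeta}{K}\right)\eta_{t-1}^2 + \eta_t^2\tfrac{\sigma^2}{K},
\end{equation*}
which is structurally identical to the recursion in Lemma~\ref{lm5} with $n \mapsto K$. From this point the proof is a verbatim replay: multiply both sides by $p_t/\eta_t = \mu(t+a)^3/16$, use that $(1-\mu\eta_t/2)p_t/\eta_t \le p_{t-1}/\eta_{t-1}$, telescope from $t=1$ to $T$, and bound the partial sums by $\sum_{t=1}^T \eta_{t-1}^2 p_t \le \tfrac{256 a^2}{\mu^2(a-1)^2}\,T$ and $\sum_{t=1}^T \eta_t p_t = \tfrac{16}{\mu}\sum_{t=1}^T (t+a) = \tfrac{8 T(T+2a+1)}{\mu}$, together with $\sum_{t=1}^T 1/(t+a) \le 1/(a+1)+\ln(T+a)$. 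Dividing through by $p_T/\eta_T = \mu(T+a)^3/16$ produces exactly the three terms claimed.

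The only conceptual subtlety, and hence the only place that requires care, is verifying that the sampling contribution to the variance is $\sigma^2/K$ rather than something worse. This relies on two facts: (i) $U_t$ is drawn uniformly without replacement so $\mathbb{E}_{U_t}[\tfrac{1}{K}\sum_{j\in U_t}\nabla f_j(\bm{w}_j^{(t)})] = \tfrac{1}{n}\sum_{j=1}^n \nabla f_j(\bm{w}_j^{(t)})$; and (ii) the bias between this quantity and $\nabla F(\bm{w}^{(t)})$ is absorbed entirely into the deviation terms controlled by Lemma~\ref{lm: deviation with sampling}. Everything else is mechanical bookkeeping that parallels the no-sampling proof.
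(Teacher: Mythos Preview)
Your proposal is correct and follows essentially the same approach as the paper's proof: expand the update, separate the mini-batch variance $\eta_t^2\sigma^2/K$, apply strong convexity to get the $(1-\mu\eta_t/2)$ contraction, control the cross and square terms via the sampled deviation bound of Lemma~\ref{lm: deviation with sampling}, then multiply by $p_t/\eta_t$ and telescope exactly as in Lemma~\ref{lm5}. The paper's write-up is slightly more terse (it jumps straight to the recursion~(\ref{lm6 0}) and~(\ref{eq: L6_2}) without explicitly naming the add-and-subtract of $\nabla F(\bm{w}^{(t)})$), but the decomposition and the bookkeeping are identical to what you describe.
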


\begin{proof}
First, we note that from the updating rule we have
\begin{align}
   \bm{w}^{(t+1)} - \bm{w}^* = \bm{w}^{(t)} - \bm{w}^* - \eta_t \frac{1}{K} \sum_{j\in U_t} \nabla f_j( \bm{w}_j^{(t)}; \xi_j^t).
\end{align}
Now, making both sides squared and according to the strong convexity we have:
\begin{align}
   &\mathbb{E}\left[\| \bm{w}^{(t+1)} - \bm{w}^*\|^2 \right]\nonumber\\
   &\leq \mathbb{E}\left[\| \bm{w}^{(t)} - \bm{w}^*\|^2 \right]- 2\eta_t \mathbb{E}\left[\left \langle \frac{1}{K}\sum_{j\in U_t}\nabla f_j( \bm{w}_j^{(t)}), \bm{w}^{(t)} - \bm{w}^*  \right\rangle\right]   + \eta_t^2\mathbb{E}\left[\left\|\frac{1}{K}\sum_{j\in U_t} \nabla f_j( \bm{w}_j^{(t)})\right\|^2 \right] + \eta_t^2 \frac{\sigma^2}{K}\nonumber\\ 
   &  \leq (1-\mu\eta_t)\mathbb{E}\left[\| \bm{w}^{(t)} - \bm{w}^*\|^2\right] -  (2\eta_t-2L\eta_t^2) \mathbb{E}\left[ F( \bm{w}^{(t)}) - F( \bm{w}^{*})\right] + \eta_t^2\frac{ \sigma^2}{K}\nonumber \\ 
   &\quad + \eta_t^2\frac{1}{K}\sum_{j\in U_t} L^2\mathbb{E}\left[\left\| \bm{w}_j^{(t)} -\bm{w}^{(t)} \right\|^2 \right] - 2\eta_t \mathbb{E}\left[\left \langle \frac{1}{K}\sum_{j\in U_t}\nabla f_j( \bm{w}_j^{(t)})-\nabla f_j( \bm{w}^{(t)}), \bm{w}^{(t)} - \bm{w}^*  \right\rangle\right] \nonumber\\
    &  \leq (1-\mu\eta_t)\mathbb{E}\left[\| \bm{w}^{(t)} - \bm{w}^*\|^2\right] \underbrace{-(2\eta_t-4L\eta_t^2)}_{\leq -\eta_t} \mathbb{E}\left[ F( \bm{w}^{(t)}) - F( \bm{w}^{*})\right] + \eta_t^2\frac{ \sigma^2}{K}\nonumber \\ 
   &\quad + 2\eta_t^2  L^2\frac{1}{K}\sum_{j\in U_t}\mathbb{E}\left[\left\| \bm{w}_j^{(t)} -\bm{w}^{(t)} \right\|^2 \right]  +\frac{2\eta_tL^2}{\mu} \frac{1}{K}\sum_{j\in U_t}\mathbb{E}\left[\left \|\bm{w}_j^{(t)}- \bm{w}^{(t)}\right\|^2\right] + \frac{\mu\eta_t}{2}\mathbb{E}\left[\| \bm{w}^{(t)} - \bm{w}^* \|^2\right].\label{lm6 0}
\end{align}
Then, merging the term, multiplying both sides with $\frac{p_t}{\eta_t}$, and do the telescoping sum yields: 
\begin{align}
  \frac{p_T}{\eta_T} \mathbb{E}\left[ \|\bm{w}^{(T+1)} - \bm{w}^*\|^2 \right] & \leq \frac{p_0}{\eta_0}  \mathbb{E}\left[\|\bm{w}^{(1)} - \bm{w}^*\|^2\right] - \mathbb{E}[ F( \bm{w}^{(t)}) - F( \bm{w}^{*})] \nonumber\\
   & \quad +\sum_{t=1}^{T}\left(\frac{2 L^2 }{\mu}  +2 \eta_t  L^2 \right)p_t\frac{1}{K}\sum_{j\in U_t}\mathbb{E}\left[\left\| \bm{w}_j^{(t)} -\bm{w}^{(t)} \right\|^2 \right]  + \sum_{t=1}^{T} p_t\eta_t \frac{\sigma^2}{K}.\label{eq: L6_1}
\end{align}

Plugging Lemma~\ref{lm: deviation with sampling} into (\ref{eq: L6_1}) yields:
\begin{align}
  \frac{p_T}{\eta_T} \mathbb{E}\left[ \|\bm{w}^{(T+1)} - \bm{w}^*\|^2 \right] & \leq \frac{p_0}{\eta_0}  \mathbb{E}\left[\|\bm{w}^{(1)} - \bm{w}^*\|^2\right] - \mathbb{E}[ F( \bm{w}^{(t)}) - F( \bm{w}^{*})] \nonumber\\
   & \quad +\sum_{t=1}^{T}\left(\frac{2L^2 }{\mu}  +2\eta_t  L^2 \right)3p_t\eta_{t-1}^2\tau\left( \sigma^2  + 2 \tau\frac{\zeta}{K} \right) + \sum_{t=1}^{T} p_t\eta_t \frac{\sigma^2}{K}.\label{eq: L6_2}
\end{align}

Then, by re-arranging the terms will conclude the proof as 
\begin{equation*}
\begin{aligned}
   \mathbb{E}\left[\|\bm{w}^{(T+1)} - \bm{w}^*\|^2\right] & \leq \frac{a^3}{ (T+a)^3}  \mathbb{E}\left[\|\bm{w}^{(1)} - \bm{w}^*\|^2\right] \nonumber\\ & \quad +\left(T+16\left(\frac{1}{a+1}+\ln (T+a)\right)\right) \frac{1536 a^2 L^2\tau\left( \sigma^2  + 2 \tau\frac{\zeta}{K} \right)}{(a-1)^2\mu^4(T+a)^3}+\frac{128\sigma^2 T (T+2a)}{K\mu^2(T+a)^3}.\nonumber
\end{aligned}
\end{equation*}

 \end{proof}

\subsubsection{Proof of Theorem~\ref{Thm: Global Convergence w sampling}} \label{proof thm2}

\begin{proof}
According to (\ref{eq: L6_2}) we have:
\begin{align}
  \frac{p_T}{\eta_T} \mathbb{E}\left[ \|\bm{w}^{(T+1)} - \bm{w}^*\|^2 \right] & \leq \frac{p_0}{\eta_0}  \mathbb{E}\left[\|\bm{w}^{(1)} - \bm{w}^*\|^2\right] - \mathbb{E}[ F( \bm{w}^{(t)}) - F( \bm{w}^{*})] \nonumber\\
   & \quad +\sum_{t=1}^{T}\left(\frac{2 L^2 }{\mu}  +2\eta_t  L^2 \right)3p_t\eta_{t-1}^2\tau\left( \sigma^2  + 2 \tau\frac{\zeta}{K} \right) + \sum_{t=1}^{T} p_t\eta_t \frac{\sigma^2}{K}.\label{eq: L6_2}
\end{align}
By re-arranging the terms and dividing both sides by $S_T = \sum_{t=1}^T p_t > T^3$ yields:
{\small\begin{align}
    &\frac{1}{S_T}\sum_{t=1}^Tp_t\left( \mathbb{E}\left[F( \bm{w}^{(t)})\right] 
    -   F( \bm{w}^{*})\right) \nonumber \\
    &\qquad\leq \frac{p_0}{S_T\eta_0}  \mathbb{E}\left[\|\bm{w}^{(1)} - \bm{w}^*\|^2\right]  +\frac{1}{S_T}\sum_{t=1}^{T}\left(\frac{2  L^2 }{\mu}  +2\eta_t  L^2 \right)3p_t\eta_{t-1}^2\tau\left( \sigma^2  + 2 \tau\frac{\zeta}{K} \right)+ \frac{1}{S_T}\sum_{t=1}^{T} p_t\eta_t \frac{ \sigma^2}{K}\nonumber\\
    & \qquad\leq O\left(\frac{\mu\mathbb{E}\left[\|\bm{w}^{(1)} - \bm{w}^*\|^2\right]}{T^3}  \right) +O\left(\frac{\kappa^2 \tau\left(\sigma^2 + 2\tau \frac{\zeta}{K}  \right)}{\mu T^2}\right) +O\left(\frac{\kappa^2\tau\left(\sigma^2 + 2\tau \frac{\zeta}{K}  \right)\ln T}{\mu T^3}\right)  + O\left(\frac{\sigma^2}{KT}\right).\nonumber
\end{align}}
Recalling that $\bm{\hat{w}} = \frac{1}{nS_T}\sum_{t=1}^T \sum_{j=1}^n\bm{w}_j^{(t)} $, from the  convexity of $F(\cdot)$, we can conclude that
{\small\begin{align}
      \mathbb{E}\left[F( \bm{\hat{w}})\right] 
    -   F( \bm{w}^{*})  \leq O\left(\frac{\mu\mathbb{E}\left[\|\bm{w}^{(1)} - \bm{w}^*\|^2\right]}{T^3}  \right) +O\left(\frac{\kappa^2 \tau\left(\sigma^2 + 2\tau \frac{\zeta}{K}  \right)}{\mu T^2}\right) +O\left(\frac{\kappa^2\tau\left(\sigma^2 + 2\tau \frac{\zeta}{K}  \right)\ln T}{\mu T^3}\right)  + O\left(\frac{\sigma^2}{KT}\right).\nonumber
\end{align}}
\end{proof}

\subsubsection{Proof of Theorem~\ref{localpartial}} \label{proof thm3}
In this section we provide the proof of Theorem~\ref{localpartial}. The main difference in this case is that only a subset of local models get updated each period due to partial participation of devices, i.e., $K$ out of all $n$ devices that are sampled uniformly at random. To generalize the proof,  we will use an indicator function to model this stochastic update, and show that while the stochastic gradient is unbiased, but it introduces some extra variance that can be taken into account by properly tuning the hyper-parameters.
\begin{proof}
Recall that we defined virtual sequences of $\{\bm{w}^{(t)}\}_{t=1}^T$ where $\bm{w}^{(t)} = \frac{1}{K} \sum_{j\in U_t} \bm{w}_i^{(t)}$ and $\hat{\bm{v}}_i^{(t)} = \alpha_i \bm{v}_i^{(t)} + (1-\alpha_i) \bm{w}^{(t)}$. We also define an indicator variable to denote whether $i$th client was selected at iteration $t$:
\begin{align}
   \mathbb{I}_i^t = \left\{\begin{array}{l}1  \quad if  \ i  \in U_t \\0  \quad else \end{array}\right.\nonumber
\end{align}
obviously, $\mathbb{E}\left[\mathbb{I}_i^t\right] =\frac{K}{n}$.
 We start by writing the updating rule:
\begin{align}
  \hat{\bm{v}} _i^{(t+1)}  = \hat{\bm{v}} _i^{(t)}- \alpha_i^2  \mathbb{I}_i^t \eta_t \nabla f_i(\bar{\bm{v}}_i^{(t)};\xi_i^t) - (1-\alpha_i)\eta_t\frac{1}{K}\sum_{j\in U_t} \nabla f_j(\bm{w}^{(t)}_j;\xi_i^t). \nonumber
\end{align}
Now, subtracting $\bm{v}_i^*$ on both sides, taking the square of norm and expectation, yields:
{\begin{align}
     &\mathbb{E}\left[\|\hat{\bm{v}} _i^{(t+1)} -\bm{v}_i^{*} \|^2\right] \nonumber\\
     &= \mathbb{E}\left[\left\| \hat{\bm{v}} _i^{(t)}- \alpha_i^2\mathbb{I}_i^t \eta_t \nabla f_i(\bar{\bm{v}}_i^{(t)}) - (1-\alpha_i)\eta_t\frac{1}{K}\sum_{j \in U_t} \nabla f_j(\bm{w}^{(t)}_j) - \bm{v}_i^{*} \right\|^2 \right]\nonumber\\ 
    & + \mathbb{E}\left[\left\|\alpha_i^2 \mathbb{I}_i^t\eta_t \left(\nabla f_i(\bar{\bm{v}}_i^{(t)}) -  \nabla f_i(\bar{\bm{v}}_i^{(t)};\xi_i^t)\right) + (1-\alpha_i)\eta_t\left(\frac{1}{K}\sum_{j \in U_t}\nabla f_j(\bm{w}^{(t)}_j)-\frac{1}{K}\sum_{j\in U_t} \nabla f_j(\bm{w}^{(t)}_j;\xi^t)\right)\right\|^2 \right]\nonumber \\
    & = \mathbb{E}\left[\| \hat{\bm{v}} _i^{(t)}- \bm{v}_i^{*} \|^2 \right]- 2\left\langle \frac{K}{n} \alpha_i^2 \eta_t \nabla f_i(\bar{\bm{v}}_i^{(t)})+ (1-\alpha_i)\eta_t\frac{1}{K}\sum_{j \in U_t}\nabla f_j(\bm{w}^{(t)}_j),\hat{\bm{v}} _i^{(t)}- \bm{v}_i^{*} \right\rangle  \nonumber\\ 
    &  +\eta_t^2\mathbb{E}\left[\left\| \alpha_i^2\mathbb{I}_i^t \nabla f_i( \bar{\bm{v}} _i^{(t)})+ (1-\alpha_i)\frac{1}{K}\sum_{j \in U_t}\nabla f_j(\bm{w}^{(t)}_j)\right\|^2 \right] + \alpha_i^2\eta_t^2\frac{2 K^2 \sigma^2 }{n^2} +(1-\alpha_i)^2\eta_t^2 \frac{2 \sigma^2 }{K}.\nonumber \\
    & = \mathbb{E}\left[\| \hat{\bm{v}} _i^{(t)}- \bm{v}_i^{*} \|^2 \right]\underbrace{- 2\eta_t\left\langle \left(\frac{K}{n} \alpha_i^2 +1 -\alpha_i \right) \nabla f_i(\bar{\bm{v}}_i^{(t)}) ,\hat{\bm{v}} _i^{(t)}- \bm{v}_i^{*} \right\rangle}_{T_1}  \nonumber\\ 
    & \quad  \underbrace{- 2\eta_t (1-\alpha_i) \mathbb{E}\left[ \left\langle \frac{1}{K}\sum_{j \in U_t}\nabla f_j(\bm{w}^{(t)}_j) - \nabla f_i(\bar{\bm{v}}_i^{(t)}),   \hat{\bm{v}}_i^{(t)}- \bm{v}_i^{*} \right\rangle\right]}_{T_2}\nonumber\\
      &  +\eta_t^2\underbrace{\mathbb{E}\left[\left\| \alpha_i^2\mathbb{I}_i^t \nabla f_i( \bar{\bm{v}} _i^{(t)})+ (1-\alpha_i)\frac{1}{K}\sum_{j \in U_t}\nabla f_j(\bm{w}^{(t)}_j)\right\|^2 \right]}_{T_3} + \alpha_i^2\eta_t^2\frac{2 K^2 \sigma^2 }{n^2} +(1-\alpha_i)^2\eta_t^2 \frac{2 \sigma^2 }{K}.\nonumber 
\end{align}}

Now we switch to bound $T_1$:
\begin{align}
    T_1 &= -2\eta_t(\frac{K}{n}\alpha_i^2+1-\alpha_i)\mathbb{E}\left[\left\langle   \nabla f_i(\hat{\bm{v}}_i^{(t)}),  \hat{\bm{v}} _i^{(t)}- \bm{v}_i^{*} \right\rangle\right] \nonumber\\
    & \quad -2\eta_t(\frac{K}{n}\alpha_i^2+1-\alpha_i) \mathbb{E}\left[\left\langle  \nabla f_i(\bar{\bm{v}}_i^{(t)}) - \nabla f_i(\hat{\bm{v}}_i^{(t)}),    \hat{\bm{v}} _i^{(t)} - \bm{v}_i^{*}\right \rangle\right] \nonumber\\
    &\leq -2\eta_t(\frac{K}{n}\alpha_i^2+1-\alpha_i)\left( \mathbb{E}\left[f_i( \hat{\bm{v}} _i^{(t)})\right]- f_i(\boldsymbol{v} _i^{*}) + \frac{\mu}{2}  \mathbb{E}\left[\| \hat{\bm{v}} _i^{(t)} - \boldsymbol{v} _i^{*}\|^2\right]\right) \nonumber\\
    & \quad   + (\frac{K}{n}\alpha_i^2+1-\alpha_i)\eta_t \left(\frac{8L^2}{\mu(1-8(\alpha_i - \alpha_i^2\frac{K}{n}))} \mathbb{E}\left[\| \hat{\bm{v}} _i^{(t)}- \boldsymbol{\bar{v}} _i^{(t)} \|^2\right] + \frac{\mu(1-8(\alpha_i - \alpha_i^2\frac{K}{n}))}{8} \mathbb{E}\left[\|  \hat{\bm{v}} _i^{(t)}- \bm{v}_i^{*}\|^2\right]\right)    \nonumber\\
    &\leq -2\eta_t(\frac{K}{n}\alpha_i^2+1-\alpha_i)\left( \mathbb{E}\left[f_i( \hat{\bm{v}} _i^{(t)})\right]- f_i(\boldsymbol{v} _i^{*}) + \frac{\mu}{2}  \mathbb{E}\left[\| \hat{\bm{v}} _i^{(t)} - \boldsymbol{v} _i^{*}\|^2\right]\right) \nonumber\\
    & \quad    +  \eta_t \left(\frac{8L^2 (1-\alpha_i)^2}{\mu(1-8(\alpha_i - \frac{K}{n}\alpha_i^2))} \mathbb{E}\left[\|\boldsymbol{w}^{(t)} - \boldsymbol{w} _i^{(t)} \|^2\right] + \frac{\mu(1-8(\alpha_i - \frac{K}{n}\alpha_i^2))}{8} \mathbb{E}\left[\| \hat{\bm{v}} _i^{(t)}- \bm{v}_i^{*}\|^2\right]\right) \nonumber\\
    &\leq -2\eta_t(\frac{K}{n}\alpha_i^2+1-\alpha_i)\left( \mathbb{E}\left[f_i( \hat{\bm{v}} _i^{(t)})\right]- f_i(\boldsymbol{v} _i^{*}) \right) - \frac{7\mu\eta_t}{8} \mathbb{E}\left[ \| \hat{\bm{v}} _i^{(t)} - \boldsymbol{v} _i^{*}\|^2\right] \nonumber\\
    & \quad  +  \frac{8\eta_tL^2 (1-\alpha_i)^2}{\mu(1-8(\alpha_i - \alpha_i^2))} \mathbb{E}\left[\|\boldsymbol{w}^{(t)} - \boldsymbol{w} _i^{(t)} \|^2\right],
\end{align}
For $T_2$, we use the same approach as we did in (\ref{Thm3 2}); To deal with $T_3$, we also employ the similar technique in (\ref{Thm3 3}):
\begin{align}
    T_3 &= \mathbb{E}\left[\left\| \alpha_i^2 \mathbb{I}_i^t \nabla f_i(\bar{\bm{v}}_i^{(t)})+ (1-\alpha_i)\frac{1}{K}\sum_{j\in U_t} \nabla f_j(\bm{w}^{(t)}_j)\right\|^2\right] \nonumber\\
    &\leq 2(\frac{K}{n}\alpha_i^2+1-\alpha_i)^2\mathbb{E}\left[\| \nabla f_i(\bar{\bm{v}}_i^{(t)})\|^2\right] +2\mathbb{E}\left[\left\|  (1-\alpha_i)\left(\frac{1}{K}\sum_{j\in U_t}\nabla f_j(\bm{w}^{(t)}_j)-\nabla f_i(\bar{\bm{v}}_i^{(t)})\right)\right\|^2\right] \nonumber\\
    &\leq 2\left(2(\frac{K}{n}\alpha_i^2+1-\alpha_i)^2\mathbb{E}\left[\| \nabla f_i(\hat{\bm{v}}_i^{(t)}) - \nabla f_i^*\|^2\right]+2(\frac{K}{n}\alpha_i^2+1-\alpha_i)^2\mathbb{E}\left[\|\nabla f_i(\bar{\bm{v}}^{(t)}_i)-\nabla f_i(\hat{\bm{v}}_i^{(t)})\|^2\right]\right)  \nonumber\\
    & \quad+2(1-\alpha_i)^2\mathbb{E}\left[\left\| \frac{1}{K}\sum_{j\in U_t}\nabla f_j(\bm{w}^{(t)}_j)-\nabla f_i(\bar{\bm{v}}_i^{(t)})\right\|^2\right] \nonumber\\
    &\leq 8L(\frac{K}{n}\alpha_i^2+1-\alpha_i)\left (\mathbb{E}\left[f_i(\hat{\bm{v}}^{(t)}_i)\right] -  f_i^*\right)+4(1-\alpha_i)^2L^2\mathbb{E}\left[\|\boldsymbol{w}^{(t)} - \boldsymbol{w} _i^{(t)}\|^2\right] \nonumber\\
    & \quad +6(1-\alpha_i)^2\left(L^2\mathbb{E}\left[\left\| \boldsymbol{w}^{(t)} - \boldsymbol{w} _i^{(t)}\right\|^2\right] +\mathbb{E}\left[\left\|\nabla f_i(\hat{\bm{v}}_i^{(t)}) - \nabla F(\bm{w}^{(t)})\right\|^2\right]+  \frac{1}{K}\sum_{j\in U_t} L^2\mathbb{E}\left[\left\| \boldsymbol{w}^{(t)} - \boldsymbol{w} _j^{(t)}\right\|^2\right]\right). 
\end{align}

Then plugging $T_1, T_2, T_3$ back, we obtain the similar formulation as the without sampling case in~(\ref{proof thm5 0}). Thus:
{\small\begin{align}
   &\mathbb{E}\left[\| \boldsymbol{\hat{v}} _i^{(t+1)}- \bm{v}_i^{*} \|^2\right]\nonumber\\
   &\leq \left(1-\frac{3\mu \eta_t}{8}\right)\mathbb{E}\left[\| \boldsymbol{\hat{v}} _i^{(t)}- \bm{v}_i^{*} \|^2\right] -2(\eta_t-4\eta_t^2L)\left(\alpha_i^2\frac{K}{n}+1-\alpha_i\right)\left(\mathbb{E}\left[f_i(\boldsymbol{\hat{v}} _i^{(t)})\right]- f_i(\boldsymbol{v} _i^{*}) \right) \nonumber\\
   &  + \alpha_i^2\eta_t^2\frac{2 K \sigma^2 }{n } +(1-\alpha_i)^2\eta_t^2 \frac{2 \sigma^2 }{K} \nonumber\\
    &   + \left(\frac{8\eta_tL^2 (1-\alpha_i)^2}{\mu(1-8(\alpha_i - \alpha_i^2\frac{K}{n}))} +\frac{6(1-\alpha_i)^2\eta_t L^2}{\mu} + 10(1-\alpha_i)^2\eta_t^2L^2 \right)\mathbb{E}\left[\left\| \boldsymbol{w}^{(t)} - \boldsymbol{w} _i^{(t)}\right\|^2\right]\nonumber\\ & + \left( \frac{6(1-\alpha_i)^2\eta_t L^2}{\mu} + 6(1-\alpha_i)^2\eta_t^2L^2 \right) \frac{1}{K} \sum_{j\in U_t} \mathbb{E}\left[\left\| \boldsymbol{w}^{(t)} - \boldsymbol{w} _j^{(t)}\right\|^2\right] \nonumber\\
     & +  \left( \frac{6\eta_t}{\mu}+ 6\eta_t^2 \right)    (1-\alpha_i)^2\mathbb{E}\left[\left\|\frac{1}{K} \sum_{j\in U_t}\nabla f_j(\bm{w}^{(t)})-\nabla f_i(\hat{\bm{v}}_i^{(t)})\right\|^2\right].  \label{eq: th3_1} 
\end{align}}
we firstly examine the lower bound of $\alpha_i^2\frac{K}{n}+1-\alpha_i$. Notice that: $\alpha_i^2\frac{K}{n}+1-\alpha_i = \frac{K}{n}((\alpha_i-\frac{n}{2K})^2 + \frac{n}{K}- \frac{n^2}{4 K^2})$.
\paragraph{Case 1: $\frac{n}{2K}\geq 1$} The lower bound is attained when $\alpha_i = 1$: $\alpha_i^2\frac{K}{n}+1-\alpha_i \geq \frac{K}{n}$.
\paragraph{Case 2: $\frac{n}{2K} < 1$} The lower bound is attained when $\alpha_i = \frac{n}{2K}$: $\alpha_i^2\frac{K}{n}+1-\alpha_i \geq 1 - \frac{n}{4K} > \frac{1}{2}$.

So $\alpha_i^2\frac{K}{n}+1-\alpha_i \geq b := \min\{\frac{K}{n}, \frac{1}{2}\}$ always holds.

 Now we plug it and Lemma~\ref{lm: deviation with sampling} back to (\ref{eq: th3_1}):
{\small\begin{align}
   &\mathbb{E}\left[\| \boldsymbol{\hat{v}} _i^{(t+1)}- \bm{v}_i^{*} \|^2\right]\nonumber\\
   &\leq \left(1-\frac{3\mu \eta_t}{8}\right)\mathbb{E}\left[\| \boldsymbol{\hat{v}} _i^{(t)}- \bm{v}_i^{*} \|^2\right] -b\eta_t\left(\mathbb{E}\left[f_i(\boldsymbol{\hat{v}} _i^{(t)})\right]- f_i(\boldsymbol{v} _i^{*}) \right) \nonumber\\
   &  + \alpha_i^2\eta_t^2\frac{2 K \sigma^2 }{n } +(1-\alpha_i)^2\eta_t^2 \frac{2 \sigma^2 }{K} \nonumber\\
    &   + \left(\frac{8\eta_tL^2 (1-\alpha_i)^2}{\mu(1-8(\alpha_i - \alpha_i^2\frac{K}{n}))} +\frac{6(1-\alpha_i)^2\eta_t L^2}{\mu} + 10(1-\alpha_i)^2\eta_t^2L^2 \right)3\tau \eta_{t-1}^2 \left(\sigma^2   + (\zeta_i + \frac{\zeta}{K}) \tau \right )\nonumber\\
    & + \left( \frac{6(1-\alpha_i)^2\eta_t L^2}{\mu} + 6(1-\alpha_i)^2\eta_t^2L^2 \right) 3\tau \eta_{t-1}^2 \left(\sigma^2   + 2 \frac{\zeta}{K}  \tau  \right) \nonumber\\
     & +  \left( \frac{6\eta_t}{\mu}+ 6\eta_t^2 \right)    (1-\alpha_i)^2\mathbb{E}\left[\left\|\frac{1}{K} \sum_{j\in U_t}\nabla f_j(\bm{w}^{(t)})-\nabla f_i(\hat{\bm{v}}_i^{(t)})\right\|^2\right]. \label{eq: proof_alpha_condition 1}   
\end{align}}

Plugging Lemma~\ref{lm: local-global gradient gap with sampling} yields:
{\small\begin{align}
   &\mathbb{E}\left[\| \boldsymbol{\hat{v}} _i^{(t+1)}- \bm{v}_i^{*} \|^2\right]\nonumber\\
   &\leq \left(1-\frac{3\mu \eta_t}{8}\right)\mathbb{E}\left[\| \boldsymbol{\hat{v}} _i^{(t)}- \bm{v}_i^{*} \|^2\right] -b\eta_t\left(\mathbb{E}\left[f_i(\boldsymbol{\hat{v}} _i^{(t)})\right]- f_i(\boldsymbol{v} _i^{*}) \right) \nonumber\\
   &  + \alpha_i^2\eta_t^2\frac{2 K \sigma^2 }{n} +(1-\alpha_i)^2\eta_t^2 \frac{2 \sigma^2 }{K} \nonumber\\
    &   + \left(\frac{8\eta_tL^2 (1-\alpha_i)^2}{\mu(1-8(\alpha_i - \alpha_i^2\frac{K}{n}))} +\frac{6(1-\alpha_i)^2\eta_t L^2}{\mu} + 10(1-\alpha_i)^2\eta_t^2L^2 \right)3\tau \eta_{t-1}^2 \left(\sigma^2   + (\zeta_i + \frac{\zeta}{K}) \tau \right )\nonumber\\
    & + \left( \frac{6(1-\alpha_i)^2\eta_t L^2}{\mu} + 6(1-\alpha_i)^2\eta_t^2L^2 \right) 3\tau \eta_{t-1}^2 \left(\sigma^2   + 2 \frac{\zeta}{K}  \tau  \right) \nonumber\\
     & +  \left( \frac{6\eta_t}{\mu}+ 6\eta_t^2 \right)    (1-\alpha_i)^2 \left[2L^2\mathbb{E}\left[\|\hat{\bm{v}}_i^{(t)} - \bm{v}^*\|^2\right] + 6\left(2\zeta_i + 2\frac{\zeta}{K}  \right)  + 6L^2\mathbb{E}\left[\|\bm{w}^{(t)} - \bm{w}^*\|^2\right] +  6L^2\Delta_i \right].   \nonumber
\end{align}}

Then following the same procedure in Appendix~\ref{proof thm5}, together with the application of Lemma \ref{lm6} we can conclude that:
{\begin{align}
    & f_i(\boldsymbol{\hat{v}} _i)- f_i(\boldsymbol{v} _i^{*}) \nonumber\\
   &\leq \frac{1}{S_T} \sum_{t=1}^Tp_t(f_i(\boldsymbol{\hat{v}} _i^{(t)})- f_i(\boldsymbol{v} _i^{*}) )       \nonumber\\
   &\leq \frac{ p_{0}\mathbb{E}\left[\| \boldsymbol{\hat{v}} _i^{(1)}- \bm{v}_i^{*} \|^2\right]}{ b\eta_{0}S_T}+ \frac{1}{bS_T}  \sum_{t=1}^T p_t\eta_t\left(\alpha_i^2\eta_t^2\frac{2 K \sigma^2 }{n} +(1-\alpha_i)^2\eta_t^2 \frac{2 \sigma^2 }{K}\right) \nonumber\\
   &  + \frac{1}{bS_T} \sum_{t=1}^T(1-\alpha_i)^2L^2 \left(\frac{8  }{\mu(1-8(\alpha_i - \alpha_i^2\frac{K}{n}))} +\frac{6  }{\mu} + 10 \eta_t  \right)3\tau p_t\eta_{t-1}^2 \left(\sigma^2   + (\zeta_i + \frac{\zeta}{K}) \tau \right )\nonumber\\
     &  + \frac{1}{bS_T} \sum_{t=1}^T(1-\alpha_i)^2L^2 \left( \frac{6  }{\mu} + 10 \eta_t  \right)3\tau p_t\eta_{t-1}^2 \left(\sigma^2   + 2 \frac{\zeta}{K}  \tau \right )\nonumber\\
     & + 48(1-\alpha_i)^2   \frac{ L^2}{bS_T} \sum_{t=1}^Tp_t  \nonumber \\
     & \left(\frac{1}{\mu}+\eta_t\right)  \left(\frac{a^3}{ (t-1+a)^3}  \mathbb{E}\left[\|\bm{w}^{(1)} - \bm{w}^*\|^2\right]   +\left(t+16\left(\frac{1}{a+1}+\ln (t+a)\right)\right) \frac{1536a^2\tau\left(\sigma^2 + 2\tau \frac{\zeta}{K}  \right)L^2 }{(a-1)^2\mu^4(t-1+a)^3}+\frac{128\sigma^2 t (t+2a)}{K\mu^2(t-1+a)^3} \right) \nonumber\\
      & +48(1-\alpha_i)^2   \left( \zeta_i +  L^2\Delta_i \right)\frac{1}{bS_T} \sum_{t=1}^Tp_t\left(\frac{1}{\mu}+\eta_t\right).  \nonumber \\
      & = O\left(\frac{\mu  }{bT^3}\right) +\alpha_i^2O\left(\frac{\sigma^2}{\mu b T}\right)+(1-\alpha_i)^2O\left(\frac{\zeta_i}{\mu b} +  \frac{\kappa L\Delta_i}{b}\right)\nonumber\\
      & \quad+ (1-\alpha_i)^2\left(O\left(\frac{\kappa L \ln T}{bT^3}\right) +O\left(\frac{\kappa^2\sigma^2}{\mu bKT}\right)+ O\left(\frac{\kappa^2 \tau^2(\zeta_i + \frac{\zeta}{K})+\kappa^2 \tau\sigma^2 }{\mu bT^2} \right)+ O\left(\frac{\kappa^4 \tau\left(\sigma^2 + 2\tau \frac{\zeta}{K}  \right) }{\mu bT^2}\right)  \right). \nonumber
\end{align}}
\end{proof}
\section{Proof of Convergence without Assumption on $\alpha_i$}\label{app:cond_alpha}

In this section, we present  the proof of Theorem~\ref{without constraint}.
\begin{proof}
According to (\ref{eq: proof_alpha_condition 1}):
 \begin{align}
   &\mathbb{E}\left[\| \boldsymbol{\hat{v}} _i^{(t+1)}- \bm{v}_i^{*} \|^2\right]\nonumber\\
   &\leq \left(1-\frac{3\mu \eta_t}{8}\right)\mathbb{E}\left[\| \boldsymbol{\hat{v}} _i^{(t)}- \bm{v}_i^{*} \|^2\right] -b\eta_t\left(\mathbb{E}\left[f_i(\boldsymbol{\hat{v}} _i^{(t)})\right]- f_i(\boldsymbol{v} _i^{*}) \right) \nonumber\\
   &  \quad + \alpha_i^2\eta_t^2\frac{2 K \sigma^2 }{n } +(1-\alpha_i)^2\eta_t^2 \frac{2 \sigma^2 }{K} \nonumber\\
    &  \quad + \left(\frac{8\eta_tL^2 (1-\alpha_i)^2}{\mu(1-8(\alpha_i - \alpha_i^2\frac{K}{n}))} +\frac{6(1-\alpha_i)^2\eta_t L^2}{\mu} + 10(1-\alpha_i)^2\eta_t^2L^2 \right)3\tau \eta_{t-1}^2 \left(\sigma^2   + (\zeta_i + \frac{\zeta}{K}) \tau \right )\nonumber\\
    &\quad + \left( \frac{6(1-\alpha_i)^2\eta_t L^2}{\mu} + 6(1-\alpha_i)^2\eta_t^2L^2 \right) 3\tau \eta_{t-1}^2 \left(\sigma^2   + 2 \frac{\zeta}{K}  \tau  \right) \nonumber\\
     &\quad +  \left( \frac{6\eta_t}{\mu}+ 6\eta_t^2 \right)    (1-\alpha_i)^2\mathbb{E}\left[\left\|\frac{1}{K} \sum_{j\in U_t}\nabla f_j(\bm{w}^{(t)})-\nabla f_i(\hat{\bm{v}}_i^{(t)})\right\|^2\right].  \nonumber
\end{align} 
Here, we directly use the bound $\mathbb{E}\left[\left\|\frac{1}{K} \sum_{j\in U_t}\nabla f_j(\bm{w}^{(t)})-\nabla f_i(\hat{\bm{v}}_i^{(t)})\right\|^2\right] \leq 2G^2$. Then we have:
 \begin{align}
   &\mathbb{E}\left[\| \boldsymbol{\hat{v}} _i^{(t+1)}- \bm{v}_i^{*} \|^2\right]\nonumber\\
   &\leq \left(1-\frac{\mu \eta_t}{4}\right)\mathbb{E}\left[\| \boldsymbol{\hat{v}} _i^{(t)}- \bm{v}_i^{*} \|^2\right] -b\eta_t\left(\mathbb{E}\left[f_i(\boldsymbol{\hat{v}} _i^{(t)})\right]- f_i(\boldsymbol{v} _i^{*}) \right) \nonumber\\
   & \quad + \alpha_i^2\eta_t^2\frac{2 K \sigma^2 }{n } +(1-\alpha_i)^2\eta_t^2 \frac{2 \sigma^2 }{K} \nonumber\\
    & \quad  + \left(\frac{8\eta_tL^2 (1-\alpha_i)^2}{\mu(1-8(\alpha_i - \alpha_i^2\frac{K}{n}))} +\frac{6(1-\alpha_i)^2\eta_t L^2}{\mu} + 10(1-\alpha_i)^2\eta_t^2L^2 \right)3\tau \eta_{t-1}^2 \left(\sigma^2   + (\zeta_i + \frac{\zeta}{K}) \tau \right )\nonumber\\
    & \quad + \left( \frac{6(1-\alpha_i)^2\eta_t L^2}{\mu} + 6(1-\alpha_i)^2\eta_t^2L^2 \right) 3\tau \eta_{t-1}^2 \left(\sigma^2   + 2 \frac{\zeta}{K}  \tau  \right) \nonumber\\
     & \quad +  \left( \frac{12\eta_t}{\mu}+ 12\eta_t^2 \right)    (1-\alpha_i)^2 G^2.  \nonumber
\end{align} 

Then following the same procedure in Appendix~\ref{proof thm5},  we can conclude that:
{\begin{align}
    & f_i(\boldsymbol{\hat{v}} _i)- f_i(\boldsymbol{v} _i^{*}) \nonumber\\
   &\leq \frac{1}{S_T} \sum_{t=1}^Tp_t(f_i(\boldsymbol{\hat{v}} _i^{(t)})- f_i(\boldsymbol{v} _i^{*}) )       \nonumber\\
   &\leq \frac{ p_{0}\mathbb{E}\left[\| \boldsymbol{\hat{v}} _i^{(1)}- \bm{v}_i^{*} \|^2\right]}{ b\eta_{0}S_T}+ \frac{1}{bS_T}  \sum_{t=1}^T p_t\eta_t\left(\alpha_i^2\eta_t^2\frac{2 K \sigma^2 }{n} +(1-\alpha_i)^2\eta_t^2 \frac{2 \sigma^2 }{K}\right) \nonumber\\
   & \quad + \frac{1}{bS_T} \sum_{t=1}^T(1-\alpha_i)^2L^2 \left(\frac{8  }{\mu(1-8(\alpha_i - \alpha_i^2\frac{K}{n}))} +\frac{6  }{\mu} + 10 \eta_t  \right)3\tau p_t\eta_{t-1}^2 \left(\sigma^2   + (\zeta_i + \frac{\zeta}{K}) \tau \right )\nonumber\\
     & \quad + \frac{1}{bS_T} \sum_{t=1}^T(1-\alpha_i)^2L^2 \left( \frac{6  }{\mu} + 10 \eta_t  \right)3\tau p_t\eta_{t-1}^2 \left(\sigma^2   + 2 \frac{\zeta}{K}  \tau \right )\nonumber\\
      & \quad +12(1-\alpha_i)^2  G^2\frac{1}{bS_T} \sum_{t=1}^Tp_t\left(\frac{1}{\mu}+\eta_t\right).  \nonumber \\
      & = O\left(\frac{\mu  }{bT^3}\right) +\alpha_i^2O\left(\frac{\sigma^2}{\mu b T}\right)+(1-\alpha_i)^2O\left(\frac{G^2}{\mu b} \right)\nonumber\\
      & \quad+ (1-\alpha_i)^2\left(O\left(\frac{\kappa L \ln T}{bT^3}\right) +O\left(\frac{\kappa^2\sigma^2}{\mu bKT}\right)+ O\left(\frac{\kappa^2 \tau^2(\zeta_i + \frac{\zeta}{K})+\kappa^2 \tau\sigma^2 }{\mu bT^2} \right)+ O\left(\frac{\kappa^4 \tau\left(\sigma^2 + 2\tau \frac{\zeta}{K}  \right) }{\mu bT^2}\right)  \right). \nonumber
\end{align}}
\end{proof}

\section{Proof of Nonconvex Loss Convergence} \label{app: nonconvex}
In this section we will provide the proof of convergence results on nonconvex functions. As usual, let us first introduce several useful lemmas.
\subsection{Proof of Technical Lemmas}
\begin{lemma} \label{lm: nonconvex lm1}
Under Theorem~\ref{Thm: nonconvex}'s assumptions, the following statement holds true:
\begin{align}
    \mathbb{E}\left[f_i(\hat{\bm{v}}^{(t+1)}_i) \right] & \leq \mathbb{E} \left[ f_i(\hat{\bm{v}}^{(t)}_i) \right] -\frac{\eta}{2}\mathbb{E} \left[\left \| \nabla f_i (\hat{\bm{v}}^{(t)}_i) \right\|^2\right]  + \frac{\eta^2 L}{2}\left(\alpha_i^4 \sigma^2 + (1-\alpha_i)^2 \frac{\sigma^2}{n}\right)  +  2\alpha_i^4 (1-\alpha_i)^2\eta L^2 \mathbb{E} \left[\left \|  \bm{w}^{(t)}_i  -  \bm{w}^{(t)}\right\|^2  \right] \nonumber\\
    & + (1-\alpha_i)^2 \eta \frac{1}{n}\sum_{j=1}^n\mathbb{E} \left[\left \| \bm{w}^{(t)} - \bm{w}^{(t)}_j  \right\|^2  \right]+ 4\eta (1-\alpha_i^2)^2  \zeta_i  + 8\eta L^2 (1-\alpha_i^2)^2 \Gamma + 8(\alpha_i-\alpha_i^2)^2 \eta \mathbb{E} \left[\left \| \nabla F(\bm{w}^{(t)}) \right\|^2  \right]\nonumber.
\end{align}
\begin{proof}
According to the updating rule and smoothness of $f_i$, we have:
\begin{align}
    f_i(\hat{\bm{v}}^{(t+1)}_i) \leq  f_i(\hat{\bm{v}}^{(t)}_i) + \left \langle \nabla f_i (\hat{\bm{v}}^{(t)}_i), \hat{\bm{v}}^{(t+1)}_i - \hat{\bm{v}}^{(t)}_i \right \rangle + \frac{L}{2}\left \| \hat{\bm{v}}^{(t+1)}_i - \hat{\bm{v}}^{(t)}_i \right\|^2.\nonumber
\end{align}
Taking expectation on both sides yields:
\begin{align}
    \mathbb{E} \left[f_i(\hat{\bm{v}}^{(t+1)}_i)\right] &\leq \mathbb{E} \left[ f_i(\hat{\bm{v}}^{(t)}_i) \right]+  \mathbb{E} \left[\left \langle \nabla f_i (\hat{\bm{v}}^{(t)}_i), \hat{\bm{v}}^{(t+1)}_i - \hat{\bm{v}}^{(t)}_i \right \rangle \right] + \frac{L}{2} \mathbb{E} \left[\left \| \hat{\bm{v}}^{(t+1)}_i - \hat{\bm{v}}^{(t)}_i \right\|^2\right]\nonumber\\
    & \leq \mathbb{E} \left[ f_i(\hat{\bm{v}}^{(t)}_i) \right] -  \eta \mathbb{E} \left[\left \langle \nabla f_i (\hat{\bm{v}}^{(t)}_i),    \alpha_i^2 \nabla f_i (\bar{\bm{v}}_{i}^{(t)}) + (1-\alpha_i) \frac{1}{n}\sum_{j=1}^n \nabla f_j (\bm{w}^{(t)}_j)    \right \rangle \right]\nonumber\\
    & \quad+ \frac{\eta^2 L}{2} \mathbb{E} \left[\left \| \alpha_i^2 \nabla f_i (\bar{\bm{v}}_{i}^{(t)}) + (1-\alpha_i) \frac{1}{n}\sum_{j=1}^n \nabla f_j (\bm{w}^{(t)}_j) \right\|^2\right] + \frac{\eta^2 L}{2}\left(\alpha_i^4 \sigma^2 + (1-\alpha_i)^2 \frac{\sigma^2}{n}\right).\nonumber
\end{align}
 Using the identity: $\langle \bm{a}, \bm{b} \rangle = \frac{1}{2}\|\bm{a}\|^2+\frac{1}{2}\|\bm{b}\|^2-\frac{1}{2}\|\bm{a}-\bm{b}\|^2$ we have:

 \begin{align}
    \mathbb{E} \left[f_i(\hat{\bm{v}}^{(t+1)}_i)\right] 
     & \leq \mathbb{E} \left[ f_i(\hat{\bm{v}}^{(t)}_i) \right] -\frac{\eta}{2}\mathbb{E} \left[\left \| \nabla f_i (\hat{\bm{v}}^{(t)}_i) \right\|^2\right] -  \underbrace{\left( \frac{\eta}{2} -\frac{\eta^2 L}{2}\right)}_{\geq 0}\mathbb{E} \left[\left \| \alpha_i^2 \nabla f_i (\bar{\bm{v}}_{i}^{(t)}) + (1-\alpha_i) \frac{1}{n}\sum_{j=1}^n \nabla f_j (\bm{w}^{(t)}_j) \right \|^2 \right] \nonumber\\
    & \quad+ \frac{\eta}{2}\mathbb{E} \left[\left \| \nabla f_i (\hat{\bm{v}}^{(t)}_i) - \alpha_i^2 \nabla f_i (\bar{\bm{v}}_{i}^{(t)}) - (1-\alpha_i) \frac{1}{n}\sum_{j=1}^n \nabla f_j (\bm{w}^{(t)}_j) \right\|^2\right] + \frac{\eta^2 L}{2}\left(\alpha_i^4 \sigma^2 + (1-\alpha_i)^2 \frac{\sigma^2}{n}\right)\nonumber\\ 
     & \leq \mathbb{E} \left[ f_i(\hat{\bm{v}}^{(t)}_i) \right] -\frac{\eta}{2}\mathbb{E} \left[\left \| \nabla f_i (\hat{\bm{v}}^{(t)}_i) \right\|^2\right]  + \frac{\eta^2 L}{2}\left(\alpha_i^4 \sigma^2 + (1-\alpha_i)^2 \frac{\sigma^2}{n}\right) \nonumber\\
    & \quad+  \eta \mathbb{E} \left[\left \|  \alpha_i^2\left(\nabla f_i (\hat{\bm{v}}^{(t)}_i) - \nabla f_i (\bar{\bm{v}}_{i}^{(t)}) \right) -(1-\alpha_i) \nabla F(\bm{w}^{(t)})  + (1-\alpha_i^2)\nabla f_i (\hat{\bm{v}}^{(t)}_i) \right\|^2  \right] \nonumber \\
     &\quad + (1-\alpha_i)^2 \eta \mathbb{E} \left[\left \|   \nabla F(\bm{w}^{(t)}) - \frac{1}{n}\sum_{j=1}^n \nabla f_j (\bm{w}^{(t)}_j)  \right\|^2  \right] \nonumber \\
    & \leq \mathbb{E} \left[ f_i(\hat{\bm{v}}^{(t)}_i) \right] -\frac{\eta}{2}\mathbb{E} \left[\left \| \nabla f_i (\hat{\bm{v}}^{(t)}_i) \right\|^2\right]  + \frac{\eta^2 L}{2}\left(\alpha_i^4 \sigma^2 + (1-\alpha_i)^2 \frac{\sigma^2}{n}\right) \nonumber\\
    & \quad+ (1-\alpha_i)^2 \eta \frac{1}{n}\sum_{j=1}^n\mathbb{E} \left[\left \| \bm{w}^{(t)} - \bm{w}^{(t)}_j  \right\|^2  \right]+  2\eta \mathbb{E} \left[\left \|  \alpha_i^2\left(\nabla f_i (\hat{\bm{v}}^{(t)}_i) - \nabla f_i (\bar{\bm{v}}_{i}^{(t)}) \right)   \right\|^2  \right] \nonumber \\
    &\quad +  2\eta \mathbb{E} \left[\left \|(1-\alpha_i^2)\nabla f_i (\hat{\bm{v}}^{(t)}_i)-(1-\alpha_i^2) \nabla F(\hat{\bm{v}}^{(t)}_i) + (1-\alpha_i^2) \nabla F(\hat{\bm{v}}^{(t)}_i) - (1-\alpha_i) \nabla F(\bm{w}^{(t)}) \right\|^2  \right] \nonumber 
\end{align}
Using the smoothness of $f$ and $F$, together with applying Jensen's inequality on the last term yields:
 \begin{align}
    \mathbb{E} \left[f_i(\hat{\bm{v}}^{(t+1)}_i)\right]   & \leq \mathbb{E} \left[ f_i(\hat{\bm{v}}^{(t)}_i) \right] -\frac{\eta}{2}\mathbb{E} \left[\left \| \nabla f_i (\hat{\bm{v}}^{(t)}_i) \right\|^2\right]  + \frac{\eta^2 L}{2}\left(\alpha_i^4 \sigma^2 + (1-\alpha_i)^2 \frac{\sigma^2}{n}\right)  \nonumber\\
    & \quad+ (1-\alpha_i)^2 \eta \frac{1}{n}\sum_{j=1}^n\mathbb{E} \left[\left \| \bm{w}^{(t)} - \bm{w}^{(t)}_j  \right\|^2  \right]+  2\alpha_i^4\eta L^2 \mathbb{E} \left[\left \| \hat{\bm{v}}^{(t)}_i  -  \bar{\bm{v}}_{i}^{(t)}  \right\|^2  \right] \nonumber \\
    & \quad+ 4\eta \mathbb{E} \left[\left \|(1-\alpha_i^2)\nabla f_i (\hat{\bm{v}}^{(t)}_i)-(1-\alpha_i^2) \nabla F(\hat{\bm{v}}^{(t)}_i)\right\|^2  \right] + 4\eta \mathbb{E} \left[\left \|(1-\alpha_i^2) \nabla F(\hat{\bm{v}}^{(t)}_i) - (1-\alpha_i) \nabla F(\bm{w}^{(t)}) \right\|^2  \right] \nonumber  \\
     & \leq \mathbb{E} \left[ f_i(\hat{\bm{v}}^{(t)}_i) \right] -\frac{\eta}{2}\mathbb{E} \left[\left \| \nabla f_i (\hat{\bm{v}}^{(t)}_i) \right\|^2\right]  + \frac{\eta^2 L}{2}\left(\alpha_i^4 \sigma^2 + (1-\alpha_i)^2 \frac{\sigma^2}{n}\right)\nonumber\\  
     &\quad+ (1-\alpha_i)^2 \eta \frac{1}{n}\sum_{j=1}^n\mathbb{E} \left[\left \| \bm{w}^{(t)} - \bm{w}^{(t)}_j  \right\|^2  \right] +  2\alpha_i^4 (1-\alpha_i)^2\eta L^2 \mathbb{E} \left[\left \|  \bm{w}^{(t)}_i  -  \bm{w}^{(t)}\right\|^2  \right] \nonumber \\
    & \quad+ 4\eta (1-\alpha_i^2)^2  \zeta_i + 8\eta \mathbb{E} \left[\left \|(1-\alpha_i^2) \nabla F(\hat{\bm{v}}^{(t)}_i) - (1-\alpha_i^2) \nabla F(\bm{w}^{(t)}) \right\|^2  \right] + 8\eta \mathbb{E} \left[\left \| (\alpha_i-\alpha_i^2) \nabla F(\bm{w}^{(t)}) \right\|^2  \right]\nonumber  \\
    & \leq \mathbb{E} \left[ f_i(\hat{\bm{v}}^{(t)}_i) \right] -\frac{\eta}{2}\mathbb{E} \left[\left \| \nabla f_i (\hat{\bm{v}}^{(t)}_i) \right\|^2\right]  + \frac{\eta^2 L}{2}\left(\alpha_i^4 \sigma^2 + (1-\alpha_i)^2 \frac{\sigma^2}{n}\right)\nonumber\\ 
    &\quad+ (1-\alpha_i)^2 \eta \frac{1}{n}\sum_{j=1}^n\mathbb{E} \left[\left \| \bm{w}^{(t)} - \bm{w}^{(t)}_j  \right\|^2  \right] +  2\alpha_i^4 (1-\alpha_i)^2\eta L^2 \mathbb{E} \left[\left \|  \bm{w}^{(t)}_i  -  \bm{w}^{(t)}\right\|^2  \right]  + 4\eta (1-\alpha_i^2)^2  \zeta_i \nonumber\\
    &\quad + 8\eta L^2 (1-\alpha_i^2)^2  \mathbb{E} \left[\left \| \nabla F(\hat{\bm{v}}^{(t)}_i) - \nabla F(\bm{w}^{(t)}) \right\|^2  \right]+ 8(\alpha_i-\alpha_i^2)^2 \eta \mathbb{E} \left[\left \| \nabla F(\bm{w}^{(t)}) \right\|^2  \right]\nonumber  \\
    & \leq \mathbb{E} \left[ f_i(\hat{\bm{v}}^{(t)}_i) \right] -\frac{\eta}{2}\mathbb{E} \left[\left \| \nabla f_i (\hat{\bm{v}}^{(t)}_i) \right\|^2\right]  + \frac{\eta^2 L}{2}\left(\alpha_i^4 \sigma^2 + (1-\alpha_i)^2 \frac{\sigma^2}{n}\right)  +  2\alpha_i^4 (1-\alpha_i)^2\eta L^2 \mathbb{E} \left[\left \|  \bm{w}^{(t)}_i  -  \bm{w}^{(t)}\right\|^2  \right] \nonumber\\
    &\quad+ (1-\alpha_i)^2 \eta \frac{1}{n}\sum_{j=1}^n\mathbb{E} \left[\left \| \bm{w}^{(t)} - \bm{w}^{(t)}_j  \right\|^2  \right] + 4\eta (1-\alpha_i^2)^2  \zeta_i  + 8\eta L^2 (1-\alpha_i^2)^2 \Gamma + 8(\alpha_i-\alpha_i^2)^2 \eta \mathbb{E} \left[\left \| \nabla F(\bm{w}^{(t)}) \right\|^2  \right]\nonumber.
\end{align}

\end{proof}

\end{lemma}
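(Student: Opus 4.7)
The plan is to derive a one-step descent inequality for $f_i$ evaluated at the personalized iterate $\hat{\bm{v}}_i^{(t)} = \alpha_i \bm{v}_i^{(t)} + (1-\alpha_i)\bm{w}^{(t)}$. I would start by applying the smoothness inequality for $f_i$ between $\hat{\bm{v}}_i^{(t+1)}$ and $\hat{\bm{v}}_i^{(t)}$, and then substitute the update rule. Since we are in the $K=n$ case, the update is $\hat{\bm{v}}_i^{(t+1)} - \hat{\bm{v}}_i^{(t)} = -\eta\big[\alpha_i^2 \nabla f_i(\bar{\bm{v}}_i^{(t)};\xi_i^t) + (1-\alpha_i)\tfrac{1}{n}\sum_{j=1}^n \nabla f_j(\bm{w}_j^{(t)};\xi_j^t)\big]$. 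Taking expectation, the stochastic-gradient noise gives the variance term $\tfrac{\eta^2 L}{2}(\alpha_i^4 \sigma^2 + (1-\alpha_i)^2 \sigma^2/n)$ via Assumption~\ref{assumption: bounded grad} and independence across clients.

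Next, I would apply the standard polarization identity $\langle a,b\rangle = \tfrac12(\|a\|^2 + \|b\|^2 - \|a-b\|^2)$ to the inner product term together with $\tfrac{L}{2}\|\hat{\bm{v}}_i^{(t+1)}-\hat{\bm{v}}_i^{(t)}\|^2$. If we choose $\eta \leq 1/L$, the $\|a\|^2$ and $\|b\|^2$ pieces combine non-positively (after absorbing the Lipschitz-quadratic), leaving the clean $-\tfrac{\eta}{2}\|\nabla f_i(\hat{\bm{v}}_i^{(t)})\|^2$ on the right-hand side plus a residual $\eta \mathbb{E}\|\nabla f_i(\hat{\bm{v}}_i^{(t)}) - \alpha_i^2 \nabla f_i(\bar{\bm{v}}_i^{(t)}) - (1-\alpha_i)\tfrac{1}{n}\sum_j \nabla f_j(\bm{w}_j^{(t)})\|^2$ that captures all the bias.

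The main obstacle is decomposing this residual into the seven pieces that appear in the lemma. My approach would be to add and subtract $\nabla F(\bm{w}^{(t)})$ and then split the gap as $\alpha_i^2(\nabla f_i(\hat{\bm{v}}_i^{(t)}) - \nabla f_i(\bar{\bm{v}}_i^{(t)})) + (1-\alpha_i^2)\nabla f_i(\hat{\bm{v}}_i^{(t)}) - (1-\alpha_i)\nabla F(\bm{w}^{(t)}) + (1-\alpha_i)(\nabla F(\bm{w}^{(t)}) - \tfrac{1}{n}\sum_j \nabla f_j(\bm{w}_j^{(t)}))$. The first piece is handled by smoothness of $f_i$ and $\bar{\bm{v}}_i^{(t)} - \hat{\bm{v}}_i^{(t)} = (1-\alpha_i)(\bm{w}_i^{(t)}-\bm{w}^{(t)})$, yielding the $2\alpha_i^4(1-\alpha_i)^2\eta L^2 \mathbb{E}\|\bm{w}_i^{(t)}-\bm{w}^{(t)}\|^2$ term. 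The last piece, again by smoothness applied client-wise, gives the $(1-\alpha_i)^2 \eta \cdot \tfrac{1}{n}\sum_j \mathbb{E}\|\bm{w}^{(t)}-\bm{w}_j^{(t)}\|^2$ term.

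The middle piece is the trickiest and is what keeps the bound sharp: I would use the algebraic identity $(1-\alpha_i^2) - (1-\alpha_i) = \alpha_i - \alpha_i^2$ to write it as $(1-\alpha_i^2)(\nabla f_i(\hat{\bm{v}}_i^{(t)}) - \nabla F(\hat{\bm{v}}_i^{(t)})) + (1-\alpha_i^2)(\nabla F(\hat{\bm{v}}_i^{(t)}) - \nabla F(\bm{w}^{(t)})) - (\alpha_i - \alpha_i^2)\nabla F(\bm{w}^{(t)})$. Expanding the square (with two applications of the Jensen / triangle inequality, introducing factors of $2$ and $4$) and using the definition of gradient diversity $\zeta_i$ for the first summand, the gradient-discrepancy quantity $\Gamma$ for the second, and keeping the last term as is, produces precisely the constants $4\eta(1-\alpha_i^2)^2\zeta_i$, $8\eta L^2(1-\alpha_i^2)^2\Gamma$, and $8(\alpha_i-\alpha_i^2)^2 \eta\,\mathbb{E}\|\nabla F(\bm{w}^{(t)})\|^2$ claimed in the statement. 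Assembling all pieces concludes the lemma.
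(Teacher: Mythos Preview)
Your proposal is correct and follows essentially the same route as the paper's own proof: apply smoothness, separate the stochastic noise, use the polarization identity to extract $-\tfrac{\eta}{2}\|\nabla f_i(\hat{\bm v}_i^{(t)})\|^2$, and then decompose the gradient-mismatch residual via exactly the add--and--subtract moves you describe (splitting off the $\bm w_j^{(t)}\!-\!\bm w^{(t)}$ piece, the $\alpha_i^2$ smoothness piece, and then the $(1-\alpha_i^2)/(1-\alpha_i)$ algebra that produces the $\zeta_i$, $\Gamma$, and $\|\nabla F(\bm w^{(t)})\|^2$ terms). One small bookkeeping note: the residual coming out of the polarization step carries coefficient $\eta/2$, not $\eta$; with $\eta/2$ your sequence of two-term splits yields precisely the constants $2,\,1,\,4,\,8,\,8$ in the lemma, so this is only a slip in the write-up.
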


\begin{lemma}\label{lm: nonconvex lm2}
Under Theorem~\ref{Thm: nonconvex}'s assumptions, the following statement holds true:
\begin{align}
  \frac{1}{T}\sum_{t=1}^T  \mathbb{E} \left[\left \| \nabla F(\bm{w}^{(t)})\right\|^2\right]    & \leq \frac{2}{\eta T}\mathbb{E} \left[ F(\bm{w}^{(1)}) \right]+  L^2\frac{1}{T}\sum_{t=1}^T  \frac{1}{n} \sum_{j=1}^n \mathbb{E} \left[\left \|\ \bm{w}^{(t)}_j - \bm{w}^{(t)}  \right \|^2 \right]+ \frac{\eta  L \sigma^2}{ n}. \nonumber
\end{align}
\begin{proof}
According to the updating rule and smoothness of $f_i$, we have:
\begin{align}
    F( \bm{w}^{(t+1)}) \leq  F(\bm{w}^{(t)}) + \left \langle \nabla F(\bm{w}^{(t)}), \bm{w}^{(t+1)} - \bm{w}^{(t)} \right \rangle + \frac{L}{2}\left \| \bm{w}^{(t+1)} - \bm{w}^{(t)} \right\|^2.\nonumber
\end{align}
Taking expectation on both sides yields:
\begin{align}
    \mathbb{E} \left[F( \bm{w}^{(t+1)})\right] &\leq \mathbb{E} \left[  F(\bm{w}^{(t)}) \right]+  \mathbb{E} \left[\left \langle \nabla F(\bm{w}^{(t)}), \bm{w}^{(t+1)} - \bm{w}^{(t)} \right \rangle \right] + \frac{L}{2} \mathbb{E} \left[\left \| \bm{w}^{(t+1)} - \bm{w}^{(t)} \right\|^2\right]\nonumber\\
    & \leq \mathbb{E} \left[ F(\bm{w}^{(t)}) \right] - \eta \mathbb{E} \left[\left \langle \nabla F(\bm{w}^{(t)}) ,  \frac{1}{n}\sum_{j=1}^n \nabla f_j (\bm{w}^{(t)}_j) \right \rangle \right] + \frac{\eta^2 L}{2} \mathbb{E} \left[\left \|  \frac{1}{n}\sum_{j=1}^n \nabla f_j (\bm{w}^{(t)}_j)\right\|^2\right] + \frac{\eta^2 L \sigma^2}{2n}  .\nonumber
\end{align}
 Using the identity $\langle \bm{a},\bm{b}\rangle = -\frac{1}{2}\|\bm{a} - \bm{b}\|^2 + \frac{1}{2}\|\bm{a}\|^2 + \frac{1}{2}\|\bm{b}\|^2 $, we have:
 
\begin{align}
    \mathbb{E} \left[F( \bm{w}^{(t+1)})\right] & \leq \mathbb{E} \left[ F(\bm{w}^{(t)}) \right] -\frac{\eta}{2}  \mathbb{E} \left[\left \| \nabla F(\bm{w}^{(t)})\right\|^2\right] -\left(\frac{\eta}{2} -\frac{\eta^2 L}{2}\right)\mathbb{E} \left[\left \|\frac{1}{n}\sum_{j=1}^n \nabla f_j (\bm{w}^{(t)}_j) \right \|^2 \right]   + \frac{\eta^2 L \sigma^2}{2n} \nonumber\\
    & \quad + \frac{\eta L^2}{2} \frac{1}{n} \sum_{i=1}^n \mathbb{E} \left[\left \|\ \bm{w}^{(t)}_j - \bm{w}^{(t)}  \right \|^2 \right] \nonumber\\
    & \leq \mathbb{E} \left[ F(\bm{w}^{(t)}) \right] -\frac{\eta}{2}  \mathbb{E} \left[\left \| \nabla F(\bm{w}^{(t)})\right\|^2\right]   + \frac{\eta^2 L \sigma^2}{2n}  + \frac{\eta L^2}{2} \frac{1}{n} \sum_{i=1}^n \mathbb{E} \left[\left \|\ \bm{w}^{(t)}_j - \bm{w}^{(t)}  \right \|^2 \right]. \nonumber
\end{align}
Re-arranging terms and doing the telescoping sum from $t = 1$ to $T$:
\begin{align}
  \frac{1}{T}\sum_{t=1}^T  \mathbb{E} \left[\left \| \nabla F(\bm{w}^{(t)})\right\|^2\right]    & \leq \frac{2}{\eta T}\mathbb{E} \left[ F(\bm{w}^{(1)}) \right]+  L^2\frac{1}{T}\sum_{t=1}^T  \frac{1}{n} \sum_{j=1}^n \mathbb{E} \left[\left \|\ \bm{w}^{(t)}_j - \bm{w}^{(t)}  \right \|^2 \right]+ \frac{\eta  L \sigma^2}{ n}. \nonumber
\end{align}

\end{proof}

\end{lemma}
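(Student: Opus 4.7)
The target is a one-step descent lemma for the virtual averaged global iterate $\bm{w}^{(t)} = \frac{1}{n}\sum_j \bm{w}_j^{(t)}$ under nonconvex $F$. The plan is to mimic the standard smoothness-based descent analysis used for local SGD, but being careful about the gap between the gradient evaluated at the average $\nabla F(\bm{w}^{(t)})$ and the averaged local gradients $\frac{1}{n}\sum_j \nabla f_j(\bm{w}_j^{(t)})$ that actually drive the update.

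First, I would observe that the update rule for the virtual iterate is
\[
\bm{w}^{(t+1)} = \bm{w}^{(t)} - \eta\,\frac{1}{n}\sum_{j=1}^n \nabla f_j(\bm{w}_j^{(t)};\xi_j^{(t)}),
\]
which holds both during local steps (since every $\bm{w}_j^{(t)}$ is updated) and at the synchronization step (when the average is broadcast). Apply $L$-smoothness of $F$ between $\bm{w}^{(t+1)}$ and $\bm{w}^{(t)}$, take expectation, and use that the stochastic noise is independent across clients so that the averaged mini-batch noise contributes variance $\sigma^2/n$, yielding a clean
\[
\mathbb{E}[F(\bm{w}^{(t+1)})] \le \mathbb{E}[F(\bm{w}^{(t)})] - \eta\,\mathbb{E}\!\left\langle \nabla F(\bm{w}^{(t)}),\tfrac{1}{n}\sum_j \nabla f_j(\bm{w}_j^{(t)})\right\rangle + \tfrac{\eta^2 L}{2}\mathbb{E}\left\|\tfrac{1}{n}\sum_j \nabla f_j(\bm{w}_j^{(t)})\right\|^2 + \tfrac{\eta^2 L \sigma^2}{2n}.
\]

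Next, the key step: apply the polarization identity $\langle a,b\rangle = \tfrac12\|a\|^2 + \tfrac12\|b\|^2 - \tfrac12\|a-b\|^2$ to the inner product, producing a $-\tfrac{\eta}{2}\|\nabla F(\bm{w}^{(t)})\|^2$ term, a $-\tfrac{\eta}{2}\|\frac{1}{n}\sum_j \nabla f_j(\bm{w}_j^{(t)})\|^2$ term that combines with the quadratic from smoothness (the coefficient $\tfrac{\eta}{2} - \tfrac{\eta^2 L}{2}$ is nonnegative for $\eta \le 1/L$, which is implied by the chosen $\eta = 1/(2\sqrt{5}L\sqrt{T})$), and a consistency error $\tfrac{\eta}{2}\|\nabla F(\bm{w}^{(t)}) - \tfrac{1}{n}\sum_j \nabla f_j(\bm{w}_j^{(t)})\|^2$. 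Writing $\nabla F(\bm{w}^{(t)}) = \frac{1}{n}\sum_j \nabla f_j(\bm{w}^{(t)})$ and applying Jensen with $L$-smoothness of each $f_j$ bounds this consistency error by $\tfrac{\eta L^2}{2}\cdot\frac{1}{n}\sum_j \|\bm{w}_j^{(t)} - \bm{w}^{(t)}\|^2$.

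Finally, after dropping the nonpositive quadratic term, rearrange to isolate $\tfrac{\eta}{2}\mathbb{E}\|\nabla F(\bm{w}^{(t)})\|^2$, telescope from $t=1$ to $T$, divide by $\tfrac{\eta T}{2}$, use $F(\bm{w}^{(T+1)}) \ge 0$ (or $\ge F^*$, absorbed into the constant), and the stated bound falls out. I do not anticipate a serious obstacle here; the only mildly delicate point is verifying the sign condition $\eta \le 1/L$ so that the quadratic term can be discarded, which is automatic for the prescribed learning rate.
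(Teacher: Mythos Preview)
Your proposal is correct and follows essentially the same approach as the paper: smoothness of $F$, the polarization identity to split the inner product, bounding the consistency error $\|\nabla F(\bm{w}^{(t)})-\tfrac{1}{n}\sum_j\nabla f_j(\bm{w}_j^{(t)})\|^2$ via per-client $L$-smoothness and Jensen, then dropping the nonpositive quadratic using $\eta\le 1/L$ and telescoping. Even the one point you flag as delicate (the sign condition on $\eta$) is handled identically in the paper.
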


\begin{lemma}\label{lm: nonconvex deviation}
Under Theorem~\ref{Thm: nonconvex}'s assumptions, the following statement holds true:
\begin{align} 
 &\frac{1}{T}\sum_{t=1}^{T}\frac{1}{n}\sum_{i=1}^n\mathbb{E}\left[\left\|\bm{w}^{(t)}-\bm{w}^{(t)}_i \right\|^2 \right]  \leq  10\tau^2 \eta^2 \left( \sigma^2 + \frac{\sigma^2}{n} +\frac{\zeta}{n} \right) ,\nonumber\\
  & \frac{1}{T}\sum_{t=1}^{T} \mathbb{E}\left[\left\|\bm{w}^{(t)}-\bm{w}^{(t)}_i \right\|^2 \right] \leq   200L^2\tau^4\eta^4\left( \sigma^2 + \frac{\sigma^2}{n}+ \frac{\zeta}{n}  \right)  +  20\tau^2 \eta^2 \left( \sigma^2 + \frac{\sigma^2}{n} + \zeta_i \right).\nonumber
\end{align}
\begin{proof}

 Similarly, for the second statement, we define $\gamma_t = \frac{1}{n}\sum_{i=1}^n\mathbb{E}\left[\left\|\bm{w}^{(t)}-\bm{w}^{(t)}_i \right\|^2 \right] $, and let $t_c$ be the latest synchronization stage. Then we have:

 \begin{align}
 \gamma_t &= \frac{1}{n}\sum_{i=1}^n\mathbb{E}\left[\left\| \bm{w}^{t_c} - \sum_{j=t_c}^{t}\frac{\eta}{n}\sum_{k=1}^n \nabla f_k(\bm{w}_k^{(j)};\xi_k^j)  -\left(\bm{w}^{t_c}  -\sum_{j=t_c}^{t}\eta  \nabla  f_i(\bm{w}_i^{(j)};\xi_i^j) \right) \right\|^2\right]\nonumber\\
 & =\tau \sum_{j=t_c}^{t}\frac{\eta^2}{n}\sum_{i=1}^n\mathbb{E}\left[\left\|   \frac{1}{n}\sum_{k=1}^n \nabla f_k(\bm{w}_k^{(j)};\xi_k^j)   -  \nabla  f_i(\bm{w}_i^{(j)};\xi_i^j)\right\|^2\right]\nonumber\\ 
  &  = \tau \sum_{j=t_c}^{t}\frac{\eta^2}{n} \sum_{i=1}^n\mathbb{E}\left[\left\|   \frac{1}{n}\sum_{k=1}^n \nabla f_k(\bm{w}_k^{(j)} ;\xi_k^j)  - \nabla f_k(\bm{w}_k^{(j)} ) +\nabla f_k(\bm{w}_k^{(j)}  ) -\nabla f_k(\bm{w}^{(j)} )  \right.\right. \nonumber\\
  & \quad \quad \left.\left. + \nabla f_k(\bm{w}^{(j)} ) -\nabla f_i(\bm{w}^{(j)} )+\nabla f_i(\bm{w}^{(j)} ) -\nabla f_i(\bm{w}_i^{(j)} ) +\nabla f_i(\bm{w}_i^{(j)} ) - \nabla f_i(\bm{w}_i^{(j)};\xi_i^t )  \right\|^2\right] \nonumber\\
  &  \leq \tau \sum_{j=t_c}^{t+\tau}5\eta^2 \left( \sigma^2 + \frac{\sigma^2}{n} +2L^2\gamma^j + \frac{\zeta}{n} \right). \nonumber  
 \end{align} 
 Summing over $t$ from $t_c$ to $t_c + \tau$ yields:
 \begin{align}
  \sum_{t=t_c}^{t_c + \tau} \gamma_t &\leq    \sum_{t=t_c}^{t_c+\tau}\sum_{j=t_c}^{t_c+\tau}5\tau\eta ^2 \left( \sigma^2 + \frac{\sigma^2}{n} +2L^2\gamma^j + \frac{\zeta}{n} \right)  \nonumber\\
 & \leq  10L^2   \tau^2    \eta^2\sum_{j=r\tau}^{(r+1)\tau} \gamma^j + 5\tau^3 \eta^2 \left( \sigma^2 + \frac{\sigma^2}{n}  \right)+5 \tau^3\eta^2 \frac{\zeta}{n} .  
 \end{align}
 Since $\eta \leq \frac{1}{2\sqrt{5}\tau L}$, we have $10L^2\tau^2 \eta^2\leq \frac{1}{2}$, hence by re-arranging the terms we have:
  \begin{align}
  \sum_{t=t_c}^{t_c + \tau} \gamma_t   \leq   10\tau^3 \eta^2 \left( \sigma^2 + \frac{\sigma^2}{n}  \right)+10\tau^3\eta^2\frac{\zeta}{n}.\nonumber
 \end{align}
 Summing over all synchronization stages $t_c$, and dividing both sides by $T$ can conclude the proof of the first statement:
  \begin{align}
 & \frac{1}{T}\sum_{t=1}^{T} \gamma_t   \leq   10\tau^2 \eta^2 \left( \sigma^2 + \frac{\sigma^2}{n}  \right)+10\tau^2\eta^2\frac{\zeta}{n}.\label{eq: lm nonconvex deviation_1}
 \end{align}
 
 To prove the second statement, let $\delta_t^i =\mathbb{E}\left[\left\|\bm{w}^{(t)}-\bm{w}^{(t)}_i \right\|^2 \right] $. Notice that:

 \begin{align}
 \delta_t^i &=  \mathbb{E}\left[\left\| \bm{w}^{t_c} - \sum_{j=t_c}^{t}\frac{\eta}{n}\sum_{k=1}^n \nabla f_k(\bm{w}_k^{(j)};\xi_k^j)  -\left(\bm{w}^{t_c}  -\sum_{j=t_c}^{t}\eta  \nabla  f_i(\bm{w}_i^{(j)};\xi_i^j) \right) \right\|^2\right]\nonumber\\
 & =\tau \sum_{j=t_c}^{t} \eta^2 \mathbb{E}\left[\left\|   \frac{1}{n}\sum_{k=1}^n \nabla f_k(\bm{w}_k^{(j)};\xi_k^j)   -  \nabla  f_i(\bm{w}_i^{(j)};\xi_i^j)\right\|^2\right]\nonumber\\ 
  &  = \tau \sum_{j=t_c}^{t} \eta^2  \mathbb{E}\left[\left\|   \frac{1}{n}\sum_{k=1}^n \nabla f_k(\bm{w}_k^{(j)} ;\xi_k^j)  - \nabla f_k(\bm{w}_k^{(j)} ) +\nabla f_k(\bm{w}_k^{(j)}  ) -\nabla f_k(\bm{w}^{(j)} )  \right.\right. \nonumber\\
  & \quad \quad \left.\left. + \nabla f_k(\bm{w}^{(j)} ) -\nabla f_i(\bm{w}^{(j)} )+\nabla f_i(\bm{w}^{(j)} ) -\nabla f_i(\bm{w}_i^{(j)} ) +\nabla f_i(\bm{w}_i^{(j)} ) - \nabla f_i(\bm{w}_i^{(j)};\xi_i^t )  \right\|^2\right] \nonumber\\
  &  \leq \tau \sum_{j=t_c}^{t+\tau}5\eta^2 \left( \sigma^2 + \frac{\sigma^2}{n} + L^2\gamma_j+ L^2\delta_j^i +  \zeta_i \right). \nonumber  
 \end{align}

 Summing over $t$ from $t_c$ to $t_c + \tau$ yields:
 \begin{align}
  \sum_{t=t_c}^{t_c + \tau} \gamma_t &\leq    \sum_{t=t_c}^{t_c+\tau}\sum_{j=t_c}^{t_c+\tau}5\tau\eta ^2 \left( \sigma^2 + \frac{\sigma^2}{n} + L^2\gamma_j+ L^2\delta_j^i + \zeta_i \right)  \nonumber\\
 & \leq  5L^2   \tau^2    \eta^2\sum_{j=t_c}^{t_c+\tau} \gamma_j+5L^2   \tau^2    \eta^2\sum_{j=t_c}^{t_c+\tau} \delta_j^i + 5\tau^3 \eta^2 \left( \sigma^2 + \frac{\sigma^2}{n}  \right)+5 \tau^3\eta^2\zeta_i .  \nonumber
 \end{align}
 Since $\eta \leq \frac{1}{2\sqrt{5}\tau L}$, we have $5L^2\tau^2 \eta^2\leq \frac{1}{4}$, hence by re-arranging the terms we have:
  \begin{align}
  \sum_{t=t_c}^{t_c + \tau} \delta_t^i   \leq  20L^2   \tau^2    \eta^2\sum_{j=t_c}^{t_c+\tau} \gamma_j +  20\tau^3 \eta^2 \left( \sigma^2 + \frac{\sigma^2}{n}  \right)  +20\tau^3\eta^2\zeta_i.\nonumber
 \end{align}
 Summing over all synchronization stages $t_c$, and dividing both sides by $T$ can conclude the proof of the first statement:
  \begin{align}
 & \frac{1}{T}\sum_{t=1}^{T} \delta_t^i  \leq   20L^2   \tau^2    \eta^2\frac{1}{T}\sum_{t=1}^{T} \gamma_t +  20\tau^2 \eta^2 \left( \sigma^2 + \frac{\sigma^2}{n}  \right)  +20\tau^2\eta^2\zeta_i.\nonumber
 \end{align}
 Using the result from (\ref{eq: lm nonconvex deviation_1}) to bound $2\frac{1}{T}\sum_{t=1}^{T} \gamma_t$ we have:
  \begin{align}
 & \frac{1}{T}\sum_{t=1}^{T} \delta_t^i  \leq   200L^2\tau^4\eta^4\left( \sigma^2 + \frac{\sigma^2}{n}+ \frac{\zeta}{n}  \right)  +  20\tau^2 \eta^2 \left( \sigma^2 + \frac{\sigma^2}{n} + \zeta_i \right).\nonumber
 \end{align}
\end{proof}
\end{lemma}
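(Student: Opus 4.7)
The plan is to unroll the local SGD updates between consecutive synchronization points and then close a self-bounding recursion. Let $t_c$ denote the most recent synchronization step at or before $t$, so that $\bm{w}^{(t_c)}_k = \bm{w}^{(t_c)}$ for every client $k$. Then $\bm{w}^{(t)}_k - \bm{w}^{(t_c)} = -\eta \sum_{j=t_c}^{t-1} \nabla f_k(\bm{w}^{(j)}_k; \xi^j_k)$, and averaging over $k$ yields the analogous identity for $\bm{w}^{(t)}$. Subtracting these two identities and applying Jensen's inequality to a sum of at most $\tau$ vectors produces the leading $\tau$ factor that is responsible for the $\tau^2$ powers in the statement.

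For the first inequality, set $\gamma_t = \frac{1}{n}\sum_i \mathbb{E}[\|\bm{w}^{(t)} - \bm{w}^{(t)}_i\|^2]$. I would then decompose each instantaneous gradient difference $\frac{1}{n}\sum_k \nabla f_k(\bm{w}^{(j)}_k; \xi^j_k) - \nabla f_i(\bm{w}^{(j)}_i; \xi^j_i)$ into five additive pieces: two stochastic-noise pieces (bounded in expectation by $\sigma^2/n$ after averaging and by $\sigma^2$ individually), two smoothness-drift pieces (each contributing $L^2 \gamma_j$ after further averaging over $i$), and one gradient-diversity piece (bounded by $\zeta/n$ via Definition~\ref{gradient divergence}). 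This gives a recursion of the form $\gamma_t \leq 5\tau \eta^2 \sum_{j=t_c}^{t_c+\tau}\bigl(\sigma^2 + \sigma^2/n + \zeta/n + 2 L^2 \gamma_j\bigr)$. Summing over a full local epoch $t \in [t_c, t_c+\tau)$ and invoking the learning-rate condition $\eta \leq 1/(2\sqrt{5}\tau L)$, which gives $10L^2\tau^2\eta^2 \leq 1/2$, lets me absorb the $\gamma_j$ term on the right into the left-hand side. Summing the resulting per-epoch bound over all $T/\tau$ epochs and dividing by $T$ gives the first inequality.

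For the second inequality, I would repeat the same unrolling and decomposition for the single-client quantity $\delta_t^i = \mathbb{E}[\|\bm{w}^{(t)} - \bm{w}^{(t)}_i\|^2]$. Two things change: the gradient-diversity term is now bounded by $\zeta_i$ rather than $\zeta/n$, and the smoothness-drift now splits into a contribution from $L^2 \delta_j^i$ and one from $L^2 \gamma_j$. After summing over one epoch and using the same step-size condition to absorb the self-referential $\delta_j^i$ on the right, the residual $\sum_j \gamma_j$ is controlled by substituting the bound just proved in the first statement. This substitution is what produces the $\tau^4\eta^4(\sigma^2 + \sigma^2/n + \zeta/n)$ term, while the $\tau^2\eta^2(\sigma^2 + \sigma^2/n + \zeta_i)$ term comes directly from the second recursion.

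The main delicate point I anticipate is the five-way decomposition bookkeeping: keeping careful track of which stochastic noise is averaged over $n$ clients (contributing $\sigma^2/n$) versus not (contributing $\sigma^2$), and correctly identifying where $\zeta_i$ versus $\zeta/n$ appears in each of the two recursions. Once those are right, everything else follows by mechanical application of Jensen's inequality, smoothness, and the step-size condition $10L^2\tau^2\eta^2 \leq 1/2$.
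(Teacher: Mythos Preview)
Your proposal is correct and follows essentially the same route as the paper's own proof: unroll from the last synchronization point, apply the five-term telescoping decomposition of the gradient difference, close the self-bounding recursion via the step-size condition $10L^2\tau^2\eta^2 \leq 1/2$, and then feed the first bound into the second. Your anticipation of the delicate bookkeeping (which noise term gets the $1/n$ factor, and where $\zeta_i$ versus $\zeta/n$ appears) matches exactly the distinctions the paper makes.
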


\subsection{Proof of Theorem~\ref{Thm: nonconvex_global}}\label{app: proof_nonconvex_global}
\begin{proof}
According to  Lemma~\ref{lm: nonconvex lm2}:
\begin{align}
  \frac{1}{T}\sum_{t=1}^T  \mathbb{E} \left[\left \| \nabla F(\bm{w}^{(t)})\right\|^2\right]    & \leq \frac{2}{\eta T}\mathbb{E} \left[ F(\bm{w}^{(1)}) \right]+  L^2\frac{1}{T}\sum_{t=1}^T  \frac{1}{n} \sum_{j=1}^n \mathbb{E} \left[\left \|\ \bm{w}^{(t)}_j - \bm{w}^{(t)}  \right \|^2 \right]+ \frac{\eta  L \sigma^2}{ n}. \nonumber
\end{align}
Then plugging in Lemma~\ref{lm: nonconvex deviation} will conclude the proof.

\end{proof}

\subsection{Proof of Theorem~\ref{Thm: nonconvex}}\label{app: proof_nonconvex_personalized}
\begin{proof}
According to Lemma~\ref{lm: nonconvex lm1}:
\begin{align}
    \mathbb{E}\left[f_i(\hat{\bm{v}}^{(t+1)}_i) \right] & \leq \mathbb{E} \left[ f_i(\hat{\bm{v}}^{(t)}_i) \right] -\frac{\eta}{2}\mathbb{E} \left[\left \| \nabla f_i (\hat{\bm{v}}^{(t)}_i) \right\|^2\right]  + \frac{\eta^2 L}{2}\left(\alpha_i^4 \sigma^2 + (1-\alpha_i)^2 \frac{\sigma^2}{n}\right)  +  2\alpha_i^4 (1-\alpha_i)^2\eta L^2 \mathbb{E} \left[\left \|  \bm{w}^{(t)}_i  -  \bm{w}^{(t)}\right\|^2  \right] \nonumber\\
    & \quad + (1-\alpha_i)^2 \eta \frac{1}{n}\sum_{j=1}^n\mathbb{E} \left[\left \| \bm{w}^{(t)} - \bm{w}^{(t)}_j  \right\|^2  \right]+ 4\eta (1-\alpha_i^2)^2  \zeta_i  + 8\eta (1-\alpha_i^2)^2 \Gamma + 8(\alpha_i-\alpha_i^2)^2 \eta \mathbb{E} \left[\left \| \nabla F(\bm{w}^{(t)}) \right\|^2  \right]\nonumber.
\end{align}
Re-arranging the terms, summing from $t=1$ to $T$, and dividing both sides with $T$ yields:
\begin{align}
  &\frac{1}{T} \sum_{t=1}^T \mathbb{E} \left[\left \| \nabla f_i (\hat{\bm{v}}^{(t)}_i) \right\|^2\right] \nonumber \\
  & \leq \frac{2\mathbb{E} \left[ f_i(\hat{\bm{v}}^{(1)}_i) \right]}{\eta T}    +  \eta L  \left(\alpha_i^4 \sigma^2 + (1-\alpha_i)^2 \frac{\sigma^2}{n}\right) \nonumber\\
    &\quad + 4\alpha_i^4 (1-\alpha_i)^2 L^2\frac{1}{T} \sum_{t=1}^T \mathbb{E} \left[\left \|\bm{w}_i^{(t)} - \bm{w}^{(t)}\right\|^2  \right] +  2 (1-\alpha_i)^2 L^2 \frac{1}{n}\sum_{j=1}^n \frac{1}{T} \sum_{t=1}^T \mathbb{E} \left[\left \|\bm{w}_j^{(t)} - \bm{w}^{(t)}\right\|^2  \right]   \nonumber \\
     & \quad+ 16 (1-\alpha_i)^2 L^2 \frac{1}{T} \sum_{t=1}^T  \mathbb{E} \left[\left \|  \nabla F (\bm{w}^{(t)}) \right\|^2  \right]+ 8(1-\alpha_i^2)^2  \zeta_i  + 16(1-\alpha_i^2)^2 \Gamma  \nonumber,
\end{align} 
Then, plug in Lemma~\ref{lm: nonconvex lm2} and~\ref{lm: nonconvex deviation} :
\begin{align}
  &\frac{1}{T} \sum_{t=1}^T \mathbb{E} \left[\left \| \nabla f_i (\hat{\bm{v}}^{(t)}_i) \right\|^2\right] \nonumber \\
  & \leq \frac{2\mathbb{E} \left[ f_i(\hat{\bm{v}}^{(1)}_i) \right]}{\eta T}    +   \eta L \left(\alpha_i^4 \sigma^2 + (1-\alpha_i)^2 \frac{\sigma^2}{n}\right)+8(1-\alpha_i^2)^2  \zeta_i  +16(1-\alpha_i^2)^2 \Gamma  \nonumber\\
    & \quad + 4\alpha_i^4 (1-\alpha_i)^2 L^2\frac{1}{T} \sum_{t=1}^T \mathbb{E} \left[\left \|\bm{w}_i^{(t)} - \bm{w}^{(t)}\right\|^2  \right] + 2(1-\alpha_i)^2 L^2\frac{1}{n}\sum_{j=1}^n \frac{1}{T} \sum_{t=1}^T \mathbb{E} \left[\left \|\bm{w}_j^{(t)} - \bm{w}^{(t)}\right\|^2  \right]   \nonumber \\
     & \quad +16(1-\alpha_i)^2 L^2\left ( \frac{2}{\eta T}\mathbb{E} \left[ F(\bm{w}^{(1)}) \right]+  L^2\frac{1}{T}\sum_{t=1}^T  \frac{1}{n} \sum_{j=1}^n \mathbb{E} \left[\left \|\ \bm{w}^{(t)}_j - \bm{w}^{(t)}  \right \|^2 \right]+ \frac{\eta  L \sigma^2}{ n} \right) \nonumber\\
       & \leq \frac{2\mathbb{E} \left[ f_i(\hat{\bm{v}}^{(1)}_i) \right]}{\eta T}    + \eta L\left(\alpha_i^4 \sigma^2 + (1-\alpha_i)^2 \frac{\sigma^2}{n}\right) +8 (1-\alpha_i^2)^2  \zeta_i  +16(1-\alpha_i^2)^2 \Gamma \nonumber\\
    &\quad +  4\alpha_i^4 (1-\alpha_i)^2 L^2  \left[200L^2\tau^4\eta^4\left( \sigma^2 + \frac{\sigma^2}{n}+ \frac{\zeta}{n}  \right)  +  20\tau^2 \eta^2 \left( \sigma^2 + \frac{\sigma^2}{n} + \zeta_i \right)\right] +    180 \tau^2 \eta^2(1-\alpha_i)^2 L^2  \left( \sigma^2 + \frac{\sigma^2}{n} +\frac{\zeta}{n} \right)   \nonumber \\
     &\quad +16 (1-\alpha_i)^2 L^2\left ( \frac{2}{\eta T}\mathbb{E} \left[ F(\bm{w}^{(1)}) \right] + \frac{\eta  L \sigma^2}{ n} \right) \nonumber .
\end{align}
Plugging in $\eta = \frac{1}{2\sqrt{5}\sqrt{T} L }$ we can conclude the proof:
\begin{align}
  &\frac{1}{T} \sum_{t=1}^T \mathbb{E} \left[\left \| \nabla f_i (\hat{\bm{v}}^{(t)}_i) \right\|^2\right] \nonumber \\
       & \leq  O \left( \frac{ L }{\sqrt{T}}  \right)  +  \alpha_i^4 O\left(\frac{\sigma^2}{\sqrt{T}}\right)  + (1-\alpha_i)^2O \left( \frac{\sigma^2}{n\sqrt{T}}\right)+  (1-\alpha_i)^2 \left ( \frac{L}{ \sqrt{T}}  + \frac{ \sigma^2}{ n\sqrt{T}} \right)+  (1-\alpha_i^2)^2 \left (  \zeta_i + \Gamma \right) \nonumber\\
    & \quad +   \alpha_i^4 (1-\alpha_i)^2  O \left( \frac{  \tau^4\left( \sigma^2 + \frac{\sigma^2}{n}+ \frac{\zeta}{n}  \right) }{T^2}  + \frac{  \tau^2\left( \sigma^2 + \frac{\sigma^2}{n} + \zeta_i \right) }{T} \right) +  (1-\alpha_i)^2     O\left(\frac{\tau^2 (\sigma^2 + \frac{\sigma^2}{n} +\frac{\zeta}{n})}{T}  \right)   \nonumber.
\end{align}
\end{proof}

\end{document}